\newtheorem{thm}{Theorem}
\newtheorem{pro}{Proposition}
\newtheorem{rem}{Remark}
\newtheorem{Exa}{Example}
\newtheorem{as}{Assumption}
\newcommand{\be}{\begin{equation}}
\newcommand{\ee}{\end{equation}}
\newcommand{\bea}{\begin{eqnarray*}}
\newcommand{\eea}{\end{eqnarray*}}
\newcommand{\bflalign}{\begin{flalign*}}
\newcommand{\eflalign}{\end{flalign*}}
\newcommand{\mR}{\mathbb{R}}
\newcommand{\mN}{\mathbb{N}}
\newcommand{\mE}{\mathbb{E}}
\newcommand{\mcE}{\mathcal{E}}
\newcommand{\mcS}{\mathcal{S}}
\newcommand{\mcN}{\mathcal{N}}
\newcommand{\mcX}{\mathcal{X}}
\newcommand{\mcV}{\mathcal{V}}
\newcommand{\bz}{{\bf z}} 
\newcommand{\bx}{{\bf x}} 
\newcommand{\by}{{\bf y}}
\newcommand{\tr}{\operatorname{tr}}
\newcommand{\la}{\langle}
\newcommand{\ra}{\rangle}
\newcommand{\eref}[1] {(\ref{#1})}
\newcommand{\TK}{\mathcal{T}} 
\newcommand{\TKL}{\mathcal{T}_{\PRegPar}} %
\newcommand{\TXL}{\mathcal{T}_{{\bf x}\PRegPar}}
\newcommand{\TKLP}{\mathcal{T}_{\tilde{\lambda}}} %
\newcommand{\TXLP}{\mathcal{T}_{{\bf x}\PRegPar}}
\newcommand{\LKL}{\LK_{\PRegPar}}
\newcommand{\LK}{\mathcal{L}}
\newcommand{\LX}{\mathcal{L}_{\bf x}}
\newcommand{\IK}{\mathcal{S}_{\rho}}
\newcommand{\TX}{\mathcal{T}_{\bf x}}
\newcommand{\SX}{\mathcal{S}_{\bf x}}
\newcommand{\TXS}{\mathcal{T}_{{\bf x}_s}} 
\newcommand{\LXS}{\mathcal{L}_{{\bf x}_s}} %
\newcommand{\SXS}{\mathcal{S}_{{\bf x}_s}} 
\newcommand{\HK}{H}
\newcommand{\LR}{L^2_{\rho_X}}
\newcommand{\J}{{\bf J}}
\newcommand{\bfep}{{\bm{\epsilon}}} 
\newcommand{\GL}{G_t} 
\newcommand{\GLB}{\widetilde{G}_{\lambda}} 
\newcommand{\FL}{r_{t+1}} 
\newcommand{\FR}{f_{\rho}}
\newcommand{\FH}{f_{\HK}}
\newcommand{\DZF}{\Delta^{\bf z}_1}
\newcommand{\DZS}{\Delta^{\bf z}_2}
\newcommand{\DZT}{\Delta^{\bf z}_3}
\newcommand{\RegPar}{\Sigma_1^t}
\newcommand{\RegParGD}{\lambda_t}
\newcommand{\PRegPar}{\tilde{\lambda}}
\newcommand{\LESRA}{g_{\lambda}^{{\bf z}_s}} 
\newcommand{\FLESRA}{g_{\lambda}^{{\bf z}_1}}  
\newcommand{\EDSRA}{\bar{g}_{\lambda}^{\Samples}} 
\newcommand{\LEPSRA}{h_{\lambda}^{{\bf z}_s}} 
\newcommand{\FLEPSRA}{h_{\lambda}^{{\bf z}_1}}  
\newcommand{\EDPSRA}{\bar{h}_{\lambda}^{\Samples} }
\newcommand{\PFSRA}{\tilde{r}_{\lambda}} 
\newcommand{\EPSRA}{h_{\lambda}^{{\bf z}}} 
\newcommand{\ESRA}{g_{\lambda}^{{\bf z}}} 
\newcommand{\Samples}{\bar{\bf z}}
\newcommand{\Inputs}{\bar{\bf x}}
\newcommand{\Outputs}{\bar{\bf y}}
\begin{document}

\title{Optimal Convergence for Distributed Learning with Stochastic Gradient Methods and Spectral Algorithms}

\author{\name Junhong Lin \email junhong.lin@epfl.ch 
       \AND
       \name Volkan Cevher \email volkan.cevher@epfl.ch \\
       \addr Laboratory for Information and Inference Systems \\
   \'{E}cole Polytechnique F\'{e}d\'{e}rale de Lausanne \\
   CH1015-Lausanne, Switzerland}

\editor{}

\maketitle

\begin{abstract}
We study generalization properties of distributed algorithms in the setting of nonparametric regression over a reproducing kernel Hilbert space (RKHS). We first investigate distributed stochastic gradient methods (SGM), with mini-batches and multi-passes over the data.
We show that optimal generalization error bounds (up to logarithmic factor) can be retained for distributed SGM  provided that the partition level is not too large. 
We then extend our results to spectral algorithms (SA), including kernel ridge regression (KRR), kernel principal component analysis, and gradient methods. 
Our results are superior to the state-of-the-art theory. Particularly, our results show that distributed SGM has a smaller theoretical computational complexity, compared with distributed KRR and classic SGM. Moreover, even for non-distributed SA, they provide the first optimal, capacity-dependent convergence rates, for the case that the regression function may not be in the RKHS.

%
%
\end{abstract}

\begin{keywords}
  Kernel Methods, Stochastic Gradient Methods, Regularization, Distributed Learning
\end{keywords}

\section{Introduction}
In statistical learning theory, a set of $N$ input-output pairs from an unknown distribution is observed.  
The aim is to learn a function which can predict future outputs given the corresponding inputs.   
The quality of a predictor is often measured in terms of the mean-squared error. In this case, the conditional mean, which is called as the regression function, is optimal among all  the measurable functions \citep{cucker2007learning,steinwart2008support}.

In nonparametric regression problems, the properties of the regression function are not known a priori. 
Nonparametric approaches, which can adapt their complexity to the problem, are key to good results. 
Kernel methods is one of the most common nonparametric approaches to learning \citep{scholkopf2002learning,shawe2004kernel}. It is based on 
choosing a RKHS as the hypothesis space in the design of learning algorithms.  With an appropriate reproducing kernel,
RKHS can be used to approximate any smooth function.

The  classical algorithms to perform learning task are regularized algorithms, such as KRR (also called as Tikhonov regularization in inverse problems), kernel principal component regression (KPCR,  also known as spectral
cut-off regularization in inverse problems),  and more generally, SA.  
From the point of view of inverse problems, such approaches amount to solving an empirical, linear operator equation with the empirical covariance operator replaced by a regularized one \citep{engl1996regularization,bauer2007regularization,gerfo2008spectral}.  Here, the regularization term controls the complexity of the solution to against over-fitting and to ensure best generalization ability.  
Statistical results on generalization error had been developed in \citep{smale2007learning,caponnetto2007optimal}
for KRR and in \citep{caponnetto2006,bauer2007regularization}  for SA. 

Another type of algorithms to perform learning tasks is based on iterative
procedure \citep{engl1996regularization}. In this kind of algorithms, an empirical objective function
is optimized in an iterative way with no explicit constraint or penalization, and the
regularization against overfitting is realized by early-stopping the empirical procedure.
Statistical results on generalization error and the regularization roles of the number of iterations/passes have been investigated in \citep{zhang2005boosting,yao2007early} for gradient methods  (GM, also known as Landweber algorithm in inverse problems), in \citep{caponnetto2006,bauer2007regularization} for accelerated gradient methods (AGM, known as $\nu$-methods in inverse problems) in \citep{blanchard2010optimal} for conjugate gradient methods (CGM), and in \citep{lin2017optimal} for (multi-pass) SGM. Interestingly, GM and AGM can be viewed as special instances of SA \citep{bauer2007regularization}, but CGM and SGM can not \citep{blanchard2010optimal,lin2017optimal}. 

The above mentioned algorithms suffer from computational burdens at least of order $O(N^2)$ due to the nonlinearity of kernel methods.  Indeed, a standard execution of KRR requires  $O(N^2)$ in space and $O(N^3)$ in time, while SGM after $T$-iterations requires $O(N)$ in space and $O(N T)$ (or $T^2$) in space.  Such approaches would be prohibitive when dealing with large-scale learning problems.  These thus motivate one to study distributed learning algorithms \citep{mcdonald2009efficient,zhang2012communication}.
The basic idea of  distributed learning is very simple: randomly divide a dataset of size $N$ into $m$ subsets of equal size, compute an independent estimator using a fixed algorithm on each subset, and then average the local solutions into a global predictor.
Interestingly, distributed learning technique has been successfully combined with  KRR \citep{zhang2015divide,lin2017distributed} and more generally, SA \citep{guo2017learning,blanchard2016parallelizing}, and it has been shown that statistical results on generalization error can be retained provided that the number of partitioned subsets is not too large.  Moreover, it was highlighted \citep{zhang2015divide} that  distributed KRR not only
allows one to handle large datasets that restored on multiple machines, but also 
leads to a substantial reduction in computational
complexity versus the standard approach of performing KRR on all $N$ samples.

In this paper, we study distributed SGM, with multi-passes over the data and mini-batches. The algorithm is a combination of distributed learning technique and (multi-pass) SGM \citep{lin2017optimal}: it randomly partitions a dataset of size $N$ into $m$ subsets of equal size, computes an independent estimator by SGM for each subset, and then averages the local solutions into a global predictor.  
We show that with appropriate choices of algorithmic parameters, optimal generalization error bounds up to a logarithmic factor can be achieved for distributed SGM provided that the partition level $m$ is not too large.  

The proposed configuration has certain advantages on computational complexity. For example, without considering any benign properties of the problem such as the regularity of the regression function \citep{smale2007learning,caponnetto2007optimal} and a capacity assumption on the RKHS \citep{zhang2006learning,caponnetto2007optimal}, even implementing on a single machine, distributed SGM has a convergence rate of order $O(N^{-1/2} \log N)$, with a computational complexity $O(N)$ in space and $O(N^{3/2})$ in time, compared with $O(N)$ in space and $O(N^2)$ in time of classic SGM performing on all $N$ samples,
or $O(N^{3/2})$ in space and $O(N^2)$  in time of distributed KRR.  Moreover, the  approach dovetails naturally with parallel and distributed computation: we are guaranteed a superlinear speedup with $m$ parallel processors (though we must still communicate the function estimates from each processor).  

The proof of the main results is based on a similar (but a bit different) error decomposition from \citep{lin2017optimal}, which decomposes the excess risk into three terms: bias, sample and computational variances.  The error decomposition allows one to study distributed GM and distributed SGM simultaneously.
Different to those in \citep{lin2017optimal} which rely heavily on the intrinsic relationship of GM with the square loss, in this paper, an integral operator approach \citep{smale2007learning,caponnetto2007optimal} is used, combining with some novel and refined analysis,  
see Subsection  \ref{subsec:error} for further details.


We then extend our analysis to distributed SA and derive similar optimal results on generalization error for distributed SA, based on the fact that GM is a special instance of SA. 

This paper is an extended version of the conference version \citep{lin2018optimal} where results for distributed SGM are given only.
In this version, we additionally provide statistical results for distributed SA,  including their proofs, as well as some further discussions.

We highlight that our contributions are as follows. 
\begin{itemize}
	\renewcommand{\labelitemi}{$-$}
	\setlength{\parskip}{0pt}
	\setlength{\itemsep}{0pt plus 1pt}
	\item We provide the first results with optimal convergence rates (up to a logarithmic factor) for distributed SGM, showing that distributed SGM has a smaller theoretical computational complexity, compared with distributed KRR and non-distributed SGM. As a byproduct,  we derive  optimal convergence rates (up to a logarithmic factor) for non-distributed SGM, which improve the results in \citep{lin2017optimal}. 
	\item Our results for distributed SA improves previous results from \citep{zhang2015divide}
	for distributed KRR, and from \citep{guo2017learning} for distributed SA, with a less strict condition on the partition number $m$. Moreover, they
	 provide the first optimal rates for distributed SA in the non-attainable cases (i.e., the regression function may not be in the RKHS).
	\item As a byproduct, we provide the first results with
	 optimal, capacity-dependent rates for  the non-distributed SA in the non-attainable cases, filling a theoretical gap since \citep{smale2007learning,caponnetto2007optimal} for KRR using the integral-operator approach.
\end{itemize}

The remainder of the paper is organized as follows.  Section \ref{sec:supervised} introduces the supervised learning setting. Section \ref{sec: distributed SGM} describes distributed SGM, and then presents theoretical results on generalization error for distributed SGM, following with simple comments.  Section \ref{sec:distributed SRA} introduces distributed SA,
and then gives statistical results on generalization error. 
Section \ref{sec:discussion} discusses and compares our results with related work. Section \ref{sec:proof}  provides the proofs for distributed SGM.  Finally,  proofs for auxiliary lemmas and results for distributed SA are provided in the appendix.

\section{Supervised Learning Problems} \label{sec:supervised}
We consider a supervised learning problem.
Let $\rho$ be a probability measure on a measurable space $Z=X\times Y,$ where  $X$ is a compact-metric input space and $Y \subseteq \mR$ is the output space. $\rho$ is fixed but unknown.  Its information can be only known through a set of samples
$\Samples=\{z_i=(x_i, y_i)\}_{i=1}^N$ of $N\in\mN$ points, which we assume to be i.i.d..
We denote $\rho_X (\cdot)$ the induced marginal measure on $\HK$
of $\rho$ and $\rho(·|x)$ the conditional probability measure on $\mR$ with respect to $x \in \HK$ and $\rho$. We assume that $\rho_X$ has full support in $X$ throughout.

The quality of a predictor $f: X \to Y$ can be measured in terms of the expected risk with a square loss defined as
\be\label{generalization_error}
\mcE(f) = \int_{Z} (f(x) - y)^2 d\rho(z).
\ee
In this case, the function minimizing the expected risk over all measurable functions is the regression function given by
\be\label{regressionfunc}
\FR(x) = \int_Y y d \rho(y | x),\qquad x \in X.
\ee
The performance of an estimator $f \in \LR$ can be measured in terms of generalization error (excess risk), i.e., $\mcE(f) - \mcE(f_{\rho}).$
It is easy to prove that
\be\label{excesserror}
\mcE(f) - \mcE(f_{\rho}) = \|f- f_{\rho}\|^2_{\rho}.
\ee
Here, $\LR$ is the Hilbert space of square integral functions with respect to $\rho_X$, with its induced norm given by
$\|f\|_{\rho} = \|f\|_{\LR} = \left (\int_X |f(x)|^2 d \rho_X\right)^{1/2}$.  

Kernel methods are based on choosing the hypothesis space as a RKHS. Recall that a reproducing kernel $K$ is a symmetric function $K: X
\times X \to \mR$ such that $(K(u_i, u_j))_{i, j=1}^\ell$ is
positive semidefinite for any finite set of points
$\{u_i\}_{i=1}^\ell$ in $X$. The reproducing kernel $K$ defines a RKHS $(\HK, \|\cdot\|_{\HK})$ as the
completion of the linear span of the set $\{K_x(\cdot):=K(x,\cdot):
x\in X\}$ with respect to the inner product $\la K_x,
K_u\ra_{\HK}:=K(x,u).$

Given only the samples $\Samples$, the goal is to learn the regression function through efficient  algorithms.

\section{Distributed Learning with Stochastic Gradient Methods}\label{sec: distributed SGM}
In this section,  we first state the distributed SGM.
We then present theoretical results  for distributed SGM and non-distributed SGM, following with simple discussions. 

\subsection{Distributed SGM}
Throughout this paper, as that in \citep{zhang2015divide}, we assume that\footnote{For the general case, one can consider the weighted averaging scheme, as that in \citep{lin2017distributed}, and our analysis still applies with a simple modification.} the sample size $N=mn$ for some positive integers $n,m$, and we randomly decompose $\Samples$ as $\bz_1\cup \bz_2\cup \cdots \cup\bz_m$ with $|\bz_1|=|\bz_2|=\cdots = |\bz_m|=n$. For any $s \in [m],$  we write $\bz_s = \{(x_{s,i},y_{s,i})\}_{i=1}^n.$ 
We study distributed SGM, with mini-batches and multi-pass over the data, as detailed in Algorithm \ref{alg:sgm}.
For any $t\in \mN^{+},$ the set of the first $t$ positive integers is denoted by $[t]$.

\begin{algorithm}[htb]
	\caption{Distributed learning with stochastic gradient methods}
	\label{alg:sgm}
	\begin{algorithmic}[1]
		\Require{Number of partitions $m$, 
			mini-batch size $b \leq N/m$,  total number of iterations $T$, step-size sequence $\{\eta_t>0\}_{t=1}^T$, and kernel function $K(\cdot,\cdot)$}
		\State 
		Divide $\Samples$ evenly and uniformly at random
		into the $m$ disjoint subsets, $\bz_1,\cdots, \bz_m$.
		\State For every $s \in [m],$ compute a local estimate via $b$-minibatch SGM over the sample $\bz_s$:  $f_{s,1} = 0$ and 
			\be\label{eq:algsgm}
		f_{s,t+1}=f_{s,t} - \eta_t {1 \over b} \sum_{i= b(t-1)+1}^{bt} (f_{s,t}(x_{s,j_{s,i}}) - y_{s, j_{s,i}}) K_{x_{s, j_{s,i}}} ,
		\qquad t\in [T]. \ee
		Here, $j_{s,1},j_{s,2},\cdots,j_{s,bT}$ are i.i.d. random variables from the uniform distribution on $[n]$.\footnote{Note that the random variables $j_{s,1},\cdots, j_{s,bT}$ are conditionally independent given
			the sample $\bf z_s$.}
		\State Take the averaging over these local estimators: 
		$\bar{f}_{T} = {1 \over m} \sum_{s=1}^m f_{s,T}.$
		\Ensure  the function $ \bar{f}_T$
	\end{algorithmic}
\end{algorithm}


In the algorithm, at each iteration $t$, for each $s\in [m],$ the local estimator updates its current solution by subtracting a scaled gradient estimate. It is easy to see that the gradient estimate at each iteration for the $s$-th local estimator is an unbiased estimate of the full gradient of the empirical risk over $\bz_s.$ The global predictor is the average over these local solutions. In the special case $m=1$, the algorithm reduces to the classic multi-pass SGM.

There are several free parameters, the step-size $\eta_t$, the mini-batch size $b$, the total number of iterations/passes, and the number of partition/subsets $m$. 
All these parameters will affect the algorithm's generalization properties and computational complexity. In the coming subsection, we will show how these parameters can be chosen so that the algorithm can generalize optimally, as long as the number of subsets $m$ is not too large. Different choices on $\eta_t$, $b$, and $T$ correspond to different regularization strategies. In this paper, we are particularly interested in the cases that both $\eta_t$ and $b$ are fixed as some universal constants that may depend on the local sample size $n$, while $T$ is tuned. 

The total number of iterations $T$ can be bigger than the local sample size $n$, which means
that the algorithm can use the data more than once, or in another words, we can run the algorithm with multiple passes over the data. 
Here and in what follows, the number of (effective) `passes' over the data is referred to ${bt \over n}$
after $t$ iterations of the algorithm.

The numerical realization of the algorithm and its performance on a synthesis data can be found in \citep{lin2018optimal}.
The space and time complexities for each local estimator are
\be
O(n)\quad \mbox{and} \quad O(bnT),
\ee
respectively. The total space and time complexities of the algorithm are
\be\label{eq:timComp}
O(N)\quad \mbox{and} \quad O(bNT).
\ee

\subsection{Generalization Properties for Distributed Stochastic Gradient Methods}
\label{subsec:generalization}

In this section, we state our results for distributed SGM, following with simple discussions. Throughout this paper, we make the following assumptions.

\begin{as}\label{as:basic}
	$\HK$ is separable and $K$ is continuous. Furthermore, for some $\kappa \in [1,\infty[$, 
	\be\label{eq:HK}
	K(x,x) \leq \kappa^2, \quad \forall x \in X,
	\ee
	and for some $M,\sigma \geq 0$, 
	$$\int_Y y^2 d\rho(y|x) \leq M^2,$$
	\be\label{noiseExp}
	\int_{Y} (\FR(x) - y)^2 d\rho(y|x) \leq \sigma^2, \quad \rho_{ X}\mbox{-almost surely}.
	\ee
\end{as}

The above assumptions are quite common in statistical learning theory, see e.g., \citep{steinwart2008support,cucker2007learning}. The constant $\sigma$ from Equation \eref{noiseExp} measures the noise level of the studied problem. The condition $\int_Y y^2 d\rho(y|x) \leq M^2$ implies that the regression function is bounded almost surely,
\be\label{eq:FRbound}
|\FR(x)| \leq M. 
\ee
It is trivially satisfied when $Y$ is bounded, for example, $Y = \{-1,1\}$ in the classification problem.
To state our first result, we define an inclusion operator $\IK: \HK \to \LR$, which is continuous under Assumption \eref{eq:HK}.

\begin{corollary}\label{cor:simpCa}
	Assume that $\FR \in \HK$ and 
	$$m \leq N^{\beta},\quad 0 \leq \beta<{1\over 2}.$$
	Consider Algorithm \ref{alg:sgm} with any of the following choices on $\eta_t$, $b$ and $T$. \\
	1) $\eta_t = \eta \simeq m/\sqrt{N}$ for all $t \in [T_*],$ $b=1$, and $T_* = N/ m.$
	\\
	2) $\eta_t =\eta = \simeq {1 \over \log N}$ for all $t \in [T_*],$ $b \simeq \sqrt{N}/ m$, and $T_* \simeq \sqrt{N} \log N.$ \\
	Then,
	$$
	\mE\| \IK \bar{f}_{T_*+1} - \FR\|_{\rho}^2 \lesssim 
	N^{-1/2}  \log N.
	$$
	Here and throughout this section, we use the notations $a_1\lesssim a_2$ to mean $a_1 \leq  C a_2$ for some positive constant $C$ depending only on  $\kappa, M, \sigma, \|\IK\|, \|\FR\|_{\HK}$ , and
	$a_1 \simeq a_2$ to mean $a_2 \lesssim a_1  \lesssim a_2$.
\end{corollary}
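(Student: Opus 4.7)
The approach is to specialize the general distributed--SGM error decomposition developed later in Section~\ref{sec:proof} to the source condition $\FR\in\HK$ (equivalent to the regularity index $r=1/2$) together with the universal capacity bound $\mcN(\lambda)\leq \kappa^2/\lambda$, and then to tune $(\eta,b,T_*)$ so that the bias, the sample--variance term and the computational--variance term all balance at order $N^{-1/2}\log N$.

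First, on each subsample $\bz_s$ I would write the local iterate as
$$
f_{s,T_*+1}-\FR \;=\; \bigl(\IK\, G_{T_*}(\TXS)\LXS\FR-\FR\bigr) \;+\; \IK\, G_{T_*}(\TXS)\bigl(\LXS\FR-\SXS^* {\bf y}_s\bigr) \;+\; \mathrm{CV}_s,
$$
where $G_{T_*}$ is the spectral filter associated with the gradient iteration, the first two summands are the population--level bias and the data--sampling variance of the noiseless full--gradient iteration on $\bz_s$, and $\mathrm{CV}_s$ collects the stochastic--gradient remainder from the SGM index sampling $\{j_{s,i}\}$. Averaging over $s$ and applying $(a+b+c)^2\leq 3(a^2+b^2+c^2)$ reduces the task to three separate bounds on the averaged terms.

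For the bias, standard spectral--filter estimates give a population--level rate $(\eta T_*)^{-1}$ when $r=1/2$. Passing from the population covariance $\LK$ to the empirical covariance $\TXS$ on a sub--sample of size $n=N/m$ introduces a multiplicative factor controlled by a Bernstein--type concentration for $\|(\LK+\lambda)^{-1/2}(\LK-\TXS)(\LK+\lambda)^{-1/2}\|$ at $\lambda\simeq 1/(\eta T_*)\simeq N^{-1/2}$; this factor is $O(1)$ provided $n\lambda \gtrsim \log n$, which under the intended scaling reads $m\lesssim \sqrt{N}/\log N$ and is the source of the partition constraint $m\leq N^\beta$, $\beta<1/2$. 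For the sample variance, the single--machine bound is of order $\sigma^2\mcN(1/(\eta T_*))/n \lesssim \eta T_*/n$, and independence of the $m$ subsamples provides an extra $1/m$ from averaging, so the averaged sample variance is $\eta T_*/N\simeq N^{-1/2}$ under both parameter choices.

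For the computational variance I would exploit that, conditional on the data partition, the SGM index samples $j_{s,i}$ are independent across machines, again yielding a $1/m$ reduction from averaging. A per--machine SGM variance estimate of order $\eta\sigma^2/b$, obtained by tracking how the zero--mean noise propagates through the operator polynomial $G_{T_*}(\TXS)$, gives an averaged contribution of order $\eta/(mb)$; this evaluates to $1/\sqrt{N}$ for Choice~1 ($\eta\simeq m/\sqrt{N}$, $b=1$) and to $1/(\sqrt{N}\log N)$ for Choice~2 ($\eta\simeq 1/\log N$, $b\simeq\sqrt{N}/m$), in both cases small enough. Summing the three bounds delivers the stated rate $N^{-1/2}\log N$. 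The main technical obstacle is the bias--transfer step, which couples the balancing choice $\eta T_*\simeq\sqrt{N}$ to the sharp partition constraint $m\lesssim \sqrt{N}/\log N$; a secondary subtlety is the per--machine computational--variance estimate, which is where SGM departs from batch gradient descent and where the two parameter regimes in the statement have to be checked separately.
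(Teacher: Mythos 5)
Your proposal is sound, and at the level of underlying ideas it mirrors the paper's: identify $\FR\in\HK$ with the source condition $\zeta=1/2$ (via \eqref{eq:HKtoRHO}, with $R\le\|\FR\|_{\HK}$), use the always-available capacity bound $\tr(\LK(\LK+\lambda)^{-1})\le\tr(\LK)/\lambda\le\kappa^2/\lambda$ to get $\gamma=1$ with $c_\gamma=\kappa^2$, and then balance bias $(\eta T_*)^{-1}$, sample variance $\eta T_*/N$, and computational variance $\eta/(mb)$. The paper's own proof, however, is just two sentences: make those two identifications and then invoke Corollary~\ref{cor:fastRatP}, whose choices (3) and (4) with $(\zeta,\gamma)=(1/2,1)$ specialize verbatim to the two parameter regimes stated here (and $\beta<1/2$ is exactly $\beta<\tfrac{2\zeta+\gamma-1}{2\zeta+\gamma}$). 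You are essentially re-proving the content of Corollary~\ref{cor:fastRatP} in the special case instead of citing it, which is more work but not a different route — the error decomposition you use (bias / sample variance / computational variance with the $1/m$ averaging gain only on the two variance terms) is precisely Proposition~\ref{pro:errDecob} together with Lemmas~\ref{lem:fulLocBias}, \ref{lem:samVar1}, and \ref{lem:comVar1}.

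Two small inaccuracies worth flagging. First, your displayed decomposition has a sign/ordering slip: with $h=\GL(\TXS)\LXS\FR$ and $g=\GL(\TXS)\SXS^*\by_s$, the telescoping $\IK f-\FR=(\IK h-\FR)+\IK(g-h)+\IK(f-g)$ requires $\SXS^*\by_s-\LXS\FR$ in the middle term, not $\LXS\FR-\SXS^*\by_s$, if the remainder is to equal $\IK(f-g)$; moreover the paper's Proposition~\ref{pro:errDecob} gives this as an exact orthogonal decomposition in expectation (by the tower property and zero-mean conditioning), so the crude $(a+b+c)^2\le 3(a^2+b^2+c^2)$ bound you propose, while harmless for rates, is not needed. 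Second, the partition constraint in the paper is not derived from a requirement of the form $n\lambda\gtrsim\log n$; rather $\PRegPar=n^{\theta-1}$ with $\theta\in[0,1]$ is chosen so that the factor $Q_{\gamma,\theta,n}=1\vee[\gamma(\theta^{-1}\wedge\log n)]$ in Theorem~\ref{thm:main} stays bounded, and the proof of Corollary~\ref{cor:fastRatP} checks that $\beta<\tfrac{2\zeta+\gamma-1}{2\zeta+\gamma}$ keeps $\theta$ bounded away from $0$. Your $m\lesssim\sqrt N/\log N$ is close in spirit but does not match the polynomial condition $m\le N^{\beta}$, $\beta<1/2$, that the paper actually proves.
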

The above result provides generalization error bounds for distributed SGM with two different choices on step-size $\eta_t$, mini-batch size $b$ and total number of iterations/passes. The convergence rate is optimal up to a logarithmic factor, in the sense that it nearly matches the minimax rate $N^{-{1/2}}$ in \citep{caponnetto2007optimal} and the convergence rate $N^{-{1/2}}$ for KRR \citep{smale2007learning,caponnetto2007optimal}.   The number of passes to achieve optimal error bounds in both cases is roughly one. The above result asserts that distributed SGM generalizes optimally after one pass over the data for two different choices on step-size and mini-batch size, provided that the partition level $m$ is not too large. In the case that $m \simeq\sqrt{N},$ according to \eref{eq:timComp}, the computational complexities  are $O(N)$ in space and  $O(N^{1.5})$ in time, comparing with $O(N)$ in space and $O(N^2)$ in time of classic SGM. 

Corollary \ref{cor:simpCa} provides statistical results for distributed SGM without considering any further benign assumptions about the learning problem, such as  the regularity of the regression function and the capacity of the RKHS.
In what follows, we will show how the results can be further improved, if we make these two benign assumptions.

The  first benign assumption relates to the regularity of the regression function.   We introduce the integer operator $\LK: \LR \to \LR$, defined by 
$\LK f = \int_{X} f(x)K(x,\cdot) d \rho_X$. Under Condition \eqref{eq:HK}, $\LK$ is  positive trace class operators \citep{cucker2007learning}, and hence $\LK^{\zeta}$ is well defined using the spectral theory.
\begin{as}\label{as:regularity}
	There exist $\zeta> 0$ and $R>0$, such that $\|\LK^{-\zeta} \FR \|_{\rho} \leq R.$
\end{as}

This assumption characterizes how large the subspace that the regression function lies in. The bigger the $\zeta$ is, the smaller the subspace is, the stronger the assumption is, and the easier the learning problem is, as $\LK^{\zeta_1}(\LR) \subseteq \LK^{\zeta_2}(\LR)$ if $\zeta_1 \geq \zeta_2.$ Moreover, if $\zeta = 0,$ we are making no assumption, and if  $\zeta ={1\over 2}$, we are requiring that there exists some $\FH \in \HK$ such that $\FH = \FR$ almost surely \cite[Section 4.5]{steinwart2008support}.

The next assumption relates to the capacity of the hypothesis space.
\begin{as}\label{as:eigenvalues}
	For some $\gamma \in [0,1]$ and $c_{\gamma}>0$, $\LK$ satisfies
	\be\label{eigenvalue_decay}
	\tr(\LK(\LK+\lambda I)^{-1}) \leq c_{\gamma} \lambda^{-\gamma}, \quad \mbox{for all } \lambda>0.
	\ee
\end{as} 
The left hand-side of  \eref{eigenvalue_decay} is called effective
dimension \citep{zhang2006learning} or degrees of freedom \citep{caponnetto2007optimal}. It is related to covering/entropy number conditions, see \citep{steinwart2008support}. 
The condition \eref{eigenvalue_decay} is naturally satisfied with $\gamma =1$, since $\LK$ is a trace class operator which implies that its eigenvalues $\{\sigma_i\}_i$ satisfy $ \sigma_i \lesssim i^{-1}.$  Moreover, if the eigenvalues of $\LK$ satisfy a polynomial decaying condition $\sigma_i \sim i^{-c}$ for some $c>1$, or if $\LK$ is of finite rank,
then the condition \eref{eigenvalue_decay} holds with $\gamma = 1/c$, or with $\gamma=0$.  The case $\gamma=1$ is refereed as the capacity independent case. A smaller $\gamma$ allows deriving faster convergence rates for the studied algorithms, as will be shown in the following results.


Making these two assumptions, we have the following general results for distributed SGM.

\begin{thm}\label{thm:main}
	Under Assumptions  \ref{as:regularity} and \ref{as:eigenvalues}, let  $\eta_t= \eta$ for all $t\in [T]$ with $\eta$ satisfying 
	\be\label{etaRestri}
	0<\eta \leq {1 \over 4 \kappa^2 \log T} .
	\ee Then for any $t \in [T]$ and $\PRegPar = n^{\theta - 1}$ with $\theta \in [0,1]$, the following results hold.\\
	1) For $\zeta\leq 1,$
	\begin{align}\label{eq:mainErr}
	\begin{split}
 	\mE\| \IK \bar{f}_{t+1} - \FR\|_{\rho}^2 
	\leq  ((\PRegPar \eta t )^2\vee Q_{\gamma, \theta, n}^{2\zeta \vee1} \vee \log t )[
	C_5{(R + {\bf 1}_{\{2\zeta <1\}} \|\FR\|_{\infty})^2 \over (\eta t)^{2\zeta}} + C_8 { \sigma^2 \over N \PRegPar^\gamma} + {C}_{10} {M^2 \eta \over mb}  ].
	\end{split}
	\end{align}
	2) For $\zeta>1,$
	\be\label{eq:mainErr2}
	\begin{split}
	 \mE\| \IK \bar{f}_{t+1} - \FR\|_{\rho}^2 
	\leq ((\PRegPar \eta t)^{2\zeta} \vee Q_{\gamma, \theta, n} \vee ({ (\eta t)^{2\zeta-1}\over n^{(\zeta-1/2)\wedge 1} }) \vee \log t )[ C_5{R^2 \over (\eta t)^{2\zeta}} + C_8 {\sigma^2 \over N \PRegPar^\gamma} + C_{10}{M^2 \eta \over mb}].
	\end{split}
	\ee
	Here,
	\be\label{eq:qthetan}
	Q_{\gamma, \theta, n} = 1 \vee [\gamma(\theta^{-1} \wedge \log n)]
	\ee 
and	$C_5$, $C_6$, $C_8,$ $C_{10}$ are positive constants depending only on $\kappa^2,\zeta, c_{\gamma}, \|\LK\|$ which will be given explicitly in the proof, see \eqref{eq:const5}, \eqref{eq:const6}, \eqref{eq:const8} and \eqref{eq:const10}. 
\end{thm}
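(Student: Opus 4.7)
The plan is to follow the three-way error decomposition outlined in Subsection \ref{subsec:error}: a \emph{bias} term, a \emph{sample variance} (randomness of $\Samples$), and a \emph{computational variance} (randomness of the mini-batch indices $j_{s,i}$). First, I would rewrite the local iteration \eqref{eq:algsgm} in operator form using the sampling operator $\SXS$ and the empirical covariance $\TXS$. Taking the conditional mean over the random mini-batch indices, $f_{s,t+1}$ equals the deterministic gradient-descent iterate $g_{s,t+1}=\GL(\TXS)\SXS^*{\bf y}_s$, where $\GL(\sigma)=\eta\sum_{i=0}^{t-1}(1-\eta\sigma)^i$ is the induced spectral filter, plus a martingale difference. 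Averaging over partitions and applying $\IK$ gives
\begin{align*}
\IK\bar{f}_{t+1}-\FR \;=\;& \tfrac{1}{m}\sum_{s=1}^m(\IK\GL(\TXS)\LXS\FR-\FR) \\
&+ \tfrac{1}{m}\sum_{s=1}^m\IK\GL(\TXS)(\SXS^*{\bf y}_s-\LXS\FR) \\
&+ \tfrac{1}{m}\sum_{s=1}^m\IK(f_{s,t+1}-g_{s,t+1}),
\end{align*}
the three summands being the bias, sample variance, and computational variance, respectively. After taking $\mE\|\cdot\|_\rho^2$, the cross-terms vanish by successively conditioning on $\Samples$ and on the mini-batch indices, so each piece can be bounded separately.

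Each bias summand is controlled by the integral-operator method with the auxiliary regularization level $\PRegPar=n^{\theta-1}$. Writing $\FR=\LK^\zeta h$ with $\|h\|_\rho\leq R$ (plus the $\|\FR\|_\infty$ correction when $2\zeta<1$), inserting $(\LK+\PRegPar I)^{\pm 1/2}$, and transferring between $\LK$ and $\TXS$ through the standard concentration estimate on $\|(\LK+\PRegPar I)^{-1/2}(\LK-\TXS)(\LK+\PRegPar I)^{-1/2}\|$ (an appendix lemma) produces the $Q_{\gamma,\theta,n}$ factors. The filter identities $\sup_\sigma(1-\sigma\GL(\sigma))\sigma^{\zeta\wedge 1}\lesssim(\eta t)^{-(\zeta\wedge 1)}$ and $\sup_\sigma|\GL(\sigma)|\leq \eta t$ then yield the $R^2/(\eta t)^{2\zeta}$ term, while the $(\PRegPar\eta t)^{2(\zeta\wedge 1)}$ multiplier comes from the operator perturbation bounds. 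In the saturated regime $\zeta>1$, the excess regularity is absorbed through an extra concentration step, producing the saturation factor $(\eta t)^{2\zeta-1}/n^{(\zeta-1/2)\wedge 1}$ in \eqref{eq:mainErr2}.

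The sample variance, being a sum of $m$ independent mean-zero contributions across partitions, contracts to $m^{-2}\sum_s\mE\|\cdot\|_\rho^2$; bounding each summand by a Bernstein-type operator inequality applied to $\SXS^*{\bf y}_s-\LXS\FR$ preconditioned by $(\TXS+\PRegPar I)^{-1/2}$ uses \eqref{noiseExp} for $\sigma^2$ and Assumption \ref{as:eigenvalues} for $\PRegPar^{-\gamma}$, giving $\sigma^2/(N\PRegPar^\gamma)$ with $N=mn$. The computational variance exploits the martingale structure of SGM: conditioned on $\Samples$, the per-step stochastic-gradient noise has conditional second moment in $\HK$ at most of order $M^2/b$, and unrolling through the filter $\GL$ and averaging over the $m$ independent partitions yields the $M^2\eta/(mb)$ contribution (the $\log t$ factor in \eqref{eq:mainErr} coming from summing the per-step contributions under the step-size restriction \eqref{etaRestri}). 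The hardest part is the bookkeeping of the multiplicative factors built from $\PRegPar\eta t$, $Q_{\gamma,\theta,n}$, and $\log t$ through the inductive spectral estimates while keeping $C_5,C_8,C_{10}$ independent of $t,N,m$; in particular, handling the interaction of the stochastic-gradient noise with $(\TXS+\PRegPar I)^{-1/2}$ and the $\zeta>1$ saturation is not covered by standard spectral-algorithm arguments and constitutes the novel operator analysis advertised in Subsection \ref{subsec:error}.
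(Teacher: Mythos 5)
Your proposal is correct and follows essentially the same route as the paper: the orthogonal three-way decomposition into bias, sample variance, and computational variance (the paper's Proposition~\ref{pro:errDecob}), the spectral filter $\GL(u)=\eta\sum_{j=0}^{t-1}(1-\eta u)^j$ for constant step-size, the comparison operator $(\LK+\PRegPar I)^{\pm1/2}$ with the concentration lemma on $\|\TKL^{1/2}\TXL^{-1/2}\|$ to produce the $Q_{\gamma,\theta,n}$ factors, the averaging contractions $m^{-1}$ and $m^{-2}$ for the two variance pieces, and the empirical-risk bound $\mcE_{\bz}(f_k)\lesssim M^2\Sigma_1^k/(\eta_k k)$ for the mini-batch noise. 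One minor point of divergence: for the sample variance the paper does not invoke a Bernstein-type operator bound; since $\epsilon_i=y_i-\FR(x_i)$ are conditionally uncorrelated given the inputs, it computes $\mE_{\by}\|\IK\GL(\TX)\SX^*\bfep\|_\rho^2$ exactly as a trace, bounds $\tr(\TK\GL(\TX)^2\TX)$ by $\mcN(\PRegPar)\,\DZF\,\|\GL(\TX)\TX\|\,\|\GL(\TX)\TXL\|$, and only then takes expectation over $\bx$ — this direct route is what yields the clean $\sigma^2/(N\PRegPar^\gamma)$ bound in expectation without spare logarithms, whereas a high-probability Bernstein step would have to be integrated via Lemma~\ref{lem:hipbExp}. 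Also, the factor you write as $(\PRegPar\eta t)^{2(\zeta\wedge 1)}$ is in fact $(\PRegPar\eta t)^2$ when $\zeta\le 1$ and $(\PRegPar\eta t)^{2\zeta}$ when $\zeta>1$ (Proposition~\ref{pro:fulBias}), the opposite of what $\zeta\wedge 1$ gives; and the technically hardest step you correctly flag — the bias — is handled in the paper by comparing $h_{t+1}=\GL(\TX)\LX\FR$ to the population iterate $r_{t+1}=\GL(\TK)\IK^*\FR$ and splitting into four terms (Lemma~\ref{lem:anaResult}), which is the precise device that makes the non-attainable $\zeta<1/2$ and unbounded-output cases go through.
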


In the above result, we only consider the setting of a fixed step-size. Results with a decaying step-size can be directly derived following our proofs in the coming sections, combining with some basic estimates from \citep{lin2017optimal}.  The error bound from  \eref{eq:mainErr} depends on the number of iteration $t$, the step-size $\eta$, the mini-batch size,
the number of sample points $N$ and the partition level $m$. It holds for any pseudo regularization parameter $\PRegPar$ where $ \PRegPar \in [n^{-1},1]$. When $ t \leq n/\eta$, for $\zeta \leq 1,$ we can choose $\PRegPar= (\eta t)^{-1}$, and ignoring the logarithmic factor and constants, \eref{eq:mainErr} reads as 
\begin{align}\label{eq:mainErrA}
\mE\| \IK \bar{f}_{t+1} - \FR\|_{\rho}^2 \lesssim  {1 \over (\eta t)^{2\zeta}} + {(\eta t)^{\gamma} \over N} + {\eta \over mb} .
\end{align}
The right-hand side of the above inequality is composed of three terms. The first term is related to the regularity parameter $\zeta$ of the  regression function $\FR$, and it results from estimating bias. The second term depends on the sample size $N,$ and it results from estimating sample variance. The last term results from estimating computational variance due to random choices of the sample points. In comparing with the error bounds derived for classic SGM performed on a local machine, one  can see that averaging over the local solutions can reduce sample and computational variances, but keeps bias unchanged. 
As the number of iteration $t$  increases, the bias term decreases, and the sample variance term increases. This is a so-called trade-off problem in statistical learning theory. Solving this trade-off problem leads to the best choice on number of iterations. Notice that the computational variance term is independent of the number of iterations $t$ and it depends on the step-size, the mini-batch size, and the partition level. To derive optimal rates, it is necessary to choose a small step-size, and/or a large mini-batch size, and a suitable partition level. 
In what follows, we provide different choices of these algorithmic parameters, corresponding to different regularization strategies, while leading to the same optimal convergence rates up to a logarithmic factor.

\begin{corollary}\label{cor:fastRatP}
	Under Assumptions  \ref{as:regularity} and \ref{as:eigenvalues}, let $\zeta \leq 1$, $2\zeta+\gamma> 1$ and 
	\be\label{eq:partNum}
	m \leq  N^{\beta}, \quad \mbox{with } 0\leq \beta < {2\zeta+\gamma - 1 \over 2\zeta+\gamma}.
	\ee
	Consider Algorithm \ref{alg:sgm} with any of the following choices on $\eta_t$, $b$ and $T_*$. \\
	1) $\eta_t \simeq n^{-1}$ for all $t \in [T_*]$, $b=1$, and $T_* \simeq N^{1 \over 2\zeta+\gamma}n .$\\
	2) $\eta_t \simeq n^{-1/2}$ for all $t \in [T_*]$, $b \simeq  \sqrt{n}$, and $T_* \simeq N^{1 \over 2\zeta+\gamma}\sqrt{n} .$ \\
	3) $\eta_t \simeq N^{-{2 \zeta \over 2\zeta+\gamma}} m$ for all $t \in [T_*],$ $b=1$, and $T_* \simeq N^{2\zeta + 1 \over 2\zeta +\gamma}/ m.$
	\\
	4) $\eta_t \simeq {1 \over \log N}$ for all $t \in [T_*],$ $b \simeq N^{2\zeta \over 2\zeta +\gamma}/ m$, and $T_* \simeq N^{ 1 \over 2\zeta +\gamma} \log N.$ \\
	Then,
	$$
	\mE\| \IK \bar{f}_{T_*+1} - \FR\|_{\rho}^2 \lesssim 
	N^{-{2\zeta \over 2\zeta +\gamma}}  \log N.
	$$
\end{corollary}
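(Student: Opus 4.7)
The plan is to apply Theorem \ref{thm:main}, case $\zeta \leq 1$, with a pseudo-regularization parameter tuned to balance bias against sample variance, and then verify that each of the four parameter configurations enters the bound in the same favorable way. Concretely, I would choose $\PRegPar \simeq N^{-1/(2\zeta+\gamma)}$, which is the standard bias/variance trade-off for the spectral-cutoff-type bound in \eqref{eq:mainErr}: it makes the bias term $(\eta t)^{-2\zeta}$ and the sample-variance term $1/(N\PRegPar^\gamma)$ both of order $N^{-2\zeta/(2\zeta+\gamma)}$, provided one also ensures $\eta T_* \simeq N^{1/(2\zeta+\gamma)}$, which is precisely the common calibration behind the four choices of $(\eta_t,b,T_*)$ in the statement.

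Next, I would verify admissibility of $\PRegPar$, i.e., $\PRegPar = n^{\theta-1}$ for some $\theta \in [0,1]$. Since $n = N/m \geq N^{1-\beta}$, one gets $\theta = 1 - \log(N^{1/(2\zeta+\gamma)})/\log n$, and a short calculation shows $\theta$ is bounded away from $0$ precisely under the hypothesis $\beta < (2\zeta+\gamma-1)/(2\zeta+\gamma)$ in \eqref{eq:partNum}. Equivalently, this same condition ensures $\PRegPar \geq 1/n$. Because $\theta$ is bounded below by a positive constant, $Q_{\gamma,\theta,n}$ defined in \eqref{eq:qthetan} is $O(1)$. Combined with $(\PRegPar \eta T_*)^2 = O(1)$ and $\log T_* = O(\log N)$, the leading ``$\vee$''-factor in \eqref{eq:mainErr} is $O(\log N)$, which is the source of the logarithmic factor in the final rate.

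Then I would run through the four configurations, in each case checking that (i) $\eta T_* \simeq N^{1/(2\zeta+\gamma)}$, (ii) the step-size constraint \eqref{etaRestri} holds (since $\log T_* \lesssim \log N$, it suffices to bound $\eta$ above by a constant times $1/\log N$; this is immediate in cases 1, 2, 4 and, in case 3, follows from $\eta \simeq N^{-2\zeta/(2\zeta+\gamma)} m \leq N^{\beta - 2\zeta/(2\zeta+\gamma)}$, whose exponent is negative because $\beta < (2\zeta+\gamma-1)/(2\zeta+\gamma) \leq 2\zeta/(2\zeta+\gamma)$ when $\gamma\leq 1$), and (iii) the computational-variance term $\eta/(mb)$ is at most $N^{-2\zeta/(2\zeta+\gamma)}$ up to a constant or a $1/\log N$ factor. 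A direct substitution gives $\eta/(mb) = 1/N$ in cases 1 and 2, $\eta/(mb) = N^{-2\zeta/(2\zeta+\gamma)}$ in case 3, and $\eta/(mb) \lesssim N^{-2\zeta/(2\zeta+\gamma)}/\log N$ in case 4, all acceptable.

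The main obstacle, in my view, is the bookkeeping around the admissibility of $\PRegPar = n^{\theta-1}$ and the fact that the same threshold $\beta < (2\zeta+\gamma-1)/(2\zeta+\gamma)$ must simultaneously (a) make $\PRegPar \geq 1/n$, (b) keep $\theta$ bounded away from zero so that $Q_{\gamma,\theta,n} = O(1)$, and (c) in case 3, keep the step-size below the $1/\log T$ threshold of \eqref{etaRestri}. All three constraints are tight and produce the same exponent, so once one sees this alignment the rest of the proof reduces to plugging in and collecting the $\log N$ factor from the leading bracket.
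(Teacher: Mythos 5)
Your proof is correct and follows essentially the same route as the paper: set $\PRegPar = N^{-1/(2\zeta+\gamma)}$, show via the partition condition \eqref{eq:partNum} that $\theta = 1 + \log\PRegPar/\log n$ lies in $(0,1]$ (and is in fact bounded away from $0$ so that $Q_{\gamma,\theta,n}=O(1)$), and then plug each of the four parameter choices into Theorem~\ref{thm:main} to see that $\eta T_* \simeq N^{1/(2\zeta+\gamma)}$ makes the three bracket terms all at most $N^{-2\zeta/(2\zeta+\gamma)}$ while the $\vee$-prefactor contributes only the $\log N$. The paper states this verification very tersely, and your write-up just makes the same bookkeeping explicit.
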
 
We add some comments on the above theorem. First,
the convergence rate is optimal up to a logarithmic factor, as it is almost the same as that for KRR from \citep{caponnetto2007optimal,smale2007learning} and also it nearly matches the minimax lower rate $O(N^{-{2\zeta \over 2\zeta+\gamma}})$ in \citep{caponnetto2007optimal}. In fact, let $\mathcal{P}(\gamma,\zeta)$
($\gamma \in (0,1)$ and $\zeta \in [1/2,1]$) be the set of probability measure $\rho$ on $Z,$ such that Assumptions
 \ref{as:basic}-\ref{as:eigenvalues} are satisfied. Then the following minimax lower rate  is a direct consequence of  \cite[Theorem 2]{caponnetto2007optimal}:
$$
\liminf_{N \to \infty} \inf_{f^N}  \sup_{\rho \in \mathcal{P}(\gamma,\zeta)} \Pr
\left( \Samples \in Z^N :	\mE\| \IK {f}^{N} - \FR\|_{\rho}^2 > C N^{-2\zeta \over 2\zeta+\gamma}
\right) = 1,
$$
for some constant $C > 0$ independent on $N$, where the infimum in the middle is taken over all algorithms as a map $ Z^N \ni \Samples \mapsto f^N \in \HK$. 
Alternative minimax lower rates (perhaps considering other quantities, $R$ and $\sigma^2$) could be found in
\cite[Theorem 3]{caponnetto2007optimal} and \cite[Theorem 3.5]{blanchard2016optimal}.
Second, distributed SGM saturates when $\zeta >1.$ The reason for this is that averaging over local solutions can only reduce sample and computational variances, not bias. Similar saturation phenomenon is also observed when analyzing distributed KRR in \citep{zhang2015divide,lin2017distributed}. 
Third, the condition $2\zeta+\gamma > 1$ is equivalent to assuming that the learning problem can not be too difficult. We believe that such a condition is necessary for applying distributed learning technique to reduce computational costs, as there are no means to reduce computational costs if the learning problem itself is not easy.
Fourth, as the learning problem becomes easier (corresponds to a bigger $\zeta$), the faster the convergence rate is, and moreover
the larger the number of partition $m$ can be. 
Finally, different parameter choices leads to different regularization strategies. In the first two regimes, the step-size and the mini-batch size are fixed as some prior constants (which only depends on $n$), while the number of iterations depends on some unknown distribution parameters. In this case, the regularization parameter is the number of iterations, which in practice can be tuned by using cross-validation methods.  Besides, the step-size and the number of iterations in the third regime, or the mini-batch size and the number of iterations in the last regime, depend on the unknown distribution parameters, and they have some regularization effects.  The above theorem asserts that distributed SGM with differently suitable choices of parameters can generalize optimally, provided the partition level $m$ is not too large.

\subsection{Optimal Rate for Multi-pass SGM on a Single Dataset}
As a direct corollary of Theorem \ref{thm:main}, we derive the following results for classic multi-pass SGM. 
\begin{corollary}\label{thm:sgm}
	Under Assumptions  \ref{as:regularity} and \ref{as:eigenvalues}, 
	consider Algorithm \ref{alg:sgm} with $m=1$ and any of the following choices on $\eta_t$, $b$ and $T_*$. 
	\\
	1) $\eta_t \simeq N^{-1}$ for all $t \in [T_*]$, $b=1$, and $T_* \simeq N^{\alpha + 1}.$\\
	2) $\eta_t \simeq N^{-1/2}$ for all $t \in [T_*]$, $b \simeq  \sqrt{N}$, and $T_* \simeq N^{\alpha +1/2} .$ \\
	3) $\eta_t \simeq N^{-{2 \zeta \alpha}}$ for all $t \in [T_*],$ $b=1$, and $T_* \simeq N^{\alpha (2\zeta+1)}.$
	\\
	4) $\eta_t \simeq {1 \over \log N}$ for all $t \in [T_*],$ $b \simeq N^{2\zeta \alpha}$, and $T_* \simeq N^{ \alpha} \log N.$ \\
	Here,  $$\alpha ={1 \over  (2\zeta+\gamma)\vee 1}.$$
	Then,
	\be
	\mE\| \IK \bar{f}_{T_*+1} - \FR\|_{\rho}^2 \lesssim  \begin{cases}
		N^{-{2\zeta \over 2\zeta + \gamma}} \log N,& \quad \mbox{if } 2\zeta + \gamma>1;\\ 
		N^{-{2\zeta}}  \log N,& \quad \mbox{otherwise}. 
	\end{cases}
	\ee
\end{corollary}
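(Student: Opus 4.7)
My plan is to apply Theorem~\ref{thm:main} with $m=1$ (so that $n=N$) and to specialize the pseudo-regularization parameter and the four parameter choices into the error bounds \eqref{eq:mainErr}--\eqref{eq:mainErr2}. Set $\alpha = 1/((2\zeta+\gamma)\vee 1)$ as in the statement. First I would observe that, in each of the four regimes, a direct computation gives $\eta T_* \simeq N^\alpha$ (capped at $N$ when $2\zeta+\gamma\leq 1$) and $\eta/b \lesssim N^{-2\zeta\alpha}$ (up to a $\log N$ factor in regime~4). The step-size restriction \eqref{etaRestri} is a routine check: in regimes~1--3 the step-size is a negative power of $N$ and hence dominated by $1/(4\kappa^2\log T_*)$ for large $N$, whereas in regime~4 both $\eta$ and $1/\log T_*$ scale like $1/\log N$ and the bound holds with a suitably chosen implicit constant.

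The core of the proof is to pick $\PRegPar := (\eta T_*)^{-1} \vee N^{-1}$, which lies in $[N^{-1},1]$ by construction. With this choice $(\PRegPar\, \eta T_*)^{2\zeta\vee 1}\leq 1$. Writing $\PRegPar = N^{\theta-1}$, one has $\theta \simeq (1-\alpha)\vee 0$, so $Q_{\gamma,\theta,N} = 1\vee[\gamma(\theta^{-1}\wedge \log N)] \lesssim \log N$ in the worst case $\theta=0$ (which happens only when $2\zeta+\gamma\leq 1$) and is bounded by a constant when $\theta>0$. Combined with the $\log t$ factor, the prefactor in \eqref{eq:mainErr} is therefore $O(\log N)$.

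For $\zeta>1$ the bound \eqref{eq:mainErr2} carries an additional prefactor $(\eta T_*)^{2\zeta-1}/N^{(\zeta-1/2)\wedge 1}$. Since $\zeta>1$ forces $2\zeta+\gamma>1$ and hence $\alpha = 1/(2\zeta+\gamma)$, substituting $\eta T_* = N^\alpha$ produces an exponent $\alpha(2\zeta-1) - ((\zeta-1/2)\wedge 1)$, which a short case analysis (splitting at $\zeta = 3/2$) shows to be nonpositive. Thus the prefactor remains $O(\log N)$ in the saturated regime as well. Plugging into the brackets of \eqref{eq:mainErr}--\eqref{eq:mainErr2}, the bias is $\simeq N^{-2\zeta\alpha}$, the sample-variance term is $\simeq N^{\gamma\alpha - 1}$, and the computational-variance term is $\lesssim N^{-2\zeta\alpha}$ by the parameter choices listed above. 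When $2\zeta+\gamma>1$ one has $\gamma\alpha - 1 = -2\zeta\alpha$, so all three terms collapse to $N^{-2\zeta/(2\zeta+\gamma)}$; when $2\zeta+\gamma\leq 1$, $\alpha = 1$ and $\gamma - 1 \leq -2\zeta$, so the bias term $N^{-2\zeta}$ dominates. Multiplying by the $O(\log N)$ prefactor yields the two stated rates.

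The main obstacle I foresee is the $\zeta>1$ case, where the extra prefactor in \eqref{eq:mainErr2} has to be controlled; beyond that, the proof is essentially book-keeping of the three terms under four essentially equivalent regularization strategies trading $\eta$, $b$, and $T_*$ against each other while keeping $\eta T_*$ and $\eta/b$ at the prescribed target scales.
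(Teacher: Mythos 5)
Your proposal is correct and follows essentially the same route as the paper: apply Theorem~\ref{thm:main} with $m=1$, $n=N$, choose $\PRegPar \simeq N^{-\alpha}$ (the paper writes this as $\PRegPar = N^{\theta-1}$ with $\theta=1-\alpha$, which is cleaner than your $(\eta T_*)^{-1}\vee N^{-1}$ since it is exactly of the form required by the theorem rather than only up to constants), observe the $Q_{\gamma,\theta,N}$ prefactor is a constant when $2\zeta+\gamma>1$ and $O(\log N)$ otherwise, verify the extra $(\eta t)^{2\zeta-1}/n^{(\zeta-1/2)\wedge 1}$ factor in \eqref{eq:mainErr2} stays bounded, and balance the three terms. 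The paper's own proof is terser but identical in substance; your explicit treatment of the step-size condition and the $\zeta>1$ prefactor fills in exactly the bookkeeping the paper leaves to the reader.
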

The above results provide generalization error bounds for multi-pass SGM trained on a single dataset.
The derived convergence rate is optimal in the minimax sense \citep{caponnetto2007optimal,blanchard2016optimal} up to a logarithmic factor. Note that SGM  does not have a saturation effect, and  optimal convergence rates can be derived for  any $\zeta \in ]0,\infty].$
Corollary \ref{thm:sgm} improves the result in \citep{lin2017optimal} in two aspects. First, the convergence rates
are better than those  (i.e., 
$	O(N^{-{2\zeta \over 2\zeta + \gamma}} \log N)$ if $2\zeta +\gamma \geq 1$ or 
$O(N^{-{2\zeta}} \log^4 N)$ otherwise) from \citep{lin2017optimal}.
Second, the above theorem does not require the extra condition $m \geq m_{\delta}$ made in \citep{lin2017optimal}.

\section{Distributed Learning with  Spectral Algorithms}\label{sec:distributed SRA}
In this section, we first state distributed SA. We then present  theoretical results for distributed SA, following with simple discussions.
Finally, we give convergence results for classic SA.  

\subsection{Distributed Spectral Algorithms}

In this subsection, we present distributed SA. 
We first recall that a filter function is defined as follows.

\begin{definition}[Filter functions]
	Let $\Lambda$ be a subset of $\mR_+.$
	A class of functions $\{\GLB: [0, \kappa^2] \to [0,\infty[, \lambda \in \Lambda \}$ is said to be filter functions with qualification $\tau $ ($\tau\geq 0$) if there exist some positive constants $E,F_{\tau}<\infty$ such that
	\be
	\label{eq:GLproper1}
	\sup_{\alpha\in [0,1]} \sup_{\lambda\in \Lambda}\sup_{u \in ]0,\kappa^2] } |u^{\alpha}\GLB(u)|\lambda^{1-\alpha} \leq E ,
	\ee
	and 
	\be\label{eq:GLproper4}
	\sup_{\alpha\in [0, \tau]} \sup_{\lambda \in \Lambda}	\sup_{u\in]0, \kappa^2]} |(1 - \GLB(u)u)|u^{\alpha}\lambda^{-\alpha} \leq F_{\tau} .\ee
\end{definition}

\begin{algorithm}
	\caption{Distributed learning with spectral algorithms}
	\label{alg:dSpe}
	\begin{algorithmic}[1]
	\Require{Number of partitions $m$, filter function $\GLB$
		, and kernel function $K(\cdot,\cdot)$}
	\State 
	Divide $\Samples$ evenly and uniformly at random
	into $m$ disjoint subsets, $\bz_1, \bz_2, \cdots, \bz_m$
	\State For every $s \in [m],$ compute a local estimate via SA over the samples $\bz_s$:  
	\footnote{Let $L$ be a self-adjoint, compact operator over a separable Hilbert space. $\GLB(L)$ is an operator on $L$ defined by spectral calculus: suppose that $\{(\sigma_i, \psi_i)\}_i$ is a set of
				normalized eigenpairs of $L$ with the eigenfunctions $\{\psi_i\}_i$ forming an orthonormal basis
				of $\HK$, then $\GLB(\TXS) = \sum_i \GLB(\sigma_i) \psi_i \otimes \psi_i.$}
			\bea
			\LESRA = \GLB(\TXS) {1\over n}\sum_{i=1}^n y_{s,i} K_{s,i}, \quad \TXS = {1 \over n} \sum_{i=1}^{n} \la \cdot, K_{x_{s,i}} \ra K_{x_{s,i}}
			\eea
	\State Take the averaging over these local estimators: 
	$
		\EDSRA = {1 \over m}\sum_{s=1}^m \LESRA
		$
	\Ensure  the function $ \EDSRA$
\end{algorithmic}
\end{algorithm}

%
In the algorithm, $\lambda$ is a regularization parameter which should be appropriately chosen in order to achieve best performance. In practice, it can be tuned by using the cross-validation methods.
SA is associated with some given filter functions.  Different filter functions correspond to different regularization algorithms. The following examples provide several common filter functions, which leads to different types of regularization methods, see e.g. \citep{gerfo2008spectral,bauer2007regularization}.

\begin{Exa}[KRR]
	The choice $\GLB(u) = (u+\lambda)^{-1}$ corresponds to Tikhonov regularization or the
	regularized least squares algorithm.  It is easy to see that
	$\{\GL(u): \lambda\in \mR_+\}$ is a class of  filter functions with qualification $\tau=1$, and 
	$E = F  = 1$. 
\end{Exa}

\begin{Exa}[GM]
	Let $\{\eta_k>0\}_k$ be such that $\eta_k \kappa^2 \leq 1$ for all $k \in \mN.$
	Then as will be shown in Section \ref{sec:proof},
	$$\GLB(u) = \sum_{k=1}^t \eta_k \prod_{i=k+1}^t (1 - \eta_i u)$$ where we identify $\lambda = (\sum_{k=1}^t \eta_k)^{-1},$
	corresponds to gradient methods  or Landweber iteration algorithm. 
	The qualification $\tau$ could be any positive number, 
	$E =1 ,$ and $F_\tau  = (\tau/\mathrm{e})^{\tau}$. 
\end{Exa}


\begin{Exa}[Spectral cut-off]
	Consider the spectral cut-off or truncated singular value decomposition (TSVD) defined by
	$$
	\GLB(u) = 
	\begin{cases}
	u^{-1}, & \mbox{if } u \geq \lambda,\\
	0, & \mbox{if } u < \lambda.
	\end{cases}
	$$
	Then
	the qualification $\tau$ could be any positive number and $E = F_{\tau} = 1$. 
\end{Exa}

\begin{Exa}[KRR with bias correction] \label{exa:KRRbc}
	The function $\GLB(u) = \lambda(\lambda+x)^{-2} + (\lambda+x)^{-1}$
	corresponds to KRR with bias correction. It is easy to show that the qualification $\tau=2$, $E = 2$ and $F_{\tau}=1.$
\end{Exa}

\footnotetext{Let $L$ be a self-adjoint, compact operator over a separable Hilbert space. $\GLB(L)$ is an operator on $L$ defined by spectral calculus: suppose that $\{(\sigma_i, \psi_i)\}_i$ is a set of
	normalized eigenpairs of $L$ with the eigenfunctions $\{\psi_i\}_i$ forming an orthonormal basis
	of $\HK$, then $\GLB(\TXS) = \sum_i \GLB(\sigma_i) \psi_i \otimes \psi_i.$}

The implementation of the algorithms is very standard using the representation theorem, for which we thus skip the details.

\subsection{Optimal Convergence for Distributed Spectral Algorithms}

We have the following general results for distributed  SA. 
\begin{thm}\label{thm:mainDFliter}
	Under Assumptions  \ref{as:regularity} and \ref{as:eigenvalues},  
	let $\GLB$ be a filter function with   qualification $\tau \geq  (\zeta\vee 1)$, and $\EDSRA$  be given by Algorithm \ref{alg:dSpe}. Then
	for any $\PRegPar = n^{\theta - 1}$ with $\theta \in [0,1]$, the following results hold.\\
	1) For $\zeta\leq 1,$
	\begin{align}
	\mE\| \IK \EDSRA - \FR\|_{\rho}^2 \leq ( Q_{\gamma,\theta,n}^{2\zeta \vee 1} \vee {\PRegPar^2 \over \lambda^2} )[ C_5' (R+{\bf 1}_{\{2\zeta<1\}} \|\FR\|_{\infty})^2{\lambda^{2\zeta}} + C_8'{\sigma^2 \over N \PRegPar^\gamma} ].
	\end{align}
	2) For $\zeta>1,$
	\begin{align}
	\mE\| \IK \EDSRA - \FR\|_{\rho}^2 \leq ( {\lambda^{1-2\zeta} \over n^{(\zeta-1/2)\wedge 1} } \vee Q_{\gamma,\theta,n} \vee {\PRegPar^{2\zeta} \over \lambda^{2\zeta}} )[ C_6' R^2{\lambda^{2\zeta}} + C_8'{\sigma^2 \over N \PRegPar^\gamma} ].
	\end{align}
	Here, $Q_{\gamma,\theta,n}$ is given by \eqref{eq:qthetan}, and $C_5',$ $C_6'$ and $C_8'$ are positive constants depending only $\kappa,\zeta,E, F_{\tau},$ $c_{\gamma}$ and  $\|\LK\|$. 
\end{thm}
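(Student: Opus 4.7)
The plan is to mirror the bias--variance decomposition developed for distributed SGM while exploiting the fact that a spectral algorithm is deterministic given the sample: only the bias and the sample variance appear, and there is no computational-variance term. For each local machine I would write
\begin{align*}
\IK \LESRA - \FR = \underbrace{\bigl(\IK \GLB(\TXS)\LXS \FR - \FR\bigr)}_{B_s} + \underbrace{\IK \GLB(\TXS)\bigl(\SXS^* y_s - \LXS \FR\bigr)}_{V_s}.
\end{align*}
Since the subsets $\bz_s$ are drawn i.i.d.\ and each has size $n$, the distributions of $B_s$ and $V_s$ do not depend on $s$. An elementary variance-of-mean argument for the independent $V_s$ across machines reduces the variance contribution by a factor $1/m$, while the bias is unaffected by averaging, giving
\begin{align*}
\mE\|\IK \EDSRA - \FR\|_\rho^2 \lesssim \|\mE[B_1]\|_\rho^2 + \frac{1}{m}\,\mE\|V_1\|_\rho^2,
\end{align*}
which is the ``averaging reduces the variance'' principle underlying every distributed-learning result.

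For the bias, I would introduce the pseudo-regularization parameter $\PRegPar = n^{\theta-1}$ and insert the bridge $(\LK+\PRegPar I)^{1/2}(\TXS+\PRegPar I)^{-1/2}$ between the empirical and the population covariance. Combined with the filter-function qualification \eqref{eq:GLproper4} applied at $\alpha = \zeta \wedge \tau$, the source condition $\FR = \LK^\zeta g$ (with the $\|\FR\|_{\infty}$ supplement handling $2\zeta<1$), and the effective-dimension bound from Assumption~\ref{as:eigenvalues}, the bias is controlled by $(R + {\bf 1}_{\{2\zeta<1\}}\|\FR\|_{\infty})^2 \lambda^{2\zeta}$ times a concentration prefactor. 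That prefactor is exactly what collapses to a power of $Q_{\gamma,\theta,n}$, via the standard lemma bounding $(\LK+\PRegPar I)^{1/2}(\TXS+\PRegPar I)^{-1/2}$ in operator norm in expectation when $\PRegPar \ge 1/n$, which is where the constraint $\theta \in [0,1]$ enters. For the sample variance of a single local estimator, I would use \eqref{eq:GLproper1} to replace $\IK\GLB(\TXS)$ by $\IK(\TXS+\PRegPar I)^{-1/2}$ up to the same operator-comparison factor, and then invoke a Bernstein inequality for the centered noise sum $\SXS^* y_s - \LXS \FR$ to obtain $\mE\|V_1\|_\rho^2 \lesssim c_\gamma \sigma^2/(n\PRegPar^\gamma)$ from Assumption~\ref{as:eigenvalues}. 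The $1/m$ saving from averaging then produces the claimed $\sigma^2/(N\PRegPar^\gamma)$ variance term, while multiplying by the concentration prefactor gives the leading $Q_{\gamma,\theta,n}^{2\zeta\vee 1}$ (or $Q_{\gamma,\theta,n}$ when $\zeta>1$) appearing in the final statement.

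The main technical obstacle is the saturation regime $\zeta > 1$, where the filter inequality cannot absorb $\TXS^\zeta$ against $\LK^\zeta$ in a single step: the residual after transferring to $\LK$ is governed by $\|\TXS^\zeta - \LK^\zeta\|$ rather than by $\lambda^{2\zeta}$. To control this I would use the integral identity
\begin{align*}
\TXS^\zeta - \LK^\zeta = \zeta\int_0^1 \bigl((1-t)\LK + t\TXS\bigr)^{\zeta-1}(\TXS - \LK)\,dt
\end{align*}
for $\zeta \in (1,2]$, iterate for larger $\zeta$, and bound $\mE\|\TXS - \LK\|_{\mathrm{HS}}^{2k}$ by Rosenthal-type moment inequalities on sums of i.i.d.\ rank-one operators, giving the rate $n^{-((\zeta-1/2)\wedge 1)}$. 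This is precisely what produces the extra $\lambda^{1-2\zeta}/n^{(\zeta-1/2)\wedge 1}$ term inside the maximum in part~2. Combining the bias and variance estimates, taking the maximum of the resulting prefactors, and absorbing numerical constants into $C_5', C_6', C_8'$ then yields both inequalities in the statement.
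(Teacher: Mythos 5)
Your two--term decomposition $\IK\LESRA - \FR = B_s + V_s$ is exactly the decomposition the paper uses (with $B_s = \IK\LEPSRA - \FR$ and $V_s$ the fluctuation around the pseudo--SA estimator), and the variance bookkeeping $\mE\|\bar V\|_\rho^2 = \frac{1}{m}\mE\|V_1\|_\rho^2$ is correct. The sample--variance estimate via the effective dimension, the operator--norm comparison $(\LK+\PRegPar I)^{1/2}(\TXS+\PRegPar I)^{-1/2}$, and the role of $\PRegPar = n^{\theta-1}$ are all in line with the paper's Lemma~\ref{lem:operDifRes} and Proposition~\ref{pro:localSamVarSRA}.

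However, your reduction of the averaged bias to $\|\mE[B_1]\|_\rho^2$ is a genuine gap. The local biases $B_s$ are \emph{random} (they depend on $\bx_s$ through $\TXS$ and $\LXS$), and
\begin{equation*}
\mE\Bigl\| \tfrac{1}{m}\textstyle\sum_s B_s \Bigr\|_\rho^2 \;=\; \|\mE[B_1]\|_\rho^2 + \tfrac{1}{m}\,\mathrm{Var}(B_1),
\end{equation*}
so the inequality $\mE\|\bar B\|_\rho^2 \lesssim \|\mE[B_1]\|_\rho^2$ you implicitly assert does not hold without also controlling $\mathrm{Var}(B_1)$. The paper sidesteps this entirely with the cruder but rigorous Jensen step $\mE\|\bar B\|_\rho^2 \le \mE\|B_1\|_\rho^2$ (Lemma~\ref{lem:fulLocBiasSRA}), and then bounds $\mE\|B_1\|_\rho^2$, not $\|\mE B_1\|_\rho^2$, via the four--term split Bias.1--Bias.4 in Lemma~\ref{lem:anaResultSRA}. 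If you keep your tighter form, you inherit the additional obligation of bounding $\mathrm{Var}(B_1)$, which is never addressed in your sketch; with the Jensen form the result follows as in Proposition~\ref{pro:localBiasSRA}.

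A secondary point concerns the regime $\zeta>1$. The extra factor $\lambda^{1-2\zeta}/n^{(\zeta-1/2)\wedge 1}$ in the paper arises by isolating the operator $\TK^{\zeta-1/2}-\TX^{\zeta-1/2}$ inside Bias.3 and applying Lemma~\ref{lem:operDiff}: for $1<\zeta<3/2$ the relevant fractional power $\zeta-\tfrac12$ lies in $(\tfrac12,1)$ and operator monotonicity yields the exponent $\zeta-\tfrac12$ on $\|\TK-\TX\|$. Your integral identity for $\TXS^\zeta-\LK^\zeta$ with $\zeta\in(1,2]$ only produces the exponent $1$, since $((1-t)\LK + t\TXS)^{\zeta-1}$ is bounded in operator norm; you would not recover the $n^{-(\zeta-1/2)}$ rate when $\zeta<3/2$ this way, and you are aiming the comparison at the wrong power of the covariance anyway. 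Replacing the integral--identity/Rosenthal step with the fractional--power difference bound (paper's Lemma~\ref{lem:operDiff}) plus the Hilbert--Schmidt concentration of $\TK-\TX$ (Lemma~\ref{lem:statEstiOper}), converted to expectation via Lemma~\ref{lem:hipbExp}, closes this second gap.
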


The above results provide generalization error bounds for distributed SA. The upper bound depends on the number of partition $m$, the regularization parameter $\lambda$ and total sample size $N$.  When  the regularization parameter $\lambda> 1/n,$ by setting $\PRegPar = \lambda$,
the derived error bounds for $\zeta\leq 1$ can be simplified as  
\begin{align*}
\mE\| \IK \EDSRA - \FR\|_{\rho}^2 \lesssim {\lambda^{2\zeta}} + {1 \over N \lambda^\gamma} .
\end{align*}
There are two terms in the upper bound. They are raised from estimating bias and sample variance.
Note that there is a trade-off between the bias term and the sample variance term. Solving this trade-off leads to the best choice on regularization parameter.
Note also that similar to that for distributed SGM, distributed SA also saturates when $\zeta >1.$

\begin{corollary}\label{cor:DSRA}
	Under the assumptions  of Theorem \ref{thm:mainDFliter},  let $2\zeta+\gamma>1$, $\lambda= N^{-{1 \over 2\zeta+\gamma}}$
	and  the number of partitions satisfies \eref{eq:partNum}.
	Then 
	\begin{align}
	\mE\| \IK \EDSRA - \FR\|_{\rho}^2 \lesssim N^{-{2\zeta \over 2\zeta+\gamma}}.
	\end{align}
\end{corollary}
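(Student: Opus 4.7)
\medskip

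\noindent\textbf{Proof proposal.} The plan is to apply Theorem \ref{thm:mainDFliter} directly with the prescribed choice $\lambda = N^{-1/(2\zeta+\gamma)}$, paired with the pseudo-regularization parameter $\PRegPar = n^{\theta-1}$ chosen so that $\PRegPar = \lambda$. This forces $\theta = 1 - \tfrac{\log(1/\lambda)}{\log n}$, and the first task is to verify that this value lies in $[0,1]$ and, more importantly, is bounded below by a positive constant that depends only on $\zeta,\gamma,\beta$. Once this is done, one reads off the asserted rate by plugging in.

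The verification of admissibility of $\theta$ is where the partition condition \eqref{eq:partNum} enters. From $m \le N^{\beta}$ I would infer $n \ge N^{1-\beta}$ and hence
\[
\frac{\log(1/\lambda)}{\log n} \;=\; \frac{1}{(2\zeta+\gamma)}\cdot\frac{\log N}{\log n} \;\le\; \frac{1}{(2\zeta+\gamma)(1-\beta)}.
\]
Because \eqref{eq:partNum} gives $\beta < (2\zeta+\gamma-1)/(2\zeta+\gamma)$, the right-hand side is a constant strictly less than $1$, so $\theta$ is bounded below by a positive constant $\theta_0 = \theta_0(\zeta,\gamma,\beta)$. Consequently $Q_{\gamma,\theta,n} = 1 \vee [\gamma(\theta^{-1} \wedge \log n)] \le 1 \vee (\gamma/\theta_0)$ is itself a constant, and with the choice $\PRegPar = \lambda$ the leading prefactor $Q_{\gamma,\theta,n}^{2\zeta\vee 1} \vee (\PRegPar/\lambda)^2$ in Theorem \ref{thm:mainDFliter}(1) collapses to a constant.

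It then remains to compute the two surviving terms. The bias contribution is $(R + {\bf 1}_{\{2\zeta<1\}}\|\FR\|_{\infty})^2 \lambda^{2\zeta} \asymp N^{-2\zeta/(2\zeta+\gamma)}$, while the sample-variance contribution is
\[
\frac{\sigma^2}{N\,\PRegPar^{\gamma}} \;=\; \frac{\sigma^2}{N\lambda^{\gamma}} \;=\; \sigma^2 N^{-1+\gamma/(2\zeta+\gamma)} \;=\; \sigma^2 N^{-2\zeta/(2\zeta+\gamma)},
\]
so the choice $\lambda = N^{-1/(2\zeta+\gamma)}$ exactly balances bias and variance at the claimed rate. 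For $\zeta\le 1$ this already finishes the argument; the only non-routine step is the elementary logarithmic manipulation above. For $\zeta>1$ one has to additionally control the extra factor $\lambda^{1-2\zeta}/n^{(\zeta-1/2)\wedge 1}$ appearing in Theorem \ref{thm:mainDFliter}(2); I expect this to be the main obstacle, since imposing $\lambda^{1-2\zeta}/n^{(\zeta-1/2)\wedge 1} = O(1)$ yields a tighter constraint on $\beta$ than \eqref{eq:partNum} does once $\zeta>1$, which reflects the saturation of distributed averaging noted after Theorem \ref{thm:mainDFliter}. In the regime $\zeta\le 1$, however, no such extra factor appears, and the corollary follows by direct substitution.
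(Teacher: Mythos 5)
Your argument is correct and essentially the paper's own: the paper leaves Corollary~\ref{cor:DSRA} as a direct consequence of Theorem~\ref{thm:mainDFliter}, and your verification that $\theta>0$ under condition \eqref{eq:partNum} matches precisely the explicit calculation the paper gives in the proof of the SGM analogue, Corollary~\ref{cor:fastRatP}. Your caveat about $\zeta>1$ is also accurate---condition \eqref{eq:partNum} does not control the factor $\lambda^{1-2\zeta}/n^{(\zeta-1/2)\wedge 1}$ in Theorem~\ref{thm:mainDFliter}(2), which is the saturation the paper itself acknowledges; the corollary's clean rate is effectively only for $\zeta\le 1$, as Table~\ref{table:1} reflects.
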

The convergence rate from the above corollary is optimal as it matches exactly the minimax rate in \citep{caponnetto2007optimal}, and it is better than the rate for distributed SGM from Theorem \ref{thm:main}, where the latter has an extra logarithmic factor.
According to Corollary \ref{cor:DSRA}, distributed SA with an appropriate choice of regularization parameter $\lambda$ can generalize optimally, if the number of partitions is not too large. 
To the best of our knowledge, the above corollary is the first optimal statistical result for distributed SA considering the non-attainable case (i.e. $\zeta$ can be less than $ 1/2$). Moreover, the requirement on the number of partitions $m <N^{2\zeta+\gamma - 1\over 2\zeta+\gamma}$ to achieve optimal generalization error bounds
is much weaker than that ($m\leq  N^{2\zeta-1 \over 2\zeta+\gamma}$) in \citep{guo2017learning,blanchard2016parallelizing}. 

\subsection{Optimal Rates for Spectral Algorithms on a Single Dataset}

The following results provide  generalization error bounds for classic SA.

\begin{corollary}\label{thm:Fliter}
	Under Assumptions \ref{as:regularity} and \ref{as:eigenvalues},  let $\GLB$ be a filter function with   qualification $\tau \geq (\zeta \vee 1)$, and $\FLESRA$ be given by Algorithm \ref{alg:dSpe} with $\lambda= N^{-{1 \over 1 \vee (2\zeta+\gamma)}}$ and $m=1$. 
	Then 
	\be
	\mE\| \IK \FLESRA- \FR\|_{\rho}^2 \lesssim  \begin{cases}
		N^{-{2\zeta \over 2\zeta + \gamma}},& \quad \mbox{if } 2\zeta + \gamma>1;\\ 
		N^{-{2\zeta}}(1\vee  \log N^{\gamma}),& \quad \mbox{otherwise}. 
	\end{cases}
	\ee
	Here, $a_1\lesssim a_2$ means $a_1 \leq  C a_2$ for some positive constant $C$ which is depending only on  $\kappa, c_{\gamma}, \zeta, M, \sigma, \|\LK\|, E$, and  $F_{\tau}$.
\end{corollary}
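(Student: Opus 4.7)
The plan is to deduce the corollary directly from Theorem~\ref{thm:mainDFliter} by specializing to $m=1$ (so that $n=N$) and to the stated choice of $\lambda$, then picking the pseudo-regularization parameter $\PRegPar$ so that the multiplicative prefactor and the bracket are simultaneously under control.

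First I would set $\PRegPar = \lambda$, which forces the ratio $\PRegPar/\lambda = 1$ and thus kills the $(\PRegPar/\lambda)^{2\zeta\vee 2}$ contribution in the prefactor. This is a legal choice provided $\lambda \geq 1/n = 1/N$, which holds since the exponent $1/(1\vee(2\zeta+\gamma))$ is at most $1$. With this choice, $\theta$ is determined by $\lambda = N^{\theta-1}$, giving $\theta = 1 - 1/(1\vee(2\zeta+\gamma))$. I then compute $Q_{\gamma,\theta,n}$ from \eqref{eq:qthetan} in each regime: when $2\zeta+\gamma > 1$, $\theta^{-1}$ is a constant, so $Q_{\gamma,\theta,n} = O(1)$; when $2\zeta+\gamma \leq 1$, we have $\theta = 0$ and hence $Q_{\gamma,\theta,n} = 1 \vee (\gamma \log N)$.

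Next I would plug $\PRegPar = \lambda$ into the bracket $[\cdot]$ in both parts of Theorem~\ref{thm:mainDFliter} to obtain the usual bias--variance trade-off expression
\begin{equation*}
\lambda^{2\zeta} + \frac{1}{N \lambda^{\gamma}}.
\end{equation*}
A short calculation shows that the choice $\lambda = N^{-1/(1\vee(2\zeta+\gamma))}$ balances these two terms: in the capacity-dependent regime $2\zeta+\gamma > 1$ both terms equal $N^{-2\zeta/(2\zeta+\gamma)}$, while in the regime $2\zeta+\gamma \leq 1$ (which forces $\zeta \leq 1/2$) one has $\lambda = N^{-1}$, and both terms are of order $N^{-2\zeta}$ (using $\gamma \leq 1 - 2\zeta$). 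Combining these bracket estimates with the prefactor gives exactly $N^{-2\zeta/(2\zeta+\gamma)}$ in the first case, and $N^{-2\zeta}(1\vee \log N^{\gamma})$ in the second (where the logarithmic factor is inherited from $Q_{\gamma,\theta,n}$).

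The one step that needs verification is the $\zeta > 1$ case, where the prefactor in Theorem~\ref{thm:mainDFliter} also contains $\lambda^{1-2\zeta}/n^{(\zeta-1/2)\wedge 1}$. I would check that, under the choices above and using $n=N$, this extra term is $O(1)$: substituting $\lambda = N^{-1/(2\zeta+\gamma)}$ (note $\zeta>1$ already forces $2\zeta+\gamma>1$) gives an exponent of $N$ equal to $(2\zeta-1)/(2\zeta+\gamma) - ((\zeta-1/2)\wedge 1)$, which is nonpositive precisely because $2\zeta+\gamma \geq 2$; this is the only place where a small calculation is required, and it constitutes the main (though minor) technical check. Once this is established, the bound collapses to $O(N^{-2\zeta/(2\zeta+\gamma)})$ in the $\zeta>1$ case as well, finishing the proof.
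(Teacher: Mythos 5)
Your proof is correct and is exactly the intended specialization: set $m=1$ (so $n=N$), take $\PRegPar=\lambda$ so the $(\PRegPar/\lambda)$-ratio terms become trivial, verify $\lambda\geq N^{-1}$ to make this choice admissible, compute $Q_{\gamma,\theta,n}$ in each regime, and balance $\lambda^{2\zeta}$ against $(N\lambda^{\gamma})^{-1}$; the extra check that $\lambda^{1-2\zeta}/n^{(\zeta-1/2)\wedge 1}\lesssim 1$ when $\zeta>1$ is the only subtlety and you handle it correctly. The paper treats this as an immediate consequence of Theorem~\ref{thm:mainDFliter} and gives no details, so your write-up supplies precisely the omitted bookkeeping.
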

The above results assert that SA generalizes optimally if the regularization parameter is well chosen. To the best of our knowledge, the derived result is the first one with optimally capacity-dependent  rates in the non-attainable case for a general SA.
Note that unlike distributed SA, classic SA does not have a saturation effect.

\section{Discussion}\label{sec:discussion}

In this section, we briefly review some of the related results in order to facilitate comparisons.   For ease of comparisons, we summarize some of the results and their computational costs in Table \ref{table:1}.

We first briefly review convergence results on generalization error for KRR, and more generally, SA. 
Statistical results for KRR with different convergence rates have been shown in, e.g., \citep{smale2007learning,caponnetto2007optimal,wu2006learning,steinwart2008support,steinwart2009optimal}. 
Particularly,  \cite{smale2007learning} proved convergence rates of order $O(N^{-{2\zeta \over 1+(2\zeta\vee 1)}})$ with $0<\zeta\leq 1$, without considering the capacity assumption. 
\cite{caponnetto2007optimal} gave optimally capacity-dependent convergence rate of order $O(N^{-{2\zeta \over 2\zeta+\gamma}})$ but only for the case that $1/2 \leq \zeta\leq 1$.  
The above two are based on integral operator approaches.  Using an alternative argument related to covering-number or entropy-numbers, \cite{wu2006learning} provided convergence rate $O(n^{-{2\zeta \over 1 + \gamma}})$, and \cite[Theorem 7.23]{steinwart2008support} providesls convergence rate $O(n^{-{2\zeta \over (2\zeta + \gamma)\vee 1}})$, assuming that $0<\zeta \leq 1/2,$ $\gamma \in (0,1)$ and $|y| \lesssim 1$ almost surely. 
For GM, \cite{yao2007early} derived convergence rate of order $O(N^{-{2\zeta \over 2 \zeta+2}})$ (for $\zeta \in]0,\infty[$), without considering the capacity assumption.  Involving the capacity assumption, \cite{lin2017optimal} derived convergence rate  of order $O(N^{-2\zeta \over 2\zeta +\gamma} \log^2 N)$ if $2\zeta+\gamma > 1$, or $O(N^{-2\zeta} \log^4 N)$ if $2\zeta +\gamma \leq 1$. Note that both proofs from \citep{yao2007early,lin2017optimal} rely on the special separable properties of GM with the square loss.
%
For SA, statistical results on generalization error with different convergence rates have been shown in, e.g., \citep{caponnetto2006,bauer2007regularization,blanchard2016optimal,dicker2017kernel, lin2017distributed}. 
The best convergence rate shown so far (without making any extra unlabeled data as that in \citep{caponnetto2006}) is $O(N^{-{2\zeta \over 2\zeta+\gamma}})$ \citep{blanchard2016optimal,dicker2017kernel, lin2017distributed} but only for the attainable case, i.e., $\zeta\geq 1/2$. These results also apply to GM, as GM can be viewed as a special instance of SA.
Note that some of these results also require the extra assumption that the sample size $N$ is large enough. 
In comparisons, Corollary \ref{thm:Fliter} provides the best convergence rates for SA, considering both the non-attainable and attainable cases and without making any extra assumption. 
Note that our derived error bounds are in expectation, but it is not difficult to derive error bounds in  high probability using our approach, and we will report this result in a future work.

%
%
%

\begin{table}
	\newcommand{\tabincell}[2]{\begin{tabular}{@{}#1@{}}#2\end{tabular}}
	\centering
	\resizebox{\textwidth}{!}{
		\begin{tabular}{ | c | c | c | c | c | c |}
			\hline
			Algorithm & Ass. & $\#$ Processors $m$ & Rate &  \tabincell{c}{Local Memory\\ \& Time} &
			\tabincell{c}{Memory\\ \& Time} \\
			\hline
			KRR \citep{smale2007learning}	& $\zeta\in ]0,1]$, $\gamma=1$ &  1 &  $N^{-{2\zeta \over (2\zeta\vee 1)+1}}$ & $\times$ & $N^2$ \& $N^3$ \\
			\hline
			KRR \citep{caponnetto2007optimal}	& $\zeta\in [{1\over 2},1]$, $\gamma \in ]0,1]$, $N\geq N_{\delta}$ &  1 &  $N^{-{2\zeta \over 2\zeta+\gamma}}$ & $\times$ & - \\
			\hline
			KRR \citep{steinwart2008support}\tablefootnote{The results from \citep{steinwart2008support} are based on entropy-numbers arguments while the other results summarized for KRR in the table are based on integral-operator arguments.}	& $\zeta\in [0, {1\over 2}]$, $\gamma \in ]0,1[$, $|y| \lesssim 1$ &  1 &  $N^{-{(2\zeta \over 2\zeta+\gamma) \vee 1}}$ & $\times$ & - \\
			\hline
			\tabincell{c}{\it KRR  [{Corollary \ref{thm:Fliter}}]}	& $\boldsymbol{\zeta\in ]0,1],2\zeta+\gamma > 1}$ &  1 &  $\boldsymbol{N^{-{2\zeta \over 2\zeta+\gamma}}}$ & $\times$ & - \\
			\hline
			\tabincell{c}{\it KRR [{Corollary \ref{thm:Fliter}}]}	&  $\boldsymbol{2\zeta+\gamma \leq 1}$ &  1 &  $\boldsymbol{N^{-{2\zeta}} \log N^{\gamma}}$ & $\times$ & - \\
			\hline \\
			\hline
			GM \citep{yao2007early}	& $\gamma = 1$ &  1 &  $N^{-{2\zeta \over 2\zeta+2}}$ & $\times$ & $N$ \& $N^2 N^{1\over 2\zeta+2}$ \\
			\hline
			GM \citep{dicker2017kernel}	& $\zeta\in [{1\over 2},\infty[$, $\gamma \in]0,1]$, $N \geq N_0$ &  1 &  $N^{-{2\zeta \over 2\zeta+\gamma}}$ & $\times$ & $N$ \& $N^2 N^{1\over 2\zeta+\gamma}$ \\
			\hline
			GM \citep{lin2017optimal}	&  $2\zeta+\gamma > 1$, $N \geq N_{\delta}$ &  1 &  $N^{-{2\zeta \over 2\zeta+\gamma}} \log^2N$ & $\times$ & $N$ \& $N^2 N^{1\over 2\zeta+\gamma}$ \\
			\hline
			GM \citep{lin2017optimal}	& $2\zeta+\gamma \leq 1$, $N \geq N_{\delta}$ &  1 &  $N^{-{2\zeta}} \log^4N$ & $\times$ & $N$ \& $N^3 $ \\
			\hline
			\tabincell{c}{\it  GM [{Corollary \ref{thm:Fliter}}]} 	&  \boldsymbol{ $2\zeta+\gamma > 1$ } &  1 &   \boldsymbol{$N^{-{2\zeta \over 2\zeta+\gamma}}$} & $\times$ & $N$ \& $N^2 N^{1\over 2\zeta+\gamma}$ \\
			\hline
			\tabincell{c}{\it GM [{Corollary \ref{thm:Fliter}}]} 	& $ \boldsymbol{2\zeta+\gamma \leq 1}$ &  1 &  $ \boldsymbol{N^{-{2\zeta}}\log N^{\gamma}}$ & $\times$ & $N$ \& $N^3 $ \\
			\hline \\
			\hline
			\tabincell{c}{SA \citep{guo2017learning}} 	&  $\zeta \in [{1\over 2}, \tau],\gamma \in ]0,1]$ &  1 &   \boldsymbol{$N^{-{2\zeta \over 2\zeta+\gamma}}$} & $\times$ & $-$\\
			\hline
			\tabincell{c}{\it SA [{Corollary \ref{thm:Fliter}}]} 	&  $\zeta \leq \tau,$  \boldsymbol{ $2\zeta+\gamma > 1$ } &  1 &   \boldsymbol{$N^{-{2\zeta \over 2\zeta+\gamma}}$} & $\times$ & $-$ \\
			\hline
			\tabincell{c}{\it SA [{Corollary \ref{thm:Fliter}}]} 	& $ \zeta \leq \tau, \boldsymbol{2\zeta+\gamma \leq 1}$ &  1 &  $ \boldsymbol{N^{-{2\zeta}}\log N^{\gamma}}$ & $\times$ & $-$ \\
			\hline \\
			\hline
			OL \citep{ying2008online}	& $\gamma =1$ &  1 &  $N^{-{2\zeta \over 2\zeta+1}} \log N$ & $\times$ & $N$ \& $N^2$ \\
			\hline
			AveOL \citep{dieuleveut2016nonparametric}	&  $\zeta\in ]0,1]$, $2\zeta+\gamma >1$ &  1&  $N^{-{2\zeta \over 2\zeta+\gamma}}$ & $\times$ & $N$ \& $N^2$ \\
			\hline
			AveOL \citep{dieuleveut2016nonparametric}	&  $\boldsymbol{2\zeta+\gamma \leq 1}$ &  1&  $\boldsymbol{N^{-{2\zeta}}}$ & $\times$ & $N$ \& $N^2$ \\
			\hline
			SGM	\citep{lin2017optimal} &  $2\zeta+\gamma> 1, N\geq N_{\delta}$ &  1 &  $N^{-{2\zeta \over 2\zeta+\gamma}} \log^2N $ & $\times$ & $N$ \& $N^2 N^{1-\gamma \over 2\zeta+\gamma}$ \\
			\hline
			SGM	\citep{lin2017optimal} &  $2\zeta+\gamma\leq 1, N\geq N_{\delta}$ &  1 &  $N^{-2\zeta} \log^4N $ & $\times$ & $N$ \& $ N^{3-\gamma}$ \\
			\hline
		\it	SGM	[{Corollary \ref{thm:sgm}}] &  $\boldsymbol{2\zeta+\gamma> 1}$ &  1 &  $ \boldsymbol{N^{-{2\zeta \over 2\zeta+\gamma}} }$ & $\times$ & $N$ \& $N^2 N^{1-\gamma \over 2\zeta+\gamma}$ \\
			\hline
		\it	SGM	[{Corollary \ref{thm:sgm}}]  &  $\boldsymbol{ 2\zeta+\gamma\leq 1}$ &  1 &  $\boldsymbol{N^{-2\zeta} \log N^{\gamma} }$ & $\times$ & $N$ \& $ N^{3-\gamma}$ \\
			\hline \\
			\hline
			NyKRR \citep{rudi2015less}	&  $\zeta\in [{1\over 2},1]$, $\gamma \in ]0,1]$, $N\geq N_{\delta}$ &  1&  $N^{-{2\zeta  \over 2\zeta+\gamma}}$ & $\times$ & $N^{{2\zeta + \gamma+1 \over 2\zeta+\gamma}}$ \& $N^{2\zeta+2+\gamma \over 2\zeta+\gamma}$ \\
			\hline
			NySGM \citep{lin2017NySGM}	&  $\zeta\in [{1\over 2},1]$, $\gamma \in ]0,1]$, $N\geq N_{\delta}$ &  1&  $N^{-{2\zeta \over 2\zeta+\gamma}}$ & $\times$ & $N^{{2\over 2\zeta+\gamma}\vee 1}$ \& $N^{2\zeta+2 \over 2\zeta+\gamma}$ \\
			\hline
			DKRR  \& DSA \citep{guo2017learning}	&  $\zeta\in [{1\over 2},1]$, $\gamma \in ]0,1]$ &  $N^{2\zeta - 1 \over 2\zeta + \gamma}$&  $N^{-{2\zeta \over 2\zeta+\gamma}}$ & $N^{2(1+\gamma) \over 2\zeta+\gamma}$ \& $N^{3(1+\gamma) \over 2\zeta+\gamma}$ & $N^{{2 \zeta+2\gamma+1 \over 2\zeta+\gamma} }$ \& $N^{2\zeta+2+3\gamma \over 2\zeta+\gamma}$ \\
			\hline
			\tabincell{c}{
			\it	DKRR \& DSA 
				[{Corollary \ref{cor:DSRA}}]}	&  $\boldsymbol{\zeta \in ]0,1]},\boldsymbol{ 2\zeta+\gamma> 1}$ &  $N^{2\zeta+\gamma - 1 \over 2\zeta + \gamma}$&  $\boldsymbol{N^{-{2\zeta \over 2\zeta+\gamma}}}$ & $N^{2 \over 2\zeta+\gamma}$ \& $N^{3 \over 2\zeta+\gamma}$ & $N^{{2 \zeta+\gamma+1 \over 2\zeta+\gamma} }$ \& $N^{2\zeta+2+\gamma \over 2\zeta+\gamma}$ \\
			\hline
			\tabincell{c}{
			\it	DSGM 
				[{Corollary \ref{cor:fastRatP}.(3)}]}	&   $\boldsymbol{\zeta \in ]0,1], 2\zeta+\gamma > 1}$ &  $N^{2\zeta +\gamma - 1 \over 2\zeta + \gamma}$&  $\boldsymbol{N^{-{2\zeta \over 2\zeta+\gamma}}}$ & $N^{1 \over 2\zeta+\gamma}$ \& ${N^{2 \over 2\zeta+\gamma}}$ & $\boldsymbol{N}$ \& $\boldsymbol{N^{2\zeta+\gamma+1 \over 2\zeta+\gamma}}$ \\
			\hline
	\end{tabular}}
	\caption{}\label{table:1}
	{\it Summary of assumptions and results for distributed SGM (DSGM) and related approaches including KRR,  GM,  SA, one-pass SGM (OL),
		one-pass SGM with averaging (AveOL), SGM, Nystr\"{o}m KRR (NyKRR), Nystr\"{o}m SGM (NySGM),
		distributed KRR (DKRR) and distributed SA (DSA). }
\end{table}

We next briefly review convergence results for SGM. SGM \citep{robbins1951stochastic} has been widely used in convex optimization and machine learning, see e.g. \citep{cesa2004generalization,nemirovski2009robust,bottou2017optimization} and references therein . 
In what follows, we will briefly recall some recent works on generalization error for nonparametric regression on a RKHS considering the square loss. 
We will use the term ``online learning algorithm" (OL)  to mean one-pass SGM, i.e, SGM that each sample can be used only once.   
Different variants of OL, either with or without regularization, have been studied. Most of them take the form 
$$
f_{t+1} = (1- \RegParGD)f_t - \eta_t (f_t(x_{t}) - y_{t}) K_{x_{t}}, t=1\cdots, N. 
$$
Here, the regularization parameter $\RegParGD$ could be zero \citep{zhang2004solving,ying2008online}, or a positive \citep{smale2006online,ying2008online} and possibly time-varying constant \citep{tarres2014online}.  Particularly, \cite{tarres2014online} studied OL with time-varying regularization parameters and convergence rate of order $O(N^{{-2\zeta \over 2\zeta+1}})$ ($\zeta\in [{1\over 2}, 1]$) in high probability was proved. \cite{ying2008online} studied  OL without regularization and convergence rate of order $O(N^{-{2\zeta \over 2\zeta+1}})$ in expectation was shown. Both convergence rates from \citep{ying2008online,tarres2014online} are capacity-independently optimal and they do not take the capacity assumption into account. Considering an averaging step \citep{polyak1992acceleration} and a proof technique motivated by \citep{bach2013non},  \cite{dieuleveut2016nonparametric} proved capacity-dependently optimal rate $O(N^{-{2\zeta \over (2\zeta+\gamma)\vee 1}})$ for OL in the case that $\zeta\leq 1.$  Recently, \cite{lin2017optimal}  studied (multi-pass) SGM, i.e, Algorithm \ref{alg:sgm} with $m=1$. They showed that  SGM with suitable parameter choices, achieves convergence rate of order $O(N^{-{2\alpha \over (2\alpha +\gamma)\vee 1}}\log^{\beta}N)$ with $\beta=2$ when $2\alpha+\gamma>1$ or $\beta=4$ otherwise, 
after some number of iterations. 
In comparisons, the derived results for SGM in Corollary \ref{thm:sgm} are better than  those from \citep{lin2017optimal}, and the convergence rates are the same as those from \citep{dieuleveut2016nonparametric} for averaging OL when $\zeta\leq 1$ and $2\zeta+\gamma > 1$. For the case $2\zeta+\gamma\leq 1,$ the convergence rate  $O(N^{-2\zeta} (1 \vee \log N^{\gamma}))$ for SGM in Corollary \ref{thm:sgm} is worser than $O(N^{-2\zeta})$ in \citep{dieuleveut2016nonparametric} for averaging OL. However, averaging OL saturates for $\zeta > 1$, while SGM does not.


To meet the challenge of large-scale learning, 
a line of research focus on designing learning algorithms with Nystr\"{o}m subsampling, or more generally sketching.  Interestingly, the latter  has also been applied to compressed sensing, low rank matrix recovery and kernel methods, see e.g. \citep{candes2006robust,yurtsever2017sketchy,yang2012nystrom} and references therein.
The basic idea of Nystr\"{o}m subsampling is to replace a standard large matrix with a smaller matrix obtained by subsampling \citep{smola2000sparse,williams2000using}. 
For kernel methods, Nystr\"{o}m subsampling has been successfully combined with KRR \citep{el2014fast,rudi2015less,yang2015randomized}  and SGM \citep{lu2016large,lin2017NySGM}. Generalization error bounds of order $O(N^{-2\zeta \over 2\zeta+\gamma})$ \citep{rudi2015less,lin2017NySGM}
were derived, provided that the subsampling level is suitably chosen, considering the case $\zeta \in [{1 \over 2}, 1].$ 
Computational advantages of these algorithms were highlighted. Here, we summarize their convergence rates and computational costs in Table \ref{table:1}, from which we see that distributed SGM has advantages on both memory and time.

Another line of research for large-scale learning focus on distributed (parallelizing) learning. Distributed learning, based on a divide-and-conquer approach, has been used for, e.g.,  perceptron-based algorithms \citep{mcdonald2009efficient}, parametric smooth convex optimization problems \citep{zhang2012communication}, and sparse regression \citep{lee2015communication}. Recently, this approach has been successfully applied to learning algorithms with kernel methods, such as KRR \citep{zhang2015divide}, and SA \citep{guo2017learning,blanchard2016optimal}.
\cite{zhang2015divide} first studied distributed KRR and showed that distributed KRR retains optimal rates $O(N^{-{2\zeta \over  2\zeta+\gamma}})$ (for $\zeta \in [{1\over2},1]$) provided the partition level is not too large. The number of partition to retain optimal rate shown in \citep{zhang2015divide} for distributed KRR depends on some conditions which may be less well understood and thus potentially leads to a suboptimal partition number. \cite{lin2017distributed} provided an alternative and refined analysis for distributed KRR, leading to a less strict condition on the partition number. \cite{guo2017learning}  extended the analysis to distributed SA, an proved optimal convergence rate for the case $\zeta \geq 1/2$, if the number of partitions $m \leq N^{2\zeta - 1 \over  2\zeta+\gamma}.$ In comparison, the condition on partition number from 
Theorem \ref{thm:mainDFliter} for distributed SA is less strict. Moreover, 
Theorem \ref{thm:mainDFliter} 
shows that distributed SA can retain optimal rate even in the non-attainable case. According to Corollary \ref{cor:fastRatP}, distributed SGM with appropriate choices of parameters  can achieve optimal rate if the partition number is not too large. In comparison of the derived results for distributed SA with those for distributed SGM, we see from Table \ref{table:1} that the latter has advantages on both memory and time.
The most related to our works are \citep{zinkevich2010parallelized,jain2016parallelizing}.  \cite{zinkevich2010parallelized} studied distributed OL for optimization problems over a finite-dimensional domain, and proved convergence results assuming that the objective function is strongly convex.
\cite{jain2016parallelizing} considered distributed OL with averaging for least square regression problems over a finite-dimension space and proved certain convergence results that may depend on the smallest eigenvalue of the covariance matrix.
These results do not apply to our cases, as we consider distributed multi-pass SGM for nonparametric regression over a RKHS and our objective function is not strongly convex.
We finally remark that using a partition approach \citep{thomann2016spatial,tandon2016kernel}, one can also scale up the kernel methods, with a computational advantage similar as those of using distributed learning technique.

We conclude this section with some further questions. First, in this paper, we assume that all parameter choices are given priorly. In practice, these parameters can be possibly tuned by  cross-validation method. Second, the derived rate for SGM and SA when  $2\zeta+\gamma\leq 1$ is $O(N^{-2\zeta}(1 \vee \log N^{\gamma}))$, which is worser than $O(N^{-2\zeta})$ of averaging OL \citep{dieuleveut2016nonparametric}. It would be interesting to improve the rate, or to derive a minimax rate for the case $2\zeta+\gamma\leq 1.$ Third, all results stated in this paper are in expectation,  and it would be interesting to derive high-probability results (possibly by a proof technique from \citep{london2017pac}).

\section{Proofs for Distributed SGM}\label{sec:proof}

In this section, we provide the proofs of our main theorems for distributed SGM.  We begin with some basic notations. For ease of readability, we also make a list of notations in the appendix. 

\subsection{Notations}
$\mE[\xi]$ denotes the expectation of a random variable $\xi.$
$\|\cdot\|_{\infty}$ denotes the supreme norm with
respect to $\rho_X.$
For a given bounded operator $L: H' \to \HK'', $ $\|L\|$ denotes the operator norm of $L$, i.e., $\|L\| = \sup_{f\in \HK', \|f\|_{\HK'}=1} \|Lf\|_{\HK''}$. Here $\HK'$ and $\HK''$ are two separable Hilbert spaces (which could be the same).

We introduce the inclusion operator $\IK: \HK \to \LR$, which is continuous under Assumption \ref{as:basic}. Furthermore, we consider the adjoint operator $\IK^*: \LR \to \HK$, the covariance operator $\TK: \HK \to \HK$ given by $\TK = \IK^* \IK$, and the operator $\LK : \LR \to \LR$ given by $\IK \IK^*.$
It can be easily proved that $ \IK^*f = \int_X K_x f(x) d\rho_X(x)$
and $\TK = \int_X \la \cdot , K_{x} \ra_{\HK}K_x d \rho_X(x).$
The operators $\TK$ and $\LK$ can be proved to be positive trace class operators (and hence compact).
In fact, by \eref{eq:HK},
\be\label{eq:TKBound}
\|\LK\| = \|\TK\| \leq \tr(\TK) = \int_{X} \tr(K_x \otimes K_x)d\rho_X(x) = \int_X \|K_x\|_{\HK}^2 d\rho_X(x) \leq \kappa^2.
\ee
For any function $f \in \HK$,
the $\HK$-norm can be related to the $\LR$-norm by $\sqrt{\TK}:$ \citep{bauer2007regularization}
\be\label{isometry}
\|\IK f\|_{\rho} = \left\|\sqrt{\TK} f\right\|_{\HK},
\ee
and furthermore according to the singular value decomposition of $\IK$,
\be\label{eq:HKtoRHO}
 \|\LK^{-{1\over 2}} \IK f\|_{\rho} \leq \|f\|_{\HK}.
\ee

We define the sampling operator (with respect to any given set $\bx \subseteq X$ of cardinality $n$) $\SX: \HK \to \mR^n$ by $(\SX f)_i = f(x_i) = \la f, K_{x_i} \ra_{\HK},$ $i \in [n]$, where the norm $\|\cdot\|_{\mR^n}$ is the standard Euclidean norm times $1/\sqrt{n}$.
Its adjoint operator $\SX^*: \mR^n \to \HK,$ defined by $\la \SX^*{\by}, f \ra_{\HK} = \la {\bf y}, \SX f\ra_{\mR^n}$ for ${\by} \in \mR^n$ is thus given by
\be\label{eq:sampleOperAdjoint}
\SX^*{\by} = {1 \over n}\sum_{i=1}^{n} y_i K_{x_i}.
\ee
Moreover, we can define the empirical covariance operator (with respect to $\bx$)  $\TX: \HK \to \HK$ such that $\TX = \SX^* \SX$. Obviously,
\bea
\TX = {1 \over n} \sum_{i=1}^{n} \la \cdot, K_{x_i} \ra_{\HK} K_{x_i}.
\eea
By \eref{eq:HK}, similar to \eref{eq:TKBound}, we have
\be\label{eq:TXbound}
\|\TX\| \leq \tr(\TX) \leq \kappa^2.
\ee
For any $\PRegPar>0,$ for notational simplicity, we let $ \TKL = \TK + \PRegPar  $,
$\TXL = \TX + \PRegPar$, and 
$$\mcN(\PRegPar)  = \tr(\LK(\LK+\PRegPar)^{-1}) = \tr(\TK(\TK+\PRegPar)^{-1}).$$ 
For any  $f\in \HK$ and $x\in X$, the following well known reproducing property holds:
\be\label{eq:reproduce}
\la f, K_x\ra_{\HK} = f(x).
\ee
and following from the above, Cauchy-Schwarz inequality and \eqref{eq:HK}, one
can prove that	\be\label{eq:inftyThk}
|f(x)| = |\la f,K_x \ra_{\HK}| \leq \|f\|_{\HK} \|K_x\|_{\HK} \leq \kappa \|f\|_{\HK}\ee

For any $s \in [m]$, we denote the set of random variables $\{j_{s,i}\}_{b(t-1)+1 \leq i \leq bt}$ by $\J_{s,t},$ $\{ j_{s,1},j_{s,2},$ $\cdots,j_{s,bT}\}$  by $\J_s$, and $\{\J_1,\cdots, \J_m\}$ by $\J.$
Note that $j_{s,1},j_{s,2},\cdots,j_{s,bT}$ are conditionally independent given $\bz_s$.

\subsection{Error Decomposition}\label{subsec:error}

The key to our proof is an error decomposition. To introduce the  error decomposition, we need to introduce two auxiliary sequences.

The first auxiliary sequence is generated by distributed GM. 
	For any $s \in [m]$, the GM over the sample set $\bz_s$ is defined by $g_{s,1}=0$ and   
	\be\label{eq:alggm}
	g_{s,t+1}=g_{s,t} - \eta_t \left(\TXS g_{s,t} - \SXS^*\by_s\right),
	\qquad t=1, \ldots, T, 
	\ee
	where $\{\eta_{t}>0\}$ is a step-size sequence given by Algorithm \ref{alg:sgm}.  The average estimator over these local estimators is given by	
	\be\label{eq:DGM}
	\bar{g}_{t} = {1 \over m} \sum_{s=1}^m g_{s,t}.\ee

The second auxiliary sequence is generated by distributed pseudo GM as follows.  
	For any $s \in [m]$, the pseudo GM over the input set $\bx_s$ is defined by $h_{s,1}=0$ and   
	\be\label{eq:algpgm}
	h_{s,t+1}=h_{s,t} - \eta_t  \left(\TXS h_{s,t} - \LXS \FR \right),
	\qquad t=1, \ldots, T. 
	\ee  The average estimator over these local estimators is given by	
	\be\bar{h}_{t} = {1 \over m} \sum_{s=1}^m h_{s,t}.\ee	
In the above, for any given inputs set $\bx \subseteq  X^{|\bx|}$,
 $\LX: \LR \to \HK$ is defined as that for any 
$f \in \LR$ such that $\| f\|_{\infty} < \infty,$
\be\label{eq:LX}
\LX f = {1 \over |\bx|} \sum_{x \in \bx} f(x) K_{x}.
\ee
Note that \eref{eq:algpgm} can not be implemented in practice, as $\FR(x)$ is unknown in general.

We state the error decomposition as follows.
\begin{pro}
	We have that for any $t\in [T],$
	\label{pro:errDecob}
	\be\label{eq:errDecob}
	\mE\|\IK \bar{f}_t - \FR\|_{\rho}^2 = \mE\|\IK\bar{h}_t - \FR \|_{\rho}^2 +  \mE [\|\IK(\bar{g}_t -  \bar{h}_t) \|_{\rho}^2] +
	\mE\|\IK (\bar{f}_t - \bar{g}_t)\|_{\rho}^2  .\ee
\end{pro}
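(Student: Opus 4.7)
The plan is to view the right-hand side as a Pythagorean (bias-variance) decomposition arising from a tower of conditional expectations. Set
$$A := \IK\bar{f}_t - \FR, \qquad B := \IK\bar{h}_t - \FR, \qquad C := \IK(\bar{g}_t - \bar{h}_t), \qquad D := \IK(\bar{f}_t - \bar{g}_t),$$
so that $A = B + C + D$ by construction. Expanding $\|A\|_{\rho}^2$ yields three square terms plus three cross terms, so the proposition reduces to showing
$$\mE\la B, C\ra_\rho = \mE\la B, D\ra_\rho = \mE\la C, D\ra_\rho = 0.$$

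The crux is establishing two conditional-expectation identities, both proved by induction on $t$. First, I would show that
$$\mE[\,\bar{f}_t \mid \bz_1,\ldots,\bz_m\,] = \bar{g}_t \quad \text{a.s.}$$
Taking the conditional expectation of \eqref{eq:algsgm} over the i.i.d.\ minibatch indices $\J_{s,t}$ (which are independent of $\bz_s$ and of the past) gives, by linearity and the uniform distribution of $j_{s,i}$ on $[n]$,
$$\mE[f_{s,t+1}\mid \bz_s,f_{s,t}] = f_{s,t} - \eta_t(\TXS f_{s,t} - \SXS^*\by_s),$$
which matches the GM recursion \eqref{eq:alggm}; the induction step then closes. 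Second, I would show that
$$\mE[\,\bar{g}_t \mid \bx_1,\ldots,\bx_m\,] = \bar{h}_t \quad \text{a.s.}$$
Here I take the conditional expectation of \eqref{eq:alggm} over $\by_s$ given $\bx_s$, using $\mE[y_{s,i}\mid x_{s,i}] = \FR(x_{s,i})$ together with \eqref{eq:sampleOperAdjoint} to see $\mE[\SXS^*\by_s \mid \bx_s] = \LXS\FR$; this matches the pseudo-GM recursion \eqref{eq:algpgm}, so again induction closes the claim.

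With these two identities in hand, the cross terms vanish by the tower rule: since $B$ is $\sigma(\bx_1,\ldots,\bx_m)$-measurable and $\mE[C\mid \bx_1,\ldots,\bx_m] = 0$, we get $\mE\la B,C\ra_\rho = 0$; since $B$ and $C$ are both $\sigma(\bz_1,\ldots,\bz_m)$-measurable and $\mE[D\mid \bz_1,\ldots,\bz_m] = 0$, we get $\mE\la B,D\ra_\rho = \mE\la C,D\ra_\rho = 0$. The main (and only real) obstacle is the inductive verification of the two conditional-expectation identities; once these are in place the rest is a one-line Pythagorean argument. The decomposition is then naturally interpreted as bias (term $B$, deterministic error of pseudo GM), sample variance (term $C$, from label noise), and computational variance (term $D$, from the stochastic minibatch choice), which is exactly the structure exploited in the sequel to prove Theorem \ref{thm:main}.
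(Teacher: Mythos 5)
Your proposal is correct and matches the paper's argument in all essentials: you establish the same two conditional-expectation identities $\mE_{\J\mid\Samples}[\bar{f}_t]=\bar{g}_t$ and $\mE_{\Outputs\mid\Inputs}[\bar{g}_t]=\bar{h}_t$ by the same inductive conditioning of the update rules, and then kill the cross terms via the tower property. The only cosmetic difference is that you expand $A=B+C+D$ at once and annihilate three cross terms, whereas the paper performs the same cancellation in two nested stages (first conditioning on $\J$ given $\Samples$, then on $\Outputs$ given $\Inputs$).
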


The error decomposition is similar as (but a bit different from)  \citep[Proposition 1]{lin2017optimal} for  classic multi-pass SGM. There are three terms in the right-hand side of \eref{eq:errDecob}. The first term depends on the regularity of the regression function (Assumption \ref{as:regularity}) and it is called as  {\it bias}. The second term depends on the noise level $\sigma^2$ from \eref{noiseExp} and it is called as {\it sample variance}. The last term is caused by the random estimates of the full gradients and it is called as {\it computational variance}. In the following subsections, we will estimate these three terms separately. Total error bounds can be thus derived by substituting these estimates into the error decomposition. 

The proof idea is quite simple. According to Lemmas \ref{lem:fulLocBias}, \ref{lem:samVar1} and \ref{lem:comVar1}, in order to proceed the analysis, we only need to estimate bias, sample and computational variance of a local estimator. In order to estimate local bias and local sample variance, as given in Lemma \ref{lem:gdSRA}, we rewrite $g_{s,t}$ and $h_{s,t}$ as the special forms induced by a filter function $G_t$ of GM. The strategy here for estimating local bias and sample variance is different from that in \citep{lin2017optimal} which relies on the following error decomposition and 
iterative relationship motivated by  \citep{lin2015learning}:
$$
\| g_{s,t}  - \FR \|_{\rho} \leq   \| r_{t}  - \FR \|_{\rho} + \| g_{s,t}  - r_{t} \|_{\rho} \quad {and}
$$
$$
g_{s,t+1}  - r_{t+1} = \sum_{k=1}^t \eta_k \prod_{j = k+1}^t ( I - \eta_j \TXS) (\TK r_{k} - \IK^*\FR - \TX r_{k} + \SXS^*\by_s),
$$
where the population sequence $\{r_t\}_t$ is defined by $r_1=0$ and
\be\label{eq:popSeq}
r_{t+1} = (I - \TK) r_t + \IK^*\FR.
\ee
Instead, in this paper, we use spectral theory from functional analysis to proceed the estimations. Our main novelties lies in a new error bound for $\|\TXL^{-{1\over 2}} \TKL^{1\over 2}\|$, see Lemma \ref{lem:operDifRes} (which allows one to derive optimal rates in the non-attainable cases without requiring the sample size is large enough, and refines the error bounds on $\|\TXL^{-{1\over 2}} \TKL^{1\over 2}\|$
 by involving Assumption \ref{as:eigenvalues} in the logarithmic factor),  and the estimation on local bias, as well as some other refined analysis. For estimating local bias, we introduce a new key error decomposition, in order to cover both the non-attainable case and the unbounded-output case.

All the missing proofs of propositions and lemmas in this section can be found in Appendix \ref{app:sgm}.

%

\subsection{Estimating Bias}\label{subsec:bias}
In this subsection, we estimate bias, i.e., $\mE \|\IK \bar{h}_t - \FR\|_{\rho}^2.$ We first give the following lemma, which asserts that the bias term can be estimated in terms of the bias of a local estimator.

\begin{lemma}\label{lem:fulLocBias} For any $t\in [T],$ we have
	$$
	\mE \|\IK \bar{h}_t -\FR\|_{\rho}^2 \leq \mE \|\IK h_{1,t} - \FR \|_{\rho}^2.
	$$
\end{lemma}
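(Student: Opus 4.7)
The plan is to use the standard bias--variance decomposition together with the fact that the local pseudo estimators $h_{1,t}, \ldots, h_{m,t}$ are independent and identically distributed. Crucially, inspecting the recursion \eqref{eq:algpgm} shows that $h_{s,t}$ depends on $\bz_s$ only through the input set $\bx_s$ (the labels $\by_s$ do not enter the pseudo GM update, since $\FR$ is used directly). Because $\Samples$ is partitioned uniformly at random into equal-sized blocks of an i.i.d.\ sample, the input sets $\bx_1, \ldots, \bx_m$ are i.i.d., and hence so are the $\LR$-valued random variables $\IK h_{1,t}, \ldots, \IK h_{m,t}$.

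Let $\mu := \mE[\IK h_{1,t}] \in \LR$; by identical distribution $\mE[\IK h_{s,t}] = \mu$ for every $s$, so $\mE[\IK \bar h_t] = \mu$. The first step is to apply the orthogonal decomposition
$$
\mE\|\IK \bar h_t - \FR\|_\rho^2 = \|\mu - \FR\|_\rho^2 + \mE\|\IK \bar h_t - \mu\|_\rho^2,
$$
which holds because the cross term $\langle \mu - \FR, \mE[\IK \bar h_t - \mu]\rangle_\rho$ vanishes.

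The second step is variance reduction via independence: writing $\IK \bar h_t - \mu = \tfrac{1}{m}\sum_{s=1}^m (\IK h_{s,t} - \mu)$, independence across $s$ kills the cross terms and identical distribution gives
$$
\mE\|\IK \bar h_t - \mu\|_\rho^2 = \frac{1}{m^2}\sum_{s=1}^m \mE\|\IK h_{s,t} - \mu\|_\rho^2 = \frac{1}{m}\,\mE\|\IK h_{1,t} - \mu\|_\rho^2.
$$
Combining these two identities, and using $1/m \leq 1$,
$$
\mE\|\IK \bar h_t - \FR\|_\rho^2 \leq \|\mu - \FR\|_\rho^2 + \mE\|\IK h_{1,t} - \mu\|_\rho^2 = \mE\|\IK h_{1,t} - \FR\|_\rho^2,
$$
where the last equality is the bias--variance decomposition applied to $h_{1,t}$ in reverse. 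This yields the claim.

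There is no real obstacle here; the only points requiring care are (i) to notice that the pseudo iterate $h_{s,t}$ is a function of $\bx_s$ alone (so independence across $s$ genuinely holds), and (ii) to verify that $\mu \in \LR$ is well-defined, which follows from the Bochner-integrability of $\IK h_{1,t}$ under Assumption \ref{as:basic} and the standard a priori bound on the pseudo GM iterate implied by the step-size restriction \eqref{etaRestri}.
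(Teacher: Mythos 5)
Your proof is correct, but it takes a genuinely different route from the paper. The paper's proof is a one-liner: it writes $\IK\bar h_t - \FR = \tfrac{1}{m}\sum_{s=1}^m(\IK h_{s,t}-\FR)$, applies Jensen's inequality (convexity of $\|\cdot\|_\rho^2$) to get $\mE\|\IK\bar h_t-\FR\|_\rho^2 \le \tfrac{1}{m}\sum_s\mE\|\IK h_{s,t}-\FR\|_\rho^2$, and concludes by identical distribution of the $m$ local errors. This uses only convexity and exchangeability; it never invokes independence across machines. You instead establish the orthogonal (bias--variance) decomposition around $\mu = \mE[\IK h_{1,t}]$, use independence of the $\bx_s$ to reduce the variance term by a factor $1/m$, and then throw the $1/m$ away. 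Your argument is valid, and your observation that $h_{s,t}$ is a function of $\bx_s$ alone is correct (though in fact not necessary here: independence of the full blocks $\bz_s$ already suffices, and even independence itself is not needed for the paper's Jensen route). What your approach buys is the sharper intermediate identity $\mE\|\IK\bar h_t-\FR\|_\rho^2 = \|\mu-\FR\|_\rho^2 + \tfrac{1}{m}\mE\|\IK h_{1,t}-\mu\|_\rho^2$, which makes transparent why averaging cannot reduce bias (the first term has no $m$); the paper gets the same lemma with fewer hypotheses and less machinery, which is the more economical choice since the $1/m$ gain on the pseudo-estimator's variance is not exploited anywhere --- the actual variance reduction is carried out separately in Lemma \ref{lem:samVar1} for the $g_{s,t}-h_{s,t}$ difference.
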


To estimate the bias of the local estimator, $\mE \|\IK h_{1,t} - \FR \|_{\rho}^2$, we next introduce some preliminary notations and lemmas. 

$\Pi_{t+1}^T(L) = \prod_{k=t+1}^T (I - \eta_k L)$ for $t \in [T-1]$ and $\Pi_{T+1}^T(L) = I,$ for any operator $L: H \to H,$
where $H$ is a Hilbert space and $I$ denotes the identity operator on $H$.
Let $k, t \in \mN.$ We use the following conventional notations:
$1/0=+\infty,$ 
$\prod_{k}^t = 1$ and $\sum_{k}^t = 0$ whenever $k >t$.
$\Sigma_{k}^t =\sum_{i=k}^t \eta_i$, $\lambda_{k:t} = (\Sigma_k^t)^{-1}$,
and specially $\lambda_{1:t}$ is abbreviated as $\RegParGD.$
Define the function $G_t: \mR \to \mR$ by
\be\label{eq:regFunc}
G_t(u) = \sum_{k=1}^t \eta_k \prod_{k=t+1}^T (I - \eta_k u).
\ee

Throughout this paper, we assume that the step-size sequence satisfies $\eta_t \in ]0,\kappa^{-2}]$ for all $t \in \mN.$ Thus, $G_t(u)$ and $\Pi_k^t(u)$ are non-negative on $]0,\kappa^2].$ 
For notational simplicity, throughout the rest of this subsection, we will drop the index $s=1$ for the first local estimator whenever it shows up, i.e, we abbreviate $h_{1,t}$ as $h_{t}$, $\bz_1$ as $\bz$, and $\TK_{\bx_1}$  as $\TK_{\bx}$, etc.

The key idea for our estimation on bias is that $\{h_{t}\}_t$ can be well approximated by the population sequence $\{r_t\}_t$, a deterministic sequence depending on the regression function $\FR$.

We first have the following observations.
\begin{lemma}\label{lem:gdSRA}
	The sequence $\{r_{t}\}_t$ defined by \eref{eq:popSeq} can be rewritten as
	\be\label{eq:popSeqEq}
	r_{t+1} = G_t (\TK) \IK^* \FR.
	\ee
	Similarly, for any $s \in [m]$,  the sequences $\{g_{s, t}\}_t$ and  $\{h_{s, t}\}_t$ defined by
	\eref{eq:alggm} and \eref{eq:algpgm} can be rewritten as
	$$
	g_{s,t+1} = \GL(\TXS) \SXS^* \by_s,
	$$
	and 
	$$
	h_{s,t+1} = \GL(\TXS) \LXS \FR.
	$$
\end{lemma}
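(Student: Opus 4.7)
The three identities share a common structure: each sequence satisfies an affine recursion of the form $v_{t+1} = (I - \eta_t L)\,v_t + \eta_t\,w$ starting from $v_1 = 0$, with the pair $(L,w)$ being $(\TK,\IK^*\FR)$ for $r_t$, $(\TXS,\SXS^*\by_s)$ for $g_{s,t}$, and $(\TXS,\LXS\FR)$ for $h_{s,t}$. The plan is to prove all three formulas uniformly by a single induction on $t$, treating $(L,w)$ as an abstract parameter.

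The only algebraic fact I need first is a one-step recursion for the filter function defined in \eqref{eq:regFunc}. Peeling off the $k=t$ summand,
$$G_t(u) = \eta_t + \sum_{k=1}^{t-1}\eta_k \prod_{j=k+1}^{t}(1 - \eta_j u) = \eta_t + (1 - \eta_t u)\,G_{t-1}(u),$$
with the convention $G_0 \equiv 0$. By spectral calculus this lifts to $G_t(L) = \eta_t I + (I - \eta_t L)\,G_{t-1}(L)$ for any bounded self-adjoint operator $L$ with spectrum in $[0,\kappa^2]$, which holds for $\TK$ by \eqref{eq:TKBound} and for $\TXS$ by \eqref{eq:TXbound}.

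The induction then writes itself. The base case $t=0$ is immediate since both $v_1$ and $G_0(L)\,w$ are zero. For the inductive step, assuming $v_t = G_{t-1}(L)\,w$, the defining recursion and the operator identity above give
$$v_{t+1} = (I - \eta_t L)\,G_{t-1}(L)\,w + \eta_t w = \bigl[(I - \eta_t L)\,G_{t-1}(L) + \eta_t I\bigr]\,w = G_t(L)\,w.$$
Specializing $(L,w)$ to the three pairs listed above yields the formulas for $r_{t+1}$, $g_{s,t+1}$, and $h_{s,t+1}$, respectively. There is no genuine obstacle here: the lemma is a closed-form rewriting of the Landweber/GM iteration in terms of its filter kernel, and the only care required is bookkeeping with the conventions $\sum_{k}^{t} = 0$ and $\prod_{j}^{t} = 1$ when the lower index exceeds the upper, so that $G_0$ is well-defined and the peeling of the last summand is legitimate for every $t \ge 1$.
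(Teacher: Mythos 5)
Your proof is correct and follows the same route the paper sketches in one sentence ("use the relationship iteratively, starting from $r_1=0$"); you have simply made the induction and the one-step identity $G_t(u)=\eta_t+(1-\eta_t u)G_{t-1}(u)$ explicit, and you correctly read \eqref{eq:popSeq} as $r_{t+1}=(I-\eta_t\TK)r_t+\eta_t\IK^*\FR$ (the paper's display drops the $\eta_t$ factors, but this is evidently a typo: it is the only reading consistent with \eqref{eq:alggm}, \eqref{eq:algpgm}, and the stated conclusion \eqref{eq:popSeqEq}).
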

\begin{proof}
	Using the relationship \eref{eq:popSeq} iteratively, introducing with $r_1 =0,$ one can prove the first conclusion.   
\end{proof}

According to the above lemma, we know that GM can be rewritten as a form of  SA with filter function $\GLB(\cdot) = \GL(\cdot).$ In the next lemma, we further develop some basic properties for this filter function.
\begin{lemma}\label{lem:regFuncIneq}
	For all $u \in [0,\kappa^{2}]$, \\
	1) $u^{\alpha} G_t(u) \leq \RegParGD^{\alpha-1}, \,\,\forall \alpha \in [0,1].$\\
	2) $(1 - u G_t(u))u^{\alpha} = \Pi_1^t(u) u^{\alpha} \leq (\alpha/\mathrm{e})^{\alpha}\RegParGD^{\alpha}, \quad \forall \alpha \in [0,\infty[.$ \\
	3) $\Pi_{k}^t(u)u^{\alpha} \leq (\alpha /\mathrm{e})^{\alpha} \lambda_{k:t}^{\alpha},\ \ \  \forall t,k\in \mN$.
\end{lemma}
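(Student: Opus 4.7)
The plan is to prove (2) and (3) first by establishing the telescoping identity $1 - uG_t(u) = \Pi_1^t(u)$, and then derive (1) from simple monotonicity observations combined with an interpolation between two extreme bounds.

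\textbf{Step 1: Telescoping identity.} I would first show that $uG_t(u) = 1 - \Pi_1^t(u)$ on $[0,\kappa^2]$ by writing $\eta_k u = 1-(1-\eta_k u)$, so that
\begin{equation*}
\eta_k u\,\textstyle\prod_{i=k+1}^t(1-\eta_i u) \;=\; \textstyle\prod_{i=k+1}^t(1-\eta_i u) - \prod_{i=k}^t(1-\eta_i u).
\end{equation*}
Summing from $k=1$ to $t$ telescopes to $1-\prod_{i=1}^t(1-\eta_i u) = 1-\Pi_1^t(u)$, which is exactly the equality in (2). Note this also confirms $uG_t(u)\in[0,1]$ on $[0,\kappa^2]$, since each factor $(1-\eta_i u)$ lies in $[0,1]$ under the standing assumption $\eta_i\in]0,\kappa^{-2}]$.

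\textbf{Step 2: Proof of (3), which implies (2).} On $[0,\kappa^2]$ every $\eta_i u\in[0,1]$, so $1-\eta_i u\le e^{-\eta_i u}$, hence
\begin{equation*}
\Pi_k^t(u)\;\le\;\exp\!\Bigl(-u\textstyle\sum_{i=k}^t \eta_i\Bigr)\;=\;\exp(-u/\lambda_{k:t}).
\end{equation*}
Setting $x = u/\lambda_{k:t}$,
\begin{equation*}
\Pi_k^t(u)\,u^{\alpha}\;\le\;\lambda_{k:t}^{\alpha}\,x^{\alpha} e^{-x}\;\le\;\lambda_{k:t}^{\alpha}\sup_{x\ge 0}x^{\alpha}e^{-x}\;=\;(\alpha/\mathrm{e})^{\alpha}\lambda_{k:t}^{\alpha},
\end{equation*}
where the last equality uses the standard fact that $x^{\alpha} e^{-x}$ is maximized at $x=\alpha$ with value $(\alpha/\mathrm{e})^{\alpha}$ (interpreted as $1$ when $\alpha=0$). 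Specializing to $k=1$ gives (2), using the identity from Step 1.

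\textbf{Step 3: Proof of (1) by interpolation.} Since every factor in $G_t(u)$ lies in $[0,1]$ on $[0,\kappa^2]$, we have the trivial bound
\begin{equation*}
G_t(u)\;\le\;\textstyle\sum_{k=1}^t \eta_k\;=\;\RL^{-1}.
\end{equation*}
Combined with $uG_t(u)\le 1$ from Step 1, for any $\alpha\in[0,1]$ I would write
\begin{equation*}
u^{\alpha}G_t(u)\;=\;(uG_t(u))^{\alpha}\,(G_t(u))^{1-\alpha}\;\le\;1^{\alpha}\cdot\RL^{\alpha-1}\;=\;\RL^{\alpha-1},
\end{equation*}
establishing (1).

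Since the telescoping identity and the inequality $1-x\le e^{-x}$ are the only nontrivial ingredients, I do not expect any genuine obstacle here; the only point requiring slight care is that the bounds rely on the standing range $\eta_i\in]0,\kappa^{-2}]$ and $u\in[0,\kappa^2]$, which must be checked to justify nonnegativity of the factors in $\Pi_k^t(u)$ and the applicability of $1-\eta_i u\le e^{-\eta_i u}$.
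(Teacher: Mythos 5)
Your proof is correct and follows essentially the same route as the paper: the telescoping identity $uG_t(u)=1-\Pi_1^t(u)$, the bound $1-\eta_i u \le e^{-\eta_i u}$ together with maximizing $x^\alpha e^{-x}$ for part (3), and interpolation $u^\alpha G_t(u)=(uG_t(u))^\alpha G_t(u)^{1-\alpha}$ for part (1). The only differences are cosmetic (you establish (3) before (1), and you write $\RL$ where the paper's notation is $\RegParGD$).
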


According to Lemma \ref{lem:regFuncIneq}, $\GL(\cdot)$ is a filter function indexed with regularization parameter $\lambda = \RegParGD$, and the qualification $\tau$ can be any positive number, and $E = 1,$ $F_{\tau} = (\tau/\mathrm{e})^{\tau}.$ 
Using Lemma \ref{lem:regFuncIneq} and the spectral theorem, one can get the following results.
\begin{lemma}
	\label{lem:operatorNorm}
	Let $L$ be a compact, positive operator on a separable Hilbert space $H$ such that $ \|L\| \leq \kappa^2$. Then for any $\PRegPar \geq 0,$ \\
	1) $\|(L + \PRegPar)^{\alpha} G_t(L)\| \leq \RegParGD^{\alpha-1} (1+ (\PRegPar/\lambda_t)^\alpha), \, \, \,\forall \alpha \in [0,1].$\\
	2) $\|(I - L G_t(L)) (L + \PRegPar)^{\alpha}\| = \|\Pi_1^t(L) (L+\PRegPar)^{\alpha}\| \leq 2^{(\alpha-1)_+}( (\alpha/\mathrm{e})^{\alpha} + (\PRegPar/\RegParGD)^{\alpha})\RegParGD^{\alpha}, \quad \forall \alpha \in [0,\infty[.$ \\
	3)
	$
	\| \Pi_{k+1}^t(L) L^{\alpha}\| \leq \left( \alpha/ \mathrm{e}\right)^{\alpha} \lambda_{k:t}^{\alpha},\, \, \,\forall k, t \in \mN.
	$
\end{lemma}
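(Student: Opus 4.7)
The plan is to reduce each of the three operator-norm estimates to pointwise scalar bounds on $[0,\kappa^2]$ via the spectral theorem, and then apply the scalar estimates from Lemma \ref{lem:regFuncIneq}. Since $L$ is compact, positive, self-adjoint on $H$ with $\|L\|\le\kappa^2$, its spectrum satisfies $\sigma(L)\subseteq[0,\kappa^2]$, so for any continuous $\phi$ on $[0,\kappa^2]$ we have $\|\phi(L)\|=\sup_{u\in\sigma(L)}|\phi(u)|\le\sup_{u\in[0,\kappa^2]}|\phi(u)|$. This reduction is the common engine for all three parts; the only remaining work is to control three scalar functions.

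For part (1), I would set $\phi(u)=(u+\PRegPar)^{\alpha}G_t(u)$ and use subadditivity of $x\mapsto x^{\alpha}$ for $\alpha\in[0,1]$, namely $(u+\PRegPar)^{\alpha}\le u^{\alpha}+\PRegPar^{\alpha}$. Then Lemma \ref{lem:regFuncIneq}(1) yields $u^{\alpha}G_t(u)\le\RegParGD^{\alpha-1}$, and the case $\alpha=0$ of the same inequality gives $G_t(u)\le\RegParGD^{-1}$, hence $\PRegPar^{\alpha}G_t(u)\le\PRegPar^{\alpha}\RegParGD^{-1}$. Adding and factoring $\RegParGD^{\alpha-1}$ gives the stated bound $\RegParGD^{\alpha-1}(1+(\PRegPar/\RegParGD)^{\alpha})$.

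For part (2), set $\phi(u)=\Pi_1^t(u)(u+\PRegPar)^{\alpha}$; the identity with $I-LG_t(L)$ comes directly from the definition of $G_t$. I would treat the two regimes separately. When $\alpha\in[0,1]$, subadditivity again gives $(u+\PRegPar)^{\alpha}\le u^{\alpha}+\PRegPar^{\alpha}$; using Lemma \ref{lem:regFuncIneq}(2) on the first piece and the trivial $0\le\Pi_1^t(u)\le 1$ (which holds because $\eta_k\kappa^2\le 1$) on the second, we obtain $\phi(u)\le(\alpha/\mathrm{e})^{\alpha}\RegParGD^{\alpha}+\PRegPar^{\alpha}=\bigl((\alpha/\mathrm{e})^{\alpha}+(\PRegPar/\RegParGD)^{\alpha}\bigr)\RegParGD^{\alpha}$, matching $2^{(\alpha-1)_+}=1$. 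When $\alpha>1$ I would invoke the convexity-based inequality $(u+\PRegPar)^{\alpha}\le 2^{\alpha-1}(u^{\alpha}+\PRegPar^{\alpha})$ in place of subadditivity, and then the same two-term split produces the extra $2^{\alpha-1}$ factor. Part (3) is the most direct: $\phi(u)=\Pi_{k+1}^t(u)u^{\alpha}$ and the scalar bound is exactly Lemma \ref{lem:regFuncIneq}(3), so the spectral reduction closes the argument immediately.

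The only mild obstacle I anticipate is selecting the correct inequality for $(u+\PRegPar)^{\alpha}$ depending on whether $\alpha\le 1$ or $\alpha>1$, which is precisely what produces the $2^{(\alpha-1)_+}$ prefactor in part (2); the remaining computations are routine once the spectral reduction is in place.
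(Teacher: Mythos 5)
Your proof is correct and matches the paper's own argument essentially line for line: both reduce to scalar bounds on $[0,\kappa^2]$ via the spectral theorem, both use the subadditivity of $x\mapsto x^{\alpha}$ for $\alpha\in[0,1]$ (and $(u+\PRegPar)^{\alpha}\le 2^{\alpha-1}(u^{\alpha}+\PRegPar^{\alpha})$ for $\alpha>1$), and both then invoke the corresponding parts of Lemma~\ref{lem:regFuncIneq}. The only cosmetic difference is that you state $\Pi_1^t(u)\le 1$ directly rather than quoting it as the $\alpha=0$ case of Lemma~\ref{lem:regFuncIneq}(2), which is the same fact.
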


To proceed the proof, we introduce the following basic lemmas on operators.
\begin{lemma} \cite[Cordes inequality]{fujii1993norm}
	\label{lem:operProd}
	Let $A$ and $B$ be two positive bounded linear operators on a separable Hilbert space. Then
	\bea
	\|A^s B^s\| \leq \|AB\|^s, \quad\mbox{when } 0\leq s\leq 1.
	\eea
\end{lemma}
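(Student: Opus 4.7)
The plan is to prove Cordes's inequality by combining the $C^*$-identity with the Löwner–Heinz operator monotonicity theorem, followed by a short approximation argument to handle the case where $A$ is not invertible.

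First I would use the $C^*$-identity $\|T\|^2 = \|TT^*\|$ to rewrite both sides of the target inequality in terms of norms of self-adjoint positive operators. Setting $T = A^sB^s$ gives $\|A^sB^s\|^2 = \|A^s B^{2s} A^s\|$, and the same identity applied with $s=1$ gives $\|AB\|^2 = \|AB^2A\|$. Hence it suffices to establish the equivalent inequality $\|A^s B^{2s} A^s\| \leq \|AB\|^{2s}$.

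Second, I would prove this inequality under the simplifying assumption that $A$ is invertible. The trivial operator bound $AB^2A \leq \|AB^2A\|\, I = \|AB\|^2\, I$ can, when $A$ is invertible, be multiplied on both sides by $A^{-1}$ to yield $B^2 \leq \|AB\|^2\, A^{-2}$. The Löwner–Heinz theorem, which states that $x \mapsto x^s$ is operator monotone on $[0,\infty)$ for $s\in[0,1]$, then gives $B^{2s} \leq \|AB\|^{2s}\,A^{-2s}$. Conjugating by $A^s$ produces $A^sB^{2s}A^s \leq \|AB\|^{2s}\,I$, and taking norms (and then square roots) delivers exactly $\|A^sB^s\| \leq \|AB\|^s$. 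For a general positive $A$, I would apply the invertible case to $A_\varepsilon := A + \varepsilon I$, which is strictly positive, and pass to the limit $\varepsilon \to 0^+$ using continuity of the functional calculus on the positive cone.

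The main obstacle is the step where Löwner–Heinz is applied: the theorem requires a genuine operator inequality between $B^2$ and a multiple of $A^{-2}$, not merely a norm bound, and this in turn requires that $A$ be invertible so the passage $AB^2A \leq cI \Leftrightarrow B^2 \leq cA^{-2}$ is valid. The approximation $A \mapsto A + \varepsilon I$ cleanly circumvents this, but one must verify that $A^s$ and $A^sB^{2s}A^s$ depend continuously on $A$ in operator norm; this follows from the standard Hölder-type estimate $|x^s - y^s| \leq |x-y|^s$ on $[0,\infty)$ for $s\in[0,1]$, which upgrades to the operator setting via the spectral theorem.
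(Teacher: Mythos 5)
The paper cites this lemma to \cite{fujii1993norm} without supplying a proof, so there is no in-paper argument to compare against. Your proposal is a correct, self-contained rendering of the classical Cordes argument: reduce via the $C^*$-identity to the one-sided bound $\|A^s B^{2s} A^s\| \leq \|AB\|^{2s}$, conjugate the trivial bound $AB^2A \leq \|AB\|^2 I$ by $A^{-1}$ to get an honest operator inequality $B^2 \leq \|AB\|^2 A^{-2}$, invoke L\"{o}wner--Heinz to take the $s$-th power, conjugate back by $A^s$, and remove the invertibility hypothesis by the $A+\varepsilon I$ approximation. Every step of this chain is sound.

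One small imprecision worth flagging: you justify continuity of $A \mapsto A^s$ along the approximation by appealing to the scalar estimate $|x^s-y^s| \leq |x-y|^s$ ``upgraded to the operator setting via the spectral theorem.'' That upgrade is immediate only when the two operators commute, because then they admit a joint spectral resolution; for non-commuting positive operators the bound $\|A^s-B^s\| \leq \|A-B\|^s$ is a genuine theorem (essentially the paper's Lemma~\ref{lem:operDiff} with $\zeta\leq 1$), not a direct consequence of functional calculus. In your argument, however, only the commuting case is needed, since $A_\varepsilon = A + \varepsilon I$ commutes with $A$ and gives $\|A_\varepsilon^s - A^s\| = \sup_\lambda\bigl((\lambda+\varepsilon)^s - \lambda^s\bigr) \leq \varepsilon^s$ directly from the spectral theorem. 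So the claim is fine as used; it would just be cleaner to state that you only need the commuting (hence elementary) case.
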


\begin{lemma}\label{lem:sss}
	Let $H_1,H_2$ be two separable Hilbert spaces and  $\mcS: H_1 \to H_2$ a compact operator. Then for any function $f:[0,\|\mcS\|] \to [0,\infty[$,
	$$
	f(\mcS\mcS^*)\mcS = \mcS f(\mcS^*\mcS).
	$$
\end{lemma}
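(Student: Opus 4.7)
The plan is to prove the identity by exploiting the singular value decomposition of the compact operator $\mcS$, and then applying spectral calculus on the two self-adjoint positive operators $\mcS^*\mcS$ and $\mcS\mcS^*$, which share the same nonzero spectrum.

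First I would invoke the singular value decomposition: since $\mcS:H_1\to H_2$ is compact, there exist orthonormal systems $\{u_i\}\subset H_1$ and $\{v_i\}\subset H_2$ together with nonnegative singular values $\{\sigma_i\}$ such that $\mcS u_i=\sigma_i v_i$ and $\mcS^* v_i=\sigma_i u_i$, so that $\{u_i\}$ are eigenvectors of $\mcS^*\mcS$ with eigenvalues $\sigma_i^2$, $\{v_i\}$ are eigenvectors of $\mcS\mcS^*$ with the same eigenvalues, and $\{u_i\}\cup\ker \mcS$ spans $H_1$ while $\{v_i\}\cup\ker \mcS^*$ spans $H_2$.

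Next, by the spectral theorem (functional calculus) applied to the bounded self-adjoint operators $\mcS^*\mcS$ and $\mcS\mcS^*$, one has $f(\mcS^*\mcS)u_i=f(\sigma_i^2)u_i$ and $f(\mcS\mcS^*)v_i=f(\sigma_i^2)v_i$, while both operators vanish on the kernels (where we set $f(0)\cdot 0=0$). Then I would verify the identity on each basis vector: on $u_i$,
\begin{equation*}
\mcS f(\mcS^*\mcS)u_i=f(\sigma_i^2)\mcS u_i=f(\sigma_i^2)\sigma_i v_i=\sigma_i f(\mcS\mcS^*)v_i=f(\mcS\mcS^*)\mcS u_i,
\end{equation*}
and on any $u\in\ker\mcS$ both sides are zero since $\mcS u=0$ and $\mcS^*\mcS u=0$ forces $f(\mcS^*\mcS)u=f(0)u$, which is annihilated by $\mcS$ (after identifying $f(0)\cdot 0$-action consistently, or treating the kernel piece separately via the spectral projection onto $\{0\}$). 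Combining these cases, the two bounded operators $\mcS f(\mcS^*\mcS)$ and $f(\mcS\mcS^*)\mcS$ agree on a total set in $H_1$, hence everywhere.

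The proof is essentially routine once the SVD is in hand; the only mild subtlety is handling the kernel of $\mcS$ carefully so that the spectral-calculus definition of $f$ at $0$ does not create an artificial obstruction, and ensuring that $f$ restricted to the spectrum (a subset of $[0,\|\mcS\|^2]$) makes the formal manipulation rigorous. One could alternatively prove it first for polynomials $f$, where $\mcS(\mcS^*\mcS)^k=(\mcS\mcS^*)^k\mcS$ by a direct induction on $k$, and then extend to continuous (or Borel) $f$ by uniform approximation on the spectrum together with continuity of the functional calculus; I would present this polynomial-plus-approximation route if one wants to avoid invoking SVD explicitly.
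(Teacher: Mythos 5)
Your proof is correct and follows the same route the paper indicates, namely the singular value decomposition of the compact operator $\mcS$; the paper merely states "the result can be proved using singular value decomposition of a compact operator" and you have filled in precisely the details that sketch leaves implicit, including the kernel case and the observation that the relevant spectrum lies in $[0,\|\mcS\|^2]$.
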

\begin{proof}
	The result can be proved using singular value decomposition of a compact operator.
\end{proof}

\begin{lemma}\label{lem:operDiff}
	Let $A$ and $B$ be two non-negative bounded linear operators on a separable Hilbert space with $\max(\|A\|,\|B\|) \leq \kappa^2$ for some non-negative $\kappa^2.$
	Then for any $\zeta>0,$
	\be
	\|A^\zeta - B^\zeta\| \leq C_{\zeta,\kappa} \|A -B\|^{\zeta \wedge 1},
	\ee
	where
	\be
	C_{\zeta,\kappa} = \begin{cases}
		1   & \mbox{when } \zeta \leq 1,\\
		2	\zeta \kappa^{2\zeta - 2} &\mbox{when } \zeta >1.
	\end{cases}
	\ee
\end{lemma}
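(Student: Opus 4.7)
\textbf{Proof plan for Lemma \ref{lem:operDiff}.}

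My plan is to split the argument into the two regimes $\zeta\le 1$ and $\zeta>1$ separately, since the constant $C_{\zeta,\kappa}$ and the exponent $\zeta\wedge 1$ are defined piecewise.

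For the regime $0<\zeta\le 1$, I would invoke the classical Powers--Størmer / Löwner--Heinz inequality for the operator monotone function $x\mapsto x^\zeta$ on $[0,\infty)$. This yields directly
\[
\|A^\zeta-B^\zeta\|\le\|A-B\|^\zeta,
\]
which is exactly the stated bound with $C_{\zeta,\kappa}=1$. Since this is standard, I would just cite it (e.g.\ the Birman--Solomyak / Ando treatment of operator monotone functions) rather than reprove it.

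For the regime $\zeta>1$, I would write $\zeta=n+\alpha$ with $n=\lfloor\zeta\rfloor\ge 1$ and $\alpha\in[0,1)$, and use the non-commutative algebraic identity
\[
A^\zeta-B^\zeta=(A^n-B^n)A^\alpha+B^n(A^\alpha-B^\alpha).
\]
For the first piece I apply the telescoping identity $A^n-B^n=\sum_{k=0}^{n-1}A^k(A-B)B^{n-1-k}$ to obtain $\|A^n-B^n\|\le n\kappa^{2(n-1)}\|A-B\|$, so the first piece is bounded by $n\kappa^{2(\zeta-1)}\|A-B\|$. For the second piece I apply Case 1 (valid since $\alpha\in[0,1)$) to get $\|A^\alpha-B^\alpha\|\le\|A-B\|^\alpha$, and then use the crude bound $\|A-B\|\le 2\kappa^2$ (since $\kappa\ge 1$ by Assumption \ref{as:basic}) to trade the fractional power $\|A-B\|^\alpha$ for $\|A-B\|$:
\[
\|A-B\|^\alpha\le(2\kappa^2)^{\alpha-1}\|A-B\|\le\kappa^{2(\alpha-1)}\|A-B\|.
\]
Combining, the second piece is bounded by $\kappa^{2n}\cdot\kappa^{2(\alpha-1)}\|A-B\|=\kappa^{2(\zeta-1)}\|A-B\|$. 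Adding and using $n+1\le\zeta+1\le 2\zeta$ (since $\zeta\ge 1$) yields the claimed constant $2\zeta\kappa^{2\zeta-2}$.

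The main obstacle is really just the $0<\zeta\le 1$ regime, which requires the non-trivial operator monotonicity result; the rest is bookkeeping of exponents. A subtlety I want to handle cleanly is the boundary case $\alpha=0$ (integer $\zeta$), where the second term vanishes and only the telescoping estimate is used—this already gives $\zeta\kappa^{2\zeta-2}\|A-B\|$, well within the claimed constant. I also want to be careful that the inequality $(2\kappa^2)^{\alpha-1}\le 1$ requires $\kappa\ge 1$, which is available from Assumption \ref{as:basic}, so no additional hypothesis is needed.
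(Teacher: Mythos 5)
Your handling of the two sub-cases $0<\zeta\le1$ (via L\"owner--Heinz) and integer $\zeta>1$ (via telescoping) is correct; the first is exactly what the paper itself does. The gap is in the non-integer case $\zeta=n+\alpha$, $n\ge1$, $\alpha\in(0,1)$: your conversion
\[
\|A-B\|^\alpha\le(2\kappa^2)^{\alpha-1}\|A-B\|
\]
is false in the relevant regime. Dividing by $\|A-B\|$, it reads $\|A-B\|^{\alpha-1}\le(2\kappa^2)^{\alpha-1}$, and since $\alpha-1<0$ this is equivalent to $\|A-B\|\ge2\kappa^2$, whereas $\max(\|A\|,\|B\|)\le\kappa^2$ gives the \emph{opposite} inequality $\|A-B\|\le2\kappa^2$. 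Where the lemma is actually applied in the paper ($A=\TK$, $B=\TX$, with $\|\TK-\TX\|=O(n^{-1/2})$ small), your chain of estimates therefore only yields $\|A^\zeta-B^\zeta\|\lesssim\|A-B\|+\|A-B\|^\alpha\asymp\|A-B\|^\alpha$, not the Lipschitz order $\|A-B\|$ that the lemma asserts for $\zeta>1$.

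This is not a patchable slip within your decomposition. The function $x\mapsto x^\alpha$, $0<\alpha<1$, is only operator H\"older of order $\alpha$, so $\|A^\alpha-B^\alpha\|$ is generically of order $\|A-B\|^\alpha$; and the sub-multiplicative bound $\|B^n(A^\alpha-B^\alpha)\|\le\|B\|^n\|A^\alpha-B^\alpha\|$ discards the cancellation (visible already in the scalar mean-value argument, where the prefactor $b^n$ tames the blow-up of $|a^\alpha-b^\alpha|/|a-b|$ as $b\to0$) that actually makes $x^\zeta$ operator Lipschitz on $[0,\kappa^2]$ when $\zeta>1$. Establishing that Lipschitz bound requires genuinely operator-theoretic input (a Besov-class or Schur-multiplier/double-operator-integral estimate), which is presumably why the paper simply cites an external reference for $\zeta\ge1$ rather than reproving it.
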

\begin{proof}
	The proof is based on the fact that $u^{\zeta}$ is operator monotone if $0<\zeta\leq 1$. While for
	$\zeta \geq 1$,  the proof can be found in, e.g., \citep{dicker2017kernel}.
\end{proof}

Using Lemma \ref{lem:operatorNorm}, one can prove the following results, which give some basic properties for the population sequence $\{r_{t}\}_t$.
\begin{lemma}\label{lem:detmiticSeq}
	Let $a \in \mR.$ Under Assumption \ref{as:regularity}, the following results hold. \\
	1) For any $a \leq \zeta,$ we have
	\bea
	\|\LK^{-a}\left(\IK r_{t+1} - \FR\right)\|_{\rho} \leq \left( {(\zeta-a )/ \mathrm{e}}\right)^{\zeta-a}R \RegParGD^{\zeta-a}. 
	\eea
	2) 	We have
	\be
	\|\TK^{a-1/2} r_{t+1}\|_{\HK} \leq R\cdot  \begin{cases}
		\RegParGD^{\zeta+a -1}, & \text{if } -\zeta \leq a\leq 1-\zeta, \\
		\kappa^{2(\zeta +a-1)},& \text{if } \ a\geq 1 -\zeta.
	\end{cases}
	\ee
\end{lemma}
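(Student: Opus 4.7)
My plan is to reduce both parts to pointwise spectral bounds on the filter $G_t$ acting on $\LK$, using three ingredients established in the excerpt: the closed form $r_{t+1}=G_t(\TK)\IK^*\FR$ from Lemma~\ref{lem:gdSRA}; the source representation $\FR=\LK^\zeta w$ with $\|w\|_\rho\le R$ afforded by Assumption~\ref{as:regularity}; and the operator-transfer identity $\IK\,f(\TK)=f(\LK)\IK$ (Lemma~\ref{lem:sss}) together with its adjoint consequence $\IK^*\LK^\beta=\TK^\beta\IK^*$. Once everything is pulled over to $\LK$, the pointwise inequalities for $G_t$ of Lemma~\ref{lem:regFuncIneq} finish the job.

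For part 1, I would apply the transfer identity twice to obtain
\[
\IK r_{t+1}-\FR \;=\; \IK G_t(\TK)\IK^*\FR-\FR \;=\; G_t(\LK)\LK\FR-\FR \;=\; -\Pi_1^t(\LK)\FR.
\]
Substituting $\FR=\LK^\zeta w$ and multiplying by $\LK^{-a}$ gives $\LK^{-a}(\IK r_{t+1}-\FR)=-\LK^{\zeta-a}\Pi_1^t(\LK)\,w$. Since $\zeta-a\ge 0$, Lemma~\ref{lem:regFuncIneq}(2) controls $\|\LK^{\zeta-a}\Pi_1^t(\LK)\|$ by $((\zeta-a)/\mathrm{e})^{\zeta-a}\lambda_t^{\zeta-a}$, and a final multiplication by $\|w\|_\rho\le R$ yields the claim.

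For part 2, I would square the $\HK$-norm and migrate all operators to $\LK$: expanding
\[
\|\TK^{a-1/2} r_{t+1}\|_{\HK}^2 \;=\; \langle\FR,\,\IK G_t(\TK)\TK^{2a-1}G_t(\TK)\IK^*\FR\rangle_\rho,
\]
and then using the identity $\IK f(\TK)\IK^*=f(\LK)\LK$ (valid whenever $u\mapsto u f(u)$ is bounded on $[0,\kappa^2]$) to collapse the inner block to $\LK^{2a}G_t(\LK)^2$. Substituting $\FR=\LK^\zeta w$ gives $\|\TK^{a-1/2}r_{t+1}\|_{\HK}^2=\|\LK^{a+\zeta}G_t(\LK)\,w\|_\rho^2$. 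When $-\zeta\le a\le 1-\zeta$, so $a+\zeta\in[0,1]$, Lemma~\ref{lem:regFuncIneq}(1) gives $\|\LK^{a+\zeta}G_t(\LK)\|\le\lambda_t^{a+\zeta-1}$. When $a\ge 1-\zeta$, I would factor $\LK^{a+\zeta}G_t(\LK)=\LK^{a+\zeta-1}\cdot\LK G_t(\LK)$ and use that $uG_t(u)=1-\Pi_1^t(u)\in[0,1]$ on $[0,\kappa^2]$ together with $\|\LK\|\le\kappa^2$ to bound by $\kappa^{2(a+\zeta-1)}$. Multiplying by $R$ then produces both branches.

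The main technical point, rather than a deep obstacle, is that $\TK^{a-1/2}$ is unbounded when $a<1/2$, so $\TK^{a-1/2}r_{t+1}$ is only formal before the source condition is applied. The exponent $\zeta$ is precisely what makes the composite spectral multiplier $u\mapsto G_t(u)^2 u^{2(a+\zeta)}$ bounded on $[0,\kappa^2]$ under the hypothesis $a\ge -\zeta$, which legitimizes the bilinear-form manipulation above via spectral calculus. No sample-level or concentration estimate is needed, since this is a purely deterministic lemma about the population sequence.
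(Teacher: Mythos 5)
Your proposal is correct and follows essentially the same route as the paper: both rely on the closed form $r_{t+1}=G_t(\TK)\IK^*\FR$, the source condition $\FR=\LK^{\zeta}w$, the $\IK$--$\LK$ spectral transfer from Lemma~\ref{lem:sss}, and the pointwise filter bounds of Lemma~\ref{lem:regFuncIneq}. The only cosmetic difference is in part~2, where you compute the quantity as a bilinear form in $\LK$ while the paper evaluates the operator norm $\|\TK^{a-1/2}G_t(\TK)\IK^*\LK^{\zeta}\|=\|G_t(\TK)\TK^{\zeta+a}\|$ via $CC^*$; both reduce to the identical spectral-multiplier bound.
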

\begin{proof}
	1) Using  Lemma \ref{lem:sss},
	$$\IK \GL(\TK) \IK^* = \IK \GL(\IK^* \IK) \IK^* =  \GL(\IK\IK^* ) \IK\IK^* = \GL(\LK ) \LK,$$
and by \eref{eq:popSeqEq},	we have
	\bea
	\LK^{-a}(\IK \FL - \FR) = \LK^{-a}\left(G_t(\LK)\LK - I\right) \FR.
	\eea
	Taking the $\rho$-norm, applying Assumption \ref{as:regularity}, we have
	\bea
	\|\LK^{-a}(\IK \FL - \FR)\|_{\rho} \leq \| \LK^{\zeta-a}(G_t(\LK)\LK - I)\| R =  \| \LK^{\zeta-a}\Pi_1^t(\LK)\|R.
	\eea
	Note that the condition \eref{eq:HK} implies
	\eref{eq:TKBound}.  Applying Part 2) of Lemma \ref{lem:operatorNorm},  one can  prove the first desired result.
	
	2) By \eref{eq:popSeqEq} and Assumption \ref{as:regularity}, 
	$$
	\|\TK^{a-1/2} r_{t+1}\|_{\HK} = 	\|\TK^{a-1/2} G_{t}(\TK) \IK^* \FR \|_{\HK} \leq  \|\TK^{a-1/2} G_{t}(\TK) \IK^* \LK^{\zeta} \|R.
	$$
	Noting that
	\begin{align*}
	\|\TK^{a-1/2} G_{t}(\TK) \IK^* \LK^{\zeta} \| & = \|\TK^{a-1/2} G_{t}(\TK) \IK^* \LK^{2\zeta} \IK G_t(\TK) \TK^{a-1/2} \|^{1/2}  \\
	&= \| G_t^2(\TK) \TK^{2\zeta+2a}\|^{1/2} = \|G_t(\TK) \TK^{\zeta+a}\|,
	\end{align*}
	we thus have
	$$
	\|\TK^{a-1/2} r_{t+1}\|_{\HK}  \leq \|G_t(\TK) \TK^{\zeta+a}\| R.
	$$
	If $0\leq \zeta+a \leq 1,$ i.e., $ -\zeta  \leq a \leq 1 -\zeta$, then by using 1) of Lemma \ref{lem:operatorNorm}, we get  
	$$
	\|\TK^{a-1/2} r_{t+1}\|_{\HK}  \leq \RegParGD^{\zeta + a-1} R.
	$$
	Similarly, when $a \geq  1 -\zeta,$ we have
	$$
	\|\TK^{a-1/2} r_{t+1}\|_{\HK}  \leq \|G_t(\TK) \TK\|\|\TK\|^{\zeta+a-1} R \leq  \kappa^{2(\zeta+a -1)} R,
	$$
	where for the last inequality we used 1) of
	Lemma \ref{lem:operatorNorm} and \eref{eq:TKBound}. This thus proves the second desired result.
\end{proof}

With the above lemmas, we can prove the the following analytic result, which enables us  to estimate the bias term in terms of several random quantities.

\begin{lemma}\label{lem:anaResult}
	Under Assumption \ref{as:regularity}, let $\PRegPar>0$,
	\bea
	\DZF = \|\TKL^{1/2} \TXL^{-1/2}\|^2 \vee 1, \qquad \DZT = \|\TK- \TX\|
	\eea
	and
	\bea
	\DZS = \| \LX f_{\rho} - \IK^* f_{\rho} - \TX \FL + \TK \FL\|_{\HK}.
	\eea
	Then the following results hold.\\
	1) For $0<\zeta \leq 1,$
	\be\label{eq:anaResA}
	\|\IK h_{t+1} - \FR\|_{\rho} \leq  \left( 1 \vee \left({\PRegPar \over \RegParGD}\right)^{\zeta \vee {1 \over 2}}\right) (C_1 R (\DZF)^{\zeta \vee {1\over 2}} \RegParGD^{\zeta}+ 2 \sqrt{\DZF} \RegParGD^{-{1\over 2}} \DZS) .
	\ee
	2) For $\zeta >1,$
	\be\label{eq:anaResB}
	\|\IK h_{t+1} - \FR\|_{\rho} \leq  \sqrt{\DZF} \left(1 \vee \left({\PRegPar\over \RegParGD} \right)^{\zeta}\right) (C_2 R\RegParGD^{\zeta} +  2 \RegParGD^{-{1\over 2}} \DZS + C_{3} R\RegParGD^{1 \over 2} (\DZT)^{(\zeta-{1\over 2})\wedge 1}) .
	\ee
	Here, $C_1$, $C_2$ and $C_3$ are positive constants depending only on $\zeta$ and $\kappa$.
\end{lemma}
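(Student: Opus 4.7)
The plan is to route through the population sequence $r_{t+1}=G_t(\TK)\IK^*\FR$ from Lemma~\ref{lem:gdSRA} via
\[
\|\IK h_{t+1}-\FR\|_\rho \le \|\IK(h_{t+1}-r_{t+1})\|_\rho+\|\IK r_{t+1}-\FR\|_\rho,
\]
so that the deterministic population bias (second summand) is immediately absorbed by Lemma~\ref{lem:detmiticSeq}.1 (with $a=0$), yielding the $CR\lambda_t^\zeta$ contribution. For the stochastic summand I would apply the isometry~\eqref{isometry} and shift to the regularized picture:
\[
\|\IK(h_{t+1}-r_{t+1})\|_\rho \le \|\TKL^{1/2}(h_{t+1}-r_{t+1})\|_{\HK} \le \sqrt{\DZF}\,\|\TXL^{1/2}(h_{t+1}-r_{t+1})\|_{\HK},
\]
which is exactly where $\DZF$ enters (recall $\|\TK^{1/2}\TKL^{-1/2}\|\le 1$).

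The core is a decomposition of $h_{t+1}-r_{t+1}$ that cleanly isolates the $\DZS$-controlled ``noise''. Writing $h_{t+1}=G_t(\TX)\LX\FR$, inserting $\pm\TX r_{t+1}$ and using $\TK r_{t+1}-\IK^*\FR=-\Pi_1^t(\TK)\IK^*\FR$ (from Lemma~\ref{lem:gdSRA} together with $\Pi_1^t(u)=1-uG_t(u)$), I would obtain
\[
h_{t+1}-r_{t+1} = G_t(\TX)\,\eta^{\bz} + \bigl(G_t(\TX)\Pi_1^t(\TK)-\Pi_1^t(\TX)G_t(\TK)\bigr)\IK^*\FR,
\]
with $\eta^{\bz}=(\LX-\IK^*)\FR-(\TX-\TK)r_{t+1}$, so that $\|\eta^{\bz}\|_{\HK}\le\DZS$ by the very definition of $\DZS$. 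The first summand yields the ``variance-like'' contribution $\sqrt{\DZF}\,\lambda_t^{-1/2}\DZS$ after bounding $\|\TXL^{1/2}G_t(\TX)\|$ via Lemma~\ref{lem:operatorNorm}.1 with $\alpha=1/2$, which is how the prefactor $(1\vee(\PRegPar/\lambda_t)^{1/2})$ arises on this piece.

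The bracketed operator in the second summand carries the ``bias-like'' contribution. The algebraic identity
\[
G_t(\TX)\Pi_1^t(\TK)-\Pi_1^t(\TX)G_t(\TK) = \bigl[G_t(\TX)-G_t(\TK)\bigr] + G_t(\TX)(\TX-\TK)G_t(\TK)
\]
reduces the estimate to perturbative terms; combined with the source representation $\IK^*\FR=\TK^\zeta\IK^* g$ (Lemma~\ref{lem:sss}) with $\|g\|_\rho\le R$, together with Cordes (Lemma~\ref{lem:operProd}) to interpolate between fractional powers of $\TKL$ and $\TXL$, one obtains the $(\DZF)^{\zeta\vee 1/2}\lambda_t^\zeta$ scaling via Lemma~\ref{lem:operatorNorm}.2. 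When $2\zeta<1$ (non-attainable), $\FR$ has no $\HK$-preimage and the $\LX\FR$ contribution must instead be bounded using only $\|\FR\|_\infty$, which accounts for the $\mathbf{1}_{\{2\zeta<1\}}\|\FR\|_\infty$ summand in \eqref{eq:anaResA}.

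For the saturation regime $\zeta>1$, the qualification of $G_t$ is still unbounded but the source $\FR=\LK^\zeta g$ outruns what $G_t(\TK)\TK^\zeta$ alone can deliver through resolvent shifts; one must instead control $\|\TX^{(\zeta-1/2)\wedge 1}-\TK^{(\zeta-1/2)\wedge 1}\|\le C(\DZT)^{(\zeta-1/2)\wedge 1}$ through Lemma~\ref{lem:operDiff}, producing exactly the $(\DZT)^{(\zeta-1/2)\wedge 1}$ factor in \eqref{eq:anaResB}. I expect the main obstacle to be the bias-like bracket: extracting the correct $\lambda_t^\zeta$ scaling from $[G_t(\TX)-G_t(\TK)]\IK^*\FR$ while incurring at most $(\DZF)^{\zeta\vee 1/2}$ (and no negative powers of $\PRegPar$) forces a careful case split between $\PRegPar\le\lambda_t$ and $\PRegPar>\lambda_t$, repeated Cordes interpolations between integer and half-integer powers of $\TKL$ and $\TXL$, and a judicious appeal to the $\|\TK^{a-1/2}r_{t+1}\|_{\HK}$ bounds of Lemma~\ref{lem:detmiticSeq}.2 at several values of $a$.
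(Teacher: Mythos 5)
Your first decomposition is algebraically correct and, after combining the paper's second and third terms, equivalent to the one the paper actually uses: the term $G_t(\TX)\eta^{\bz}$ is the paper's {\bf Bias.1}, the bracket $\bigl(G_t(\TX)\Pi_1^t(\TK)-\Pi_1^t(\TX)G_t(\TK)\bigr)\IK^*\FR$ is the paper's $\textbf{Bias.2}-\textbf{Bias.3}$, and the outer triangle step $\|\IK r_{t+1}-\FR\|_\rho$ is {\bf Bias.4}. Up to this point the routes coincide.

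The further algebraic rewriting $G_t(\TX)\Pi_1^t(\TK)-\Pi_1^t(\TX)G_t(\TK)=\bigl[G_t(\TX)-G_t(\TK)\bigr]+G_t(\TX)(\TX-\TK)G_t(\TK)$ is a genuine deviation, and it cannot deliver \eqref{eq:anaResA}. The second piece carries an explicit factor $(\TX-\TK)$, so any estimate via operator norms will produce a $\DZT=\|\TK-\TX\|$ factor (multiplied by $\|\TXL^{1/2}G_t(\TX)\|\lesssim\lambda_t^{-1/2}$ and $\|G_t(\TK)\TK^{\zeta}\|\lesssim\lambda_t^{\zeta-1}$, hence a $\lambda_t^{\zeta-3/2}\DZT$ term), but the target bound \eqref{eq:anaResA} for $0<\zeta\le 1$ contains no $\DZT$ at all. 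The first piece $\bigl[G_t(\TX)-G_t(\TK)\bigr]$ is worse: $G_t$ is a ``ramp'' of size $\lambda_t^{-1}$ on small eigenvalues, not a residual like $\Pi_1^t$, so there is no source-condition decay to exploit and the Cordes interpolation you invoke does not buy the missing $\lambda_t^{\zeta}$. The paper avoids both problems by \emph{not} performing this split: it bounds {\bf Bias.2} by pushing $\Pi_1^t(\TK)$ against the source via Lemma~\ref{lem:detmiticSeq}.1 with $a=-\tfrac12$, and it bounds {\bf Bias.3} by treating $\Pi_1^t(\TX)r_{t+1}$ with powers of $\TXL$ and Cordes interpolation $\|\TXL^{1/2-\zeta}\TKL^{\zeta-1/2}\|\le(\DZF)^{\zeta-1/2}$, so that $\DZT$ appears only in the $\zeta>1$ case via Lemma~\ref{lem:operDiff} applied to $\TK^{\zeta-1/2}-\TX^{\zeta-1/2}$ (which is what you correctly anticipate for the saturation regime).

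A smaller point: you also introduce an $\mathbf{1}_{\{2\zeta<1\}}\|\FR\|_\infty$ summand in your sketch of \eqref{eq:anaResA}, but no such term appears in the statement of this lemma. The $\|\FR\|_\infty$ dependence enters only later, in Proposition~\ref{pro:localBias}, when $\DZS$ is estimated in high probability via Lemma~\ref{lem:statEstim} and $\|r_{t+1}-\FR\|_\infty$ must be bounded without an $\HK$-preimage of $\FR$. Here in Lemma~\ref{lem:anaResult}, $\DZS$ is kept abstract and the bound is purely deterministic given $\DZF,\DZS,\DZT$.
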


\begin{proof}
	Using Lemma \ref{lem:gdSRA} with $s=1$, 
	we can estimate $\|\IK h_{t+1} - \FR\|_{\rho}$ as
	\begin{align}
	\|\IK G_t(\TX)\LX \FR -\FR\|_{\rho} \leq & \|\underbrace{\IK G_t(\TX)[\LX \FR - \IK^* \FR - \TX r_{t+1} + \TK r_{t+1}]}_{\text{\bf Bias.1}}\|_{\rho} \nonumber\\
	&+ \| \underbrace{\IK G_t(\TX)[\IK^* f_{\rho} - \TK r_{t+1}]}_{\text{\bf Bias.2}}\|_{\rho} \nonumber\\
	&+ \|\underbrace{\IK [I - G_t(\TX)\TX] r_{t+1}}_{\text{\bf Bias.3}}\|_{\rho}\nonumber\\
	&+  \|\underbrace{\IK \FL - \FR}_{\text{\bf Bias.4}}\|_{\rho}. \label{eq:biasDecom}
	\end{align}
	In the rest of the proof, we will estimate the four terms of the r.h.s separately.\\
	{\bf Estimating Bias.4}\\
	Using 1) of Lemma \ref{lem:detmiticSeq} with $a=0$, we get
	\be\label{eq:bias4B}
	\|{\bf Bias.4} \|_{\rho} \leq (\zeta/\mathrm{e})^{\zeta} \RegParGD^{\zeta}R.
	\ee 
	{\bf Estimating Bias.1} \\
	By a simple calculation, we know that for any $f\in \HK,$
	\bea
	\| \IK \GL(\TX)f\|_{\rho} \leq \|\IK \TKL^{-1/2}\| \|\TKL^{1/2} \TXL^{-1/2}\| \|\TXL^{1/2} \GL(\TX)\| \| f\|_{\HK}.
	\eea
	Note that
	\be\label{eq:7}  \| \IK \TKL^{-1/2}\| = \sqrt{\|\IK \TKL^{-1} \IK^*\| } = \sqrt{\|\LK \LKL^{-1}\|} \leq 1,
	\ee
	and that applying 1) of Lemma \ref{lem:operatorNorm}, with  \eref{eq:TXbound}, we have
	$$
	\|\TXL^{1/2} \GL(\TX)\|   \leq (1+\sqrt{\PRegPar/\RegParGD})/\sqrt{\RegParGD}. 
	$$
	Thus for any $f\in \HK,$ we have
	\be\label{eq:6}
	\|\IK \GL(\TX)f\|_{\rho} \leq (1+\sqrt{\PRegPar/\RegParGD})\RegParGD^{-{1 \over 2}}
	\sqrt{\DZF} \| f\|_{\HK}.
	\ee
	Therefore,
	\begin{align}\label{eq:bias1B}
	\|{ \bf Bias.1}\|_{\rho} \leq (1+\sqrt{\PRegPar/\RegParGD})\RegParGD^{-{1 \over 2}}
	\sqrt{\DZF} \DZS.
	\end{align}
	{\bf 
		Estimating Bias.2}\\
	By \eref{eq:6}, we have 
	$$
	\|{ \bf Bias.2}\|_{\rho} \leq
	(1+\sqrt{\PRegPar/\RegParGD})\RegParGD^{-{1 \over 2}}
	\sqrt{\DZF} \|\TK\FL -\IK^* \FR\|_{\HK}.
	$$
	Using (with $\TK = \IK^* \IK$ and $\LK = \IK \IK^*$)
	$$
	\|\TK\FL -\IK^* \FR\|_{\HK} = \|\IK^*(\IK\FL - \FR) \|_{\HK} = \|\LK^{1/2}(\IK\FL - \FR) \|_{\rho},
	$$
	and applying 1) of Lemma \ref{lem:detmiticSeq} with $a=-1/2$,  we get 
	\begin{align}\label{eq:bias2B}
	\|{ \bf Bias.2}\|_{\rho} \leq ((\zeta+1/2)/\mathrm{e})^{\zeta+1/2} (1+\sqrt{\PRegPar/\RegParGD}) 
	\sqrt{\DZF} \RegParGD^{\zeta} R	.
	\end{align}
	{\bf Estimating Bias.3}\\
	By 2) of Lemma \ref{lem:regFuncIneq}, 
	$$
	{\bf Bias.3} = \IK \Pi_1^t(\TX) r_{t+1}.
	$$
	When $\zeta \leq 1/2,$ 	by a simple calculation, we have
	\begin{align*}
	\|{\bf Bias.3}\|_{\rho} \leq& \|\IK\TKL^{-1/2}\| \|\TKL^{1/2} \TXL^{-1/2}\| \|  \TXL^{1/2}\Pi_1^t (\TX)\| \| r_{t+1}\|_{\HK} \\ 
	\leq & \sqrt{\DZF} \|  \TXL^{1/2}\Pi_1^t (\TX)\| \| r_{t+1}\|_{\HK},
	\end{align*}
	where for the last inequality, we used \eref{eq:7}.
	By 2) of Lemma \ref{lem:operatorNorm}, with  \eref{eq:TXbound},
	\be\label{eq:8}
	\|  \TXL^{1/2}\Pi_1^t (\TX)\| \leq   \sqrt{\RegParGD} (1/\sqrt{2\mathrm{e}}  + \sqrt{\PRegPar/\RegParGD}),
	\ee
	and by 2) of Lemma \ref{lem:detmiticSeq}, 
	$$
	\|r_{t+1}\|_{\HK} \leq R \RegParGD^{\zeta-1/2} .
	$$
	It thus follows that
	$$
	\|{\bf Bias.3}\|_{\rho} \leq  \sqrt{\DZF} (\sqrt{\PRegPar/\RegParGD} + 1/\sqrt{2\mathrm{e}}) R \RegParGD^{\zeta}. 
	$$
	When $1/2 < \zeta \leq 1,$
	by a simple computation, we have 
	\begin{align*}
	\|{\bf Bias.3}\|_{\rho} \leq 
	\|\IK\TKL^{-1/2}\|  \|\TKL^{1/2} \TXL^{-1/2}\|  \| \TXL^{1/2}\Pi_1^t (\TX)\TXL^{\zeta - 1/2}\| \|
	\TXL^{1/2-\zeta} \TKL^{\zeta - 1/2}\|  \|\TKL^{1/2-\zeta} r_{t+1}\|_{\HK}.
	\end{align*}
	Applying \eref{eq:7} and 2) of Lemma \ref{lem:detmiticSeq}, we have 
	\begin{align*}
	\|{\bf Bias.3}\|_{\rho} \leq 
	\sqrt{\DZF} \| \TXL^{1/2}\Pi_1^t (\TX)\TXL^{\zeta - 1/2}\| \|
	\TXL^{1/2-\zeta} \TKL^{\zeta - 1/2}\|   R.
	\end{align*}
	By 2) of Lemma \ref{lem:operatorNorm}, 
	$$
	\| \TXL^{1/2}\Pi_1^t (\TX)\TXL^{\zeta - 1/2}\| = \| \TXL^{\zeta}\Pi_1^t (\TX)\| \leq  ( (\zeta/\mathrm{e})^{\zeta} +  (\PRegPar/\RegParGD)^{\zeta})\RegParGD^{\zeta} .
	$$
	Besides, by $\zeta \leq 1$ and Lemma \ref{lem:operProd}, 
	$$
	\|	\TXL^{1/2-\zeta} \TKL^{\zeta - 1/2}\| =   \|\TXL^{-{1\over 2}(2\zeta-1)} \TKL^{{1 \over 2}(2\zeta - 1)}\| \leq  \|\TXL^{-{1\over 2}} \TKL^{{1 \over 2}}\|^{2\zeta - 1} \leq (\DZF)^{\zeta -{1 \over 2}}.
	$$
	It thus follows that
	$$
	\|{\bf Bias.3}\|_{\rho} \leq  (\DZF)^{\zeta} ((\PRegPar/\RegParGD)^{\zeta} + (\zeta/\mathrm{e})^{\zeta}) R \RegParGD^{\zeta}.
	$$
	When $\zeta>1,$ we  rewrite {\bf Bias.3} as
	\begin{align*}
	\IK\TKL^{-1/2} \cdot \TKL^{1/2} \TXL^{-1/2}\cdot  \TXL^{1/2}\Pi_1^t (\TX) (\TX^{\zeta-1/2} + \TK^{\zeta-1/2} - \TX^{\zeta-1/2}) \TK^{1/2 - \zeta} r_{t+1}.
	\end{align*}
	By a simple calculation, we can  upper bound $\|{\bf Bias.3}\|_{\rho}$ by
	$$
	\leq \|\IK\TKL^{-1/2}\|  \|\TKL^{1/2} \TXL^{-1/2}\| ( \|\TXL^{1/2} \Pi_1^t(\TX) \TX^{\zeta-1/2}\|+ \|\TXL^{1/2}\Pi_1^t(\TX)\| \|\TK^{\zeta-1/2} - \TX^{\zeta-1/2}\|) \|\TK^{1/2 - \zeta} r_{t+1}\|.
	$$
	Introducing with \eref{eq:7} and \eref{eq:8}, and applying 2) of Lemma \ref{lem:detmiticSeq}, 
	$$
	\|{\bf Bias.3}\|_{\rho} \leq \sqrt{\DZF} ( \|\TXL^{1/2} \Pi_1^t(\TX) \TX^{\zeta-1/2}\|+ (1/\sqrt{2\mathrm{e}} + \sqrt{\PRegPar/\RegParGD})\sqrt{\RegParGD} \|\TK^{\zeta-1/2} - \TX^{\zeta-1/2}\|) R.
	$$
	By  2) of
	Lemma \ref{lem:operatorNorm},
	$$
	\|\TXL^{1/2} \Pi_1^t(\TX) \TX^{\zeta-1/2}\| \leq \|\TXL^{\zeta} \Pi_1^t(\TX) \| \leq  2^{\zeta-1}  ((\zeta/\mathrm{e})^{\zeta} + (\PRegPar/\RegParGD)^{\zeta})\RegParGD^{\zeta}.
	$$
	Moreover, by Lemma \ref{lem:operDiff} and $\max(\|\TK\|,\|\TX\|) \leq \kappa^2$,
	$$\|\TK^{\zeta-1/2} - \TX^{\zeta-1/2}\|  \leq (2\zeta \kappa^{2\zeta-3})^{\mathbf{1}_{\{2\zeta\geq 3\}}} \|\TK - \TX\|^{(\zeta-1/2) \wedge 1}.
	$$ 
	Therefore, when $\zeta >1$, {\bf Bias.3} can be estimated as
	\begin{align*}
	&\|{\bf Bias.3}\|_{\rho} \\
	\leq&   \sqrt{\DZF} \left(2^{\zeta-1}  ((\zeta/\mathrm{e})^{\zeta} + (\PRegPar/\RegParGD)^{\zeta})\RegParGD^{\zeta}+ 
	(2\zeta \kappa^{2\zeta-3})^{{\mathbf{1}_{\{2\zeta\geq 3\}}}} 
	(1/\sqrt{2\mathrm{e}} + \sqrt{\PRegPar/\RegParGD})\sqrt{\RegParGD} (\DZT)^{(\zeta-1/2) \wedge 1}\right) R.
	\end{align*}
	From the above analysis,  we know that $\|{\bf Bias.3}\|_{\rho}$ can be upper bounded by 
	\be\label{eq:bias3B}
	\begin{cases} 
		\sqrt{\DZF} (\sqrt{\PRegPar/\RegParGD} + 1/\sqrt{2\mathrm{e}}) R \RegParGD^{\zeta},& \mbox{if } \zeta \in ]0,1/2], \\
		(\DZF)^{\zeta} (\left({\PRegPar/ \RegParGD}\right)^{\zeta} + (\zeta/\mathrm{e})^{\zeta}) R \RegParGD^{\zeta}, & \mbox{if } \zeta \in ]1/2,1], \\ 
		\sqrt{\DZF} \left(2^{\zeta-1}  (\left({\zeta \over \mathrm{e}}\right)^{\zeta} + ({\PRegPar\over \RegParGD})^{\zeta})\RegParGD^{\zeta}+ 
		(2\zeta \kappa^{2\zeta-3})^{{\mathbf{1}_{\{2\zeta\geq 3\}}}} 
		({1\over \sqrt{2\mathrm{e}}} + \sqrt{{\PRegPar\over \RegParGD}})\sqrt{\RegParGD} (\DZT)^{(\zeta- {1\over 2}) \wedge 1}\right) R,& \mbox{if } \zeta\in ]1,\infty[. 
	\end{cases}
	\ee
	
	Introducing \eref{eq:bias4B},  \eref{eq:bias1B},
	\eref{eq:bias2B} and \eref{eq:bias3B} into \eref{eq:biasDecom}, 
	and by a simple calculation, one can prove the desired results with 
	$$C_1 = 
	(\zeta /\mathrm{e})^{\zeta} + 2((\zeta+{1 \over 2})/\mathrm{e})^{\zeta+{1 \over 2}} + ((\zeta \vee {1 \over 2})/\mathrm{e})^{\zeta \vee {1 \over 2}}+1, 
	$$
	$$C_2 = 
	(2^{\zeta-1} + 1)
	(\zeta /\mathrm{e})^{\zeta} + 2((\zeta+{1 \over 2})/\mathrm{e})^{\zeta+{1 \over 2}} 
	+ 2^{\zeta-1},
	$$
	$$
	\mbox{and }\quad C_3 = (2\zeta \kappa^{2\zeta-3})^{\mathbf{1}_{\{2\zeta\geq 3\}}}  (1/\sqrt{2\mathrm{e}} + 1).$$	
\end{proof}

The upper bounds  in \eref{eq:anaResA} and \eref{eq:anaResB} depend on three random quantities, $\DZF$, $\DZT$ and $\DZS$. To derive error bounds for the bias term from Lemma \ref{lem:anaResult}, it is necessary to estimate these three random quantities. We thus introduce the following lemmas.

\begin{lemma}\label{lem:statEstim}
	Let $f: X\to Y$ be a measurable function such that $\|f\|_{\infty} < \infty,$ then with probability at least $1-\delta$ ($0<\delta<1/2$),
	\bea
	\| \LX f - \LK f\|_{\HK} \leq 2\kappa\left( {2\|f\|_{\infty} \over | {\bf x}|} + {\|f\|_{\rho} \over \sqrt{|\bf x|}}  \right)\log {2 \over \delta}.
	\eea
\end{lemma}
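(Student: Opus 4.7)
The plan is to view $\LX f - \LK f$ as the empirical average of i.i.d.\ centered $\HK$-valued random variables and apply a Bernstein-type concentration inequality in Hilbert space. Setting $n = |\bx|$ and $\xi_i = f(x_i) K_{x_i} \in \HK$, the definition \eqref{eq:LX} gives $\LX f = \frac{1}{n} \sum_{i=1}^n \xi_i$, while $\LK f$, interpreted in $\HK$ via the Bochner integral $\int_X f(x) K_x\, d\rho_X$ (i.e.\ via $\IK^* f$), equals $\mE[\xi_1]$. Thus $\LX f - \LK f = \frac{1}{n} \sum_{i=1}^n (\xi_i - \mE[\xi_i])$ is an average of i.i.d.\ zero-mean elements of $\HK$.

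Next I would establish the boundedness and variance hypotheses required by Bernstein. The reproducing property together with \eqref{eq:HK} gives $\|K_x\|_\HK = \sqrt{K(x,x)} \leq \kappa$, so $\|\xi_i\|_\HK \leq \kappa \|f\|_\infty$ almost surely, and hence by the triangle inequality and Jensen $\|\xi_i - \mE[\xi_i]\|_\HK \leq 2\kappa \|f\|_\infty =: L$. For the second moment, $\mE \|\xi_i\|_\HK^2 = \mE[f(X)^2 K(X,X)] \leq \kappa^2 \|f\|_\rho^2$, and centering only reduces this, so $\mE\|\xi_i - \mE[\xi_i]\|_\HK^2 \leq \kappa^2 \|f\|_\rho^2 =: \sigma^2$. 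Combining the almost-sure bound $L$ with this second-moment control yields the classical Bernstein moment condition $\mE\|\xi_i - \mE[\xi_i]\|_\HK^m \leq \tfrac{1}{2} m!\, \sigma^2 L^{m-2}$ for every integer $m \geq 2$.

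Finally, I would invoke the Pinelis--Bernstein inequality for Hilbert-space-valued random variables in the form used in \cite{caponnetto2007optimal}: under the moment condition above, with probability at least $1 - \delta$,
\[
\left\| \frac{1}{n} \sum_{i=1}^n (\xi_i - \mE[\xi_i]) \right\|_\HK \leq 2 \left( \frac{L}{n} + \frac{\sigma}{\sqrt{n}} \right) \log \frac{2}{\delta}.
\]
Substituting $L = 2\kappa \|f\|_\infty$ and $\sigma = \kappa \|f\|_\rho$ gives exactly the claimed bound. There is no serious obstacle here: the verification of the moment condition is routine, and the identification of constants is immediate. The only mild subtlety is the interpretation of $\LK f$ as an element of $\HK$ via the Bochner integral, which is what makes the $\HK$-norm on the left-hand side well-defined.
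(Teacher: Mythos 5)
Your proposal is correct and follows essentially the same route as the paper's proof: decompose $\LX f - \LK f$ as an average of i.i.d.\ centered $\HK$-valued random variables $\xi_i = f(x_i)K_{x_i}$, bound $\|\xi_i\|_{\HK} \leq \kappa\|f\|_\infty$ and $\mE\|\xi_i\|_{\HK}^2 \leq \kappa^2\|f\|_\rho^2$, and invoke the Pinelis--Bernstein concentration inequality for Hilbert-space-valued random variables (the paper's Lemma~\ref{lem:Bernstein}) with $B = 2\kappa\|f\|_\infty$ and $\sigma = \kappa\|f\|_\rho$. The only cosmetic difference is that you verify the Bernstein moment condition directly from the centered bound, whereas the paper invokes the ``in particular'' clause of Lemma~\ref{lem:Bernstein} with the uncentered a.s.\ bound $\|\xi_1\|_{\HK}\le B/2$; these are equivalent and yield identical constants.
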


\begin{lemma}\label{lem:statEstiOper}
	Let $0<\delta<1/2.$ It holds with probability at least $1-\delta:$
	\bea
	\	\|\TK - \TX\|_{HS} \leq { 6 \kappa^2 \over \sqrt{{|\bx|}}} \log {2\over \delta}.
	\eea
	Here, $\|\cdot\|_{HS}$ denotes the Hilbert-Schmidt norm.
\end{lemma}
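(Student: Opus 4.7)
The plan is to apply a Bernstein-type concentration inequality for sums of i.i.d.\ random variables taking values in the Hilbert space of Hilbert--Schmidt operators on $\HK$. First I would rewrite
$$
\TX - \TK = \frac{1}{|\bx|}\sum_{i=1}^{|\bx|} \xi_i, \qquad \xi_i := K_{x_i}\otimes K_{x_i} - \TK,
$$
so that $\{\xi_i\}$ are i.i.d.\ mean-zero Hilbert--Schmidt operators. The first step is then to control the individual summands: since $\|K_{x_i}\otimes K_{x_i}\|_{HS} = \|K_{x_i}\|_{\HK}^2 \leq \kappa^2$ by \eqref{eq:HK}, the triangle inequality gives the almost-sure bound $\|\xi_i\|_{HS} \leq 2\kappa^2$, and a direct calculation using $\mE\xi_i = 0$ yields the variance bound $\mE\|\xi_i\|_{HS}^2 \leq \mE\|K_x\otimes K_x\|_{HS}^2 \leq \kappa^4$.

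Next I would invoke the vector-valued Bernstein inequality (Pinelis, or the standard Bennett-type inequality for Hilbert space valued random variables) in the form: for any $t>0$,
$$
\Pr\!\left(\Big\|\tfrac{1}{|\bx|}\textstyle\sum_i\xi_i\Big\|_{HS} > t\right) \leq 2\exp\!\left(-\frac{|\bx|\,t^2/2}{\kappa^4 + 2\kappa^2 t/3}\right).
$$
Setting the right-hand side equal to $\delta$ and solving gives, with probability at least $1-\delta$,
$$
\|\TK - \TX\|_{HS} \leq \frac{4\kappa^2 \log(2/\delta)}{3|\bx|} + \sqrt{\frac{2\kappa^4 \log(2/\delta)}{|\bx|}}.
$$
Finally, for $\delta < 1/2$ we have $\log(2/\delta) > 1$, so $\sqrt{\log(2/\delta)} \leq \log(2/\delta)$; combining the two terms and using $1/|\bx| \leq 1/\sqrt{|\bx|}$ bounds the right-hand side by $(4/3 + \sqrt{2})\kappa^2 \log(2/\delta)/\sqrt{|\bx|} \leq 6\kappa^2 \log(2/\delta)/\sqrt{|\bx|}$, which is the claimed inequality.

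The main obstacle is essentially bookkeeping: the inequality itself is a textbook consequence of concentration in Hilbert space, so the only thing requiring care is picking a version of the Bernstein bound whose constants, after inversion, fit under the slightly generous factor $6$ in the lemma's statement. In particular, one must be careful that the small-$|\bx|$ linear-in-$1/|\bx|$ term from Bernstein is absorbed into the $1/\sqrt{|\bx|}$ scale of the stated bound, which is automatic once one uses $1/|\bx|\leq 1/\sqrt{|\bx|}$ and $\log(2/\delta)\geq 1$.
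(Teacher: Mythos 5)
Your proof is correct and takes essentially the same approach as the paper: both set up $\TX - \TK$ as an average of i.i.d.\ mean-zero Hilbert--Schmidt operators and apply a Pinelis-type Bernstein inequality for Hilbert-space valued random variables, using the bounds $\|\xi_i\|_{HS}\le 2\kappa^2$ and $\mE\|\xi_i\|_{HS}^2\le\kappa^4$. The paper routes through its Lemma~\ref{lem:Bernstein} (with $B'=2\kappa^2$, $\sigma'=\kappa^2$), giving $2\bigl(2\kappa^2/|\bx| + \kappa^2/\sqrt{|\bx|}\bigr)\log(2/\delta)$ and then absorbing $1/|\bx|\le1/\sqrt{|\bx|}$ to get the constant $6$; you quote the exponential tail form and invert, which in fact yields a slightly sharper intermediate constant before being rounded up to $6$. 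The difference is purely which version of the Bernstein inequality is cited, not the argument itself.
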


\begin{lemma}\label{lem:operDifEff}
	Let $0<\delta <1$ and $\lambda>0$. With probability at least $1-\delta,$ the following holds:
	\bea
	\left\| (\TK+\lambda)^{-1/2}(\TK - \TX)(\TK+\lambda)^{-1/2}  \right\| \leq {4\kappa^2 \beta  \over 3 {|\bx| }\lambda} + \sqrt{2\kappa^2 \beta  \over {|\bx|}\lambda}, \quad \beta = \log {4\kappa^2( \mcN(\lambda)+1) \over \delta \|\TK\|}.
	\eea
\end{lemma}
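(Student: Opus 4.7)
\subsection*{Proof proposal for Lemma \ref{lem:operDifEff}}

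The plan is to recognize this as a standard Bernstein-type concentration bound for a sum of i.i.d.\ self-adjoint Hilbert-Schmidt operators, with the effective dimension $\mcN(\lambda)$ entering through the logarithmic factor. First, I would write the empirical covariance as a sample mean:
\[
\TX - \TK \;=\; \frac{1}{|\bx|} \sum_{i=1}^{|\bx|} \bigl( K_{x_i}\otimes K_{x_i} - \TK \bigr),
\]
so that, after pre- and post-multiplying by $(\TK+\lambda)^{-1/2}$, the quantity to be controlled becomes $\frac{1}{|\bx|}\sum_{i=1}^{|\bx|}\xi_i$ with
\[
\xi_i \;:=\; (\TK+\lambda)^{-1/2}\bigl(K_{x_i}\otimes K_{x_i}\bigr)(\TK+\lambda)^{-1/2} - \mathbb{E}\bigl[(\TK+\lambda)^{-1/2}\bigl(K_{x_1}\otimes K_{x_1}\bigr)(\TK+\lambda)^{-1/2}\bigr],
\]
which are centered, i.i.d., and self-adjoint.

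Next, I would compute the two quantities that drive a Bernstein inequality. For boundedness, setting $v_i = (\TK+\lambda)^{-1/2}K_{x_i}$, the operator $(\TK+\lambda)^{-1/2}(K_{x_i}\otimes K_{x_i})(\TK+\lambda)^{-1/2}=v_i\otimes v_i$ is rank one with norm
\[
\|v_i\|_{\HK}^2 \;=\; \langle K_{x_i},(\TK+\lambda)^{-1}K_{x_i}\rangle_{\HK} \;\leq\; \frac{\|K_{x_i}\|_{\HK}^2}{\lambda} \;\leq\; \frac{\kappa^2}{\lambda},
\]
using \eqref{eq:HK}, so after subtracting the mean we obtain $\|\xi_i\|\leq 2\kappa^2/\lambda$. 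For the variance, the rank-one structure gives $(v_i\otimes v_i)^2=\|v_i\|_{\HK}^2\,(v_i\otimes v_i)$, hence
\[
\mathbb{E}[\xi_i^2] \;\preceq\; \mathbb{E}\bigl[\|v_i\|_{\HK}^2\, v_i\otimes v_i\bigr] \;\preceq\; \frac{\kappa^2}{\lambda}\,(\TK+\lambda)^{-1/2}\TK(\TK+\lambda)^{-1/2},
\]
whose operator norm is at most $\kappa^2/\lambda$ and whose trace is $(\kappa^2/\lambda)\,\mcN(\lambda)$.

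I would then invoke a Bernstein inequality for sums of independent, bounded, self-adjoint random operators on a separable Hilbert space in the form that replaces ambient dimension by an intrinsic one (for instance, the Pinelis/Minsker style bound, which has already been deployed in this line of work, e.g.\ in \citep{caponnetto2007optimal}). Such a bound reads, schematically,
\[
\Pr\!\left(\left\|\tfrac{1}{n}\textstyle\sum_i\xi_i\right\|>t\right)\;\leq\; 4\,d_{\text{eff}}\,\exp\!\left(-\frac{n t^{2}/2}{V+Lt/3}\right),
\]
with $L=2\kappa^2/\lambda$ a uniform norm bound, $V=\kappa^2/\lambda$ a variance proxy, and $d_{\text{eff}}$ an intrinsic dimension. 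The trace inequality above lets one take $d_{\text{eff}}$ proportional to $(\mcN(\lambda)+1)\kappa^2/\|\TK\|$, which is precisely what produces the $\log\frac{4\kappa^2(\mcN(\lambda)+1)}{\delta\|\TK\|}$ appearing in $\beta$. Solving the resulting quadratic in $t$ for a confidence level $1-\delta$ yields the stated bound $\tfrac{4\kappa^2\beta}{3|\bx|\lambda}+\sqrt{2\kappa^2\beta/(|\bx|\lambda)}$.

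The main obstacle is the bookkeeping around the intrinsic-dimension Bernstein inequality: one must verify that the chosen concentration inequality indeed admits an $\mcN(\lambda)$-dependent prefactor (rather than a crude dimension or trace-class cardinality), and match constants to obtain exactly the numerical factors $4/3$ and $\sqrt{2}$ in the statement. Everything else is a direct computation of $\|\xi_i\|$ and $\|\mathbb{E}\xi_i^2\|$ combined with spectral-calculus identities for the regularized covariance.
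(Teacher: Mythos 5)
Your proposal is correct and follows essentially the same route as the paper: center the rank-one operators $(\TK+\lambda)^{-1/2}(K_{x_i}\otimes K_{x_i})(\TK+\lambda)^{-1/2}$, compute the uniform bound $2\kappa^2/\lambda$ and the variance proxy $\mcV=\frac{\kappa^2}{\lambda}(\TK+\lambda)^{-1}\TK$, bound the intrinsic-dimension ratio $\tr(\mcV)/\|\mcV\|$ by $\kappa^2(\mcN(\lambda)+1)/\|\TK\|$, and invoke the Tropp--Minsker operator Bernstein inequality for self-adjoint Hilbert--Schmidt operators (the paper's Lemma \ref{lem:concentrSelfAdjoint}). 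The only small inaccuracy is the attribution: \citep{caponnetto2007optimal} uses Pinelis's Hilbert-space Bernstein inequality for vector norms, not the intrinsic-dimension operator-norm bound, which here comes from \citep{tropp2012user,minsker2011some}.
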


The proofs of Lemmas \ref{lem:statEstim} and \ref{lem:statEstiOper} are based on concentration result for Hilbert space valued random variable from \citep{pinelis1986remarks}, while the proof of Lemma \ref{lem:operDifEff} is based on the concentration inequality for norms of self-adjoint operators on a Hilbert space from \citep{tropp2012user,minsker2011some}. For completeness, we give the proofs in the appendix.

We will use Lemmas \ref{lem:statEstim} and \ref{lem:detmiticSeq}  to estimate the quantity $\DZS$. The quantity $\DZT$ can be estimated by Lemma \ref{lem:statEstiOper} directly, as $\|\TK - \TX\|\leq \|\TK - \TX\|_{HS}.$ The quantity $\DZF$ can be estimated by the following lemma, whose proof is based on Lemma \ref{lem:operDifEff}.

\begin{lemma}\label{lem:operDifRes}
	Under Assumption \ref{as:eigenvalues},
	let $c,\delta\in(0,1)$, $\lambda= |{\bf x}|^{-\theta}$ for some $\theta\geq 0$, and
	\be\label{eq:defa}
	a_{|\bx|,\delta,\gamma}(c,\theta) =  {32\kappa^2 \over (\sqrt{9+24c} - 3)^2 }  \left(\log{ {4\kappa^2(c_{\gamma}+1) }\over \delta \|\TK\|} + \theta \gamma \min\left({1 \over \mathrm{e}(1-\theta)_+},\log |\bx|\right)\right).
	\ee
	Then with probability at least $1-\delta,$
	\bea
	\| (\TK+\lambda)^{-1/2}(\TX+\lambda)^{1/2}\|^2 \leq (1+c) a_{|\bx|,\delta,\gamma}(c,\theta) (1 \vee |\bx|^{\theta-1}), \mbox{ and}\eea
	\bea
	\| (\TK+\lambda)^{1/2}(\TX+\lambda)^{-1/2}\|^2 \leq (1-c)^{-1} a_{|\bx|,\delta,\gamma}(c,\theta) (1 \vee |\bx|^{\theta-1}).
	\eea
\end{lemma}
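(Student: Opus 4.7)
The plan is to reduce everything to bounding the single self-adjoint ``perturbation'' operator $D := (\TK+\lambda)^{-1/2}(\TK-\TX)(\TK+\lambda)^{-1/2}$. Using the Hilbert-space identity $\|A\|^2 = \|AA^*\|$, one has $\|(\TK+\lambda)^{-1/2}(\TX+\lambda)^{1/2}\|^2 = \|(\TK+\lambda)^{-1/2}(\TX+\lambda)(\TK+\lambda)^{-1/2}\| = \|I-D\|$, and similarly $\|(\TK+\lambda)^{1/2}(\TX+\lambda)^{-1/2}\|^2 = \|(I-D)^{-1}\|$; the latter is well-defined because $\TX+\lambda\succ 0$ forces $I-D\succ 0$. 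The elementary inequalities $\|I-D\|\leq 1+\|D\|$ and $\|(I-D)^{-1}\|\leq (1-\|D\|)^{-1}$ when $\|D\|<1$ then reduce the lemma to a sharp bound on $\|D\|$.

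Next, I will apply Lemma \ref{lem:operDifEff} to obtain, with probability $\geq 1-\delta$, that $\|D\|\leq \tfrac{4\kappa^2\beta}{3|\bx|\lambda}+\sqrt{\tfrac{2\kappa^2\beta}{|\bx|\lambda}}$, where $\beta=\log\tfrac{4\kappa^2(\mcN(\lambda)+1)}{\delta\|\TK\|}$. Assumption \ref{as:eigenvalues} together with $\lambda=|\bx|^{-\theta}\leq 1$ gives $\mcN(\lambda)+1\leq(c_\gamma+1)|\bx|^{\gamma\theta}$, whence $\beta\leq \log\tfrac{4\kappa^2(c_\gamma+1)}{\delta\|\TK\|}+\gamma\theta\log|\bx|$. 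To bring in the $\min$-form of $a_{|\bx|,\delta,\gamma}(c,\theta)$ I will invoke the elementary inequality $\log u\leq u^\alpha/(\alpha\mathrm{e})$ (any $u>0,\alpha>0$) with $u=|\bx|$ and $\alpha=1-\theta$, valid for $0<\theta<1$: this yields $\log|\bx|\leq|\bx|^{1-\theta}/((1-\theta)\mathrm{e})=(|\bx|\lambda)/((1-\theta)\mathrm{e})$, i.e., $\log|\bx|/(|\bx|\lambda)\leq 1/((1-\theta)\mathrm{e})$ uniformly in $|\bx|$. This is the key step: the $1/(|\bx|\lambda)$ that Lemma \ref{lem:operDifEff} already contributes absorbs the $|\bx|^{1-\theta}$ from the refinement, turning the naive $\log|\bx|$ into the $|\bx|$-independent constant $1/((1-\theta)\mathrm{e})$ in the high-$|\bx|$ regime; combined with the trivial $\log|\bx|$ estimate (which we keep when $\theta\geq 1$), this gives precisely the $\min$ in \eqref{eq:defa}.

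Substituting back and using $\sqrt{a}+\sqrt{b}\leq\sqrt{2(a+b)}$ to split the two contributions to $\beta$, one arrives at a bound of the form $\|D\|\leq \bigl(\tfrac{4\kappa^2 y}{3}+C\sqrt{\kappa^2 y}\bigr)\cdot(1\vee|\bx|^{\theta-1})$, where $y$ is the bracketed expression in \eqref{eq:defa} and $C$ is a universal constant. The sharp threshold for $\|D\|\leq c$ is obtained by solving the quadratic $\tfrac{2}{3}v^2+v=c$ in $v=\sqrt{2\kappa^2 y/(|\bx|\lambda)}$, producing exactly the constant $32\kappa^2/(\sqrt{9+24c}-3)^2$ appearing in the definition of $a$. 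A direct check gives $a_{|\bx|,\delta,\gamma}(c,\theta)\geq 1$ (from $\kappa\geq 1$, $\|\TK\|\leq\kappa^2$, $c_\gamma\geq 0$, $\delta<1$), so that plugging $\|D\|\leq c$ into the two elementary bounds from the first paragraph gives $\|I-D\|\leq 1+c\leq(1+c)a_{|\bx|,\delta,\gamma}(c,\theta)(1\vee|\bx|^{\theta-1})$ and $\|(I-D)^{-1}\|\leq(1-c)^{-1}\leq(1-c)^{-1}a_{|\bx|,\delta,\gamma}(c,\theta)(1\vee|\bx|^{\theta-1})$.

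The main obstacle will be the algebraic bookkeeping around the factor $(1\vee|\bx|^{\theta-1})$: one must simultaneously track (i) the regime $|\bx|\lambda\geq 1$ (i.e., $\theta\leq 1$), where the factor equals $1$, (ii) the regime $|\bx|\lambda<1$ (i.e., $\theta>1$), where the concentration bound degenerates but is exactly compensated by $|\bx|^{\theta-1}$, and (iii) the absorption trick via $\log u\leq u^\alpha/(\alpha\mathrm{e})$, which eliminates the $|\bx|$-growth and produces the $\min$-form of $a$. Matching the final constants to $(1+c)$, $(1-c)^{-1}$, and $32\kappa^2/(\sqrt{9+24c}-3)^2$ is delicate but routine once the threshold quadratic is identified.
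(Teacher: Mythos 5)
There is a genuine gap in your plan, and it concerns the second inequality. Your route reduces both bounds to a bound on $\|D\|$ with $D=(\TK+\lambda)^{-1/2}(\TK-\TX)(\TK+\lambda)^{-1/2}$, and then uses $\|(I-D)^{-1}\|\leq(1-\|D\|)^{-1}$. That last step requires $\|D\|<1$. But applying Lemma~\ref{lem:operDifEff} at the \emph{original} regularization level $\lambda=|\bx|^{-\theta}$ gives $\|D\|\leq \tfrac{4\kappa^2\beta}{3|\bx|\lambda}+\sqrt{\tfrac{2\kappa^2\beta}{|\bx|\lambda}}$, and $\tfrac{1}{|\bx|\lambda}=|\bx|^{\theta-1}$, which is large (and grows without bound) as soon as $\theta>1$, and is not controlled even for $\theta$ close to $1$ when $|\bx|$ is moderate. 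So the statement ``plugging $\|D\|\leq c$ into the two elementary bounds'' is unjustified: you have only shown $\|D\|\lesssim(1\vee|\bx|^{\theta-1})$, which can exceed $1$, and once $\|D\|\geq 1$ the inequality $\|(I-D)^{-1}\|\leq(1-\|D\|)^{-1}$ is vacuous. The trivial alternative $\|(I-D)^{-1}\|\leq(\|\TK\|+\lambda)/\lambda$ is of order $|\bx|^\theta$, far worse than the claimed $a\,(1\vee|\bx|^{\theta-1})$. (Your $\log u\leq u^\alpha/(\alpha\mathrm{e})$ absorption trick handles the extra $\log|\bx|$ in $\beta$ nicely for the first inequality, but does nothing to tame $|\bx|^{\theta-1}$ in front.)

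The paper avoids this by a change-of-regularization argument that you do not use: it picks an \emph{inflated} level $\lambda'=a\lambda$ (or $\lambda'=a|\bx|^{-1}$ when $\theta\geq 1$) with $a=a_{|\bx|,\delta,\gamma}(c,\theta)\geq 1$, chosen precisely so that the concentration condition \eqref{eq:num1} holds and therefore $\|D'\|\leq c$ \emph{surely} at level $\lambda'$, where $D'$ uses $\TKLP$ with $\lambda'$ in place of $\lambda$. It then transfers back via
$\|\TK_\lambda^{1/2}\TK_{\bx\lambda}^{-1/2}\|\leq\|\TK_{\lambda}^{1/2}\TK_{\lambda'}^{-1/2}\|\,\|\TK_{\lambda'}^{1/2}\TK_{\bx\lambda'}^{-1/2}\|\,\|\TK_{\bx\lambda'}^{1/2}\TK_{\bx\lambda}^{-1/2}\|\leq\|\TK_{\lambda'}^{1/2}\TK_{\bx\lambda'}^{-1/2}\|\sqrt{\lambda'/\lambda}$,
and the price of the transfer, $\lambda'/\lambda=a\,(1\vee|\bx|^{\theta-1})$, is exactly where the $(1\vee|\bx|^{\theta-1})$ factor in the statement comes from --- it is not obtained by letting $\|D\|$ grow. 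Both the bound $\|D'\|\leq c$ and the factor $\sqrt{\lambda'/\lambda}$ are needed; the comparison $\|\TK_\lambda^{1/2}\TK_{\lambda'}^{-1/2}\|\leq 1$ and $\|\TK_{\bx\lambda'}^{1/2}\TK_{\bx\lambda}^{-1/2}\|\leq\sqrt{\lambda'/\lambda}$ both rely on $\lambda'\geq\lambda$, which holds since $a\geq 1$. Without this inflation-and-transfer step your argument cannot establish the second inequality, which is the one the rest of the paper actually uses.
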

\begin{rem}	
	Typically, we will choose $c=2/3.$ In this case,
	\be\label{eq:aa}
	a_{|\bx|,\delta,\gamma}({2/3},\theta) =  8\kappa^2 \left(\log{ {4\kappa^2(c_{\gamma}+1) }\over \delta \|\TK\|} + \theta \gamma \min\left({1 \over \mathrm{e}(1-\theta)_+},\log |\bx|\right)\right).
	\ee
	We have with probability at least $1-\delta,$
	\bea
	\| (\TK+\lambda)^{1/2}(\TX+\lambda)^{-1/2}\|^2 \leq 3 a_{|\bx|,\delta,\gamma}(2/3,\theta) (1 \vee |\bx|^{\theta-1}).
	\eea
\end{rem}
\begin{proof}
	We use Lemma \ref{lem:operDifEff} to prove the result. Let
	$c\in(0,1].$ By a simple calculation, we have that if $ 0\leq u  \leq {\sqrt{9+24c} - 3 \over 4},$ then
	$2u^2/3 + u\leq c.$
	Letting $\sqrt{2\kappa^2\beta  \over |\bx|\lambda'}=u,$ and combining with Lemma \ref{lem:operDifEff},
	we know that if
	\bea
	\sqrt{2\kappa^2\beta \over |\bx|\lambda'}   \leq {\sqrt{9+24c} - 3 \over 4},
	\eea
	which is equivalent to
	\be\label{eq:num1}
	|\bx| \geq {32\kappa^2 \beta \over (\sqrt{9+24c} - 3)^2 \lambda'}, \quad \beta  = \log{ 4\kappa^2( 1 + \mcN(\lambda')) \over \delta \|\TK\|},
	\ee
	then
	with probability at least $1-\delta,$
	\be\label{eq:1}
	\left\| \TK_{\lambda'}^{-1/2}(\TK - \TX)\TK_{\lambda'}^{-1/2}  \right\| \leq c.
	\ee
	Note that from \eref{eq:1}, we can prove
	\be\label{eq:2b}
	\| \TK_{\lambda'}^{-1/2}\TK_{\bx \lambda'}^{1/2}\|^2 \leq c+1, \quad \| \TK_{\lambda'}^{1/2}\TK_{\bx\lambda'}^{-1/2}\|^2 \leq (1-c)^{-1}.
	\ee
	Indeed, by simple calculations,
	\bea
	&&\| \TK_{\lambda'}^{-1/2}\TK_{\bx \lambda'}^{1/2}\|^2 = \| \TK_{\lambda'}^{-1/2}\TK_{\bx \lambda'}\TK_{\lambda'}^{-1/2}\| = \| \TK_{\lambda'}^{-1/2}(\TK - \TX)\TK_{\lambda'}^{-1/2}+ I\| \\
	&&\leq \| \TK_{\lambda'}^{-1/2}(\TK - \TX)\TK_{\lambda'}^{-1/2}\|+ \|I\| \leq c+1,
	\eea
	and \citep{caponnetto2007optimal}
	\bea
	\| \TK_{\lambda'}^{1/2}\TK_{\bx\lambda'}^{-1/2}\|^2 = \| \TK_{\lambda'}^{1/2}\TK_{\bx\lambda'}^{-1}\TK_{\lambda'}^{1/2}\|
	=\|(I - \TK_{\lambda'}^{-1/2}(\TK - \TX)\TK_{\lambda'}^{-1/2})^{-1} \| \leq (1-c)^{-1}.
	\eea
	From the above analysis, we know that for any fixed $\lambda'>0$ such that \eref{eq:num1}, then with probability at least $1-\delta,$
	\eref{eq:2b} hold.
	
	Now let $\lambda' = a \lambda $ when $\theta\in[0,1)$ and $\lambda'= a |\bx|^{-1}$ when $\theta \geq 1,$ where for notational simplicity,
	we denote  $a_{|\bx|,\delta,\gamma}(c,\theta)$ by $a$. We will prove that the choice on $\lambda'$ ensures the condition \eref{eq:num1} is satisfied, as thus with probability at least $1-\delta,$
	\eref{eq:2b} holds. Obviously, one can easily prove that $a\geq 1$, using $\kappa^2 \geq 1$ and \eqref{eq:TKBound}. Therefore, $\lambda' \geq \lambda,$ and
	\bea
	\| \TK_{\lambda}^{1/2}\TK_{\bx\lambda}^{-1/2}\| \leq \| \TK_{\lambda}^{1/2} \TK_{\lambda'}^{-1/2}\|
	\|\TK_{\lambda'}^{1/2} \TK_{\bx\lambda'}^{-1/2}\| \| \TK_{\bx \lambda'}^{1/2} \TK_{\bx\lambda}^{-1/2}\| \leq \|\TK_{\lambda'}^{1/2} \TK_{\bx\lambda'}^{-1/2}\|\sqrt{\lambda'/\lambda},
	\eea
	where for the last inequality, we used $\| \TK_{\lambda}^{1/2} \TK_{\lambda'}^{-1/2}\|^2\leq \sup_{u\geq 0} {u + \lambda \over u+\lambda'} \leq 1$ and $\| \TK_{\bx \lambda'}^{1/2} \TK_{\bx\lambda}^{-1/2}\|^2 \leq \sup_{u\geq 0} {u + \lambda' \over u+\lambda} \leq \lambda'/\lambda.$
	Similarly, $$\| \TK_{\lambda}^{-1/2}\TK_{\bx \lambda}^{1/2}\|  \leq \| \TK_{\lambda'}^{-1/2}\TK_{\bx \lambda'}^{1/2}\|\sqrt{\lambda'/\lambda}.$$ Combining with \eref{eq:2b}, and
	by a simple calculation, one can prove the desired bounds.
	What remains is to prove that the condition \eref{eq:num1} is satisfied.
	By Assumption \ref{as:eigenvalues} and $a\geq 1,$
	\bea
	\beta \leq \log{ 4\kappa^2( 1 + c_{\gamma} a^{-\gamma} |\bx|^{(\theta\wedge 1)\gamma}) \over \delta \|\TK\|} \leq  \log{ 4\kappa^2( 1 + c_{\gamma}) |\bx|^{\theta\gamma} \over \delta \|\TK\|} = \log{ 4\kappa^2( 1 + c_{\gamma})  \over \delta \|\TK\|} + {\theta\gamma}\log |\bx|.
	\eea
	If $\theta \geq 1$, or  $\theta\gamma=0$, or $\log |\bx| \leq {1 \over (1-\theta)_+ \mathrm{e}},$
	then the condition \eref{eq:num1}  follows trivially.
	Now consider the case $\theta \in (0,1),$ $\theta\gamma \neq 0$ and $\log |\bx| \geq {1 \over (1-\theta)_+ \mathrm{e}}.$ In this case,
		we
	apply \eqref{expx} to get ${\theta\gamma \over 1-\theta}\log |\bx|^{1-\theta} \leq {\theta\gamma \over 1-\theta} {|\bx|^{1-\theta} \over \mathrm{e}} $, and thus
	\bea
	\beta \leq \log{ 4\kappa^2(1+c_{\gamma}) \over \delta \|\TK\|} + {\theta\gamma \over 1-\theta} {|\bx|^{1-\theta} \over \mathrm{e}} .
	\eea
	Therefore, a sufficient condition for \eref{eq:num1} is
	\bea
	{|\bx|^{1-\theta}a \over g(c)} \geq  \log{ 4\kappa^2(1+c_{\gamma}) \over \delta \|\TK\|} + {\theta\gamma \over \mathrm{e}(1-\theta)} |\bx|^{1-\theta}, \quad g(c) = { 32\kappa^2 \over (\sqrt{9+24c} - 3)^2 }.
	\eea
	From the definition of $a$ in \eqref{eq:defa}, 
	$$
	a =  g(c) \left(\log{ {4\kappa^2(c_{\gamma}+1) }\over \delta \|\TK\|} + {\theta \gamma  \over \mathrm{e}(1-\theta)_+}\right),
	$$
	 and by a direct calculation, one can prove that the condition \eref{eq:num1} is satisfied.
	The proof is complete.
\end{proof}

We also need the following lemma, which enables one to derive convergence results in expectation from convergence results in high probability.
\begin{lemma} \label{lem:hipbExp}
	Let $F: ]0, 1] \to \mR_+$ be a monotone non-increasing, continuous function,  and
	$\xi$ a nonnegative real random variable such that
	$$ \Pr[\xi > F(t)] \leq  t ,\quad  \forall t \in (0, 1].$$
	Then
	$$\mE [\xi] \leq \int_{0}^{1} F(t) dt.
	$$
\end{lemma}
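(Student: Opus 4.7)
The plan is to exploit the layer-cake representation of the expectation of a nonnegative random variable, namely $\mE[\xi] = \int_0^\infty \Pr[\xi > s]\, ds$, and then transfer the tail bound $\Pr[\xi > F(t)] \le t$ into a pointwise bound on $\Pr[\xi > s]$ via the (generalized) inverse of $F$. Concretely, I would set $F(0^+) = \lim_{t\to 0^+} F(t) \in (0,\infty]$ and, for $s \in [F(1), F(0^+))$, define $F^{-1}(s) = \inf\{t \in (0,1] : F(t) \le s\}$, which is well defined because $F$ is continuous and non-increasing on $]0,1]$.

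Then I would split the range of $s$ into three pieces and bound $\Pr[\xi > s]$ on each. For $s \ge F(0^+)$, picking any $t \in (0,1]$ gives $\Pr[\xi > s] \le \Pr[\xi > F(t)] \le t$; letting $t \to 0^+$ yields $\Pr[\xi > s] = 0$. For $s \in [F(1), F(0^+))$, continuity of $F$ gives some $t_s \in (0,1]$ with $F(t_s) = s$, so $\Pr[\xi > s] \le t_s = F^{-1}(s)$. For $s \in [0, F(1))$, I would use the trivial bound $\Pr[\xi > s] \le 1$.

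Plugging these bounds into the layer-cake representation gives
\begin{equation*}
\mE[\xi] \le \int_0^{F(1)} 1\, ds + \int_{F(1)}^{F(0^+)} F^{-1}(s)\, ds = F(1) + \int_{F(1)}^{F(0^+)} F^{-1}(s)\, ds.
\end{equation*}
To identify this with $\int_0^1 F(t)\, dt$, I would apply Fubini (or equivalently the change of variables $s = F(t)$ combined with integration by parts) to rewrite $\int_0^1 F(t)\, dt = \int_0^\infty \lambda(\{t \in (0,1] : F(t) \ge s\})\, ds$; the inner Lebesgue measure equals $1$ for $s \le F(1)$, equals $F^{-1}(s)$ for $s \in [F(1), F(0^+))$, and vanishes for $s \ge F(0^+)$. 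Combining these two expressions closes the argument.

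The main obstacle is purely bookkeeping at the boundaries of the range of $F$: one must handle the possibly infinite value $F(0^+)$, verify that $F^{-1}$ is measurable and integrable on the relevant interval, and make sure the Fubini step is legitimate (which it is, since everything is nonnegative). Once these endpoints are handled cleanly, no further estimates are required --- the result is a tight consequence of rearranging the tail bound.
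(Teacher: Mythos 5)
Your argument is correct, and it fills a gap in the paper: the paper does not prove Lemma~\ref{lem:hipbExp} but only cites \citep{blanchard2016optimal} for it. Your layer-cake strategy --- writing $\mE[\xi] = \int_0^\infty \Pr[\xi > s]\,ds$, transferring the hypothesis $\Pr[\xi > F(t)] \le t$ into the pointwise bound $\Pr[\xi > s] \le F^{-1}(s)$ via the generalized inverse, and then recognizing $F(1) + \int_{F(1)}^{F(0^+)} F^{-1}(s)\,ds = \int_0^1 F(t)\,dt$ by a second layer-cake/Fubini step --- is the standard and natural route, and all the boundary cases ($s < F(1)$, $s \ge F(0^+)$, the possibility $F(0^+) = \infty$) are handled. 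One small point of hygiene: the line ``continuity of $F$ gives some $t_s$ with $F(t_s) = s$, so $\Pr[\xi > s] \le t_s = F^{-1}(s)$'' skips a step, since a $t_s$ produced by the intermediate value theorem need not equal $F^{-1}(s)$ when $F$ has a flat piece at level $s$. The clean way to close this is to observe that for every $t$ with $F(t) \le s$ one has $\{\xi > s\} \subseteq \{\xi > F(t)\}$, hence $\Pr[\xi > s] \le t$; taking the infimum over such $t$ yields $\Pr[\xi > s] \le F^{-1}(s)$ directly. With that adjustment the proof is complete.
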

The proof of the above lemma can be found in, e.g., \citep{blanchard2016optimal}. 
Now we are ready to state and prove the following result for the local bias.

\begin{pro}\label{pro:localBias}
	Under Assumptions \ref{as:regularity} and \ref{as:eigenvalues}, we
	let $\PRegPar=n^{-1+\theta}$ for some $\theta \in [0,1]$. Then for any $t \in [T]$, the following results hold.\\
	1) For $0<\zeta \leq 1,$	
	$$
	\mE \|\IK h_{t+1} - \FR\|_{\rho}^2 \leq C_5(R + {\bf{1}}_{\{\zeta< 1/2 \}} \|\FR\|_\infty)^2 \left(1 \vee  {\PRegPar^2 \over \RegParGD^2} \vee [\gamma(\theta^{-1}\wedge \log n)]^{2\zeta \vee 1} \right)\RegParGD^{2\zeta}.
	$$
	2)For $\zeta>1,$
	$$
	\mE \|\IK h_{t+1} - \FR\|_{\rho}^2 \leq C_6 R^2 \left(1 \vee {\PRegPar^{2\zeta}\over \RegParGD^{2\zeta}}  \vee {\RegParGD^{1-2\zeta}} \left({1\over n}\right)^{(\zeta-{1\over 2})\wedge 1} \vee [\gamma (\theta^{-1} \wedge \log n)] \right) \RegParGD^{2\zeta}.
	$$
	Here, $C_5$ and $C_6$ are positive constants depending only on $\kappa,$ $\zeta$ and can be given explicitly in the proof.
\end{pro}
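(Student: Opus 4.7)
The plan is to combine the deterministic error bound of Lemma~\ref{lem:anaResult} with high-probability control of the three random quantities $\DZF$, $\DZT$, $\DZS$ that appear there, and then convert high-probability estimates to expectation bounds via Lemma~\ref{lem:hipbExp}.

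First I would simplify $\DZS$. Using the reproducing property, $\TX g = \LX g$ for $g \in \HK$, which gives the identity $\DZS = \|(\LX - \IK^*)(\FR - \IK r_{t+1})\|_{\HK}$, with the deterministic function $\FR - \IK r_{t+1}$ in the role of the ``test function.''  Lemma~\ref{lem:statEstim} then yields, with probability at least $1 - \delta$,
\begin{equation*}
\DZS \leq 2\kappa \left( \frac{2\|\FR - \IK r_{t+1}\|_\infty}{n} + \frac{\|\FR - \IK r_{t+1}\|_\rho}{\sqrt{n}} \right) \log\frac{2}{\delta}.
\end{equation*}
Part~(1) of Lemma~\ref{lem:detmiticSeq} with $a = 0$ gives $\|\FR - \IK r_{t+1}\|_\rho \leq (\zeta/\mathrm{e})^\zeta R \RegParGD^\zeta$, and part~(2) with $a = 1/2$, combined with $|r_{t+1}(x)| \leq \kappa \|r_{t+1}\|_{\HK}$, yields $\|r_{t+1}\|_\infty \leq \kappa R \RegParGD^{\zeta - 1/2}$ when $\zeta < 1/2$ and $\|r_{t+1}\|_\infty \leq R \kappa^{2\zeta}$ when $\zeta \geq 1/2$. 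The indicator $\mathbf{1}_{\{\zeta < 1/2\}}\|\FR\|_\infty$ in the statement of the proposition comes exactly from this split: for $\zeta \geq 1/2$ the inclusion $\LK^\zeta(\LR) \subseteq \IK(\HK)$ absorbs $\|\FR\|_\infty$ into a constant depending only on $R, \kappa, \zeta$, whereas for $\zeta < 1/2$ it cannot be controlled by $R$ alone and must be retained explicitly.

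Next I would estimate $\DZF$ via Lemma~\ref{lem:operDifRes} applied with $\lambda = \PRegPar = n^{\theta-1}$. Reparametrizing $\theta_{\mathrm{lem}} = 1 - \theta \in [0,1]$, the prefactor $(1 \vee n^{\theta_{\mathrm{lem}} - 1}) = 1$, and elementary calculus yields
\begin{equation*}
(1-\theta)\gamma \min\!\Big( \tfrac{1}{\mathrm{e}\theta_+}, \log n\Big) \leq \gamma(\theta^{-1} \wedge \log n) = Q_{\gamma,\theta,n}.
\end{equation*}
Thus, with probability at least $1-\delta$, $\DZF \leq C_\kappa (\log(1/\delta) + Q_{\gamma,\theta,n})$. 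Similarly, Lemma~\ref{lem:statEstiOper} gives $\DZT \leq 6\kappa^2 \log(2/\delta)/\sqrt{n}$ with probability $1-\delta$. Inserting these into Lemma~\ref{lem:anaResult} and squaring (using $2(a+b)^2 \leq 4a^2 + 4b^2$) produces a high-probability bound $F(\delta)$ on $\|\IK h_{t+1} - \FR\|_\rho^2$ that is polynomial in $\log(1/\delta)$; Lemma~\ref{lem:hipbExp} then integrates $\int_0^1 F(\delta)\, d\delta$, and the $\log(1/\delta)$ powers collapse to finite constants via $\int_0^1 \log^p(1/\delta)\, d\delta = \Gamma(p+1)$, absorbed into $C_5, C_6$.

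The key algebraic identity in the $\zeta \leq 1$ case is $(1 \vee (\PRegPar/\RegParGD)^{\zeta \vee 1/2})^2 = 1 \vee (\PRegPar/\RegParGD)^{2\zeta \vee 1} \leq 1 \vee \PRegPar^2/\RegParGD^2$ (valid because $2\zeta \vee 1 \leq 2$), which gives the $1 \vee \PRegPar^2/\RegParGD^2$ factor. The bias-like contribution produces the $R^2 \RegParGD^{2\zeta}$ scale with an additional multiplicative $Q^{2\zeta\vee 1}$ from $\DZF^{2\zeta\vee 1}$, and the sample-variance-like contribution $\DZF \RegParGD^{-1}\DZS^2$ carries the same $\RegParGD^{2\zeta}$ scale once one uses $\|\FR - \IK r_{t+1}\|_\rho = O(R\RegParGD^\zeta)$ (the $\|\cdot\|_\infty/n$ part is of lower order). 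For $\zeta > 1$, an additional bias term $R \RegParGD^{1/2} \DZT^{(\zeta - 1/2)\wedge 1}$ from Lemma~\ref{lem:anaResult}(2) squares to $R^2 \RegParGD \cdot n^{-(\zeta - 1/2)\wedge 1}$; after factoring out $\RegParGD^{2\zeta}$, this yields the $\RegParGD^{1-2\zeta} n^{-(\zeta-1/2) \wedge 1}$ entry in the max in the statement.

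The main obstacle is the bookkeeping at the combination step: one must carefully control the product of the prefactor $A^2 \leq 1 \vee \PRegPar^2/\RegParGD^2$ with $\mE \DZF^{2\zeta \vee 1} \lesssim Q_{\gamma,\theta,n}^{2\zeta\vee 1}$ and with the $\DZS$-based sample variance, track the explicit $(R + \mathbf{1}_{\{\zeta < 1/2\}}\|\FR\|_\infty)^2$ dependence introduced by the $\|\FR - \IK r_{t+1}\|_\infty$ term (which behaves qualitatively differently across the threshold $\zeta = 1/2$ where $\LK^\zeta(\LR)$ just fails to embed into $L^\infty$), and handle the $\zeta > 1$ saturation regime where the $\DZT$ term enters.
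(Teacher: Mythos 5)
Your overall structure matches the paper's proof — reduce to $\DZF$, $\DZT$, $\DZS$ via Lemma~\ref{lem:anaResult}, estimate each in high probability, integrate via Lemma~\ref{lem:hipbExp} — and the simplification $\DZS = \|(\LX - \IK^*)(\FR - \IK r_{t+1})\|_{\HK}$, the use of Lemma~\ref{lem:operDifRes} with $\theta_{\mathrm{lem}} = 1-\theta$, the handling of $\DZT$, and the squaring algebra for $\zeta \leq 1$ and $\zeta > 1$ are all essentially the paper's argument.

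However, there is a genuine gap in your control of $\|r_{t+1} - \FR\|_\infty$ for $\zeta \geq 1/2$. You bound $\|r_{t+1}\|_\infty \leq \kappa\|r_{t+1}\|_{\HK} \leq R\kappa^{2\zeta}$ via part~(2) of Lemma~\ref{lem:detmiticSeq} with $a=1/2$, and then add $\|\FR\|_\infty \leq R\kappa^{2\zeta}$, giving the \emph{constant} bound $\|r_{t+1} - \FR\|_\infty \lesssim R$. This is too weak. Feeding it into the $\|\cdot\|_\infty / n$ part of $\DZS$ and then into the sample-variance-like term $\DZF \RegParGD^{-1}\DZS^2$ produces a contribution of order $R^2 \kappa^{4\zeta}/(\RegParGD n^2) \leq R^2 \kappa^{4\zeta}\PRegPar^2 / \RegParGD$. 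Factoring out $\RegParGD^{2\zeta}$ leaves $R^2\kappa^{4\zeta}(\PRegPar^2/\RegParGD^2)\RegParGD^{1-2\zeta}$, and for $\zeta > 1/2$ the factor $\RegParGD^{1-2\zeta}$ is unbounded as $\RegParGD \to 0$ (i.e., as $t$ grows), so the term cannot be absorbed into $(1 \vee \PRegPar^2/\RegParGD^2 \vee Q^{2\zeta\vee 1})\RegParGD^{2\zeta}$. Your parenthetical claim that ``the $\|\cdot\|_\infty/n$ part is of lower order'' is exactly where this breaks down. The paper avoids this by \emph{not} splitting $\|r_{t+1}\|_\infty + \|\FR\|_\infty$ when $\zeta \geq 1/2$: it writes $\FR = \IK\FH$ with $\FH = \TK^{\zeta-1/2}(\IK^*\IK)^{-1/2}\IK^* \LK^{-\zeta}\FR \in \HK$, shows $\|r_{t+1} - \FH\|_{\HK} = \|(\GL(\TK)\TK - I)\TK^{\zeta-1/2}\|R \leq ((\zeta-1/2)/\mathrm{e})^{\zeta-1/2}\RegParGD^{\zeta-1/2}R$, and thus obtains $\|r_{t+1} - \FR\|_\infty \leq \kappa\|r_{t+1} - \FH\|_{\HK} \lesssim R\RegParGD^{\zeta-1/2}$, which \emph{decays} in $\RegParGD$. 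The cancellation between $r_{t+1}$ and $\FH$ in the $\HK$-norm is precisely what makes the $1/n$ term subordinate to the $1/\sqrt{n}$ term (up to the $(1 \vee (\RegParGD n)^{-1})$ factor that is then absorbed via $\PRegPar \geq n^{-1}$). You also have a minor notational slip: $Q_{\gamma,\theta,n}$ is defined as $1 \vee [\gamma(\theta^{-1}\wedge\log n)]$, not $\gamma(\theta^{-1}\wedge\log n)$ itself; this matters because the paper keeps the $\DZF$ bound in the multiplicative form $(1 \vee Q)\cdot\mathrm{const}\cdot\log(\mathrm{const}/\delta)$ rather than the additive form $\mathrm{const}(\log(1/\delta) + Q)$, which simplifies the integration against $\delta$.
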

\begin{rem}
	In this paper, we did not try to optimize the constants  from the error bounds. But one should keep in mind that the constants can be further improved using an alternative proof for some special case, e.g., $\gamma=0$ \citep{hsu2014random}, or $\zeta \geq 1/2$ \citep{caponnetto2007optimal}, or $|y| \leq M$. Furthermore, by assuming that $|y| \leq M$ almost surely (which is not even satisfied with linear measurement model with Gaussian noise), the proof can be further simplified.
	 Note also that, the constants from  our error bounds appear to be larger than those from \citep{hsu2014random,caponnetto2007optimal},
	but our results do not require the extra assumption
	that the sample size is large enough as those in \citep{hsu2014random,caponnetto2007optimal}.
\end{rem}
\begin{proof}
	We will use Lemma \ref{lem:anaResult} to prove the results. To do so, we need to estimate $\DZF,$ $\DZS$ and $\DZT$.
	
	By Lemma \ref{lem:operDifRes}, we have that with probability at least $1-\delta,$
	\be\label{eq:4}
	\DZF \leq 3 a_{n,\delta,\gamma}(1-\theta) \leq (1 \vee \gamma[\theta^{-1}\wedge \log n])24 \kappa^2 \log {4\kappa^2 \mathrm{e}(c_{\gamma}+1) \over \delta \|\TK\|},
	\ee
	where $a_{n,\delta,\gamma}(1-\theta) = a_{n,\delta,\gamma}(2/3,1-\theta)$, given by \eref{eq:aa}.
	By Lemma \ref{lem:statEstim}, we have that with probability at least $1-\delta,$
	$$\DZS \leq 2\kappa \left( {2 \|r_{t+1} - \FR\|_{\infty} \over n } + {\|\IK r_{t+1} - \FR\|_{\rho} \over \sqrt{n}}\right)\log {2 \over \delta} .
	$$
	Applying Part 1) of Lemma \ref{lem:detmiticSeq} with $a=0$ to estimate $\|\IK r_{t+1} - \FR\|_{\rho}$,
	we get that with probability at least $1-\delta$,	
	$$\DZS  \leq 2\kappa \left( {2 \|r_{t+1} - \FR\|_{\infty} / n } + (\zeta/\mathrm{e})^{\zeta} R \RegParGD^{\zeta}/\sqrt{n}\right) \log {2 \over \delta}.
	$$
	When $\zeta \geq 1/2,$ we know that there exists a $\FH \in \HK$ such that
	$\IK \FH = \FR$ \cite[Section 4.5]{steinwart2008support}.
In fact,  letting $g = \LK^{-\zeta} \FR$, for $\zeta \geq 1/2,$ $\FR$ can be written as
$$
\FR = \LK^{\zeta} g = (\IK \IK^* )^{\zeta} g =  \IK (\IK^* \IK)^{\zeta - {1\over 2}} 
(\IK^* \IK)^{ - {1\over 2}} \IK^* g = \IK \TK^{\zeta -1/2} (\IK^* \IK)^{ - {1\over 2}}  \IK^* g.  
$$
Choosing $\FH = \TK^{\zeta - {1\over 2}} (\IK^* \IK)^{ - {1\over 2}} \IK^* g,$ as $(\IK^* \IK)^{ - {1\over 2}} \IK^*$ is partial isometric from $\LR$ to $\HK$ and $\zeta \geq {1/2}$, $\FH$ is well defined. Moreover, $\IK \FH = \FR$ and 
$$
\|r_{t+1} - \FH \|_{\HK} = \| \GL(\TK) \IK^*\FR - \FH\|_{\HK} = \| \GL(\TK) \IK^*\IK\FH - \FH\|_{\HK} =   \| (\GL(\TK) \TK - I)\FH\|_{\HK},
$$
where we used \eqref{eq:popSeqEq} for the first equality. 
Introducing with  $\FH = \TK^{\zeta - 1} \IK^* g,$ with $\|g\|_{\rho} \leq R$ by Assumption \ref{as:regularity}, 
$$
\|r_{t+1} - \FH \|_{\HK}  \leq  \| (\GL(\TK) \TK - I)\TK^{\zeta - 1} \IK^*\| \|g\|_{\rho}  \leq  \| (\GL(\TK) \TK - I)\TK^{\zeta - 1/2}\| R. 
$$
Using Lemma \ref{lem:operatorNorm} with \eqref{eq:TKBound}, we get
$$
\|r_{t+1} - \FH \|_{\HK}  \leq ((\zeta-1/2)/\mathrm{e})^{\zeta-1/2}  \RegParGD^{\zeta-1/2} R. 
$$
Combing with  \eqref{eq:inftyThk}, 
	\begin{align*}
	\|r_{t+1} - \FR\|_{\infty} =  & \|r_{t+1} - \FH\|_{\infty} \leq   \kappa \|r_{t+1} - \FH\|_{\HK} 
	\leq  \kappa ((\zeta-1/2)/\mathrm{e})^{\zeta-1/2} R \RegParGD^{\zeta-1/2}.
	\end{align*}
	When $\zeta < 1/2,$ by Part 2) of Lemma \ref{lem:detmiticSeq},
	$\|r_{t+1}\|_{\HK} \leq R \RegParGD^{\zeta-1/2}$. Combining with \eqref{eq:inftyThk}, we have
	$$
	\|r_{t+1} - \FR\|_{\infty} \leq \kappa\|r_{t+1}\|_{\HK} + \|\FR\|_{\infty} \leq \kappa \RegParGD^{\zeta-1/2} R + \|\FR\|_{\infty}   .
	$$
	From the above analysis, we get that with probability at least $1-\delta,$ 
	\bea
	\DZS \leq \log{2 \over \delta}
	\begin{cases}
		2\kappa R \big(2\kappa((\zeta-1/2)/\mathrm{e})^{\zeta-1/2}/ {(\RegParGD n)} + (\zeta/\mathrm{e})^{\zeta}/\sqrt{\RegParGD n} \big) \RegParGD^{\zeta+1/2}, & \mbox {if } \zeta \geq 1/2, \\
		2\kappa  \big(2\kappa R/ (\RegParGD n) + {2\|\FR\|_{\infty} (n\RegParGD)^{-\zeta-1/2}} +  (\zeta/\mathrm{e})^{\zeta} R/\sqrt{n\RegParGD}\big) \RegParGD^{\zeta+1/2}, & \mbox {if } \zeta < 1/2,
	\end{cases}
	\eea
	which can be further relaxed as 
	\be\label{eq:2}
	\DZS \leq C_4 \tilde{R} (1 \vee (\RegParGD n)^{-1}) \RegParGD^{\zeta+1/2} \log {2 \over \delta}, \quad \widetilde{R} = R + {\bf{1}}_{\{\zeta< 1/2 \}} \|\FR\|_{\infty} . 
	\ee 
	where 
	\bea
	C_4 \leq 
	\begin{cases}
		2\kappa  \big(2\kappa((\zeta-1/2)/\mathrm{e})^{\zeta-1/2}+ (\zeta/\mathrm{e})^{\zeta} \big), & \mbox {if } \zeta \geq 1/2, \\
		2\kappa  \big(2\kappa + {2 +  (\zeta/\mathrm{e})^{\zeta} }\big), & \mbox {if } \zeta < 1/2.
	\end{cases}
	\eea
	Applying Lemma \ref{lem:statEstiOper}, and combining with the fact that $\|\TK - \TX \| \leq \|\TK - \TX\|_{HS},$ 
	we have that with probability at least $1- \delta,$
	\be\label{eq:3}
	\DZT  \leq {6\kappa^2 \over \sqrt{n}} \log {2 \over \delta}.
	\ee

	For $0<\zeta\leq 1,$ by Pat 1) of Lemma \ref{lem:anaResult}, \eref{eq:4} and \eref{eq:2}, we have that with probability at least $1-2\delta,$
	$$
	\| \IK h_{t+1} - \FR\|_{\rho} \leq  \left(3^{\zeta \vee {1\over 2}}C_1 R a_{n,\delta,\gamma}^{\zeta \vee {1 \over 2}} (1-\theta) + 2\sqrt{3}C_{4} \widetilde{R} a_{n,\delta,\gamma}^{ {1 \over 2}} (1-\theta)\log{2 \over \delta} \right) \left(1 \vee \left({\PRegPar \over \RegParGD}\right)^{\zeta \vee {1\over 2}}\vee {1\over n\RegParGD }\right)\RegParGD^{\zeta}.
	$$
	Rescaling $\delta$, and then  combining with Lemma \ref{lem:hipbExp}, 
	we get 
	\begin{align*}
	&	\mE	\| \IK h_{t+1} - \FR\|_{\rho}^2  \nonumber\\
	\leq&  \int_{0}^1\left(3^{\zeta \vee {1\over 2}}C_1 a_{n,\delta/2,\gamma}^{\zeta \vee {1 \over 2}} (1-\theta) + 2\sqrt{3}C_{4} a_{n,\delta/2,\gamma}^{ {1 \over 2}} (1-\theta)\log{4 \over \delta} \right)^2 d \delta \left(1 \vee \left({\PRegPar \over \RegParGD}\right)^{2\zeta \vee 1}\vee {1\over n^2\RegParGD^2 }\right)\RegParGD^{2\zeta} \widetilde{R}^2.  
	\end{align*}
	By a direct computation,  noting that since $\PRegPar \geq  n^{-1}$ and $2\zeta  \leq 2,$
	$$
	1 \vee \left({\PRegPar \over \RegParGD}\right)^{2\zeta \vee 1}\vee {1\over n^2\RegParGD^2 }\leq 1 \vee \left({\PRegPar \over \RegParGD}\right)^{2},
	$$
	and that for all $b\in \mR_+,$
	\be\label{eq:gamFunc}
	\int_{0}^1 \log^b {1 \over t} dt =  \Gamma(b+1),
	\ee	
	one can prove the first desired result with
	\be \label{eq:const5}
	C_5= 2 [ C_1^2(48\kappa^2)^{2\zeta \vee 1}  (A^{2\zeta \vee 1} + 2) + 192\kappa^2 C_4^2 (A (\log^2 4 +2+2 \log 4) +  \log^2 4 + 4\log 4  +6)],\ A = \log {8\kappa^2 (c_{\gamma}+1)\mathrm{e} \over \|\TK\|}.
	\ee
	For $\zeta >1,$ by Part 2) of Lemma \ref{lem:anaResult}, \eref{eq:4}, \eref{eq:2} and \eref{eq:3}, we know that with probability at least $1-3\delta,$
	\bea
	&&\|\IK h_{t+1} - \FR\|_{\rho}\\
	&& \leq \sqrt{3} R (C_2 + 2C_4 + 6\kappa^2 C_3 )a_{n,\delta,\gamma}^{1\over 2} (1-\theta) \log{2 \over \delta} \left(1 \vee {\PRegPar^{\zeta}\over \RegParGD^{\zeta}} \vee {1\over n\RegParGD} \vee \RegParGD^{{1\over 2}-\zeta}  \left({1\over n}\right)^{{(\zeta-{1\over 2})\wedge 1 \over 2}}\right) \RegParGD^{\zeta} .
	\eea
	Rescaling $\delta$, and applying Lemma \ref{lem:hipbExp}, we get
	\begin{align*}
	&\mE\|\IK h_{t+1} - \FR\|_{\rho}^2\\
	& \leq 3 (C_2 + 2C_4 + 6\kappa^2 C_3 )^2 R^2 \int_0^1 a_{n,\delta/3,\gamma} (1-\theta) \log^2{6 \over \delta} d\delta \left(1 \vee {\PRegPar^{2\zeta}\over \RegParGD^{2\zeta}} \vee {1\over n^2\RegParGD^2} \vee \RegParGD^{1-2\zeta}  \left({1\over n}\right)^{{(\zeta-{1\over 2})\wedge 1}}\right) \RegParGD^{2\zeta}.
	\end{align*}
	This leads to the second desired result with 
	\begin{align}\label{eq:const6}
	C_6 =  24\kappa^2 (C_2 + 2C_4 + 6\kappa^2 C_3 )^2 ((A + 1)\log^2 6 + 2 (A+2) \log 6 + 2A + 6), 
	 \ A= \log {12 \kappa^2 (c_{\gamma}+1)\mathrm{e} \over \|\TK\|}.
\end{align}
	by noting that $n^{-1}\leq \PRegPar.$
	The proof is complete.
\end{proof}

Combining Proposition \ref{pro:localBias} with Lemma \ref{lem:fulLocBias}, we get the following results for the bias of the fully averaged estimator.

\begin{pro}\label{pro:fulBias}
	Under Assumptions \ref{as:regularity} and \ref{as:eigenvalues}, for any $\PRegPar=n^{-1+\theta}$ with $\theta \in [0,1]$ and any $t \in [T]$, the following results hold. \\
	1) For $0<\zeta \leq 1,$ 
	\be\label{eq:fullBias}
	\mE \|\IK \bar{h}_{t+1} - \FR\|_{\rho}^2 \leq C_5(R + {\bf{1}}_{\{\zeta< 1/2 \}} \|\FR\|_\infty)^2 \left(1 \vee  {\PRegPar^2 \over \RegParGD^2} \vee [\gamma(\theta^{-1}\wedge \log n)]^{2\zeta \vee 1}\right)\RegParGD^{2\zeta}.
	\ee 
	2) For $\zeta>1,$ 
	\be\label{eq:fullBiasB}
	\mE \|\IK \bar{h}_{t+1} - \FR\|_{\rho}^2 \leq C_6 R^2 \left(1 \vee {\PRegPar^{2\zeta}\over \RegParGD^{2\zeta}}  \vee {\RegParGD^{1-2\zeta}} \left({1\over n}\right)^{(\zeta-{1\over 2})\wedge 1} \vee [\gamma (\theta^{-1} \wedge \log n)] \right) \RegParGD^{2\zeta}.
	\ee 
	Here, $C_5$ and $C_6$ are given by Proposition \ref{pro:localBias}.
\end{pro}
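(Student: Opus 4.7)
The plan is to observe that Proposition \ref{pro:fulBias} follows almost immediately by chaining two earlier results. First, I would invoke Lemma \ref{lem:fulLocBias}, which asserts that
$$
\mE \|\IK \bar{h}_{t+1} - \FR\|_{\rho}^2 \leq \mE \|\IK h_{1,t+1} - \FR \|_{\rho}^2,
$$
so that controlling the bias of the fully averaged pseudo-GM estimator reduces to controlling the bias of a single local pseudo-GM estimator on a subset of size $n$. This reduction is natural because $h_{s,t+1}$ does not involve any random outputs, only the random inputs $\bx_s$, and averaging cannot increase expected squared error when all local estimators share the same distribution.

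Second, I would apply Proposition \ref{pro:localBias} directly to the right-hand side, using the convention (recalled just before Lemma \ref{lem:gdSRA}) that the $s=1$ subscript has been suppressed, so $h_{t+1}$ in Proposition \ref{pro:localBias} is exactly $h_{1,t+1}$ here. For $0<\zeta\leq 1$, this yields the bound \eqref{eq:fullBias} with constant $C_5$ given by \eqref{eq:const5}; for $\zeta > 1$, it yields \eqref{eq:fullBiasB} with constant $C_6$ given by \eqref{eq:const6}. The assumption $\PRegPar = n^{\theta-1}$ with $\theta \in [0,1]$ is passed through unchanged.

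Since the whole argument is just a substitution, I do not anticipate a real obstacle: all the heavy lifting (the new error decomposition via Lemma \ref{lem:anaResult}, the refined operator bound on $\|\TKL^{1/2}\TXL^{-1/2}\|$ from Lemma \ref{lem:operDifRes}, and the concentration estimates for $\DZS$ and $\DZT$) has already been carried out to establish Proposition \ref{pro:localBias}. The only thing worth double-checking is that the local sample size in Lemma \ref{lem:fulLocBias} and Proposition \ref{pro:localBias} is consistently $n = N/m$, so that the pseudo regularization parameter $\PRegPar = n^{\theta-1}$ and the factor $Q_{\gamma,\theta,n} = 1 \vee \gamma(\theta^{-1}\wedge \log n)$ are defined with respect to $n$ rather than $N$; this is automatic from the convention fixed in Section \ref{subsec:bias}.
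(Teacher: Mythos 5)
Your proposal is correct and matches the paper's argument exactly: the paper states Proposition \ref{pro:fulBias} as an immediate consequence of chaining Lemma \ref{lem:fulLocBias} with Proposition \ref{pro:localBias}, precisely as you describe. Your added check that $\PRegPar$ and $Q_{\gamma,\theta,n}$ are indexed by the local sample size $n$ rather than $N$ is also the right thing to verify, and it holds by the conventions fixed in Subsection \ref{subsec:bias}.
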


\subsection{Estimating Sample Variance}	\label{subsec:samVar}
In this section, we estimate sample variance $\|\IK (\bar{g}_{t} - \bar{h}_{t})\|_{\rho}.$ We first introduce the following lemma.

\begin{lemma}\label{lem:samVar1}
	For any $t \in [T],$ we have 
	\be
	\mE\|\IK (\bar{g}_{t} - \bar{h}_{t})\|_{\rho}^2 = {1\over m}  \mE\|\IK({g}_{1, t} - h_{1,t}) \|_{\rho}^2.
	\ee
\end{lemma}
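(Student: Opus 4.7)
The plan is to exploit the fact that $\bar g_t - \bar h_t = \frac{1}{m}\sum_{s=1}^m (g_{s,t} - h_{s,t})$ is a sum of zero-mean independent terms, so the cross terms in the expansion of the squared norm vanish and only the diagonal terms survive.

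First, I would expand
\begin{equation*}
\mE\|\IK(\bar g_t - \bar h_t)\|_\rho^2 = \frac{1}{m^2}\sum_{s,s'=1}^m \mE\,\langle \IK(g_{s,t}-h_{s,t}), \IK(g_{s',t}-h_{s',t})\rangle_\rho.
\end{equation*}
The key observation, using Lemma~\ref{lem:gdSRA}, is that $g_{s,t} = \GL(\TXS)\SXS^*\by_s$ and $h_{s,t} = \GL(\TXS)\LXS\FR$ depend only on the sample $\bz_s$ and the input set $\bx_s$ respectively. Since the partition is made by splitting i.i.d.\ samples into disjoint blocks, the families $\bz_1,\dots,\bz_m$ are mutually independent (and identically distributed).

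Next, I would show that each local difference is conditionally mean-zero given the inputs: applying $\GL(\TXS)$, which depends only on $\bx_s$, outside the conditional expectation,
\begin{equation*}
\mE[g_{s,t} \mid \bx_s] = \GL(\TXS)\,\mE[\SXS^*\by_s\mid \bx_s] = \GL(\TXS)\cdot \frac{1}{n}\sum_{i=1}^n \FR(x_{s,i}) K_{x_{s,i}} = \GL(\TXS)\LXS\FR = h_{s,t},
\end{equation*}
where the second equality uses \eqref{eq:sampleOperAdjoint}, $\mE[y_{s,i}\mid x_{s,i}] = \FR(x_{s,i})$, and the definition \eqref{eq:LX} of $\LXS$. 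In particular $\mE[\IK(g_{s,t}-h_{s,t})] = 0$ by the tower property.

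For $s\neq s'$, independence of $\bz_s$ and $\bz_{s'}$ implies that $\IK(g_{s,t}-h_{s,t})$ and $\IK(g_{s',t}-h_{s',t})$ are independent $\LR$-valued random variables, so the cross-term expectation factorizes into a product of two zero means and vanishes. For $s=s'$, the summand is $\mE\|\IK(g_{s,t}-h_{s,t})\|_\rho^2$, and because the $\bz_s$ are identically distributed this common value equals $\mE\|\IK(g_{1,t}-h_{1,t})\|_\rho^2$. Collecting terms yields
\begin{equation*}
\mE\|\IK(\bar g_t - \bar h_t)\|_\rho^2 = \frac{1}{m^2}\cdot m\cdot \mE\|\IK(g_{1,t}-h_{1,t})\|_\rho^2 = \frac{1}{m}\,\mE\|\IK(g_{1,t}-h_{1,t})\|_\rho^2,
\end{equation*}
as claimed. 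There is no real obstacle here; the only point requiring care is verifying that $g_{s,t}-h_{s,t}$ is truly a function of $\bz_s$ alone (no dependence on SGM randomness $\J$ enters, since we are comparing the two \emph{deterministic-given-sample} sequences GM and pseudo-GM), so that independence across $s$ follows immediately from the i.i.d.\ partitioning.
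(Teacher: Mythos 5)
Your proof is correct and follows essentially the same route as the paper's: expand the squared norm into a double sum of inner products, note that each local residual $g_{s,t}-h_{s,t}$ is a function of $\bz_s$ alone and is mean-zero because $\mE[g_{s,t}\mid\bx_s]=h_{s,t}$ (the paper's Eq.~\eqref{eq:gmn_gmns}), use independence of the blocks $\bz_1,\dots,\bz_m$ to kill the cross terms, and use identical distribution to collapse the diagonal sum. The paper organizes this by first conditioning on $\Inputs$ and then integrating out, whereas you argue unconditionally, but the content is the same.
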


According to Lemma \ref{lem:samVar1}, we know that the sample variance of the averaging over $m$ local estimators can be well controlled in terms of the  sample variance of a local estimator. In what follows, we will estimate the local sample variance, $\mE\|\IK({g}_{1, t} - h_{1,t}) \|_{\rho}^2$. Throughout the rest of this subsection, we shall drop the index $s=1$ for the first local estimator whenever it shows up, i.e., we rewrite $g_{1,t}$ as $g_{t}$, $\bz_1$ as $\bz$, etc.

\begin{pro}\label{pro:localSamVar}
	Under Assumption \ref{as:eigenvalues},
	let $\PRegPar=n^{\theta - 1}$ for some $\theta \in [0,1].$ Then for any $t\in [T],$
	\bea
	\mE	\|\IK (g_{t+1} - h_{t+1})\|_{\rho}^2 \leq
	C_8  {\sigma^2 \over n\PRegPar^{\gamma}}\left(1 \vee {\PRegPar \over \RegParGD} \vee[ \gamma(\theta^{-1} \wedge \log n])\right).  
	\eea
	Here, $C_8$ is a positive constant depending only on $ \kappa,c_{\gamma},\|\TK\|$ and will be given explicitly in the proof.
\end{pro}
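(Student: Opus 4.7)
The plan is to exploit the zero-mean, conditionally independent structure of $\by - \FR(\bx)$ to reduce the local sample variance to a single trace quantity, and then extract the effective-dimension factor $\mcN(\PRegPar)$ via a double sandwich. By Lemma \ref{lem:gdSRA} (dropping $s=1$),
\[
g_{t+1} - h_{t+1} = G_t(\TX)\bigl(\SX^*\by - \LX\FR\bigr) = G_t(\TX)\cdot\frac{1}{n}\sum_{i=1}^n \bigl(y_i - \FR(x_i)\bigr) K_{x_i},
\]
so that the inner sum is, conditional on $\bx$, a sum of independent, zero-mean $\HK$-valued random variables with per-term conditional second moment at most $\sigma^2$ by Assumption \ref{as:basic} (cf.\ \eref{noiseExp}). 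Applying the isometry \eref{isometry} and taking the conditional expectation over $\by$ given $\bx$, orthogonality plus $\TX = n^{-1}\sum_i K_{x_i}\otimes K_{x_i}$ and the cyclicity of trace yield
\[
\mE_{\by|\bx}\|\IK(g_{t+1}-h_{t+1})\|_\rho^2 \;\leq\; \frac{\sigma^2}{n}\,\tr\bigl(\TK\, G_t(\TX)^2\, \TX\bigr).
\]

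The heart of the proof is bounding this trace by $(1+\PRegPar/\RegParGD)\cdot \DZF\cdot c_\gamma \PRegPar^{-\gamma}$. Since $\TXL G_t(\TX)^2 \TX$ is a function of $\TX$, the cyclicity of trace gives
\[
\tr\bigl(\TK G_t(\TX)^2 \TX\bigr) = \tr\!\Bigl(\bigl(\TXL G_t(\TX)^2 \TX\bigr)\,\bigl(\TXL^{-1/2}\TK\TXL^{-1/2}\bigr)\Bigr) \leq \|\TXL G_t(\TX)^2 \TX\|\cdot \tr(\TK\TXL^{-1}),
\]
using $\tr(AB)\leq \|A\|\tr(B)$ for positive self-adjoint $A$ and positive trace-class $B$. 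The first factor is at most $1+\PRegPar/\RegParGD$ since on the spectrum of $\TX$ we have $(u+\PRegPar)G_t(u)\leq 1+\PRegPar/\RegParGD$ (Lemma \ref{lem:operatorNorm} with $\alpha=1$) and $uG_t(u)\leq 1$ (Lemma \ref{lem:regFuncIneq}). For the second factor, a further sandwich $\tr(V^*MV)\leq \|V\|^2\tr(M)$ with $V=\TKL^{1/2}\TXL^{-1/2}$ and $M=\TKL^{-1/2}\TK\TKL^{-1/2}$ gives $\tr(\TK\TXL^{-1}) \leq \DZF\cdot \mcN(\PRegPar)$, and Assumption \ref{as:eigenvalues} then produces the $c_\gamma\PRegPar^{-\gamma}$ bound.

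Finally, Lemma \ref{lem:operDifRes} with $c=2/3$ (and $\theta'=1-\theta$, noting $1\vee n^{\theta-1}=1$ since $\theta\leq 1$) gives, for each $\delta\in(0,1)$, a high-probability bound of the form $\DZF \leq 24\kappa^2(1\vee\gamma[\theta^{-1}\wedge\log n])\log(C/\delta)$ with probability at least $1-\delta$. Lemma \ref{lem:hipbExp} combined with \eref{eq:gamFunc} then yields $\mE[\DZF] \lesssim 1 \vee \gamma[\theta^{-1}\wedge\log n]$ with a constant depending only on $\kappa, c_\gamma, \|\TK\|$. Combining the three steps and absorbing the factor $(1+\PRegPar/\RegParGD)(1\vee \gamma[\theta^{-1}\wedge\log n])$, together with all universal constants, into $C_8$ produces the stated bound.

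The main obstacle is the sandwich in the central step: the filter-function estimates for $G_t(\TX)$ in Lemmas \ref{lem:regFuncIneq}--\ref{lem:operatorNorm} are naturally stated in terms of $\TXL$, whereas the effective-dimension bound $\mcN(\PRegPar)\leq c_\gamma\PRegPar^{-\gamma}$ lives on the population side with $\TKL$ insertions. The two cannot be combined in a single step; a first $\TXL$-sandwich is needed to extract $\|\TXL G_t(\TX)^2\TX\|$, followed by a second $\TKL$-sandwich on the residual trace $\tr(\TK\TXL^{-1})$ to bridge empirical and population operators, at the unavoidable cost of a single $\DZF$ factor. Controlling $\mE[\DZF]$ cleanly through Lemma \ref{lem:operDifRes} is what keeps the logarithmic/capacity contribution at the right order $1 \vee \gamma[\theta^{-1}\wedge \log n]$ rather than a cruder $\log n$.
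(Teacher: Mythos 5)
Your proof is correct and follows essentially the same route as the paper: reduce to $\tfrac{\sigma^2}{n}\tr(\TK G_t(\TX)^2\TX)$ via the isometry and conditional orthogonality of the noise, perform a double sandwich to extract $(1+\PRegPar/\RegParGD)\cdot\DZF\cdot\mcN(\PRegPar)$, and bound $\mE[\DZF]$ via Lemmas \ref{lem:operDifRes} and \ref{lem:hipbExp}. The only difference is cosmetic — you apply the empirical $\TXL$-sandwich first (leaving $\tr(\TK\TXL^{-1})$ to be bridged to $\mcN(\PRegPar)$), while the paper first extracts $\mcN(\PRegPar) = \tr(\TKL^{-1/2}\TK\TKL^{-1/2})$ and then converts the residual operator norm from $\TKL^{1/2}$- to $\TXL^{1/2}$-conjugation — both orderings produce the identical bound.
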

\begin{proof}
	Following from Lemma \ref{lem:gdSRA},
	$$
	g_{t+1} - h_{t+1} = G_{t}(\TX) (\SX^* \by - \LX \FR).
	$$
	For notational simplicity, we
	let $\epsilon_i = y_i - \FR(x_i)$ for all $i\in[n]$ and $\bfep = (\epsilon_i)_{ 1\leq i\leq n}$. Then the above can be written as
	$$
	g_{t+1} - h_{t+1} = G_t(\TX) \SX^* \bfep.
	$$
	Using the above relationship and the isometric property \eqref{isometry}, we have 
	\begin{align*}
	\mE_{\by}	\|\IK (g_{t+1} - h_{t+1})\|_{\rho}^2 
	& = \mE_{\by}	\|\IK G_t(\TX) \SX^* \bfep \|_{\rho}^2 \\
	& = \mE_{\by} \|\TK^{1/2} G_t(\TX) \SX^* \bfep \|_{\HK}^2  \\
	&= {1 \over n^2} \sum_{l,k=1}^n\mE_{\by} [\epsilon_l \epsilon_k] \tr\left(G_t(\TX)  \TK G_t(\TX) K_{x_l} \otimes K_{x_k}\right).
	\end{align*}
	Here, $\mE_{\by}$ denotes the expectation with respect to $\by$ conditional on $\bx.$
	From the definition of $\FR$ and the independence of $z_l$ and $z_k$ when $l\neq k,$ we know that $\mE_{\by} [\epsilon_l \epsilon_k] = 0$ whenever $l\neq k.$ Therefore,
	$$	\mE_{\by}	\|\IK (g_{t+1} - h_{t+1})\|_{\rho}^2 
	= {1 \over n^2} \sum_{k=1}^n\mE_{\by} [\epsilon_k^2] \tr\left( G_t(\TX) \TK G_t(\TX) K_{x_k} \otimes K_{x_k}\right). 
	$$
	Using the condition \eqref{noiseExp},
	\begin{align*}
	\mE_{\by}	\|\IK (g_{t+1} - h_{t+1})\|_{\rho}^2 
	\leq&  {\sigma^2 \over n^2} \sum_{k=1}^n \tr\left(G_t(\TX) \TK G_t(\TX) K_{x_k} \otimes K_{x_k}\right) \\
	=& {\sigma^2 \over n} \tr\left(  \TK (G_t(\TX))^2 \TX \right) \\
	=& {\sigma^2 \over n} \tr\left( \TKL^{-1/2} \TK \TKL^{-1/2} \TKL^{1/2} (G_t(\TX))^2 \TX \TKL^{1/2} \right) \\
	\leq & {\sigma^2 \over n} \tr(\TKL^{-1/2} \TK \TKL^{-1/2}) \|\TKL^{1/2} G_t(\TX)^2 \TX \TKL^{1/2}\| \\
	\leq & {\sigma^2 \mcN(\PRegPar)\over n} \|\TKL^{1/2} \TXL^{-1/2}\|  \|\TXL^{1/2} G_t(\TX)^2 \TX \TXL^{1/2}\|\| \TXL^{-1/2} \TKL^{1/2}\|  \\
	\leq & {\sigma^2 \mcN(\PRegPar) \over n} \DZF \|G_t(\TX)\TX\| \|G_t(\TX)\TXL\| \\
	\leq& {\sigma^2 \mcN(\PRegPar) \over n} \DZF (1 + \PRegPar/\RegParGD),
	\end{align*}
	where $\DZF$ is given by Lemma \ref{lem:anaResult} and  we used 1) of Lemma \ref{lem:operatorNorm} for the last inequality.
	Taking the expectation with respect to $\bx$, this leads to 
	\bea
	\mE	\|\IK (g_{t+1} - h_{t+1})\|_{\rho}^2 \leq
	{\sigma^2 \mcN(\PRegPar) \over n}(1 + \PRegPar/\RegParGD)  \mE[\DZF].  
	\eea
	Applying Lemmas \ref{lem:operDifRes} and \ref{lem:hipbExp},  we get
	\begin{align*}
	\mE	\|\IK (g_{t+1} - h_{t+1})\|_{\rho}^2 &\leq
	6 {\sigma^2 \mcN(\PRegPar) \over n}(1 \vee (\PRegPar/\RegParGD)) \int_0^1 a_{n,\delta,\gamma}(2/3, 1-\theta) d\delta  \\
	\leq& C_7 {\sigma^2 \mcN(\PRegPar) \over n}(1 \vee (\PRegPar/\RegParGD) \vee[ \gamma(\theta^{-1} \wedge \log n])),
	\end{align*}
	where $C_7 = 48\kappa^2 \log {4\kappa^2 (c_{\gamma}+1)\mathrm{e} \over \|\TK\|}.$  Using Assumption \ref{as:eigenvalues}, we get the desired result with 
	\be \label{eq:const8} 
	C_8 = c_{\gamma} 48\kappa^2 \log {4\kappa^2 (c_{\gamma}+1)\mathrm{e} \over \|\TK\|}.\ee
\end{proof}

Using the above proposition and Lemma \ref{lem:samVar1}, we derive the following results for sample variance.
\begin{pro}\label{pro:samVar}
	Under Assumption \ref{as:eigenvalues}, let $\PRegPar=n^{\theta - 1}$ for some $\theta \in [0,1].$ Then for any $t\in [T]$,
	\be\label{eq:samVar}
	\mE	\|\IK (\bar{g}_{t+1} - \bar{h}_{t+1})\|_{\rho}^2 \leq
	C_8 {\sigma^2 \over N \PRegPar^{\gamma}}\left(1\vee \left({\PRegPar \over \RegParGD}\right)  \vee [ \gamma(\theta^{-1} \wedge \log n)]\right).
	\ee
	Here, $C_8 $ is the positive constant given by Proposition \ref{pro:localSamVar}.
\end{pro}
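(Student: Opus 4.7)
The plan is to prove Proposition \ref{pro:samVar} by directly combining the two prior results already established in the excerpt: Lemma \ref{lem:samVar1} (which reduces the sample variance of the averaged estimator to that of a single local estimator) with Proposition \ref{pro:localSamVar} (which bounds the local sample variance). Since both pieces are in place, the proof should be a short assembly.

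First, I would invoke Lemma \ref{lem:samVar1} applied at iteration $t+1$ to write
\[
\mE\|\IK(\bar{g}_{t+1} - \bar{h}_{t+1})\|_{\rho}^2 \;=\; \frac{1}{m}\,\mE\|\IK(g_{1,t+1} - h_{1,t+1})\|_{\rho}^2.
\]
Crucially, the factor $1/m$ captures the variance reduction from averaging over independently trained local estimators on disjoint subsets of equal size. Next, I would apply Proposition \ref{pro:localSamVar} (whose statement refers to the first local estimator under the convention established at the start of Subsection \ref{subsec:samVar} where the index $s=1$ is suppressed) to bound the local variance by
\[
\mE\|\IK(g_{1,t+1} - h_{1,t+1})\|_{\rho}^2 \;\leq\; C_8\,\frac{\sigma^2}{n\,\PRegPar^{\gamma}}\left(1 \vee \frac{\PRegPar}{\RegParGD} \vee [\gamma(\theta^{-1} \wedge \log n)]\right).
\]

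Finally, substituting this bound into the equality from Lemma \ref{lem:samVar1} and using the identity $N = mn$ (the assumption on the partition from the beginning of Section \ref{sec: distributed SGM}) gives
\[
\mE\|\IK(\bar{g}_{t+1} - \bar{h}_{t+1})\|_{\rho}^2 \;\leq\; \frac{C_8}{m}\cdot\frac{\sigma^2}{n\,\PRegPar^{\gamma}}\left(1 \vee \frac{\PRegPar}{\RegParGD} \vee [\gamma(\theta^{-1} \wedge \log n)]\right) \;=\; C_8\,\frac{\sigma^2}{N\,\PRegPar^{\gamma}}\left(1 \vee \frac{\PRegPar}{\RegParGD} \vee [\gamma(\theta^{-1} \wedge \log n)]\right),
\]
which is exactly \eqref{eq:samVar} with the same constant $C_8$ given in \eqref{eq:const8}. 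Since all the intricate work (the concentration estimates via Lemma \ref{lem:operDifRes}, the trace bound using the effective dimension $\mcN(\PRegPar) \leq c_\gamma \PRegPar^{-\gamma}$, and the expectation-from-high-probability conversion via Lemma \ref{lem:hipbExp}) has already been absorbed into Proposition \ref{pro:localSamVar}, there is no real obstacle here; the only thing to verify carefully is that Lemma \ref{lem:samVar1}'s exact equality (as opposed to an inequality) holds, which relies on the fact that the $m$ local differences $\IK(g_{s,t} - h_{s,t})$ are mean-zero (conditionally on $\bx_s$, by the definition of $\FR$) and independent across $s$, so averaging scales the second moment by $1/m$ rather than just bounding it.
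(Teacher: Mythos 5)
Your proposal is correct and matches the paper's own derivation exactly: the paper states Proposition \ref{pro:samVar} as an immediate consequence of Lemma \ref{lem:samVar1} (which gives the exact $1/m$ reduction to the local sample variance) and Proposition \ref{pro:localSamVar} (the local bound), combined with $N=mn$. Your closing remark on why Lemma \ref{lem:samVar1} is an equality rather than an inequality—mean-zero conditional on $\bx_s$ via the definition of $\FR$, plus independence across subsets—is also the right justification.
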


\subsection{Estimating Computational Variance}
In this section, we estimate computational variance, $\mE[\|\IK(\bar{f}_{t} - \bar{h}_{t})\|_{\rho}^2].$
We begin with the following lemma, from which we can see that the global computational variance can be estimated in terms of local computational variances.

\begin{lemma}\label{lem:comVar1}
	For any $t \in [T],$ we have 
	\be
	\mE\|\IK (\bar{f}_{t} - \bar{g}_{t})\|_{\rho}^2 =  {1 \over m^2}  \sum_{s=1}^m  \mE\|\IK({f}_{s, t} - g_{s,t}) \|_{\rho}^2.
	\ee
\end{lemma}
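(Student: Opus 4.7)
The plan is to expand the square $\|\IK(\bar{f}_t - \bar{g}_t)\|_\rho^2$ into diagonal and off-diagonal pieces and then show that the off-diagonal pieces vanish by conditioning on the data. First I would write
\begin{equation*}
\bar{f}_t - \bar{g}_t = \tfrac{1}{m}\sum_{s=1}^m (f_{s,t} - g_{s,t}),
\end{equation*}
so that
\begin{equation*}
\mE\|\IK(\bar{f}_t - \bar{g}_t)\|_\rho^2 = \tfrac{1}{m^2}\sum_{s=1}^m \mE\|\IK(f_{s,t}-g_{s,t})\|_\rho^2 + \tfrac{1}{m^2}\sum_{s\neq s'} \mE\langle \IK(f_{s,t}-g_{s,t}),\IK(f_{s',t}-g_{s',t})\rangle_\rho.
\end{equation*}

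The key identity to establish is $\mE_{\J_s}[f_{s,t}\mid \bz_s] = g_{s,t}$ for every $s\in[m]$ and $t\in[T]$. I would prove this by induction on $t$: the base case $t=1$ is immediate since both initialize at $0$, and for the inductive step I would use that the SGM update \eqref{eq:algsgm} is conditionally unbiased, i.e.
\begin{equation*}
\mE_{\J_{s,t}}\Bigl[\tfrac{1}{b}\!\!\sum_{i=b(t-1)+1}^{bt}\!\!(f_{s,t}(x_{s,j_{s,i}})-y_{s,j_{s,i}})K_{x_{s,j_{s,i}}}\,\Big|\,\bz_s, f_{s,t}\Bigr] = \TXS f_{s,t} - \SXS^*\by_s,
\end{equation*}
together with the linearity of the affine map $f\mapsto (I-\eta_t\TXS)f + \eta_t \SXS^*\by_s$ to transport the induction hypothesis through to $t+1$, matching the definition \eqref{eq:alggm} of $g_{s,t+1}$.

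Having this, I would handle the cross terms by conditioning on the full sample $\bz = \bz_1\cup\cdots\cup \bz_m$. Since the index families $\J_1,\dots,\J_m$ are mutually independent (and independent of $\bz$), the random vectors $f_{s,t}-g_{s,t}$ and $f_{s',t}-g_{s',t}$ are conditionally independent given $\bz$ for $s\neq s'$; moreover each has conditional mean zero by the identity just established (and since $g_{s,t}$ is $\bz_s$-measurable). Hence
\begin{equation*}
\mE\bigl[\langle \IK(f_{s,t}-g_{s,t}),\IK(f_{s',t}-g_{s',t})\rangle_\rho \mid \bz\bigr] = \langle \IK\,\mE[f_{s,t}-g_{s,t}\mid\bz],\,\IK\,\mE[f_{s',t}-g_{s',t}\mid\bz]\rangle_\rho = 0,
\end{equation*}
and the cross terms vanish after taking the outer expectation. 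The remaining diagonal sum gives the claimed identity.

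The only nontrivial step is the inductive verification that $\mE[f_{s,t}\mid \bz_s]=g_{s,t}$; the rest is bookkeeping. I do not expect any real obstacle — the argument rests entirely on unbiasedness of the minibatch gradient and on the independence structure of $\{\J_s\}_{s=1}^m$ built into Algorithm~\ref{alg:sgm}.
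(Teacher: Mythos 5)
Your proposal is correct and follows essentially the same route as the paper: the paper also expands the square into diagonal and cross terms, invokes the identity $\mE_{\J_s}[f_{s,t}\mid\bz_s]=g_{s,t}$ (established by the same inductive unbiasedness argument in the proof of Proposition~\ref{pro:errDecob}, Equation~\eqref{eq:sgm_gm}), uses the conditional independence of $\J_1,\dots,\J_m$ given $\Samples$ to kill the off-diagonal terms, and then takes the outer expectation. No gap.
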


In what follows, we will estimate the local computational variance, i.e., $\mE\|\IK({f}_{s, t} - g_{s,t}) \|_{\rho}^2.$ As in Subsections \ref{subsec:bias} and \ref{subsec:samVar}, we will drop the index $s$ for the $s$-th local estimator whenever it shows up.
We first introduce the following two lemmas, see \cite[Lemmas 20 and 24]{lin2017optimal}. The empirical risk $\mcE_{\bz}(f)$ of a function $f$ with respect to the samples $\bz$ is defined as
$$
\mcE_{\bz}(f) = {1\over n} \sum_{(x,y) \in \bz} (f(x) - y)^2. 
$$

\begin{lemma}\label{lem:empriskB}
	Assume that for all $t \in [T]$ with $t \geq 2,$
	\be\label{empriskBCon}
	{1 \over \eta_t}  \sum_{k=1}^{t-1}{1 \over k(k+1)}  \sum_{i=t-k}^{t-1} {\eta_i^2} \leq {1 \over 4\kappa^2}.
	\ee
	Then for all $t \in[T],$
	\be\label{empiricalBConse}
	\sup_{k\in [t]} \mE_{\J}[\mcE_{\bf z}(f_k)]  \leq {8 \mcE_{\bf z}(0)\Sigma_1^t \over \eta_t t}.
	\ee
\end{lemma}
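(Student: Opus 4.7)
My plan is to exploit the quadratic structure of the empirical risk and view the stochastic gradient as a martingale-difference perturbation of the full gradient descent on $\mcE_{\bz}$. Writing the local iterate (dropping the index $s$) as $f_{t+1}=f_t-\eta_t G_t$ with
$G_t=\tfrac{1}{b}\sum_{i=b(t-1)+1}^{bt}(f_t(x_{j_i})-y_{j_i})K_{x_{j_i}}$,
the identity $(a-b)^2=a^2-2ab+b^2$ applied pointwise to $f_{t+1}(x)-y$ gives
\be\label{eq:plan-identity}
\mcE_{\bz}(f_{t+1})=\mcE_{\bz}(f_t)-2\eta_t\langle \SX f_t-\by,\SX G_t\rangle_{\mR^n}+\eta_t^2\|\TX^{1/2}G_t\|_{\HK}^2.
\ee
Since the mini-batch indices are i.i.d.\ uniform on $[n]$, $\mE_{\J_t}[G_t\mid f_t]=\TX f_t-\SX^*\by=\tfrac12\nabla\mcE_{\bz}(f_t)$, so taking conditional expectations in \eqref{eq:plan-identity} turns the cross term into $-\eta_t\|\TX^{1/2}(f_t)-\ldots\|^2\le 0$; discarding that non-positive piece leaves the clean inequality
$\mE_{\J_t}[\mcE_{\bz}(f_{t+1})\mid f_t]\le \mcE_{\bz}(f_t)+\eta_t^2\,\mE_{\J_t}[\|G_t\|_{\TX}^2\mid f_t]$.

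Next I would control the variance term. Using $\|G_t\|_{\TX}^2=\tfrac{1}{n}\sum_\ell G_t(x_\ell)^2$ together with the decomposition $G_t=\mE[G_t\mid f_t]+\xi_t$ where $\xi_t$ is a (conditionally) zero-mean martingale difference, I would unroll the SGM recursion as $f_{t+1}=-\sum_{k=1}^t\eta_k G_k$ and, after eliminating cross terms via the martingale property, express $\|G_t\|_{\TX}^2$ as a sum of contributions of $\xi_k$ for $k<t$. The reproducing property and $K(x,x)\le\kappa^2$ produce the pointwise bound $\|K_{x_{j_i}}\|_{\TX}^2\le\kappa^4$, which together with $\mE[(f_k(x_j)-y_j)^2\mid f_k]=\mcE_{\bz}(f_k)$ yields
$\mE[\|G_k\|_{\TX}^2\mid f_k]\le \kappa^2\,\mcE_{\bz}(f_k)$
up to harmless constants. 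Combining and taking total expectations gives a recursion of the shape
\be
\mE[\mcE_{\bz}(f_{t+1})]\le \mcE_{\bz}(0)+\kappa^2\sum_{k=1}^{t}w_{k,t}\,\eta_k^2\,\mE[\mcE_{\bz}(f_k)],
\ee
where the weights $w_{k,t}$ arise from the repeated substitution of past iterates; the key bookkeeping is to rearrange them into the double-sum $\sum_{k=1}^{t-1}\frac{1}{k(k+1)}\sum_{i=t-k}^{t-1}\eta_i^2$ that appears in \eqref{empriskBCon} (the $\frac{1}{k(k+1)}$ weights being telescoping pieces arising from successive partial sums of $1/j$).

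Finally I would close the argument by induction on $t$. Set $B_t:=\max_{k\in[t]}\mE_{\J}[\mcE_{\bz}(f_k)]$ and suppose inductively $B_t\le \frac{8\Sigma_1^t}{\eta_t t}\mcE_{\bz}(0)$. Plugging $B_t$ into the recursion and invoking hypothesis \eqref{empriskBCon}, the coefficient multiplying $B_t$ is bounded by $1/4$, so
$\mE[\mcE_{\bz}(f_{t+1})]\le \mcE_{\bz}(0)+\tfrac{1}{4}B_t\cdot(\text{ratio terms})$,
and elementary monotonicity of $\Sigma_1^t/(\eta_t t)$ in $t$ (using $\eta_t\le\kappa^{-2}$) lets one conclude $B_{t+1}\le \frac{8\Sigma_1^{t+1}}{\eta_{t+1}(t+1)}\mcE_{\bz}(0)$, completing the induction. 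The base case $t=1$ is immediate since $f_1=0$ forces $B_1=\mcE_{\bz}(0)$ and $\Sigma_1^1/\eta_1=1$.

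The main obstacle is the second step: arranging the unrolled variance estimate so that the combinatorial weights land exactly on $\frac{1}{k(k+1)}\sum_{i=t-k}^{t-1}\eta_i^2$. A naive bound (e.g.\ replacing every past $\mE[\mcE_{\bz}(f_k)]$ by $B_t$ and using $\|\TX^{1/2}\Pi_{k+1}^t(I-\eta_i\TX)\|^2\le \frac{1}{2\mathrm e\sum_{i>k}\eta_i}$) produces $\sum_k \eta_k^2/\sum_{i>k}\eta_i$, which is the wrong shape for \eqref{empriskBCon}; the correct shape comes from Abel-type summation on partial averages indexed by window length $k$, together with the telescoping identity $\frac{1}{k(k+1)}=\frac{1}{k}-\frac{1}{k+1}$.
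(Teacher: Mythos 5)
The paper does not prove this lemma here; it is cited verbatim from \cite[Lemmas 20 and 24]{lin2017optimal}, so there is no in-paper proof to compare against and your attempt must stand on its own. The raw ingredients you list are right: the exact quadratic expansion of $\mcE_{\bz}(f_{t+1})$ around $f_t$, the observation that after conditioning the cross term equals $-2\eta_t\|\TX f_t-\SX^*\by\|_{\HK}^2\le 0$, a variance bound of the shape $\mE[\|G_t\|_{\HK}^2\mid f_t]\le\kappa^2\mcE_{\bz}(f_t)$ (hence $\mE[\|G_t\|_{\TX}^2\mid f_t]\le\kappa^4\mcE_{\bz}(f_t)$, not $\kappa^2$ as you write), and an induction closed by \eqref{empriskBCon}. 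However, there is a genuine gap, and it is exactly the one you flag: the claim that the resulting weights $w_{k,t}$ reorganize into $\sum_{k=1}^{t-1}\tfrac{1}{k(k+1)}\sum_{i=t-k}^{t-1}\eta_i^2$ is never established. You candidly observe that the natural route---bound $\|\TX^{1/2}\Pi_{k+1}^t(\TX)\|^2\lesssim (\Sigma_{k+1}^t)^{-1}$ and sum---gives $\sum_k\eta_k^2/\Sigma_{k+1}^t$, which is not the coefficient in \eqref{empriskBCon}; note that exchanging the order of summation and telescoping $\tfrac{1}{k(k+1)}=\tfrac{1}{k}-\tfrac{1}{k+1}$ shows the left side of \eqref{empriskBCon} equals $\tfrac{1}{\eta_t}\sum_{i=1}^{t-1}\eta_i^2\bigl(\tfrac{1}{t-i}-\tfrac{1}{t}\bigr)$, a different object. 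Your proposed remedy (``Abel-type summation on partial averages indexed by window length'') is a promissory note, not a proof, and since the precise arithmetic of these weights is the entire content of the lemma, the argument is not complete.

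Two further issues. First, your description of the variance bookkeeping is muddled: you propose to ``unroll $f_{t+1}=-\sum_k\eta_k G_k$ and, after eliminating cross terms via the martingale property, express $\|G_t\|_{\TX}^2$ as a sum of contributions of $\xi_k$ for $k<t$'' (with $\xi_k=G_k-\mE[G_k\mid f_k]$). The martingale cancellation applies to the \emph{linear} recursion for the residuals $u_t=\SX f_t-\by$, giving the clean identity $\mE\|u_{t+1}\|^2_{\mR^n}=\|\Pi_1^t(\SX\SX^*)u_1\|^2_{\mR^n}+\sum_{k=1}^t\eta_k^2\,\mE\|\TX^{1/2}\Pi_{k+1}^t(\TX)\xi_k\|^2_{\HK}$; it does not turn $\|G_t\|_{\TX}^2$, which is quadratic in $G_t$ and depends nonlinearly on the whole past, into such a sum. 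Second, the claimed ``elementary monotonicity of $\Sigma_1^t/(\eta_t t)$ in $t$'' used to close the induction does not follow from $\eta_t\le\kappa^{-2}$ alone; it holds for the concrete schedules the paper uses (constant and polynomially decaying steps), but as stated for a general non-increasing step-size sequence it is an unjustified assertion.
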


\begin{lemma}\label{lem:cul_err} For any $t\in[T]$, we have
	\be\label{eq:cul_err}
	\mE_{\bf J}\|\IK f_{t+1} - \IK g_{t+1}\|_{\rho}^2
	\leq {\kappa^2 \over b} \sum_{k=1}^t \eta_{k}^2 \left\|\TK^{1\over 2}\Pi^t_{k+1}(\TX)\right\|^2  \mE_{\J}[\mcE_{\bf z}(f_k)].
	\ee
	Here, $\mE_{\J}$ denotes the expectation with respect to $\J$ conditional on $\bz.$
\end{lemma}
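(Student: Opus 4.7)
The plan is to derive a recursion for $f_{t+1}-g_{t+1}$, unroll it as a sum of martingale differences, and then exploit orthogonality after mapping to the $\rho$-norm via the isometry $\|\IK h\|_\rho = \|\TK^{1/2}h\|_{\HK}$.

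First I would introduce the stochastic gradient $B_t(f) := \frac{1}{b}\sum_{i=b(t-1)+1}^{bt}(f(x_{j_{1,i}}) - y_{1,j_{1,i}})K_{x_{1,j_{1,i}}}$, so that the SGM update reads $f_{t+1} = f_t - \eta_t B_t(f_t)$. The crucial observation is that, since the indices $j_{1,b(t-1)+1},\ldots,j_{1,bt}$ are i.i.d.\ uniform on $[n]$ and independent of $\J_1,\ldots,\J_{t-1}$ given $\bz$, we have $\mE_{\J_t}[B_t(f_t)] = \TX f_t - \SX^*\by$, which coincides with the full gradient that drives the GM iterate $g_{t+1}=g_t-\eta_t(\TX g_t - \SX^*\by)$. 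Subtracting the two recursions yields
\begin{equation*}
f_{t+1}-g_{t+1} = (I-\eta_t\TX)(f_t-g_t) - \eta_t\,\xi_t,\qquad \xi_t := B_t(f_t)-\mE_{\J_t}[B_t(f_t)].
\end{equation*}
Iterating from $f_1=g_1=0$ gives the martingale-type representation
$f_{t+1}-g_{t+1} = -\sum_{k=1}^t \eta_k\,\Pi_{k+1}^t(\TX)\,\xi_k$.

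Next I would apply $\IK$, use the isometry \eqref{isometry} to write $\|\IK(f_{t+1}-g_{t+1})\|_\rho^2 = \|\TK^{1/2}(f_{t+1}-g_{t+1})\|_{\HK}^2$, and expand the square. The cross terms vanish in $\mE_{\J}$ because $\xi_k$ has zero mean conditionally on everything measurable with respect to $\J_1,\ldots,\J_{k-1}$ (and $\Pi_{k+1}^t(\TX)$, $\xi_j$ for $j<k$ are all measurable with respect to that $\sigma$-algebra). Hence
\begin{equation*}
\mE_{\J}\|\IK(f_{t+1}-g_{t+1})\|_\rho^2 = \sum_{k=1}^t \eta_k^2\,\mE_{\J}\bigl\|\TK^{1/2}\Pi_{k+1}^t(\TX)\xi_k\bigr\|_{\HK}^2 \le \sum_{k=1}^t \eta_k^2 \bigl\|\TK^{1/2}\Pi_{k+1}^t(\TX)\bigr\|^2\,\mE_{\J}\|\xi_k\|_{\HK}^2.
\end{equation*}

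Finally I would bound the variance of a single mini-batch stochastic gradient. Since $B_k(f_k)$ is the average of $b$ conditionally i.i.d.\ terms of the form $(f_k(x_{j})-y_j)K_{x_j}$ given $f_k$ and $\bz$, one obtains $\mE_{\J_k}\|\xi_k\|_{\HK}^2 \le \frac{1}{b}\,\mE_j\|(f_k(x_j)-y_j)K_{x_j}\|_{\HK}^2 \le \frac{\kappa^2}{b}\,\mcE_{\bz}(f_k)$, using $\|K_{x_j}\|_{\HK}^2\le\kappa^2$ from Assumption \ref{as:basic}. Taking outer expectation over $\J_1,\ldots,\J_{k-1}$ and substituting into the previous display yields exactly \eqref{eq:cul_err}.

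The only delicate point is the orthogonality argument: one must verify that $\Pi_{k+1}^t(\TX)$ depends only on $\bz$ (and not on $\J$), so that it commutes with the conditional expectation over $\J_k,\ldots,\J_t$ as a measurable operator. This is immediate because $\TX$ is determined by $\bx$ alone, so $\Pi_{k+1}^t(\TX)\xi_k$ has mean zero given $(\bz,\J_1,\ldots,\J_{k-1})$, validating the vanishing of cross terms. Everything else is bookkeeping.
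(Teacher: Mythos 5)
Your proof is correct. The paper does not reprove this lemma here---it is imported verbatim from \cite[Lemma 24]{lin2017optimal}---but your argument (subtract the GM recursion from the SGM recursion to obtain a martingale-difference unrolling $f_{t+1}-g_{t+1}=-\sum_{k=1}^t\eta_k\Pi_{k+1}^t(\TX)\xi_k$, apply the isometry \eqref{isometry}, kill the cross terms by conditioning on $\J_1,\dots,\J_{k-1}$ and noting $\Pi_{k+1}^t(\TX)$ is $\bz$-measurable, then bound the mini-batch variance $\mE_{\J_k}\|\xi_k\|_{\HK}^2\le\kappa^2\mcE_{\bz}(f_k)/b$) is exactly the standard and intended route. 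One small point worth stating explicitly rather than leaving as ``bookkeeping'' is the passage from $\mE_{\J_k}\|\xi_k\|_{\HK}^2$ conditional on $\J_1,\dots,\J_{k-1}$ to the unconditional $\mE_{\J}\|\xi_k\|_{\HK}^2$: since $f_k$ is $(\J_1,\dots,\J_{k-1})$-measurable and the mini-batch indices at step $k$ are conditionally i.i.d.\ and independent of the past given $\bz$, taking the tower expectation gives $\mE_{\J}\|\xi_k\|_{\HK}^2\le(\kappa^2/b)\,\mE_{\J}[\mcE_{\bz}(f_k)]$, which is what lands you on \eqref{eq:cul_err}.
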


Now, we are ready to state and prove the result for local computational variance as follows.

\begin{pro}\label{pro:localComVar}
	Assume that \eref{empriskBCon} holds for any $t\in [T]$ with $t\geq 2.$
	Let $\PRegPar = n^{-\theta+1}$ for some $\theta\in [0,1]$. For any $t\in [T]$,
	$$
	\mE\|\IK f_{t+1} - \IK g_{t+1}\|_{\rho}^2 
	\leq
	C_{9} M^2 (1 \vee [\gamma(\theta^{-1} \wedge \log n)]) b^{-1}
	\sup_{k \in [t]} \left\{ {\Sigma_1^k \over \eta_k k}
	\right\} \left( \sum_{k=1}^{t-1} \eta_{k}^2 (\PRegPar + \lambda_{k+1:t}\mathrm{e}^{-1}) + \eta_t^2 \right).
	$$
	Here, $C_9$ is a positive constant depending only on 
	$\kappa,c_{\gamma},\|\TK\|$ and can be given explicitly in the proof.
\end{pro}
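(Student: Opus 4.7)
The plan is to chain together Lemma~\ref{lem:cul_err} (which reduces the computational variance to a weighted sum involving $\mE_{\J}[\mcE_{\bz}(f_k)]$ and operator norms $\|\TK^{1/2}\Pi_{k+1}^t(\TX)\|^2$) with Lemma~\ref{lem:empriskB} (to bound the empirical risks) and Lemma~\ref{lem:operDifRes} (to handle the remaining random operator norms in expectation).

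First, starting from Lemma~\ref{lem:cul_err}, for each $k\in[t-1]$ I will factor
$$
\|\TK^{1/2}\Pi_{k+1}^t(\TX)\|^2 \le \|\TK^{1/2}\TXL^{-1/2}\|^2\,\|\TXL^{1/2}\Pi_{k+1}^t(\TX)\|^2 \le \DZF\cdot\|\TXL^{1/2}\Pi_{k+1}^t(\TX)\|^2,
$$
where I used $\TK\preceq\TKL$ to pass from $\TK^{1/2}$ to $\TKL^{1/2}$ and recognized the quantity $\DZF$ from Lemma~\ref{lem:anaResult}. For the second factor, spectral calculus on $[0,\kappa^2]$ together with $\eta_i\kappa^2\le1$ (so $0\le\Pi_{k+1}^t(u)\le1$) gives
$$
\|\TXL^{1/2}\Pi_{k+1}^t(\TX)\|^2 = \sup_{u\in[0,\kappa^2]}(u+\PRegPar)\Pi_{k+1}^t(u)^2 \le \sup_u u\Pi_{k+1}^t(u) + \PRegPar \le \lambda_{k+1:t}/\mathrm{e} + \PRegPar,
$$
where the bound on $u\Pi_{k+1}^t(u)$ comes from part~3) of Lemma~\ref{lem:operatorNorm} with $\alpha=1$. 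The terminal term $k=t$ is handled directly: $\|\TK^{1/2}\Pi_{t+1}^t(\TX)\|^2=\|\TK\|\le\kappa^2$, which accounts for the isolated $\eta_t^2$ in the target bound (the constant $\kappa^2$ being absorbed into $C_9$).

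Next, I will apply Lemma~\ref{lem:empriskB} at each $k\in[t]$ (the hypothesis \eqref{empriskBCon} is assumed to hold for every $k\ge 2$): since $\mE_{\J}[\mcE_{\bz}(f_k)] \le 8\mcE_{\bz}(0)\,\Sigma_1^k/(\eta_k k)$, each term in the sum from Lemma~\ref{lem:cul_err} is majorized by $8\mcE_{\bz}(0)\cdot\sup_{j\in[t]}\{\Sigma_1^j/(\eta_j j)\}$, pulling this supremum out of the sum. Combining with the operator norm bounds gives, conditionally on $\bz$,
$$
\mE_{\J}\|\IK f_{t+1}-\IK g_{t+1}\|_\rho^2 \le \frac{8\kappa^2\mcE_{\bz}(0)}{b}\,\sup_{k\in[t]}\!\Big\{\tfrac{\Sigma_1^k}{\eta_k k}\Big\}\!\left[\DZF\sum_{k=1}^{t-1}\eta_k^2(\PRegPar+\lambda_{k+1:t}/\mathrm{e}) + \kappa^2\eta_t^2\right].
$$

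Finally, I take expectation over $\bz$. Since $\mE[\mcE_{\bz}(0)]=\mE[n^{-1}\sum_i y_{s,i}^2]\le M^2$ by Assumption~\ref{as:basic}, and since $\mcE_{\bz}(0)$ depends only on the labels while $\DZF$ depends only on the inputs, these two random quantities are independent and their expectations factor. Applying Lemma~\ref{lem:operDifRes} together with Lemma~\ref{lem:hipbExp} (integrating the high-probability bound in $\delta$) yields $\mE[\DZF]\lesssim 1\vee\gamma(\theta^{-1}\wedge\log n)$, which produces exactly the $(1\vee[\gamma(\theta^{-1}\wedge\log n)])$ factor in the stated bound; absorbing all universal constants (depending only on $\kappa,c_\gamma,\|\TK\|$) into $C_9$ then gives the claimed inequality.

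The main obstacle is the spectral bookkeeping in step~one: one has to introduce the regularizer $\PRegPar$ through $\TXL$ in the right place so that (i)~the residual factor $\|\TK^{1/2}\TXL^{-1/2}\|^2$ matches the very quantity $\DZF$ already controlled by Lemma~\ref{lem:operDifRes}, and simultaneously (ii)~the remaining $\|\TXL^{1/2}\Pi_{k+1}^t(\TX)\|^2$ splits cleanly into the additive form $\PRegPar+\lambda_{k+1:t}/\mathrm{e}$ that appears in the target expression. The terminal index $k=t$ must also be handled separately, because $\Pi_{t+1}^t=I$ has no decay factor to produce a $\lambda_{k+1:t}$ term.
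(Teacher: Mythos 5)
Your proof follows the same structure as the paper's: chain Lemma~\ref{lem:cul_err} with Lemma~\ref{lem:empriskB}, bound each $\|\TK^{1/2}\Pi^t_{k+1}(\TX)\|^2$ by factoring through $\TXL$ so that $\DZF$ appears as the residual factor while Lemma~\ref{lem:operatorNorm} handles the rest, treat the terminal index $k=t$ separately (correctly, since $\lambda_{t+1:t}=\infty$ makes the generic bound vacuous there), and integrate $\mE[\DZF]$ via Lemmas~\ref{lem:operDifRes} and \ref{lem:hipbExp}. The operator-norm bookkeeping is right.

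However, the step where you pass to expectation over $\bz$ contains a genuine error. You assert that $\mcE_{\bz}(0)$ and $\DZF$ are independent ``since $\mcE_{\bz}(0)$ depends only on the labels while $\DZF$ depends only on the inputs,'' and factor the expectation accordingly. That inference is false: $\bx$ and $\by$ are \emph{not} independent in a regression model (each $y_i$ is drawn from $\rho(\cdot\,|\,x_i)$), so a function of $\by$ and a function of $\bx$ need not be independent and their expectations do not factor in general. What actually rescues the step — and what the paper does — is the tower property rather than independence: since $\DZF$ is $\bx$-measurable,
$$
\mE_{\bz}\!\big[\mcE_{\bz}(0)\,\DZF\big] \;=\; \mE_{\bx}\!\big[\DZF\;\mE_{\by|\bx}[\mcE_{\bz}(0)]\big] \;\le\; M^2\,\mE_{\bx}[\DZF],
$$
using the pointwise (in $\bx$) bound $\mE_{\by|\bx}[\mcE_{\bz}(0)] = n^{-1}\sum_i \int_Y y^2\,d\rho(y\,|\,x_i) \le M^2$ from Assumption~\ref{as:basic}. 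This yields exactly the $M^2\,\mE[\DZF]$ bound you wrote down, but for the correct reason; with that substitution the rest of your argument goes through.
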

\begin{proof}
	Following from Lemmas \ref{lem:cul_err} and \ref{lem:empriskB}, we have that,
	$$
	\mE_{\bf J}\|\IK f_{t+1} - \IK g_{t+1}\|_{\rho}^2
	\leq {8\kappa^2 \mcE_{\bf z}(0)   \over b }   \sum_{k=1}^t \eta_{k}^2 \left\|\TK^{1\over 2}\Pi^t_{k+1}(\TX)\right\|^2  \sup_{k \in [t]} \left\{ {\Sigma_1^k \over \eta_k k}\right\}.
	$$
	Taking the expectation with respect to $\by$ conditional on $\bx$, and then with respect to $\bx,$ noting that $\int_{Y} y^2 d \rho(y|x) \leq M^2,$ we get 
	$$
	\mE\|\IK f_{t+1} - \IK g_{t+1}\|_{\rho}^2
	\leq {8\kappa^2 M^2 \over b}  \sup_{k \in [t]} \left\{ {\Sigma_1^k \over \eta_k k}
	\right\} \sum_{k=1}^t \eta_{k}^2 \mE \left\|\TK^{1\over 2}\Pi^t_{k+1}(\TX)\right\|^2.
	$$
	Note that
	\begin{align*}
	&\left\|\TK^{1\over 2}\Pi^t_{k}(\TX)\right\|^2  \leq\|\TK^{1\over 2}\TXL^{-1/2}\|^2 \| \TXL^{1/2} \Pi^t_{k}(\TX)\|^2 \leq \DZF  \| \TXL (\Pi^t_{k}(\TX))^2\| \\ &\leq  \DZF ( \| \TX \Pi^t_{k}(\TX) \| + \PRegPar \|\Pi^t_{k}(\TX)\|) \|\Pi^t_{k}(\TX)\|  \leq 
	\DZF ( \lambda_{k:t}\mathrm{e}^{-1} + \PRegPar),
	\end{align*}
	where $\DZF$ is given by Lemma \ref{lem:anaResult} and
	 for the last inequality we used Part 2) of Lemma \ref{lem:operatorNorm}.
	Therefore, 
	$$
	\mE\|\IK f_{t+1} - \IK g_{t+1}\|_{\rho}^2 
	\leq \mE[\DZF]
	{8\kappa^2 M^2 \over b}  \sup_{k \in [t]} \left\{ {\Sigma_1^k \over \eta_k k}
	\right\} \left( \sum_{k=1}^{t-1} \eta_{k}^2 (\PRegPar + \lambda_{k+1:t}\mathrm{e}^{-1}) + \eta_t^2 \right).
	$$
	Using Lemmas \ref{lem:operDifRes} and \ref{lem:hipbExp}, and by a simple calculation, one can upper bound $\mE[\DZF]$ and consequently
	prove the desired result with $C_9$ given by 
	$$
	C_9	= 192\kappa^4 \log {4\kappa^2 (c_{\gamma}+1)\mathrm{e} \over \|\TK\|}.
	$$
	The proof is complete.
\end{proof}

Combining Lemma \ref{lem:comVar1} with Proposition \ref{pro:localComVar}, we have the following error bounds for computational variance.

\begin{pro}\label{pro:ComVar}
	Assume that \eref{empriskBCon} holds for any $t\in [T]$ with $t\geq 2.$
	Let $\PRegPar = n^{-\theta+1}$ for some $\theta\in [0,1]$. For any $t\in [T],$
	\be\label{eq:comVar}
	\mE\|\IK (\bar{f}_{t+1} - \bar{g}_{t+1}\|_{\rho}^2 
	\leq
	C_{9} M^2 (1 \vee [\gamma(\theta^{-1} \wedge \log n)]) {1 \over mb}
	\sup_{k \in [t]} \left\{ {\Sigma_1^k \over \eta_k k}
	\right\} \left( \sum_{k=1}^{t-1} \eta_{k}^2 (\PRegPar + \lambda_{k+1:t}\mathrm{e}^{-1}) + \eta_t^2 \right).
	\ee
	Here, $C_9$ is the positive constant from Proposition \ref{pro:localComVar}.
\end{pro}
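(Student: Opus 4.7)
The proof is a direct splicing of the two results just stated. The plan is as follows.

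First, I would apply Lemma \ref{lem:comVar1} to reduce the global computational variance to a sum of local computational variances:
\[
\mE\|\IK (\bar{f}_{t+1} - \bar{g}_{t+1})\|_{\rho}^2 =  {1 \over m^2}  \sum_{s=1}^m  \mE\|\IK(f_{s,t+1} - g_{s,t+1}) \|_{\rho}^2.
\]
The identity holds because, conditional on the data partition, the local increments $f_{s,t+1}-g_{s,t+1}$ have zero conditional mean (the stochastic gradients are unbiased estimates of the local full gradients) and are independent across machines, so the cross terms vanish; averaging over $m$ independent zero-mean quantities produces the $1/m^2$ factor.

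Second, I would invoke Proposition \ref{pro:localComVar} for each local estimator. The key observation is that the bound there depends only on the quantities $\eta_k$, $b$, $\PRegPar$, $\lambda_{k+1:t}$, $n$, $\theta$, $\gamma$, $M$, and the constants determined by the kernel and the marginal distribution — none of which vary with the machine index $s$, because each subset $\bz_s$ consists of $n$ i.i.d. samples drawn from $\rho$ and the algorithmic parameters are common. Consequently, for every $s \in [m]$,
\[
\mE\|\IK(f_{s,t+1} - g_{s,t+1}) \|_{\rho}^2 \leq C_{9} M^2 (1 \vee [\gamma(\theta^{-1} \wedge \log n)]) b^{-1} \sup_{k \in [t]} \left\{ {\Sigma_1^k \over \eta_k k}\right\} \Bigl( \sum_{k=1}^{t-1} \eta_{k}^2 (\PRegPar + \lambda_{k+1:t}\mathrm{e}^{-1}) + \eta_t^2 \Bigr).
\]

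Finally, summing this identical bound over $s = 1, \ldots, m$ and multiplying by $1/m^2$ yields a factor of $1/m$ in place of $1/b$ becoming $1/(mb)$, which is precisely \eref{eq:comVar}. There is no substantive obstacle here; Lemma \ref{lem:comVar1} already encapsulates the variance reduction from averaging, and Proposition \ref{pro:localComVar} already absorbs all the spectral and filter-function estimates. The only thing to verify in the write-up is that the hypothesis \eref{empriskBCon} required by Proposition \ref{pro:localComVar} does not depend on $s$, which is immediate from its formulation purely in terms of $\{\eta_t\}$ and $\kappa$.
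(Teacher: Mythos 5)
Your argument is exactly the paper's: Lemma \ref{lem:comVar1} reduces the global computational variance to $\tfrac{1}{m^2}\sum_{s=1}^m \mE\|\IK(f_{s,t+1}-g_{s,t+1})\|_\rho^2$, Proposition \ref{pro:localComVar} gives a uniform-in-$s$ bound on each summand, and multiplying $m$ identical bounds by $1/m^2$ turns the $b^{-1}$ into $(mb)^{-1}$. The accompanying remarks (unbiasedness and conditional independence across machines justify the cross-term cancellation; condition \eref{empriskBCon} is $s$-free) are correct and match what the paper implicitly relies on.
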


\subsection{Deriving Total Errors}

We are now ready to derive total error bounds for (distributed) SGM and to 
prove the main theorems for (distributed) SGM of this paper.  

\medskip

\par {\noindent \bf Proof	of Theorem \ref{thm:main}\ }
	We will use Propositions \ref{pro:errDecob}, \ref{pro:fulBias}, \ref{pro:samVar} and \ref{pro:ComVar} to prove the result.
	
	We first show that the condition \eref{etaRestri} implies \eref{empriskBCon}. Indeed, when $\eta_t = \eta,$ for any $t\in[T]$
	$$
	{1 \over \eta_t}  \sum_{k=1}^{t-1}{1 \over k(k+1)}  \sum_{i=t-k}^{t-1} {\eta_i^2}  = \eta \sum_{k=2}^{t} {1 \over k} \leq \eta \sum_{k=2}^{t}\int_{k-1}^k {1 \over x} dx = \eta \log t \leq  {1 \over 4\kappa^2}
	$$
	where for the last inequality, we used the condition \eref{etaRestri}.  
	Thus, by Proposition \ref{pro:ComVar}, \eref{eq:comVar} holds. Note also that $\lambda_{k+1:t} = {1 \over \eta (t-k)}$ and $\RegParGD = {1 \over \eta t}$ as $\eta_t = \eta.$
	It thus follows from \eref{eq:comVar} that  
	\bea
	\mE\|\IK (\bar{f}_{t+1} - \bar{g}_{t+1}\|_{\rho}^2 
	\leq
	C_{9} M^2 (1 \vee [\gamma(\theta^{-1} \wedge \log n)]) {\eta \over mb}
	\left( \PRegPar \eta(t-1) + \sum_{k=1}^{t-1}{1 \over \mathrm{e}(t-k)} + \eta \right).
	\eea
	Applying 
	$$
	\sum_{k=1}^{t-1}{1 \over t-k} = \sum_{k=1}^{t-1}{1 \over k} \leq 1 + \sum_{k=2}^{t-1} \int_{k-1}^k {1 \over x} dx \leq 1 + \log t,
	$$
	and \eref{etaRestri}, 
	we get 
	\bea
	\mE\|\IK (\bar{f}_{t+1} - \bar{g}_{t+1}\|_{\rho}^2 
	\leq
	C_{9} M^2 (1 \vee [\gamma(\theta^{-1} \wedge \log n)] \vee \PRegPar \eta t \vee  \log t) {\eta \over mb}
	\left( 2 + {1\over 4\kappa^2} \right).
	\eea
	Introducing the above inequality, \eref{eq:fullBias} (or \eqref{eq:fullBiasB}), and \eref{eq:samVar} into the error decomposition \eref{eq:errDecob}, by a direct calculation, one can prove the desired results with 
	\be\label{eq:const10}
	C_{10} = C_{9} \left( 2 + {1\over 4\kappa^2} \right) =  192\kappa^4 \log {4\kappa^2 (c_{\gamma}+1)\mathrm{e} \over \|\TK\|} \left( 2 + {1\over 4\kappa^2} \right) .
	\ee
\hfill \BlackBox

\medskip
\par {\noindent \bf Proof of Corollary \ref{cor:fastRatP}\ }
	In Theorem \ref{thm:main}, we
	let $\PRegPar = N^{-{1 \over 2\zeta+\gamma}}$. In this case, with Condition \eref{eq:partNum}, it is easy to show that
	$$
	1\geq 	\theta = {\log \PRegPar \over \log n} + 1 = {\log \PRegPar \over \log N - \log m} + 1 \geq -{1\over 2\zeta+\gamma} {\log N \over \log N - \beta \log N} + 1> 0.
	$$
	The proof can be done by simply applying Theorem \ref{thm:main} 
	and plugging with the specific choices of $\eta_t$, $b$, and $T_*$.
\hfill \BlackBox

\medskip 

\par {\bf \noindent Proof of Corollary \ref{cor:simpCa} }
	Since $\FR \in \HK,$ we know from \eqref{eq:HKtoRHO} that Assumption \ref{as:regularity} holds with $\zeta = {1 \over 2}$ and $R \leq \|\FR\|_{\HK}$. As noted in comments after Assumption \ref{as:eigenvalues}, \eref{eigenvalue_decay} trivially holds with $\gamma=1$ and $c_{\gamma} = \kappa^2.$ Applying Corollary \ref{cor:fastRatP}, one can prove the desired results.
\hfill \BlackBox

\medskip

\par { \noindent \bf Proof of Corollary \ref{thm:sgm}\ }
	In Theorem \ref{thm:main}, we let $m=1$ and $n = N$ and $\PRegPar = N^{\theta -1}$ with $\theta = 1-\alpha.$ Then it is easy to see that
	$$
	\gamma(\theta^{-1} \wedge \log N)\leq \begin{cases}
	{\gamma(2\zeta+\gamma ) \over 2\zeta+\gamma -1}, & \mbox{if } 2\zeta +\gamma >1, \\
	\gamma \log N, & \mbox{if } 2\zeta+\gamma \leq 1.
	\end{cases}
	$$
	Following from \eqref{eq:mainErr} or \eqref{eq:mainErr2}, and plugging with the specific choices on $\eta_t, T_*, b$, one can prove the desired error bounds.
\hfill \BlackBox

\newpage

\acks{The authors thank Yen-Huan Li for his comments.
	This work was sponsored by the Department of the Navy, Office of Naval Research (ONR) under a grant number N62909-17-1-2111.  It has also received funding from Hasler Foundation Program: Cyber Human Systems (project number 16066), and from the European Research Council (ERC) under the European Union\rq s Horizon 2020 research and innovation program (grant agreement n 725594-time-data).}


\bibliography{distributed_alg_sgm_sra}
\bibliographystyle{abbrv}

\newpage

\appendix
\begin{center}
 { \bf \Large Appendix }  
 \end{center}
 \begin{itemize}
\item  In {\bf Appendix \ref{sec:notations}}, we provide a list of notations commonly used in this paper.  
\item In {\bf Appendix \ref{app:sgm}}, we prove some of the lemmas and propositions from Section \ref{sec:proof}.
\item  In {\bf Appendix \ref{app:proof}}, we prove our main results for distributed SA. We first introduce an error decomposition,  which decomposes total errors into bias and sample variance. We then estimate these two terms in the following two subsequent subsections. Plugging the two estimates into the error decomposition, we prove the desired results.
\end{itemize}  

{\section{List of Notations} \label{sec:notations}}
{\footnotesize
	\begin{center}
		\begin{longtable}{ c | l  }
			\hline			
			Notation &  Meaning  \\ \hline
			$\HK$ & the hypothesis space, RKHS\\
			
			$X, Y, Z$& the input space, the output space and the sample space ($Z= X \times Y$)  \\
			
			$\rho$, $\rho_X$ & the fixed  probability measure on $Z$, the induced marginal measure of $\rho$ on $X$
			\\
			
			$\rho(\cdot | x)$ &  the conditional probability measure on $Y$ w.r.t. $x\in X$ and $\rho$ \\
			
			$N,n,m$ & the total sample size, the local sample size, the number of partition ($N=nm$)\\
			
			$\Samples$ & the whole samples $\{{z}_i\}_{i=1}^N$, where each $z_i$ is i.i.d. according to $\rho$.\\

			$\bz_s$ & the samples $\{z_{s,i}=(x_{s,i}, y_{s,i})\}_{i=1}^n$ for the $s$-th local machine, $s\in [m]$\\

			$\mcE$ & the expected risk defined by \eqref{generalization_error}\\
			
			
			$\kappa^2$ & the constant from the bounded assumption \eqref{eq:HK} on the hypothesis space $\HK$ \\
			
			$\{f_{s,t}\}_t$ & the sequence generated by SGM over the local sample $\bz_s$, given by \eqref{eq:algsgm}\\
			
			$\{\bar{f}_t\}$ & the sequence generated by distributed SGM, i.e.,  $\bar{f}_t = {1\over m}\sum_{s=1}^m f_{s,t}$ \\
			
			$b$ &  the minibatch size of SGM \\
			
			$T$ & the maximal number of iterations for SGM \\
			
			$j_{s,i} $ ($j_{s,t}$ etc.)& the random index from the uniform distribution on $[n]$ for SGM performing on the $s$-th local sample  set $\bz_s$ \\
			
			$\J_{s,t}$ & the set of random indices at $t$-th iteration of SGM performing on the $s$-th local sample set $\bz_s$\\
			
			$\J_s$ &
			the set of all random indices for SGM performing on the $s$-th local sample set $\bz_s$ after $T$ iterations \\
			
			$\J$ &
			the set of all random indices for distributed SGM after $T$ iterations \\
			
			$\mE_{\J_s}$ & the expectation with respect to the random variables $\J_s$ (conditional on $\bz_s$)\\
			
			$\mE_{\J}$ & the expectation with respect to the random variables $\J$ (conditional on $\Samples$)\\
			
			$\mE_{\by}$ & the expectation with respect to the random variables $\by$ (conditional on $\bx$)\\
			
			$\{\eta_t\}_t$ & the sequence of step-sizes \\
			
			$M, \sigma$ & the positive constants from 
			Assumption \ref{as:noiseExp} \\
			
			$\LR$ & the Hilbert space of square integral functions from $X$ to $\mR$ with respect to $\rho_X$ \\
			
			$f_{\rho}$ & the regression function defined  \eref{regressionfunc} \\
			
			$\zeta,R$& the parameters related to the `regularity' of $\FR$ (see Assumption \ref{as:regularity}) \\
			
			$\gamma, c_{\gamma}$ & the parameters related to the effective dimension (see Assumption \ref{as:eigenvalues}) \\
			
			
			$\{g_{s,t}\}_t$  & the sequence generated by  GM \eqref{eq:alggm} with respect to the $s$-th local sample set $\bz_s$\\
			
			$\{\bar{g}_{t}\}_t$ & the sequence generated by  distributed GM \eqref{eq:DGM}\\
			
			$\{h_{s,t}\}_t$ & the sequence generated by  pseudo GM \eqref{eq:algpgm} over the $s$-th local sample set $\bz_s$\\
			
			$\{\bar{h}_{t}\}_t$ & the sequence generated by distributed pseudo GM\\
			
			$\{r_{t}\}_t$ & the sequence generated by  population GM  \eqref{eq:popSeq}\\
			
			$\IK$ & the inclusion map from $\HK \to \LR$ \\
			
			$\IK^*$ & the adjoint operator of $\IK$, $\IK^* f = \int_{X} f(x) K_x d\rho_{X}(x) $ \\
			
			$\LK$ & the operator from $\LR $ to $\LR$, $\LK(f) = \IK \IK^*f =\int_{X}  f(x) K_x \rho_{X}(x)$\\
			
			$\TK$ & the covariance operator from $\HK$ to $\HK$, $\TK = \IK^* \IK = \int_{X} \la \cdot, K_x \ra_{\HK} K_x d\rho_{X}(x)$ \\
			
			$\SX$ & the sampling operator from $\HK$ to $\mR^{|\bx|}$, $(\SX f)_i = f(x_i), x_i \in \bx$ \\
			
			$\SX^*$ & the adjoint operator of  $\SX$, $\SX^* \mathbf{y} = {1 \over |\bx|} \sum_{i=1}^{|\bx|} y_i K_{x_i}$\\
			
			$\TX$ & the empirical covariance operator, $\TX = \SX^* \SX = {1 \over |\bx|} \sum_{i=1}^{|\bx|} \la \cdot, K_{x_i}\ra_{\HK} K_{x_i}$\\
			
			$\Pi_{t+1}^T(L)$ & $= \Pi_{k=t+1}(I - \eta_k L)$ when $t\in [T-1]$ and $\Pi_{t+1}^{T} = I$ if $t\geq T$ \\
			
			
			$\PRegPar$ & a pseudo regularization parameter, $\PRegPar>0$\\
			
			$\TKL$, &  $\TKL = \TK + \PRegPar$\\
			
			$\TXL$, &  $\TXL = \TX  +\PRegPar$\\ 
			
			$\GL(\cdot)$ &  the filter function of GM, \eqref{eq:regFunc} \\
			
			$\GLB(\cdot)$ &  a general filter function \\
			
			$\lambda$ & a regularization parameter $\lambda>0$ \\
			
			$[t]$ & the set $\{1,\cdots ,t\}$ \\
			
			$b_1 \lesssim  b_2$  & $b_1 \leq C b_2$ for some universal constant  $C>0$ \\
			
			$b_1 \simeq b_2$  &  $b_2 \lesssim b_1 \lesssim b_2$\\
			
			$\DZF,\DZS,\DZT$ & the random quantities defined in Lemma \ref{lem:anaResult} (or Lemma \ref{lem:anaResultSRA})\\

			$\LX$ & an operator defined by \eqref{eq:LX} \\ 
			
			$\Sigma_{k}^t$ & $ = \sum_{i=k}^t \eta_i$ ($= 0$ if $k>t$) \\
			
			$\RegParGD$ & the regularization parameter of GM (= $(\RegPar)^{-1}$) \\
			
			$\lambda_{k:t}$ & $= (\Sigma_{k}^t)^{-1}$ ( = $\infty$ if $k>t$)\\
			
			$a_{|\bx|,\delta,\gamma}(c,\theta)$ & the quantity defined by \eqref{eq:defa}\\

			$\LESRA$ & the estimator defined by SA over the $s$-th local sample set $\bz_s$, see Algorithm \ref{alg:dSpe} \\
			
			$\EDSRA$ & the estimator defined by  distributed SA, see Algorithm \ref{alg:dSpe}\\
			
			$\LEPSRA$ & the estimator defined by pseudo SA over the $s$-th local sample set $\bz_s$,
			\eqref{eq:LEPSRA} \\
			
			$\EDPSRA$ & the estimator defined by  distributed pseudo SA, \eqref{eq:EDPSRA} \\
			
			$\PFSRA$ &   the function defined by population SA, \eqref{eq:popSeqSRA}\\

			\hline  
		\end{longtable}
\end{center}}

\section{Proofs for Section \ref{sec: distributed SGM}} \label{app:sgm}

In this section, we provide the missing proofs of lemmas and propositions from Section \ref{sec: distributed SGM}.
\subsection{Proof of Proposition \ref{pro:errDecob}}
	For any $s \in [m]$,
	using an inductive argument, one can prove that \citep{lin2017optimal}
	\be\label{eq:sgm_gm}
	\mE_{\J_
		s| \bz_s} [f_{s,t}] = g_{s,t}.
	\ee
	Here $\mE_{\J_
		s| \bz_s}$  (or abbreviated as $\mE_{\J_
		s}$) denotes the conditional expectation with respect to $\J_s$ given $\bz_s.$ Indeed, taking the conditional expectation with respect to $\J_{s,t}$ (given $\bz_s$) on both sides of \eref{eq:algsgm}, and noting that 
	$f_{s,t}$ depends only on $\J_{s,1},\cdots,\J_{s,t-1}$ (given $\bz_s$), one has
	$$
	\mE_{\J_{s,t}}[f_{s, t+1}] = f_{s,t} - \eta_t {1 \over n}\sum_{i=1}^n(  f_{s,t}(x_{s,i}) - y_{s, i}) K_{x_{s,i}}, $$
	and thus,
	$$
	\mE_{\J_s}[f_{s, t+1}] = \mE_{\J_s}[f_{s,t}] - \eta_t {1 \over n}\sum_{i=1}^n( \mE_{\J_s}[f_{s,t}](x_{s,i}) - y_{s,i}) K_{x_{s,i}} , \quad t=1, \ldots, T, $$
	which satisfies the iterative relationship given in \eref{eq:alggm}. Similarly, using the definition of the regression function \eref{regressionfunc} and an inductive argument, one can also prove that 
	\be\label{eq:gmn_gmns}
	\mE_{\by_s} [g_{s,t}] = h_{s,t}.
	\ee
	Here, $\mE_{\by_s}$ denotes the conditional expectation with respect to $\by_s$ given $\bx_s.$
	
	We have
	$$
	\|\IK \bar{f}_t - \FR\|_{\rho}^2 = \|\IK \bar{f}_t - \IK \bar{g}_t\|_{\rho}^2 + \|\IK\bar{g}_t - \FR\|_{\rho}^2 + 2\la \IK \bar{f}_t - \IK \bar{g}_t, \IK\bar{g}_t - \FR \ra.
	$$
	Taking the conditional expectation  with respect to $\J$	(given $\bz$) on both sides, using \eref{eq:sgm_gm} which implies
	$$
	\mE_{\J} \IK (\bar{f}_t -  \bar{g}_t) = {1 \over m} \sum_{s=1}^{m} \IK \mE_{\J_s} [f_{s,t} - g_{s,t}] = 0,
	$$
	we thus have 
	$$
	\mE_{\J}\|\IK \bar{f}_t - \FR\|_{\rho}^2 = \mE_{\J}\|\IK \bar{f}_t - \IK \bar{g}_t\|_{\rho}^2 + \|\IK\bar{g}_t - \FR\|_{\rho}^2.
	$$
	Taking the conditional expectation with respect to $\Outputs = \{\by_1,\cdots,\by_m\}$ (given $\Inputs =\{ \bx_1,\cdots,\bx_m\}$), noting that 
	$$
	\mE_{\Outputs}	\|\IK\bar{g}_t - \FR\|_{\rho}^2 = \mE_{\Outputs} [\|\IK(\bar{g}_t -  \bar{h}_t) \|_{\rho}^2] + \|\IK\bar{h}_t - \FR \|_{\rho}^2 + 2 \la  \IK\mE_{\Outputs}[\bar{g}_t - \bar{h}_t], \IK\bar{h}_t - \FR \ra_{\rho}
	$$
	and that from \eref{eq:gmn_gmns},
	$$
	\la  \IK\mE_{\Outputs}[\bar{g}_t - \bar{h}_t], \IK\bar{h}_t - \FR \ra_{\rho} = {1\over m} \sum_{s=1}^m  \la \IK \mE_{\by_s}(g_{s,t} - h_{s,t}), \IK \bar{h}_t - \FR \ra_{\rho} = 0,
	$$
	we know that 
	$$
	\mE_{\Outputs}\mE_{\J}\|\IK \bar{f}_t - \FR\|_{\rho}^2 = \mE_{\Outputs}\mE_{\J}\|\IK \bar{f}_t - \IK \bar{g}_t\|_{\rho}^2 +  \mE_{\Outputs} [\|\IK(\bar{g}_t -  \bar{h}_t) \|_{\rho}^2] + \|\IK\bar{h}_t - \FR \|_{\rho}^2,
	$$
	which leads to the desired result. 
	\hfill\BlackBox

\subsection{Proof of Lemma \ref{lem:fulLocBias}}
By Jensen's inequality, we can prove the desired result:
$$
\mE \|\IK \bar{h}_t -\FR\|_{\rho}^2 = \mE \left\| {1\over m}\sum_{s=1}^m (\IK{h}_{s,t} -\FR) \right\|_{\rho}^2
\leq {1\over m}   \mE \sum_{s=1}^m \left\|\IK {h}_{s,t} -\FR \right\|_{\rho}^2 = \mE \|\IK h_{1,t} - \FR \|_{\rho}^2.
$$
\hfill \BlackBox

\subsection{Proof of Lemma \ref{lem:regFuncIneq}}
1). 
For $\alpha = 0$ or $1$, the proof is  straightforward and can be found in \citep{yao2007early}. Indeed, for all $u \in [0,\kappa^2],$ $\Pi_{k+1}^t(u) \leq 1$ and thus
$G_t(u) \leq \sum_{k=1}^t \eta_k = \RegParGD^{-1}.$ Moreover, 
writing $\eta_k u = 1 - (1 - \eta_k u),$ we have
\be\label{eq:5}
uG_{t}(u) = \sum_{k=1}^t (\eta_k u) \Pi_{k+1}^t(u) =  \sum_{k=1}^t (\Pi_{k+1}^t(u) - \Pi_{k}^t(u)) = 1- \Pi_1^t(u) \leq 1. 
\ee
Now we consider the case $0<\alpha<1$. We have
\bea
u^{\alpha}G_t(u) = |u G_t(u)|^{\alpha} |G_t(u)|^{1-\alpha} \leq \RegParGD^{\alpha-1},
\eea
where we used $uG_t(u) \leq 1$ and $G_t(u) \leq \RegParGD^{-1}$ in the above. 

2) By \eref{eq:5}, we have $(1 - u G_t(u))u^{\alpha} = \Pi_1^t(u) u^{\alpha}$. Then
the desired result is a direct consequence of Conclusion 3). 

3)  The proof can be also found, e.g., in \cite[Page 17]{lin2017optimal}.
Using the basic inequality
\be\label{expx}
1 + x \leq \mathrm{e}^{x} \qquad \mbox{for all } x \geq -1,
\ee
with $\eta_l\kappa^2 \leq 1$, we get
\bea
\Pi_{k+1}^t(u) u^{\alpha} \leq  \exp\left\{ - u \Sigma_{k+1}^t\right\} u^{\alpha}.
\eea
The maximum of the function $g(u) = \mathrm{e}^{-cu}u^{\alpha}$ (with $c>0$) over $ \mR_+ $ is achieved at $u_{\max}= \alpha/c,$ and thus
\be\label{exppoly1}
\sup_{u \geq 0} \mathrm{e}^{-cu} u^{\alpha} =  \left({\alpha \over \mathrm{e}c} \right)^{\alpha}.
\ee
Using this inequality with $c = \Sigma_{k+1}^t$, one can prove the desired result. 
\hfill\BlackBox

\subsection{Proof of Lemma \ref{lem:operatorNorm}}
1) \ 	Following from the spectral theorem, one has
$$
\|(L+\PRegPar)^{\alpha} G_t(L)\| \leq \sup_{u \in [0,\kappa^2]} (u+\PRegPar)^{\alpha} G_t(u) \leq \sup_{u \in [0,\kappa^2]} (u^{\alpha} +\PRegPar^{\alpha}) G_t(u) .
$$
Using Part 1) of Lemma \ref{lem:regFuncIneq} to the above, one can prove the first conclusion. \\
2) \ Using the spectral theorem, 
$$
\|\Pi_1^t(L) (L+\PRegPar)^{\alpha}\| \leq \sup_{u \in [0,\kappa^2]} (u + \PRegPar)^{\alpha} \Pi_1^t(u).
$$
When $\alpha \leq 1,$ 
$$\sup_{u \in [0,\kappa^2]} (u + \PRegPar)^{\alpha} \Pi_1^t(u) \leq \sup_{u \in [0,\kappa^2]} (u^{\alpha} + \PRegPar^{\alpha}) \Pi_1^t(u) \leq (\alpha/\mathrm{e})^{\alpha} \lambda_t^{\alpha} + \PRegPar^{\alpha},$$
where for the last inequality, we used Part 2) of Lemma \ref{lem:regFuncIneq}. Similarly, when $\alpha >1,$ by H\"{o}lder's inequality, and Part 2) of Lemma \ref{lem:regFuncIneq},
$$\sup_{u \in [0,\kappa^2]} (u + \PRegPar)^{\alpha} \Pi_1^t(u) \leq 2^{\alpha-1}\sup_{u \in [0,\kappa^2]} (u^{\alpha} + \PRegPar^{\alpha}) \Pi_1^t(u) \leq 2^{\alpha-1}((\alpha/\mathrm{e})^{\alpha} \lambda_t^{\alpha} + \PRegPar^{\alpha}).$$
From the above analysis, one can prove the second conclusion.\\
3) \ Simply applying the spectral theorem and 3) of Lemma \ref{lem:regFuncIneq}, one can prove the third conclusion.

\hfill \BlackBox

\subsection{Proof of Lemma \ref{lem:statEstim}}
We first introduce the following concentration result for Hilbert space valued random variable
used in \citep{caponnetto2007optimal} and based on the results in \citep{pinelis1986remarks}.

\begin{lemma}
	\label{lem:Bernstein}
	Let $w_1,\cdots,w_m$ be i.i.d random variables in a separable Hilbert space with norm $\|\cdot\|$. Suppose that
	there are two positive constants $B$ and $\sigma^2$ such that
	\be\label{bernsteinCondition}
	\mE [\|w_1 - \mE[w_1]\|^l] \leq {1 \over 2} l! B^{l-2} \sigma^2, \quad \forall l \geq 2.
	\ee
	Then for any $0< \delta <1/2$, the following holds with probability at least $1-\delta$,
	$$ \left\| {1 \over m} \sum_{k=1}^m w_m - \mE[w_1] \right\| \leq 2\left( {B \over m} + {\sigma \over \sqrt{ m }} \right) \log {2 \over \delta} .$$
	In particular, \eref{bernsteinCondition} holds if
	\be\label{bernsteinConditionB}
	\|w_1\| \leq B/2 \ \mbox{ a.s.}, \quad \mbox{and } \quad \mE [\|w_1\|^2] \leq \sigma^2.
	\ee
\end{lemma}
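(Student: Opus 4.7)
The plan is to obtain Lemma~\ref{lem:Bernstein} by inverting a Pinelis-type Bernstein tail bound for sums of i.i.d.\ centered Hilbert-space-valued random variables. First I would center: replace $w_k$ by $\tilde w_k = w_k - \mE[w_1]$, so that $\mE[\tilde w_1]=0$ and the hypothesis becomes $\mE\|\tilde w_1\|^l \leq \tfrac{1}{2}l!\,B^{l-2}\sigma^2$ for every integer $l\geq 2$. Under this Bernstein-type moment condition, Pinelis (1986) (Theorem~3.4, specialized to the i.i.d.\ Hilbert-space setting) yields
\begin{equation*}
\Pr\!\left[\left\|\frac{1}{m}\sum_{k=1}^m \tilde w_k\right\| > t\right] \leq 2 \exp\!\left(-\frac{m t^2}{2(\sigma^2 + B t)}\right), \qquad t>0.
\end{equation*}

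Next I would invert this bound. Writing $L=\log(2/\delta)$ and setting the right-hand side equal to $\delta$ gives the quadratic $mt^2 - 2BLt - 2\sigma^2 L = 0$, whose positive root is $t_* = (BL + \sqrt{B^2L^2 + 2m\sigma^2 L})/m$. Applying $\sqrt{a+b}\leq \sqrt{a}+\sqrt{b}$ and then using that $\delta<1/2$ forces $L>\log 4 > 1$, so $\sqrt{L}\leq L\cdot \sqrt{2}/\sqrt{2L}\leq \sqrt{2}L$ in the sense that $\sqrt{2L}\leq 2L$, I would bound
\begin{equation*}
t_* \leq \frac{2BL}{m} + \frac{\sigma\sqrt{2L}}{\sqrt{m}} \leq 2\left(\frac{B}{m} + \frac{\sigma}{\sqrt{m}}\right) L,
\end{equation*}
which is exactly the advertised bound.

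Finally, for the ``in particular'' clause I would note that if $\|w_1\|\leq B/2$ almost surely, then $\|\tilde w_1\|\leq B$ a.s., so for every $l\geq 2$,
\begin{equation*}
\mE\|\tilde w_1\|^l \leq B^{l-2}\,\mE\|\tilde w_1\|^2 \leq B^{l-2}\,\mE\|w_1\|^2 \leq B^{l-2}\sigma^2 \leq \frac{l!}{2}\,B^{l-2}\sigma^2,
\end{equation*}
using the variance identity $\mE\|\tilde w_1\|^2 = \mE\|w_1\|^2 - \|\mE w_1\|^2$ in the middle and $l!\geq 2$ at the end. Hence the Bernstein moment condition is satisfied and the main conclusion applies.

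The only genuinely non-elementary ingredient is the Pinelis tail inequality itself, which in this paper is invoked rather than reproved; everything else is an algebraic inversion of that bound plus a routine moment comparison. The main bookkeeping issue, and the place I would be most careful, is tracking the two occurrences of $\log(2/\delta)$ when splitting $\sqrt{B^2L^2 + 2m\sigma^2 L}$ so that the final expression telescopes to the clean form $2(B/m + \sigma/\sqrt m)\log(2/\delta)$; this requires exploiting the restriction $\delta<1/2$ to dominate $\sqrt{L}$ by $L$.
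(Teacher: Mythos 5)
Your proof is correct. Note, however, that the paper does not actually prove Lemma~\ref{lem:Bernstein}; it states it as a known concentration result attributed to Caponnetto--De Vito (2007) and Pinelis (1986), so there is no ``paper's own proof'' to compare against. Your derivation is the standard way to obtain the stated form: center, apply Pinelis' Bernstein tail inequality, invert the quadratic in $t$, and use $\sqrt{a+b}\leq\sqrt a+\sqrt b$ together with $\delta<1/2\Rightarrow \log(2/\delta)>1$ to absorb $\sqrt{2L}$ into $2L$. The computations check out: the positive root of $mt^2 - 2BLt - 2\sigma^2 L = 0$ is $t_*=(BL+\sqrt{B^2L^2+2m\sigma^2L})/m$, and bounding it as you do gives exactly $2(B/m+\sigma/\sqrt m)\log(2/\delta)$. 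For the ``in particular'' clause, the chain $\mE\|\tilde w_1\|^l\leq B^{l-2}\mE\|\tilde w_1\|^2\leq B^{l-2}\mE\|w_1\|^2\leq B^{l-2}\sigma^2\leq\frac{1}{2}l!B^{l-2}\sigma^2$ is correct, with the $\|\tilde w_1\|\leq B$ a.s.\ bound following from $\|w_1\|\leq B/2$ and $\|\mE w_1\|\leq\mE\|w_1\|\leq B/2$, and the variance identity $\mE\|\tilde w_1\|^2=\mE\|w_1\|^2-\|\mE w_1\|^2$ holding in any Hilbert space. The one cosmetic issue is that the intermediate line ``$\sqrt{L}\leq L\cdot\sqrt 2/\sqrt{2L}\leq\sqrt 2 L$'' is garbled; what you actually use, and what is true, is simply $\sqrt{2L}\leq 2L$ whenever $L\geq 1/2$, which is guaranteed here since $L>\log 4>1$.
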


Lemmas \ref{lem:statEstim} and \ref{lem:statEstiOper} 
can be proved by simply applying the above lemma.

\medskip
{\par \noindent{\bf Proof of Lemma \ref{lem:statEstim}\ }
	Let $\xi_i= f(x_i) K_{x_i}$ for $i=1,\cdots, |{\bf x}|.$ Obviously,
	\bea
	\LX f - \LK f = {1 \over |{\bx}|} \sum_{i=1}^{|{\bf x}|} (\xi_i - \mE[\xi_i]),
	\eea
	and by Assumption \eref{eq:HK}, we have
	$$\|\xi\|_{\HK} \leq \|f\|_{\infty}\|K_{x}\|_{\HK} \leq \kappa\|f\|_{\infty}$$ and
	\bea
	\mE\|\xi\|_{\HK}^2 \leq \kappa^2\|f\|_{\rho}^2.
	\eea
	Applying Lemma \ref{lem:Bernstein} with $B'= 2\kappa \|f\|_{\infty} $ and $\sigma = \kappa \|f\|_{\rho} ,$ one can prove the desired result.
\hfill \BlackBox

\subsection{Proof of Lemma \ref{lem:statEstiOper}}
	Let $\xi_i = K_{x_i} \otimes K_{x_i},$ for all $i \in [|\bx|].$ Obviously,
	\bea
	\TK - \TX = {1 \over |\bx|} \sum_{i=1}^{|\bx|} (\mE[\xi_i] - \xi_i),
	\eea
	and by Assumption \eref{eq:HK},
	$\|\xi_i\|_{HS} = \|K_{x_i}\|_{\HK}^2 \leq \kappa^2.$
	Applying Lemma \ref{lem:Bernstein} with $B' = 2\kappa^2$ and $\sigma' = \kappa^2$, one can prove the desire result.

\hfill \BlackBox

\subsection{Proof of Lemma \ref{lem:operDifEff}}

In order to prove Lemma \ref{lem:operDifEff}, we introduce the following concentration inequality for norms of self-adjoint operators on a Hilbert space.
\begin{lemma}
	\label{lem:concentrSelfAdjoint}
	Let $\mcX_1, \cdots, \mcX_m$ be a sequence of independently and identically distributed self-adjoint Hilbert-Schmidt operators on a separable Hilbert space.
	Assume that $\mE [\mcX_1] = 0,$ and $\|\mcX_1\| \leq B$ almost surely for some $B>0$. Let $\mathcal{V}$ be a positive trace-class operator such that $\mE[\mcX_1^2] \preccurlyeq \mathcal{V}.$
	Then with probability at least $1-\delta,$ ($\delta \in ]0,1[$), there holds
	\bea
	\left\| {1 \over m} \sum_{i=1}^m \mcX_i \right\| \leq {2B \beta \over 3m} + \sqrt{2\|\mathcal{V}\|\beta \over m }, \qquad \beta = \log {4 \tr \mathcal{V} \over \|\mathcal{V}\|\delta}.
	\eea
\end{lemma}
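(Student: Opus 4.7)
The plan is to invoke the intrinsic-dimension Bernstein concentration inequality for sums of independent self-adjoint Hilbert--Schmidt operators, as established by Tropp and Minsker in the references already cited, and then to reduce the probabilistic tail bound to an elementary quadratic inequality. Under the hypotheses of the lemma (mean zero, a.s.\ operator-norm bound $B$, and $\mE[\mcX_1^2]\preccurlyeq \mathcal{V}$), the result of Tropp--Minsker gives, for every $t\geq 0$,
\[
\Pr\!\left[\Bigl\|\textstyle\sum_{i=1}^m \mcX_i\Bigr\|\geq t\right] \;\leq\; 4\cdot\frac{\tr(\mathcal{V})}{\|\mathcal{V}\|}\,\exp\!\left(-\frac{t^2/2}{\,m\|\mathcal{V}\|+Bt/3\,}\right),
\]
the prefactor $\tr(\mathcal{V})/\|\mathcal{V}\|$ being the \emph{effective dimension} that replaces the ambient dimension factor appearing in the finite-dimensional matrix Bernstein bound. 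This is the substantive ingredient; the rest of the proof is arithmetic.

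Next I would set this tail bound equal to $\delta$ and write $\beta=\log\bigl(4\tr(\mathcal{V})/(\|\mathcal{V}\|\delta)\bigr)$. Equating the exponent with $-\beta$ transforms the tail condition into the quadratic inequality
\[
t^{2} - \tfrac{2B\beta}{3}\,t - 2\beta\, m\|\mathcal{V}\| \;\leq\; 0.
\]
Solving this for $t$ via the quadratic formula yields $t \leq B\beta/3 + \sqrt{(B\beta/3)^2 + 2\beta m\|\mathcal{V}\|}$, and applying the elementary inequality $\sqrt{a^2+b^2}\le a+b$ for $a,b\geq 0$ gives $t \leq 2B\beta/3 + \sqrt{2\beta m\|\mathcal{V}\|}$. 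Dividing through by $m$ then produces exactly the bound in the lemma statement, so this second step is purely a one-line quadratic-formula reduction.

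The main obstacle is the intrinsic-dimension Bernstein bound itself. In finite dimensions the standard proof uses Lieb's concavity theorem together with the operator Laplace-transform (Chernoff) method, yielding a prefactor equal to the ambient matrix dimension $d$. To pass to a separable Hilbert space while replacing $d$ by the effective dimension $\tr(\mathcal{V})/\|\mathcal{V}\|$, one must (i) truncate each $\mcX_i$ to a finite-rank subspace adapted to the spectral resolution of $\mathcal{V}$, (ii) invoke Minsker's device of reweighting the matrix exponential by $\mathcal{V}$ so that, after taking trace, the prefactor emerges as $\tr(\mathcal{V})/\|\mathcal{V}\|$ rather than the ambient dimension, and (iii) pass to the limit by monotone convergence on the truncation level, using the Hilbert--Schmidt hypothesis to control the tail of the spectrum of $\mathcal{V}$. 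Since these steps are already carried out in \citep{tropp2012user,minsker2011some}, for the present lemma only the quadratic-inversion step of the second paragraph needs to be written out.
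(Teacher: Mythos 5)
Your proposal matches the paper's proof essentially step for step: invoke the intrinsic-dimension Bernstein inequality for Hilbert--Schmidt operators from Tropp--Minsker, set the tail equal to $\delta$, solve the resulting quadratic in $t$, and relax the root via $\sqrt{a^2+b^2}\le a+b$ (the paper phrases it as $\sqrt{a+b}\le\sqrt{a}+\sqrt{b}$, which is the same inequality). The one small thing you gloss over is that the Tropp--Minsker bound does not actually hold for every $t\geq 0$; it holds only for $t$ above a threshold of the form $\sqrt{\|\mathcal{V}\|/m}+B/(3m)$ (in the $\frac1m$-normalized variable). The paper closes this gap by noting that $\beta>1$ (since $\tr\mathcal{V}\geq\|\mathcal{V}\|$ and $\delta<1$), which forces the quadratic root $t_0$ to satisfy the required lower bound; you should make this verification explicit before declaring the tail bound applicable at $t^*$.
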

\begin{proof}
	The proof can be found in, e.g., \citep{rudi2015less,dicker2017kernel}.
	Following from the argument in \cite[Section 4]{minsker2011some}, we can generalize \cite[Theorem 7.3.1]{tropp2012user} from a sequence of self-adjoint matrices to a sequence of self-adjoint Hilbert-Schmidt operators on a separable Hilbert space, and get that for any $t \geq \sqrt{{\|\mcV\|\over m}} + {B\over 3m},$
	\be\label{eq:conf}
	\Pr\left( \left\| {1 \over m} \sum_{i=1}^m \mcX_i \right\| \geq t \right) \leq  {4\tr \mcV \over \|\mcV\|} \exp\left( {- mt^2 \over 2\|\mcV\| + 2Bt/3 }\right).
	\ee
	Rewriting
	\bea
	{4\tr \mcV \over \|\mcV\|} \exp\left( {- mt^2 \over 2\|\mcV\| + 2Bt/3 }\right) = \delta,
	\eea
	as a quadratic equation with respect to the variable $t$, and then solving the quadratic equation, we get
	\bea
	t_0 = {B\beta \over 3m} + \sqrt{\left({B \beta \over 3m}\right)^2 + {2\beta \|\mcV\| \over m}} \leq {2B\beta \over 3m} + \sqrt{{2\beta \|\mcV\| \over m}} := t^*,
	\eea
	where we used $\sqrt{a+b} \leq \sqrt{a} + \sqrt{b},\forall a,b>0.$ Note that $\beta>1$, and thus $t_0\geq \sqrt{{\|\mcV\|\over m}} + {B\over 3m}.$ By
	\bea
	\Pr\left( \left\| {1 \over m} \sum_{i=1}^m \mcX_i \right\| \geq t_* \right) \leq \Pr\left( \left\| {1 \over m} \sum_{i=1}^m \mcX_i \right\| \geq t_0 \right),
	\eea
	and applying \eref{eq:conf} to bound the left-hand side, one can get the desire result.
\end{proof}

Applying the above lemma, one can prove 
Lemma \ref{lem:operDifEff} as follows.
\medskip 
{\par \noindent \bf Proof of Lemma \ref{lem:operDifEff} \ }
	The proof can be also found in \citep{rudi2015less,dicker2017kernel,hsu2014random}. Unlike the result
	 in \citep{rudi2015less} which requires the condition $\lambda \leq \|\TK\|,$ our results hold for any $\lambda>0.$
	We will use Lemma \ref{lem:concentrSelfAdjoint} to prove the result.
	Let $|\bx| = m$ and $\mcX_i = \TKL^{-1/2} (\TK - \TK_{x_i}) \TKL^{-1/2},$ for all $i\in[m].$ Then $\TKL^{-1/2} (\TK - \TX) \TKL^{-1/2} = {1\over m}\sum_{i=1}^m \mcX_i.$  Obviously, for any $\mcX= \mcX_i$, $\mE[\mcX]=0,$ and
	\bea
	\|\mcX\| \leq \mE\left[\|\TKL^{-1/2} \TK_{x} \TKL^{-1/2} \|\right] + \|\TKL^{-1/2} \TK_{x} \TKL^{-1/2}\| \leq 2\kappa^2/ \PRegPar,
	\eea
	where for the last inequality, we used Assumption \eref{eq:HK} which implies
	\bea
	\|\TKL^{-1/2} \TK_{x} \TKL^{-1/2}\| \leq \tr(\TKL^{-1/2} \TK_{x} \TKL^{-1/2}) = \tr(\TKL^{-1} \TK_{x}) = \la \TKL^{-1} K_{x}, K_{x} \ra_{\HK} \leq \kappa^2/\PRegPar.
	\eea
	Also, by $\mE(A - \mE A)^2 \preccurlyeq \mE A^2,$
	\bea
	\mE\mcX^2 &\preccurlyeq& \mE(\TKL^{-1/2} \TK_{\tilde{x}} \TKL^{-1/2})^2 = \mE [\la \TKL^{-1} K_{x}, K_{x} \ra_{\HK} \TKL^{-1/2} K_{x} \otimes  K_{x} \TKL^{-1/2}] \\
	&\preccurlyeq & {\kappa^2 \over \PRegPar} \mE [ \TKL^{-1/2} K_{x} \otimes  K_{x} \TKL^{-1/2}] = {\kappa^2 \over \PRegPar} \TKL^{-1} \TK = \mcV,
	\eea
	Note that $\|\TKL^{-1}\TK\| = {\|\TK\| \over \|\TK\|+ \PRegPar} \leq 1.$ Therefore,
	$\|\mcV\| \leq {\kappa^2 \over \PRegPar}$ and
	$${\tr(\mcV) \over \|\mcV\|} = {\mcN(\PRegPar)\|\TK\| + \tr(\TKL^{-1}\TK) \PRegPar \over \|\TK\|} \leq {\mcN(\PRegPar)\|\TK\| + \tr(\TK)\over \|\TK\|} \leq {\kappa^2(\mcN(\PRegPar)+  1)\over \|\TK\|},$$
	where for the last inequality we used \eref{eq:TKBound}.
	Now, the result can be proved by applying Lemma \ref{lem:concentrSelfAdjoint}.
\hfill \BlackBox

\subsection{Proof of Lemma \ref{lem:samVar1}}
Note that from the independence of $\bz_1,\cdots, \bz_m$ and \eref{eq:gmn_gmns}, we have
\begin{align*}
\mE_{\Outputs}\|\IK (\bar{g}_{t} - \bar{h}_{t})\|_{\rho} = {1 \over m^2} \sum_{s,l=1}^m \mE_{\Outputs}\la \IK({g}_{s, t} - h_{s,t}),  \IK({g}_{l, t} - h_{l,t}) \ra_{\rho} = {1 \over m^2}  \sum_{s=1}^m  \mE_{\by_s}\|\IK({g}_{s, t} - h_{s,t}) \|_{\rho}^2.
\end{align*}
Taking the expectation with respect to $\Inputs,$ we get
\begin{align*}
\mE\|\IK (\bar{g}_{t} - \bar{h}_{t})\|_{\rho} = {1\over m^2} \sum_{s=1}^m  \mE\|\IK({g}_{s, t} - h_{s,t}) \|_{\rho}^2 = {1\over m}  \mE\|\IK({g}_{1, t} - h_{1,t}) \|_{\rho}^2.
\end{align*}
The proof is complete.
\hfill \BlackBox

\subsection{Proof of Lemma \ref{lem:comVar1}}
Note that by \eref{eq:sgm_gm} and from the conditional independence of $\J_{s},\cdots \J_{m}$ (given $\Samples$), we have
\begin{align*}
\mE_{\J}\|\IK (\bar{f}_{t} - \bar{g}_{t})\|_{\rho} = {1 \over m^2} \sum_{s,l=1}^m \mE_{\J}\la \IK({f}_{s, t} - g_{s,t}),  \IK({f}_{l, t} - g_{l,t}) \ra_{\rho} = {1 \over m^2}  \sum_{s=1}^m  \mE_{\J_s}\|\IK({f}_{s, t} - g_{s,t}) \|_{\rho}^2.
\end{align*}
Taking the expectation with respect to $\Samples,$ we thus prove the desired result.
\hfill \BlackBox

\section{Proofs for Distributed Spectral Algorithms}\label{app:proof}
The proof for distributed SGM in Section \ref{sec: distributed SGM} involves the analysis 
for distributed GM.
In this section, we will extend our analysis for distributed GM to distributed SA. 
The proof almost follows along the same lines as the proof for distributed GM in Subsections \ref{subsec:bias} and \ref{subsec:samVar}, but some of them need some delicate modifications, the reason for which lies in that the qualification $\tau$ for GM can be any positive number while it is a fixed constant for a general SA.

\subsection{Error Decomposition}
We begin with an error decomposition. To introduce the error decomposition, we define an auxiliary function, generated by pseudo-SA as follows.
	Given a spectral function $\GLB,$
	for any $s \in [m],$ the function $\LEPSRA$ generated by the pseudo spectral algorithm over $\bx_s$ is given by 
	\be\label{eq:LEPSRA}
	\LEPSRA = \GLB(\TXS)\LXS \FR. 
	\ee
	The estimator generated by distributed pseudo-spectral  algorithm is the averaging over these local estimators,
	\be\label{eq:EDPSRA}
	\EDPSRA = {1 \over m}\sum_{s=1}^m \LEPSRA.
	\ee

We note that the above algorithm can not be implemented in practice as the regression function $\FR$ is unknown. 
From the definition of the regression function, similar to \eref{eq:gmn_gmns}, we can prove that
\be\label{eq:sra_psra}
\mE_{\by_s} [\LESRA] =  \LEPSRA,
\ee
and thus
$$
\mE_{\Outputs} [\EDSRA] =  \EDPSRA,
$$
Using these basic properties, analogous to Proposition \ref{pro:errDecob}, we have the following error decomposition for distributed SA.

\begin{pro}\label{pro:errDecSRA}
	We have 
	$$	\mE \| \IK \EDSRA - \FR \|_{\rho}^2 = \mE\| \EDPSRA - \FR\|_{\rho}^2 + \mE\| \IK \EDSRA - \EDPSRA\|_{\rho}^2.$$
\end{pro}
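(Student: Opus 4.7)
The plan is to prove this by applying the standard bias--variance style orthogonality argument, mirroring the proof of Proposition \ref{pro:errDecob} but in a substantially simpler form since there is no randomness from $\J$ to handle. First I would use the elementary identity
\begin{equation*}
\| \IK\EDSRA - \FR \|_{\rho}^2 = \| \IK\EDSRA - \IK\EDPSRA \|_{\rho}^2 + \| \IK\EDPSRA - \FR \|_{\rho}^2 + 2 \la \IK(\EDSRA - \EDPSRA), \IK\EDPSRA - \FR \ra_{\rho}.
\end{equation*}

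Next I would take the conditional expectation with respect to $\Outputs = \{\by_1,\ldots,\by_m\}$ given $\Inputs = \{\bx_1,\ldots,\bx_m\}$. The key observation is that $\EDPSRA$ and $\FR$ depend only on $\Inputs$ (and $\rho$), so the second factor in the inner product is $\Outputs$-measurable; meanwhile, by \eref{eq:sra_psra} applied to each local estimator and averaged over $s\in[m]$, we get $\mE_{\Outputs}[\EDSRA] = \EDPSRA$. Hence the cross term satisfies
\begin{equation*}
\mE_{\Outputs}\la \IK(\EDSRA - \EDPSRA), \IK\EDPSRA - \FR \ra_{\rho} = \la \IK (\mE_{\Outputs}[\EDSRA] - \EDPSRA), \IK\EDPSRA - \FR \ra_{\rho} = 0.
\end{equation*}

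Finally, taking the expectation with respect to $\Inputs$ and using the tower property yields the claimed identity. There is no real obstacle here: the only subtlety is keeping track of which variables are held fixed when passing the expectation inside the inner product, and ensuring that the averaging over $m$ local estimators preserves the unbiasedness relation \eref{eq:sra_psra}, both of which are straightforward since the local samples $\bz_1,\ldots,\bz_m$ are independent and the averaging is linear.
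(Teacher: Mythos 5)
Your proof is correct and matches the paper's intended argument: the paper merely says the result follows "analogous to Proposition \ref{pro:errDecob}," and your argument is exactly that analogue with the $\J$-conditioning step stripped out, relying on the standard orthogonality coming from $\mE_{\Outputs}[\EDSRA] = \EDPSRA$ (equation \eref{eq:sra_psra} averaged over $s$) and the fact that $\EDPSRA$ is $\Inputs$-measurable.
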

The right-hand side is composed of two terms. The first term is called as bias, and the second term is called as sample variance. In what follows, we will estimate these two terms separably.

\subsection{Estimating Bias}

Analogous to Lemma \ref{lem:fulLocBias}, we can show that the bias term 
$\mE\| \EDPSRA - \FR\|_{\rho}^2$ can be upper bounded in terms of the local bias $\mE\| \FLEPSRA - \FR\|_{\rho}^2$. 
\begin{lemma}\label{lem:fulLocBiasSRA}
	We have 
	$\mE\| \EDPSRA - \FR\|_{\rho}^2 \leq \mE\| \FLEPSRA - \FR\|_{\rho}^2 .$
\end{lemma}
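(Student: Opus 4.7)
The plan is to mirror the proof of Lemma~\ref{lem:fulLocBias} essentially verbatim, using only (i) convexity of the squared $\LR$-norm (i.e.\ Jensen's inequality, or equivalently the Cauchy--Schwarz / power-mean inequality applied to a sum of $m$ vectors) and (ii) the fact that the subsamples $\bz_1,\ldots,\bz_m$ (and hence the subinputs $\bx_1,\ldots,\bx_m$) are i.i.d., so that the local pseudo-SA estimators $\LEPSRA = \GLB(\TXS)\LXS \FR$ are identically distributed across $s\in[m]$. Note that although $\LEPSRA$ itself is not implementable, this identical-in-distribution property holds because $\LEPSRA$ is a deterministic function of $\bx_s$ alone, and $\bx_1,\ldots,\bx_m$ are i.i.d.\ by the random partitioning in Algorithm~\ref{alg:dSpe}.

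First I would write
\begin{equation*}
\EDPSRA - \FR \;=\; \frac{1}{m}\sum_{s=1}^m \bigl(\LEPSRA - \FR\bigr),
\end{equation*}
then apply the elementary inequality $\|\tfrac{1}{m}\sum_{s=1}^m v_s\|^2 \leq \tfrac{1}{m}\sum_{s=1}^m \|v_s\|^2$ in $\LR$ (a direct consequence of Jensen's inequality or of expanding the square and discarding the cross terms after they are bounded pointwise). This gives
\begin{equation*}
\bigl\|\EDPSRA - \FR\bigr\|_{\rho}^2 \;\leq\; \frac{1}{m}\sum_{s=1}^m \bigl\|\LEPSRA - \FR\bigr\|_{\rho}^2.
\end{equation*}

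Taking expectation with respect to $\Inputs = \{\bx_1,\ldots,\bx_m\}$ and using that each $\bx_s$ has the same distribution (inherited from the i.i.d.\ sampling of $\Samples$ and the uniform random partition), each term $\mE\|\LEPSRA - \FR\|_\rho^2$ equals $\mE\|\FLEPSRA - \FR\|_\rho^2$, so the $m$-fold average collapses to $\mE\|\FLEPSRA-\FR\|_\rho^2$, yielding the claim. There is no real obstacle here: the argument is a one-line application of Jensen plus the identical-distribution symmetry, exactly parallel to the SGM bias lemma, with the only minor point worth verifying being that $\LEPSRA$ depends only on $\bx_s$ (not on $\by_s$ or any auxiliary randomness), so the symmetry across $s$ really does hold.
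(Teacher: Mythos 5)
Your proof is correct and matches the paper's approach exactly: the paper likewise reduces to Lemma~\ref{lem:fulLocBias}, applying Jensen's (equivalently H\"{o}lder's) inequality to the average $\EDPSRA - \FR = \frac{1}{m}\sum_s(\LEPSRA - \FR)$ and then using that the local pseudo-estimators $\LEPSRA$ are identically distributed since each depends only on the i.i.d.\ inputs $\bx_s$. Your added remark that $\LEPSRA$ is a deterministic function of $\bx_s$ alone is a small but helpful clarification the paper leaves implicit.
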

\begin{proof}
	The proof is the same as that in Lemma \ref{lem:fulLocBias} by using H\"{o}lder's inequality.
\end{proof}

In what follows, we  will estimate the local bias $\mE\| \FLEPSRA - \FR\|_{\rho}^2$. Throughout the rest of this subsection, we shall drop the index $s=1$ for the first local estimator whenever it shows up, i.e., we rewrite $\FLEPSRA$ as $\EPSRA$, $\bz_1$ as $\bz$, etc.
To do so, we need to introduce a population function defined by 
\be\label{eq:popSeqSRA}
\PFSRA = \GLB(\TK) \IK^* \FR. 
\ee 
The function $\PFSRA$ is deterministic and it is independent from the samples.
Since $\GLB(\cdot) $ is a filter function with qualification $\tau>0$ and constants $E,F_{\tau},$ similar to Lemma \ref{lem:operatorNorm}, we have the following results for a filter function according to the spectral theorem.

\begin{lemma}
	\label{lem:operatorNormSRA}
	Let $L$ be a compact, positive operator on a separable Hilbert space $H$ such that $ \|L\| \leq \kappa^2$. Then for any $\PRegPar \geq 0$,\\
	1) $\|(L+\PRegPar)^{\alpha} \GLB(L)\| \leq E\lambda^{\alpha-1} (1 + (\PRegPar/\lambda)^{\alpha} ), \, \, \,\forall \alpha \in [0,1].$\\
	2) $\|(I - L \GLB(L))(L+\PRegPar)^{\alpha}\| \leq F_{\tau}2^{(\alpha-1)_+}\lambda^{\alpha} (1 + (\PRegPar/\lambda)^{\alpha}), \quad \forall \alpha \in [0,\tau].$ 
\end{lemma}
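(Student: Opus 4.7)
My plan is to prove each part by reducing the operator norm to a scalar supremum via the spectral theorem, then splitting the $(L+\PRegPar)^{\alpha}$ factor additively so that the filter-function axioms apply directly. This mirrors the proof of Lemma \ref{lem:operatorNorm}, only now invoking the general filter-function properties \eqref{eq:GLproper1} and \eqref{eq:GLproper4} instead of the specific inequalities for $G_t$ proved in Lemma \ref{lem:regFuncIneq}.

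For Part 1), since $L$ is positive with $\|L\| \leq \kappa^2$, the spectral theorem gives
\[
\|(L+\PRegPar)^{\alpha} \GLB(L)\| \leq \sup_{u \in (0,\kappa^2]} (u+\PRegPar)^{\alpha} \GLB(u).
\]
For $\alpha \in [0,1]$ the map $x \mapsto x^{\alpha}$ is subadditive on $\mR_+$, so $(u+\PRegPar)^{\alpha} \leq u^{\alpha} + \PRegPar^{\alpha}$. Splitting the supremum and applying \eqref{eq:GLproper1} with exponent $\alpha$ to the first term and with exponent $0$ to the second term yields $u^{\alpha}\GLB(u) \leq E\lambda^{\alpha-1}$ and $\GLB(u) \leq E \lambda^{-1}$, which combine to the claimed bound $E\lambda^{\alpha-1}(1 + (\PRegPar/\lambda)^{\alpha})$.

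For Part 2), the spectral theorem similarly reduces the norm to $\sup_{u \in (0,\kappa^2]} |1 - u\GLB(u)|(u+\PRegPar)^{\alpha}$. Here I must distinguish two cases: when $\alpha \in [0,1]$ I again use subadditivity $(u+\PRegPar)^{\alpha} \leq u^{\alpha} + \PRegPar^{\alpha}$; when $\alpha \in (1,\tau]$ I use the power-mean/convexity inequality $(u+\PRegPar)^{\alpha} \leq 2^{\alpha-1}(u^{\alpha} + \PRegPar^{\alpha})$. In both cases applying \eqref{eq:GLproper4} with exponent $\alpha \in [0,\tau]$ on the first piece and with exponent $0$ on the second piece gives $|1-u\GLB(u)|u^{\alpha} \leq F_{\tau}\lambda^{\alpha}$ and $|1-u\GLB(u)| \leq F_{\tau}$, so the supremum is at most $F_{\tau} 2^{(\alpha-1)_+} \lambda^{\alpha}(1 + (\PRegPar/\lambda)^{\alpha})$.

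There is no real obstacle; the only subtlety is remembering the factor $2^{(\alpha-1)_+}$ arising from the non-subadditivity of $x \mapsto x^{\alpha}$ for $\alpha > 1$, and checking that $\alpha = 0$ is admissible in both \eqref{eq:GLproper1} and \eqref{eq:GLproper4} so that the ``$\PRegPar^{\alpha}$'' side of the split can be controlled uniformly in $u$. Both ranges $[0,1]$ and $[0,\tau]$ in the filter-function axioms include $0$, so this is immediate.
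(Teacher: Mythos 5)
Your proof is correct and takes essentially the same route the paper intends: the paper does not write out a proof for this lemma but states it is obtained "similar to Lemma \ref{lem:operatorNorm}," whose proof uses exactly your spectral-theorem reduction, the split $(u+\PRegPar)^{\alpha}\le u^{\alpha}+\PRegPar^{\alpha}$ (with the $2^{\alpha-1}$ correction for $\alpha>1$), and an application of the filter bounds at exponents $\alpha$ and $0$. Your observation that $\alpha=0$ is admissible in both \eqref{eq:GLproper1} and \eqref{eq:GLproper4} is precisely the point that makes the direct translation work.
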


With the above lemma, analogous to Lemma \ref{lem:detmiticSeq}, we have the following properties for the population function.

\begin{lemma}\label{lem:detmiticSeqSRA}
	Under Assumption \ref{as:regularity}, the following results hold. \\
	1) For any $\zeta - \tau \leq a \leq \zeta,$ we have
	\bea
	\|\LK^{-a}\left(\IK \PFSRA - \FR\right)\|_{\rho} \leq F_{\tau} R  \lambda^{\zeta-a}. 
	\eea
	2) 	We have
	\be
	\|\TK^{a-1/2} \PFSRA \|_{\HK} \leq E R\cdot  \begin{cases}
		\lambda^{\zeta+a -1}, & \text{if } -\zeta \leq a\leq 1-\zeta, \\
		\kappa^{2(\zeta +a-1)},& \text{if } \ a\geq 1 -\zeta.
	\end{cases}
	\ee
\end{lemma}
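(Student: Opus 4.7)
The plan is to mirror the proof of Lemma \ref{lem:detmiticSeq}, replacing the GM filter $G_t$ and its ad hoc constants by the general filter $\GLB$ and its qualification constants $E, F_\tau$. Both parts rest on the intertwining identity from Lemma \ref{lem:sss}: taking $\mcS = \IK$ so that $\mcS\mcS^* = \LK$ and $\mcS^*\mcS = \TK$, we get $f(\LK)\IK = \IK f(\TK)$ for any function $f$, and taking adjoints $\IK^* f(\LK) = f(\TK) \IK^*$. Throughout, Assumption \ref{as:regularity} is used in the form $\FR = \LK^\zeta g$ with $\|g\|_\rho \leq R$.

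For Part 1), I would first rewrite, via the intertwining,
\begin{equation*}
\IK \PFSRA - \FR = \IK \GLB(\TK) \IK^* \FR - \FR = (\GLB(\LK)\LK - I)\FR,
\end{equation*}
so that
\begin{equation*}
\|\LK^{-a}(\IK \PFSRA - \FR)\|_\rho \leq \|\LK^{\zeta-a}(I - \LK \GLB(\LK))\| \cdot R.
\end{equation*}
The hypothesis $\zeta - \tau \leq a \leq \zeta$ places $\alpha := \zeta - a \in [0, \tau]$, so the qualification bound \eqref{eq:GLproper4} applied through the spectral theorem (using $\|\LK\| \leq \kappa^2$ from \eqref{eq:TKBound}) yields $\|\LK^{\zeta-a}(I - \LK \GLB(\LK))\| \leq F_\tau \lambda^{\zeta-a}$, completing Part 1).

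For Part 2), using $\IK^* \LK^\zeta = \TK^\zeta \IK^*$ I would write
\begin{equation*}
\|\TK^{a-1/2} \PFSRA\|_\HK = \|\TK^{a-1/2} \GLB(\TK) \TK^\zeta \IK^* g\|_\HK \leq \|\TK^{a-1/2+\zeta} \GLB(\TK) \IK^*\| \cdot R,
\end{equation*}
and collapse this via $\|B \IK^*\|^2 = \|B \IK^* \IK B^*\| = \|B \TK B^*\|$ (with $B = \TK^{a-1/2+\zeta}\GLB(\TK)$) to $\|\GLB(\TK) \TK^{a+\zeta}\| R$. I would then split into the two regimes: when $-\zeta \leq a \leq 1-\zeta$, i.e.\ $a+\zeta \in [0,1]$, Part 1) of Lemma \ref{lem:operatorNormSRA} with $\PRegPar=0$ and $\alpha = a + \zeta$ gives $\|\GLB(\TK) \TK^{a+\zeta}\| \leq E \lambda^{a+\zeta-1}$; when $a \geq 1-\zeta$, I factor $\TK^{a+\zeta} = \TK \cdot \TK^{a+\zeta-1}$, bound $\|\TK^{a+\zeta-1}\| \leq \kappa^{2(a+\zeta-1)}$ via \eqref{eq:TKBound}, and use the $\alpha=1$ case of the same lemma to get $\|\GLB(\TK)\TK\| \leq E$.

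The derivation is essentially mechanical once the intertwining identities are set up; the only mild subtlety is keeping track of the qualification window (so that $\zeta - a \leq \tau$ in Part 1) and of the threshold $a+\zeta = 1$ separating the two regimes in Part 2), so that the correct case of Lemma \ref{lem:operatorNormSRA} applies at $\PRegPar=0$ without producing an extra $2^{(\alpha-1)_+}$ factor that would inflate the constants beyond the clean $E, F_\tau$ in the statement.
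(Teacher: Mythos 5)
Your proposal is correct and mirrors the paper's proof: both parts use the intertwining $\IK \GLB(\TK)\IK^* = \GLB(\LK)\LK$ (resp.\ $\IK^*\LK^\zeta = \TK^\zeta\IK^*$ and $\|B\IK^*\|^2 = \|B\TK B^*\|$), collapse to $\|\GLB(\TK)\TK^{\zeta+a}\|$ in Part 2, and invoke the filter-function bounds \eqref{eq:GLproper4} and \eqref{eq:GLproper1} via the spectral theorem. Your explicit note that one should apply the raw qualification bounds at $\PRegPar=0$ (rather than Lemma \ref{lem:operatorNormSRA} part 2, which carries a $2^{(\alpha-1)_+}$ factor) to match the clean constants $E, F_\tau$ is exactly the intended reading of the paper's abbreviated ``similar argument'' remark.
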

Note that there is a subtle difference between Lemma \ref{lem:detmiticSeq}.(1) and Lemma \ref{lem:detmiticSeqSRA}.(1).
The latter requires $a \geq \zeta - \tau$ while the former does not, the reason for which is that, the qualification $\tau$ is fixed in the latter while it can be any positive constant in the former. 
This difference makes the proof for SA slightly different to the one for GM, when estimating the bias. 

\begin{proof}
	1) 
	According to the spectral theory,
	$$\IK \GLB(\TK) \IK^* = \IK \GLB(\IK^* \IK) \IK^* =  \GLB(\IK\IK^* ) \IK\IK^* = \GLB(\LK ) \LK.$$
	Combining with \eref{eq:popSeqSRA}, we thus have
	\bea
	\LK^{-a}(\IK \PFSRA - \FR) = \LK^{-a}\left(\GLB(\LK)\LK - I\right) \FR.
	\eea
	Taking the $\rho$-norm, and applying Assumption \ref{as:regularity}, we have
	\bea
	\|\LK^{-a}(\IK \PFSRA - \FR)\|_{\rho} \leq \| \LK^{\zeta-a}(\GLB(\LK)\LK - I)\| R.
	\eea
	Note that the condition \eref{eq:HK} implies
	\eref{eq:TKBound}.  By a similar argument as that for 2) of Lemma \ref{lem:operatorNormSRA},  one can  prove the first desired result.
	
	2) By \eref{eq:popSeqSRA} and Assumption \ref{as:regularity}, 
	$$
	\|\TK^{a-1/2} \PFSRA\|_{\HK} = 	\|\TK^{a-1/2} \GLB(\TK) \IK^* \FR \|_{\HK} \leq  \|\TK^{a-1/2} \GLB(\TK) \IK^* \LK^{\zeta} \|R.
	$$
	Noting that
	\begin{align*}
	\|\TK^{a-1/2} \GLB(\TK) \IK^* \LK^{\zeta} \| & = \|\TK^{a-1/2} \GLB(\TK) \IK^* \LK^{2\zeta} \IK \GLB(\TK) \TK^{a-1/2} \|^{1/2}  \\
	&= \| \GLB^2(\TK) \TK^{2\zeta+2a}\|^{1/2} = \|\GLB(\TK) \TK^{\zeta+a}\|,
	\end{align*}
	we thus have
	$$
	\|\TK^{a-1/2} \PFSRA \|_{\HK}  \leq \|\GLB(\TK) \TK^{\zeta+a}\| R.
	$$
	If $0\leq \zeta+a \leq 1,$ i.e., $ -\zeta  \leq a \leq 1 -\zeta$, then  using 1) of Lemma \ref{lem:operatorNormSRA}, we get  
	$$
	\|\TK^{a-1/2} \PFSRA \|_{\HK}  \leq \lambda^{\zeta + a-1} E R.
	$$
	Similarly, when $a \geq  1 -\zeta,$ we have
	$$
	\|\TK^{a-1/2} \PFSRA\|_{\HK}  \leq \|\GLB(\TK) \TK\|\|\TK\|^{\zeta+a-1}  R \leq  \kappa^{2(\zeta+a -1)} E R,
	$$
	where for the last inequality we used 1) of
	Lemma \ref{lem:operatorNormSRA} and \eref{eq:TKBound}. This thus proves the second desired result.
\end{proof}

With the above lemmas, similar to Lemma \ref{lem:anaResult}, we have the following analytic result, which enables us  to estimate the bias term in terms of several random quantities.

\begin{lemma}\label{lem:anaResultSRA}
	Under Assumption \ref{as:regularity}, let
	\bea
	\DZF = \|\TKLP^{1/2} \TXLP^{-1/2}\|^2 \vee 1, \qquad \DZT = \|\TK- \TX\|
	\eea
	and
	\bea
	\DZS = \| \LX f_{\rho} - \IK^* f_{\rho} - \TX \PFSRA + \TK \PFSRA \|_{\HK}.
	\eea
	Then the following results hold for any $\PRegPar>0$.\\
	1) For $0<\zeta \leq 1$
	\be\label{eq:anaResASRA}
	\|\IK \EPSRA - \FR\|_{\rho} \leq  \left( 1 \vee \left({\PRegPar \over \lambda}\right)^{\zeta \vee {1\over 2}}\right) (C_1' R (\DZF)^{\zeta \vee {1\over 2}} \lambda^{\zeta} + 2 E \sqrt{\DZF} \lambda^{-{1\over 2}} \DZS) .
	\ee
	2) For $\zeta >1,$
	\be\label{eq:anaResBSRA}
	\|\IK \EPSRA - \FR\|_{\rho} \leq   \DZF\left(1 \vee \left({\PRegPar\over \lambda} \right)^{\zeta}\right) (C_2' R \lambda^{\zeta} +  2 E \lambda^{-{1\over 2}} \DZS + C_{3}' R\lambda^{1 \over 2} (\DZT)^{(\zeta-{1\over 2})\wedge 1} ) .
	\ee
	Here, $C_1'$, $C_2'$ and $C_3'$ are positive constants depending only on $\zeta,\kappa,E$, and $F_{\tau}$.
\end{lemma}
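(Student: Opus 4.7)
The plan is to mirror, step by step, the argument used for the analogous lemma on distributed GM (Lemma \ref{lem:anaResult}), with $G_t$ replaced by the generic filter function $\GLB$ and the population iterate $r_{t+1}$ replaced by the population spectral function $\PFSRA = \GLB(\TK)\IK^{*}\FR$. Concretely, I would write
\begin{align*}
\IK \EPSRA - \FR &=
\underbrace{\IK\GLB(\TX)\bigl[\LX\FR - \IK^{*}\FR - \TX\PFSRA + \TK\PFSRA\bigr]}_{\text{Bias.1}'}
+ \underbrace{\IK\GLB(\TX)\bigl[\IK^{*}\FR - \TK\PFSRA\bigr]}_{\text{Bias.2}'} \\
&\quad + \underbrace{\IK\bigl[\GLB(\TX)\TX - I\bigr]\PFSRA}_{\text{Bias.3}'} + \underbrace{\IK\PFSRA - \FR}_{\text{Bias.4}'},
\end{align*}
and estimate the four pieces separately. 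For Bias.4$'$, the identity $\IK\PFSRA - \FR = (\GLB(\LK)\LK - I)\FR$ together with Lemma \ref{lem:detmiticSeqSRA}.(1) (taking $a=0$) gives $\|\text{Bias.4}'\|_\rho \le F_\tau R\lambda^\zeta$.

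For Bias.1$'$ I would factor $\IK\GLB(\TX) = \IK\TKLP^{-1/2}\cdot\TKLP^{1/2}\TXLP^{-1/2}\cdot\TXLP^{1/2}\GLB(\TX)$, bound the first factor by $1$ via \eqref{eq:7}, the second by $\sqrt{\DZF}$ by definition, and the third by $E\lambda^{-1/2}(1+\sqrt{\PRegPar/\lambda})$ by Lemma \ref{lem:operatorNormSRA}.(1) with $\alpha=1/2$. This yields $\|\text{Bias.1}'\|_\rho \le E\sqrt{\DZF}\,(1+\sqrt{\PRegPar/\lambda})\lambda^{-1/2}\DZS$. For Bias.2$'$ I would use $\IK^{*}\FR - \TK\PFSRA = -\IK^{*}(\IK\PFSRA - \FR)$ and the identity $\|\IK^{*}u\|_\HK = \|\LK^{1/2}u\|_\rho$, then apply Lemma \ref{lem:detmiticSeqSRA}.(1) with $a=-1/2$ to produce $\|\text{Bias.2}'\|_\rho \lesssim E F_\tau R\sqrt{\DZF}(1+\sqrt{\PRegPar/\lambda})\lambda^\zeta$. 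For Bias.3$'$ I would split on the value of $\zeta$: when $\zeta\le 1/2$ use $\|\PFSRA\|_\HK$ via Lemma \ref{lem:detmiticSeqSRA}.(2) and Lemma \ref{lem:operatorNormSRA}.(2) with $\alpha=1/2$; when $1/2<\zeta\le 1$ insert $\TXLP^{1/2-\zeta}\TKLP^{\zeta-1/2}$ and use the Cordes inequality (Lemma \ref{lem:operProd}) to obtain a factor $(\DZF)^{\zeta-1/2}$, Lemma \ref{lem:operatorNormSRA}.(2) with $\alpha=\zeta$ for the filter part, and Lemma \ref{lem:detmiticSeqSRA}.(2) with $a=1-\zeta$ for $\|\TKLP^{1/2-\zeta}\PFSRA\|_\HK$; when $\zeta>1$ use the telescoping identity $\PFSRA = (\TX^{\zeta-1/2}+\TK^{\zeta-1/2}-\TX^{\zeta-1/2})\TK^{1/2-\zeta}\PFSRA$, bounding the first piece via Lemma \ref{lem:operatorNormSRA}.(2) with $\alpha=\zeta$ and the second via Lemma \ref{lem:operDiff} applied to $\|\TK^{\zeta-1/2}-\TX^{\zeta-1/2}\|$, which introduces the factor $\DZT^{(\zeta-1/2)\wedge 1}$.

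Summing the four estimates and absorbing smaller $\DZF$ powers into $(\DZF)^{\zeta\vee 1/2}$ (noting $\DZF\ge 1$) yields \eqref{eq:anaResASRA} for $\zeta\le 1$ and \eqref{eq:anaResBSRA} for $\zeta>1$; the explicit constants $C_1',C_2',C_3'$ are compositions of $\kappa,\zeta,E,F_\tau$ arising from the operator-norm inequalities above. The main obstacle, and the genuine point of departure from the GM argument, is the qualification constraint: Lemma \ref{lem:detmiticSeqSRA}.(1) allows only $a\ge\zeta-\tau$, whereas the GM version of the same lemma puts no lower bound on $a$ because the qualification of $G_t$ is arbitrary. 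Concretely, the $a=-1/2$ step in the Bias.2$'$ bound formally needs $\tau\ge\zeta+1/2$, and the $\alpha=\zeta$ step in the Bias.3$'$ bound needs $\tau\ge\zeta$; in the regime $\zeta>1$, when the first inequality may fail under the weaker assumption $\tau\ge\zeta\vee 1$ used later, one has to fall back on the factorization that introduces two copies of $\TKLP^{1/2}\TXLP^{-1/2}$, producing a $\DZF$ (rather than $\sqrt{\DZF}$) factor in the Bias.2$'$ contribution, which is precisely what \eqref{eq:anaResBSRA} allows. Checking that these qualification bookkeeping details give exactly the $(\DZF)^{\zeta\vee 1/2}$ and $\DZF$ dependences stated in the two cases is the most delicate part of the argument.
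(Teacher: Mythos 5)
Your decomposition into Bias.1'--Bias.4', and the treatment of Bias.1', Bias.3' and Bias.4', match the paper's proof essentially step for step. The one place where your plan as written would break down is Bias.2': you propose to pull out $\LK^{1/2}(\IK\PFSRA-\FR)$ and invoke Lemma~\ref{lem:detmiticSeqSRA}.(1) with $a=-1/2$. That choice requires $\tau\ge\zeta+\tfrac12$, which already fails for $\zeta\in(\tfrac12,1]$ under the minimal assumption $\tau\ge\zeta\vee1$ (think of Tikhonov, $\tau=1$, $\zeta$ near $1$). You flag the qualification constraint as the ``main obstacle,'' but locate the failure in the regime $\zeta>1$; the paper's proof in fact needs a different choice already in the middle regime. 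Concretely, the paper establishes a $b$-parametrized version of the Bias.1' factorization,
\[
\| \IK \GLB(\TX)f\|_{\rho} \leq (\DZF)^{b+\tfrac12}\, E\,(1 + (\PRegPar/\lambda)^{b+\tfrac12})\,\lambda^{b-\tfrac12}\,\| \TKLP^{-b} f\|_{\HK},\qquad b\in[0,\tfrac12],
\]
and for Bias.2' chooses $b=0$ (hence $a=-\tfrac12$) only when $\zeta\le\tfrac12$, $b=\zeta-\tfrac12$ (hence $a=\zeta-1$) when $\tfrac12<\zeta\le1$, and $b=\tfrac12$ (hence $a=0$) when $\zeta>1$; these choices keep the required qualification at $\tau\ge1$ for $\zeta\le1$ and $\tau\ge\zeta$ for $\zeta>1$, and produce the $(\DZF)^{1/2}$, $(\DZF)^{\zeta}$, and $\DZF$ factors respectively that your final bound needs. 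With this correction your argument coincides with the paper's.
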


The upper bound from \eqref{eq:anaResBSRA} is a bit worser than the one in \eqref{eq:anaResB}.

\begin{proof}
	We can estimate $\|\IK \EPSRA - \FR\|_{\rho}$ as
	\begin{align}
	\|\IK \GLB(\TX)\LX \FR -\FR\|_{\rho} \leq & \|\underbrace{\IK \GLB(\TX)[\LX \FR - \IK^* \FR - \TX \PFSRA + \TK \PFSRA]}_{\text{\bf Bias.1}}\|_{\rho} \nonumber\\
	&+ \| \underbrace{\IK \GLB(\TX)[\IK^* f_{\rho} - \TK \PFSRA]}_{\text{\bf Bias.2}}\|_{\rho} \nonumber\\
	&+ \|\underbrace{\IK [I - \GLB(\TX)\TX] \PFSRA}_{\text{\bf Bias.3}}\|_{\rho}\nonumber\\
	&+  \|\underbrace{\IK \PFSRA - \FR}_{\text{\bf Bias.4}}\|_{\rho}. \label{eq:biasDecomSRA}
	\end{align}
	In the rest of the proof, we will estimate the four terms of the r.h.s separately.\\
	{\bf Estimating Bias.4}\\
	Using 1) of Lemma \ref{lem:detmiticSeqSRA} with $a=0$, we get
	\be\label{eq:bias4BSRA}
	\|{\bf Bias.4} \|_{\rho} \leq F_{\tau} R \lambda^{\zeta}.
	\ee 
	{\bf Estimating Bias.1} \\
		By a simple calculation and \eref{eq:7}, we know that for any $f\in \HK$ and any $b \in [0,{1\over 2}],$
	\bea
	\| \IK \GLB(\TX)f\|_{\rho} \leq \|\TKLP^{1/2} \TXLP^{-1/2}\| \|\TXLP^{1/2} \GLB(\TX)\TXLP^{b}\| \|\TXLP^{-b}\ \TKLP^{b} \| \| \TKLP^{-b} f\|_{\HK}.
	\eea
	Note that by 1) of Lemma \ref{lem:operatorNormSRA}, with \eref{eq:TXbound},
	$$
	\|\TXLP^{1/2} \GLB(\TX)\TXLP^{b}\| \leq E(1 + (\PRegPar/\lambda)^{b+{1\over 2}})\lambda^{b-{1\over 2}},
	$$
	and by Lemma \ref{lem:operProd}, we get
	$$
	\|\TXLP^{-b}\ \TKLP^{b} \| \leq \|\TXLP^{-{1\over 2}}\ \TKLP^{1\over 2} \|^{2b}.   
	$$
	Therefore, for any $f\in \HK$ and any $b \in [0,{1\over 2}],$ 
		\be\label{eq:6SRA}
	\| \IK \GLB(\TX)f\|_{\rho} \leq (\DZF)^{b+{1\over 2}} E(1 + (\PRegPar/\lambda)^{b+{1\over 2}})\lambda^{b-{1\over 2}}  \| \TKLP^{-b} f\|_{\HK}.
	\ee
	Letting $f = \LX \FR - \IK^* \FR - \TX \PFSRA + \TK \PFSRA $ and $b={1\over 2}$ in the above, we get
	\begin{align}\label{eq:bias1BSRA}
	\|{ \bf Bias.1}\|_{\rho} \leq E(1+\sqrt{\PRegPar/\lambda})\lambda^{-{1 \over 2}}
	\sqrt{\DZF} \DZS.
	\end{align}
	{\bf 
		Estimating Bias.2}\\
	Thus, letting $f = \TK\PFSRA -\IK^* \FR,$ in \eqref{eq:6SRA}, we have
	\begin{align*}
	\| \mbox{\bf Bias.2}\|_{\rho} &\leq E \|\TKLP^{1\over 2} \TXLP^{-{1\over2}}\|^{2b+1} (1 + (\PRegPar/\lambda)^{b+{1\over 2}})\lambda^{b-{1\over 2}} \| \TKLP^{-b} [\TK\PFSRA -\IK^* \FR]\|_{\HK} \\
	&\leq E (\DZF)^{b+{1 \over 2}} (1 + (\PRegPar/\lambda)^{b+{1\over 2}}) \lambda^{b-{1\over 2}}\| \LKL^{-b+{1\over 2}} [\IK\PFSRA - \FR]\|_{\HK}.
	\end{align*}
	When $\zeta \leq {1\over 2}$, we have $\tau - \zeta \geq {1\over 2}$ since $\tau \geq 1.$
	Letting $b=0$, and  applying Lemma \ref{lem:detmiticSeqSRA}.(1) with $a=-{1\over 2}$ , we get 
	\begin{align*}
	\| \IK \GLB(\TX)[\TK\PFSRA -\IK^* \FR]\|_{\rho} 
	&\leq E F_{\tau}R (\DZF)^{{1 \over 2}} (1 + (\PRegPar/\lambda)^{{1\over 2}}) \lambda^{\zeta}.
	\end{align*}
	Similarly, when ${1\over 2} \leq \zeta \leq 1,$ we choose $b=  \zeta - {1\over 2},$ and applying Lemma \ref{lem:detmiticSeqSRA}.(1) with $a=\zeta -1$, we get 
	\begin{align*}
	\| \IK \GLB(\TX)[\TK\PFSRA -\IK^* \FR]]\|_{\rho} 
	&\leq E F_{\tau}R (\DZF)^{\zeta} (1 + (\PRegPar/\lambda)^{\zeta}) \lambda^{\zeta}.
	\end{align*}
	When 	
	$ \zeta \geq 1,$ we choose $b=  {1\over 2},$ and applying Lemma \ref{lem:detmiticSeqSRA}.(1) with $a=0$, we get 
	\begin{align*}
	\| \IK \GLB(\TX)[\TK\PFSRA -\IK^* \FR]]\|_{\rho} 
	&\leq E F_{\tau}R \DZF (1 + (\PRegPar/\lambda)) \lambda^{\zeta}.
	\end{align*}
	From the above estimate, we get
	\begin{align}\label{eq:bias2BSRA} 
	\|{ \bf Bias.2}\|_{\rho} \leq EF_{\tau} R \lambda^{\zeta} \times
	\begin{cases}
	(1+(\PRegPar/\lambda)^{1/2}) 
	(\DZF)^{1/2} & \mbox{if } 0<\zeta \leq 1/2, \\
	(1+(\PRegPar/\lambda)^{\zeta})  
	(\DZF)^{\zeta}  & \mbox{if } 1/2<\zeta \leq 1,\\
	(1+\PRegPar/\lambda) 
	\DZF & \mbox{if } \zeta > 1.
	\end{cases}
	\end{align}
	{\bf Estimating Bias.3}\\
	When $\zeta \leq 1/2,$ 	by a simple calculation and \eref{eq:7}, we have
	\begin{align*}
	\|{\bf Bias.3}\|_{\rho} \leq& \|\IK\TKLP^{-1/2}\| \|\TKLP^{1/2} \TXLP^{-1/2}\| \|  \TXLP^{1/2}(I - \GLB(\TX)\TX)\| \| \PFSRA \|_{\HK} \\ 
	\leq & \sqrt{\DZF} \|  \TXLP^{1/2}(I - \GLB(\TX)\TX)\|  \| \PFSRA \|_{\HK},
	\end{align*}
	By 2) of Lemma \ref{lem:operatorNormSRA}, with \eref{eq:TXbound}, 
	\be\label{eq:8SRA}
	\|  \TXLP^{1/2}(I - \GLB(\TX)\TX) \| \leq F_{\tau}( 1 + \sqrt{\PRegPar/\lambda}) \sqrt{\lambda},
	\ee
	and by 2) of Lemma \ref{lem:detmiticSeqSRA}, 
	$
	\|\PFSRA\|_{\HK} \leq E R \lambda^{\zeta-1/2} .
	$
	It thus follows that
	$$
	\|{\bf Bias.3}\|_{\rho} \leq  \sqrt{\DZF} (1 +  \sqrt{\PRegPar/\lambda}) E F_{\tau} R \lambda^{\zeta}. 
	$$
	When $1/2 < \zeta \leq 1,$
	by a simple computation, we have 
	\begin{align*}
	\|{\bf Bias.3}\|_{\rho} \leq 
	\|\IK\TKLP^{-{1\over 2}}\|  \|\TKLP^{1\over 2} \TXLP^{-{1\over 2}}\|  \| \TXLP^{{1\over 2}}(I - \GLB(\TX) \TX)\TXLP^{\zeta - {1\over 2}}\| \|
	\TXLP^{{1\over 2}-\zeta} \TKLP^{\zeta - {1\over 2}}\|  \|\TKLP^{{1\over 2}-\zeta} \PFSRA\|_{\HK}.
	\end{align*}
	Applying \eref{eq:7} and 2) of Lemma \ref{lem:detmiticSeqSRA}, we have 
	\begin{align*}
	\|{\bf Bias.3}\|_{\rho} \leq 
	\sqrt{\DZF} \| \TXLP^{1\over 2}(I - \GLB(\TX) \TX)\TXLP^{\zeta - {1\over 2}}\| \|
	\TXLP^{{1\over 2}-\zeta} \TKLP^{\zeta - {1\over 2}}\|   E R.
	\end{align*}
	By 2) of Lemma \ref{lem:operatorNormSRA}, 
	$$
	\| \TXLP^{1\over 2}(I - \GLB(\TX) \TX)\TXLP^{\zeta - {1\over 2}}\| \leq  F_{\tau} ( 1+  (\PRegPar/\lambda)^{\zeta})\lambda^{\zeta}.
	$$
	Besides, by $\zeta \leq 1$ and Lemma \ref{lem:operProd}, 
	$$
	\|
	\TXLP^{{1\over 2}-\zeta} \TKLP^{\zeta - {1\over 2}}\|  =   \|\TXLP^{-{1\over 2}(2\zeta-1)} \TKLP^{{1 \over 2}(2\zeta - 1)}\| \leq  \|\TXLP^{-{1\over 2}} \TKLP^{{1 \over 2}}\|^{2\zeta - 1} \leq (\DZF)^{\zeta -{1 \over 2}}.
	$$
	It thus follows that
	$$
	\|{\bf Bias.3}\|_{\rho} \leq  (\DZF)^{\zeta} (1+ (\PRegPar/\lambda)^{\zeta}) E F_{\tau} R \lambda^{\zeta}.
	$$
	When $\zeta>1,$ we  rewrite {\bf Bias.3} as
	\begin{align*}
	\IK\TKLP^{-{1\over 2}} \cdot \TKLP^{1\over 2} \TXLP^{-{1\over 2}}\cdot  \TXLP^{1\over 2}(I - \GLB(\TX)\TX) (\TX^{\zeta-{1\over 2}} + \TK^{\zeta-{1\over 2}} - \TX^{\zeta-{1\over 2}}) \TK^{{1\over 2} - \zeta} \PFSRA.
	\end{align*}
	By a simple calculation and \eref{eq:7}, we can  upper bound $\|{\bf Bias.3}\|_{\rho}$ by
	$$
	\leq \|\TKLP^{1\over 2} \TXLP^{-{1\over 2} }\| ( \|\TXLP^{1\over 2} (I - \GLB(\TX)\TX) \TX^{\zeta-{1\over 2}}\|+ \|\TXLP^{1\over 2}(I - \GLB(\TX)\TX) \| \|\TK^{\zeta-{1\over 2}} - \TX^{\zeta-{1\over 2}}\|) \|\TK^{{1\over 2} - \zeta} \PFSRA\|_{\HK}.
	$$
	Introducing with \eref{eq:8SRA}, and applying 2) of Lemma \ref{lem:detmiticSeqSRA}, 
	$$
	\|{\bf Bias.3}\|_{\rho} \leq \sqrt{\DZF} ( \|\TXLP^{1\over 2} (I - \GLB(\TX)\TX) \TX^{\zeta-1/2}\|+ F_{\tau}(\sqrt{\PRegPar/\lambda} + 1)\sqrt{\lambda} \|\TK^{\zeta-1/2} - \TX^{\zeta-1/2}\|) ER.
	$$
	By  2) of Lemma \ref{lem:operatorNormSRA},
	\begin{align*}
	\|\TXLP^{1\over 2} (I - \GLB(\TX)\TX) \TX^{\zeta-{1\over 2}}\| \leq& 	\|\TXLP^{\zeta} (I - \GLB(\TX)\TX)\| \\
	\leq& 2^{\zeta-1} F_{\tau} ( 1+ (\PRegPar/\lambda)^{\zeta})\lambda^{\zeta}.
	\end{align*}
	Moreover, by Lemma \ref{lem:operDiff} and $\max(\|\TK\|,\|\TX\|) \leq \kappa^2$,
	$$\|\TK^{\zeta-{1\over 2}} - \TX^{\zeta-{1\over2}}\|  \leq (2\zeta \kappa^{2\zeta-3})^{\mathbf{1}_{\{2\zeta\geq3\}}} \|\TK - \TX\|^{(\zeta-{1\over 2}) \wedge 1}.
	$$ 
	Therefore, when $\zeta >1$, {\bf Bias.3} can be estimated as
	\begin{align*}
	\|{\bf Bias.3}\|_{\rho} \leq&   \sqrt{\DZF} \left(2^{\zeta-1}  (1 + (\PRegPar/\lambda)^{\zeta}) \lambda^{\zeta}+ 
	(2\zeta \kappa^{2\zeta-3})^{\mathbf{1}_{\{2\zeta\geq3\}}}  (\sqrt{\PRegPar/\lambda} + 1)\sqrt{\lambda}(\DZT)^{(\zeta-{1\over 2}) \wedge 1}\right) E F_{\tau} R.
	\end{align*}
	From the above analysis,  we know that $\|{\bf Bias.3}\|_{\rho}$ can be upper bounded by 
	\be\label{eq:bias3BSRA}EF_{\tau}R
	\begin{cases} 
		\sqrt{\DZF} (\sqrt{\PRegPar/\lambda} + 1)  \lambda^{\zeta},& \mbox{if } \zeta \in ]0,1/2], \\
		(\DZF)^{\zeta} ((\PRegPar/\lambda)^{\zeta} + 1)  \lambda^{\zeta}, & \mbox{if } \zeta \in ]1/2,1], \\ 
		\sqrt{\DZF} \left(2^{\zeta-1} (1 + (\PRegPar/\lambda)^{\zeta}) \lambda^{\zeta} + 
		(2\zeta \kappa^{2\zeta-3})^{\mathbf{1}_{\{2\zeta\geq3\}}}  (\sqrt{\PRegPar/\lambda} + 1)\sqrt{\lambda}(\DZT)^{(\zeta-{1\over 2}) \wedge 1}\right) ,& \mbox{if } \zeta\in ]1,\infty[. 
	\end{cases}
	\ee
	
	Introducing \eref{eq:bias4BSRA},  \eref{eq:bias1BSRA}
	\eref{eq:bias2BSRA} and \eref{eq:bias3BSRA} into \eref{eq:biasDecomSRA}, 
	and by a simple calculation, one can prove the desired results with 
	$$C_1' = F_{\tau}(1 + 4E), 
	$$
	$$C_2' =F_{\tau}
	 \left( 1 + 2 E  +2^{\zeta} E \right),
	$$
	$$
	\mbox{and }\quad C_3' = 2EF_{\tau} (2\zeta \kappa^{2\zeta-3})^{\mathbf{1}_{\{2\zeta\geq3 \}}} .$$	
\end{proof} 

The rest of the proofs parallelize as those for distributed GM.

\begin{pro}\label{pro:localBiasSRA}
	Under Assumptions \ref{as:regularity} and \ref{as:eigenvalues}, we
	let $\PRegPar=n^{-1+\theta}$ for some $\theta \in [0,1]$.  Then  the following results hold.\\
	1) For $0<\zeta \leq 1,$	
	$$
	\mE \|\IK \EPSRA - \FR\|_{\rho}^2 \leq C_5' \left( R + {\bf 1}_{\{2\zeta <1 \}} \|\FR\|_{\infty} \right)^2 \left(1 \vee  [\gamma(\theta^{-1}\wedge \log n)]^{2\zeta \vee 1} \vee  {\PRegPar^2 \over \lambda^2 }\right)\lambda^{2\zeta} 
	$$
	2)For $\zeta>1,$
	$$
	\mE \|\IK \EPSRA - \FR\|_{\rho}^2 \leq C_6' R^2 \left(1 \vee {\PRegPar^{2\zeta}\over \lambda^{2\zeta}}  \vee \lambda^{1-2\zeta} \left({1\over n}\right)^{(\zeta-{1\over 2})\wedge 1} \vee [\gamma (\theta^{-1} \wedge \log n)]^2\right) \lambda^{2\zeta}.
	$$
	Here, $C_5'$ and $C_6'$ are positive constants depending only on $\kappa,\zeta, E, F_{\tau},c_{\gamma}, \|\TK\|$ and can be given explicitly in the proof.
\end{pro}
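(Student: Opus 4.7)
The plan is to follow the same template as the proof of Proposition \ref{pro:localBias}, but substituting Lemma \ref{lem:anaResultSRA} for Lemma \ref{lem:anaResult} and Lemma \ref{lem:detmiticSeqSRA} for Lemma \ref{lem:detmiticSeq}. The main observation is that the right-hand sides of \eqref{eq:anaResASRA} and \eqref{eq:anaResBSRA} depend on three random quantities $\DZF, \DZS, \DZT$, which can be estimated by exactly the same high-probability tools that were used in Proposition \ref{pro:localBias}. So the structure is: bound each quantity in probability, substitute into Lemma \ref{lem:anaResultSRA}, rescale $\delta$, and integrate via Lemma \ref{lem:hipbExp}.

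First, I would apply Lemma \ref{lem:operDifRes} (with $c = 2/3$ and $\lambda = \PRegPar = n^{\theta - 1}$) to obtain, with probability at least $1-\delta$,
$$
\DZF \leq 3\, a_{n,\delta,\gamma}(2/3, 1-\theta) \lesssim (1 \vee \gamma[\theta^{-1} \wedge \log n]) \log \tfrac{1}{\delta},
$$
exactly as in \eqref{eq:4}. Next, for $\DZS$, I would invoke Lemma \ref{lem:statEstim} applied to the function $\PFSRA - \FR$, yielding
$$
\DZS \leq 2\kappa \bigl( 2 \|\PFSRA - \FR\|_\infty / n + \|\IK \PFSRA - \FR\|_\rho/\sqrt{n}\bigr) \log\tfrac{2}{\delta}.
$$
The $\rho$-norm term is controlled by Lemma \ref{lem:detmiticSeqSRA}.(1) with $a=0$, giving $\|\IK \PFSRA - \FR\|_\rho \leq F_\tau R \lambda^\zeta$. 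For the sup-norm term I would split by $\zeta$: when $\zeta \geq 1/2$, write $\FR = \IK \FH$ for some $\FH \in \HK$ as in Proposition \ref{pro:localBias}, and use Lemma \ref{lem:operatorNormSRA}.(2) applied to $(I - \GLB(\TK)\TK)\TK^{\zeta - 1/2}$ to get $\|\PFSRA - \FH\|_{\HK} \lesssim R\lambda^{\zeta - 1/2}$, hence $\|\PFSRA - \FR\|_\infty \leq \kappa \|\PFSRA - \FH\|_{\HK} \lesssim R \lambda^{\zeta - 1/2}$; when $\zeta < 1/2$, use Lemma \ref{lem:detmiticSeqSRA}.(2) with $a = 0$ to bound $\|\PFSRA\|_{\HK} \leq ER\lambda^{\zeta - 1/2}$ and then $\|\PFSRA - \FR\|_\infty \leq \kappa\|\PFSRA\|_{\HK} + \|\FR\|_\infty$, which is why the extra term $\mathbf{1}_{\{2\zeta < 1\}} \|\FR\|_\infty$ appears. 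Since $\PRegPar \geq n^{-1}$, we can absorb the $1/n$ factor so that $\DZS \lesssim \widetilde R (1 \vee (n\PRegPar)^{-1}) \lambda^{\zeta + 1/2} \log(2/\delta)$ with $\widetilde R = R + \mathbf{1}_{\{2\zeta<1\}}\|\FR\|_\infty$, paralleling \eqref{eq:2}. Finally, Lemma \ref{lem:statEstiOper} gives $\DZT \leq 6\kappa^2 n^{-1/2} \log(2/\delta)$.

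In the regime $\zeta \leq 1$, plugging all three bounds into \eqref{eq:anaResASRA} and taking a union bound over the two events (for $\DZF$ and $\DZS$), we get that with probability at least $1-2\delta$,
$$
\|\IK \EPSRA - \FR\|_\rho \lesssim \widetilde R \bigl(1 \vee (\PRegPar/\lambda)^{\zeta \vee 1/2}\bigr) (1 \vee \gamma[\theta^{-1} \wedge \log n])^{\zeta \vee 1/2} \lambda^\zeta \log\tfrac{1}{\delta},
$$
where the $(n\PRegPar)^{-1}$ factor from $\DZS$ is absorbed into $\PRegPar/\lambda$ since $\PRegPar \geq n^{-1}$. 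Rescaling $\delta \mapsto \delta/2$, squaring, and integrating over $\delta \in (0,1]$ via Lemma \ref{lem:hipbExp} and the gamma-function identity \eqref{eq:gamFunc} yields the first claim with an explicit constant $C_5'$. The regime $\zeta > 1$ is handled identically using \eqref{eq:anaResBSRA}, except that the $\DZT$ term contributes the factor $\lambda^{1-2\zeta} n^{-(\zeta - 1/2) \wedge 1}$ after squaring, giving the second claim.

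The main technical issue—and where the SA proof genuinely differs from the GM proof—is that the factor $\DZF$ (rather than $\sqrt{\DZF}$) appears multiplied on the $\DZT$ term in \eqref{eq:anaResBSRA}, so when squaring I would need to bound $\mE[\DZF^2]$ rather than $\mE[\DZF]$; but Lemma \ref{lem:operDifRes} combined with Lemma \ref{lem:hipbExp} and a $\log^2$ moment still delivers the same order up to constants since $a_{n,\delta,\gamma}$ depends only logarithmically on $\delta$. Apart from this bookkeeping, no new ingredient is needed beyond what is already established for the GM case.
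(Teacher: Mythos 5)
Your proposal is correct and follows the same route as the paper: estimate $\DZF$, $\DZS$, $\DZT$ via Lemmas \ref{lem:operDifRes}, \ref{lem:statEstim}, \ref{lem:statEstiOper}, plug into Lemma \ref{lem:anaResultSRA}, and integrate over $\delta$ via Lemma \ref{lem:hipbExp} — and you correctly flagged the only genuine deviation from the GM case, namely that the $\DZF$ (rather than $\sqrt{\DZF}$) prefactor in \eqref{eq:anaResBSRA} forces a bound on $\mE[(\DZF)^2]$, which is exactly what produces the $[\gamma(\theta^{-1}\wedge\log n)]^2$ power in the $\zeta>1$ bound. Two small slips that don't affect the argument: to get $\|\PFSRA\|_{\HK}\leq ER\lambda^{\zeta-1/2}$ you want Lemma \ref{lem:detmiticSeqSRA}.(2) with $a=1/2$ (not $a=0$), and the $\DZS$ bound carries a $(n\lambda)^{-1}$ factor (not $(n\PRegPar)^{-1}$), which is then dominated by $\PRegPar/\lambda$ precisely because $\PRegPar\geq n^{-1}$.
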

\begin{proof}
	We will use Lemma \ref{lem:anaResultSRA} to prove the results. To do so, we need to estimate $\DZF,$ $\DZS$ and $\DZT$.
	
	By Lemma \ref{lem:operDifRes}, we have that with probability at least $1-\delta,$ \eref{eq:4} holds,
	where $a_{n,\delta,\gamma}(1-\theta) = a_{n,\delta,\gamma}(2/3,1-\theta)$ is given by \eref{eq:aa}.
	By Lemma \ref{lem:statEstim}, we have that with probability at least $1-\delta,$
	$$\DZS \leq 2\kappa \left( {2 \|\PFSRA  - \FR\|_{\infty} \over n } + {\|\IK \PFSRA - \FR\|_{\rho} \over \sqrt{n}}\right)\log {2 \over \delta} .
	$$
	Applying Lemma \ref{lem:detmiticSeqSRA} with $a=0$ to estimate $\|\IK \PFSRA - \FR\|_{\rho}$,
	we get that with probability at least $1-\delta$,	
	$$\DZS  \leq 2\kappa \left( {2 \|\PFSRA - \FR\|_{\infty} / n } + F_{\tau}R \lambda^{\zeta}/\sqrt{n}\right)\log{2 \over \delta}.
	$$
	When $\zeta \geq 1/2,$ we know that there exists some $\FH = \TK^{\zeta-1} \IK^* \LK^{-\zeta} \FR\in \HK$ such that
	$\IK \FH = \FR$ \citep{steinwart2008support} and 
	$$\|\PFSRA - \FR\|_{\infty} \leq \kappa \|\PFSRA - \FH\|_{\HK} \leq \kappa F_{\tau} R \lambda^{\zeta-1/2},$$
	where for the last inequality, we used Lemma \ref{lem:operatorNormSRA}.
	When $\zeta < 1/2,$
	 by 2) of Lemma \ref{lem:detmiticSeqSRA},
	$\|\PFSRA\|_{\HK} \leq ER \lambda^{\zeta-1/2}$, which thus lead to 
	$$
	\|\PFSRA - \FR\|_{\infty} \leq \kappa\|\PFSRA\|_{\HK} + \|\FR\|_{\infty} \leq \kappa ER \lambda^{\zeta-1/2} + \|\FR\|_{\infty}.
	$$
	From the above analysis, we get that with probability at least $1-\delta,$ 
	\bea
	\DZS \leq \log{2 \over \delta}
	\begin{cases}
		2\kappa F_{\tau} R \big(2\kappa/ {(\lambda n)} + 1/\sqrt{\lambda n} \big) \lambda^{\zeta+1/2}, & \mbox {if } \zeta \geq 1/2, \\
		2\kappa  \big(2\kappa E R/ (\lambda n) + {2 \|\FR\|_{\infty} (n\lambda)^{-\zeta-1/2}} +  F_{\tau} R/\sqrt{n\lambda}\big) \lambda^{\zeta+1/2}, & \mbox {if } \zeta < 1/2,
	\end{cases}
	\eea
	which can be further relaxed as 
	\be\label{eq:2SRA}
	\DZS \leq C_4' \widetilde{R} (1 \vee (\lambda n)^{-1}) \lambda^{\zeta+1/2} \log {2 \over \delta}, \quad \widetilde{R} = R + {\bf 1}_{\{2\zeta <1 \}} \|\FR\|_{\infty},
	\ee 
	where 
	\bea
	C_4' \leq 
	\begin{cases}
		2\kappa F_{\tau} \big(2\kappa + 1\big), & \mbox {if } \zeta \geq 1/2, \\
		2\kappa  \big(2\kappa E + {2 +  F_{\tau} }\big), & \mbox {if } \zeta < 1/2.
	\end{cases}
	\eea
	Applying Lemma \ref{lem:statEstiOper}, 
	we have that with probability at least $1- \delta,$ \eref{eq:3} holds.

	For $0<\zeta\leq 1,$ by Lemma \ref{lem:anaResultSRA}, \eref{eq:4} and \eref{eq:2SRA}, we have that with probability at least $1-2\delta,$
	$$
	\| \IK \EPSRA - \FR\|_{\rho} \leq  \left(3^{\zeta \vee {1\over 2}}C_1'R a_{n,\delta,\gamma}^{\zeta \vee {1 \over 2}} (1-\theta) + 2\sqrt{3}E C_{4}' \widetilde{R} a_{n,\delta,\gamma}^{ {1 \over 2}} (1-\theta)\log{2 \over \delta} \right) \left(1 \vee \left({\PRegPar \over \lambda}\right)^{\zeta \vee {1\over 2}}\vee {1\over n\lambda }\right)\lambda^{\zeta}.
	$$
	Rescaling $\delta$, and then  combining with Lemma \ref{lem:hipbExp},
	we get 
	\begin{align*}
	&	\mE	\| \IK h_{t+1} - \FR\|_{\rho}^2  \nonumber\\
	\leq&  \widetilde{R}^2\int_{0}^1\left(3^{\zeta \vee {1\over 2}} C_1' a_{n,\delta/2,\gamma}^{\zeta \vee {1\over 2}} (1-\theta) + 2\sqrt{3}E C_{4}' a_{n,\delta/2,\gamma}^{ {1 \over 2}} (1-\theta)\log{4 \over \delta} \right)^2 d \delta \left(1 \vee \left({\PRegPar \over \lambda}\right)^{2\zeta \vee 1}\vee {1\over n^2\lambda^2 }\right)\lambda^{2\zeta}.  
	\end{align*}
	By a direct computation and noting that $\PRegPar \geq n^{-1}$ and $\zeta\leq 1,$ 
	one can prove the first desired result with $\ A = \log {8\kappa^2 (c_{\gamma}+1)\mathrm{e} \over \|\TK\|},$ and
	$$
	C_5' = 2 [ C_1'^2(48\kappa^2)^{2\zeta \vee 1}  (A^{2\zeta \vee 1} + \Gamma(3)) + 192\kappa^2 C_4'^2 E^2 (A (\log^2 4 +2 +2\log 4) +  \log^2 4+ 4\log 4  +6)].
	$$
	For $\zeta >1,$ by Lemma \ref{lem:anaResultSRA}, \eref{eq:4}, \eref{eq:2SRA} and \eref{eq:3}, we know that with probability at least $1-3\delta,$
	\bea
	&&\|\IK \EPSRA - \FR\|_{\rho}\\
	&& \leq 3 R (C_2' + 2E C_4' + 6\kappa^2 C_3' )a_{n,\delta,\gamma} (1-\theta) \log{2 \over \delta} \left(1 \vee {\PRegPar^{\zeta}\over \lambda^{\zeta}} \vee {1\over n\lambda} \vee {\lambda^{{1\over 2}-\zeta} }  \left({1\over n}\right)^{{(\zeta-{1\over 2})\wedge 1 \over 2}}\right) \lambda^{\zeta} .
	\eea
	Rescaling $\delta$, and applying Lemma \ref{lem:hipbExp}, we get
	\begin{align*}
	&\mE\|\IK \EPSRA - \FR\|_{\rho}^2\\
	& \leq 9 R^2 (C_2' + 2EC_4' + 6\kappa^2 C_3' )^2 \int_0^1 a_{n,\delta/3,\gamma}^2 (1-\theta) \log^2{6 \over \delta} d\delta \left(1 \vee {\PRegPar^{2\zeta}\over \lambda^{2\zeta}} \vee {1\over n^2\lambda^2} \vee { \lambda^{1-2\zeta}}  \left({1\over n}\right)^{{(\zeta-{1\over 2})\wedge 1}}\right) \lambda^{2\zeta} ,
	\end{align*}
	which leads to the second desired result with 
	$$
	C_6' = 24^3\kappa^4 (C_2' + 2EC_4' + 6\kappa^2 C_3')^2 (A + 1)^2(\log 6+1)^2, \ A= \log {12 \kappa^2 (c_{\gamma}+1)\mathrm{e} \over \|\TK\|},
	$$
	by noting that $\PRegPar \geq n^{-1}$ and $\zeta \geq 1.$
	The proof is complete.
\end{proof}

Combining Proposition \ref{pro:localBiasSRA} with Lemma \ref{lem:fulLocBiasSRA}, we get the following results for the bias of the fully averaged estimators.

\begin{pro}\label{pro:fulBiasSRA}
	Under Assumptions \ref{as:regularity} and \ref{as:eigenvalues}, for any $\PRegPar=n^{-1+\theta}$ with $\theta \in [0,1]$, the following results hold. \\
	1) For $\zeta\leq 1,$
	\be\label{eq:fullBiasSRA}
	\mE \|\IK \EPSRA - \FR\|_{\rho}^2 \leq C_5' \left(R + {\bf 1}_{2\zeta<1} \|\FR\|_{\infty}\right)^2  \left(1 \vee  [\gamma(\theta^{-1}\wedge \log n)]^{2\zeta \vee 1} \vee  {\PRegPar^2 \over  \lambda^2 }\right)\lambda^{2\zeta}.
	\ee
	2) For $1<\zeta \leq \tau,$
	\be\label{eq:fullBiasSRA2}
	\mE \|\IK \EPSRA - \FR\|_{\rho}^2
	\leq C_6' R^2 \left(1 \vee {\PRegPar^{2\zeta}\over \lambda^{2\zeta}}  \vee \lambda^{1-2\zeta} \left({1\over n}\right)^{(\zeta-{1\over 2})\wedge 1} \vee [\gamma (\theta^{-1} \wedge \log n)]^2\right) \lambda^{2\zeta}
	\ee
	Here, $C_5'$ and $C_6'$ are given by Proposition \ref{pro:localBiasSRA}.
\end{pro}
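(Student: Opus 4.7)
The plan is to prove Proposition \ref{pro:fulBiasSRA} by a direct two-step reduction, since both ingredients have already been established in the preceding lemma and proposition. First I would invoke Lemma \ref{lem:fulLocBiasSRA}, which is a Jensen-type inequality showing that the $\rho$-norm error of the averaged pseudo-SA estimator is controlled by the $\rho$-norm error of a single local pseudo-SA estimator (they have identical distributions across the $m$ partitions, so the convex square norm inherits the bound via Jensen). This reduces the task to bounding the expected squared error of a local estimator $\FLEPSRA$ (equivalently, $\EPSRA$ after dropping the index $s=1$ as in the convention of Subsection \ref{subsec:bias}).

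Second, I would directly plug in Proposition \ref{pro:localBiasSRA}, which provides exactly the two case-splits $\zeta\leq 1$ and $\zeta>1$ required in the conclusion, with the constants $C_5'$ and $C_6'$ already specified. The two cases in Proposition \ref{pro:localBiasSRA} match the two displayed bounds \eqref{eq:fullBiasSRA} and \eqref{eq:fullBiasSRA2} in the target proposition verbatim, so the proof essentially closes on the spot.

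There is no real obstacle here, since all substantive work has been absorbed into Proposition \ref{pro:localBiasSRA} (the analytic decomposition via Lemma \ref{lem:anaResultSRA}, the filter-function estimates from Lemma \ref{lem:operatorNormSRA}, the approximation lemma Lemma \ref{lem:detmiticSeqSRA}, the concentration estimates Lemmas \ref{lem:statEstim}--\ref{lem:operDifRes}, and the in-expectation conversion via Lemma \ref{lem:hipbExp}) and into Lemma \ref{lem:fulLocBiasSRA}. The only minor point to check is the notational identification between $\EPSRA$ in the statement and $\FLEPSRA$ used in Lemma \ref{lem:fulLocBiasSRA}: since the $m$ local datasets $\bz_s$ are i.i.d.\ subsamples and $\LEPSRA$ is a measurable function of $\bz_s$ only, the expectation $\mE\|\IK \FLEPSRA -\FR\|_\rho^2$ coincides with $\mE\|\IK \LEPSRA -\FR\|_\rho^2$ for every $s\in[m]$, which is what Proposition \ref{pro:localBiasSRA} bounds. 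Thus the combination yields both \eqref{eq:fullBiasSRA} and \eqref{eq:fullBiasSRA2} with the stated constants, and the proof is complete.
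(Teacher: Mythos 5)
Your proposal is correct and follows exactly the route the paper takes: the paper offers no separate proof for this proposition but simply introduces it with the sentence ``Combining Proposition~\ref{pro:localBiasSRA} with Lemma~\ref{lem:fulLocBiasSRA}, we get the following results,'' which is precisely your two-step reduction (Jensen via Lemma~\ref{lem:fulLocBiasSRA}, then plug in the local-bias bounds from Proposition~\ref{pro:localBiasSRA}). Your observation about the notational identification between $\EPSRA$ and $\FLEPSRA$ (and implicitly $\EDPSRA$, which is what the statement should display for the averaged estimator) correctly resolves the paper's notational looseness and the argument closes as you say.
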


\subsection{Estimating Sample Variance}	\label{subsec:samVarSRA}
In this section, we estimate sample variance $\|\IK (\EDSRA - \EDPSRA)\|_{\rho}.$ We first introduce the following lemma.

\begin{lemma}\label{lem:samVar1SRA}
We have 
	\be
	\mE\|\IK (\EDSRA - \EDPSRA )\|_{\rho} = {1\over m}  \mE\|\IK( \FLESRA - \FLEPSRA) \|_{\rho}^2.
	\ee
\end{lemma}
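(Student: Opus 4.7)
The plan is to mimic the proof of Lemma \ref{lem:samVar1} (the distributed-SGM analogue), replacing the role of \eqref{eq:gmn_gmns} by its spectral-algorithm counterpart \eqref{eq:sra_psra}. The key structural fact that makes the argument go through is that $\EDSRA - \EDPSRA = \frac{1}{m}\sum_{s=1}^{m}(\LESRA - \LEPSRA)$, where, conditional on the inputs $\Inputs = \{\bx_1,\ldots,\bx_m\}$, each summand $\LESRA - \LEPSRA$ depends only on $\by_s$ and has zero conditional mean by \eqref{eq:sra_psra}. Moreover, the output blocks $\by_1,\ldots,\by_m$ are mutually independent given $\Inputs$.

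First I would expand the squared norm as
\begin{align*}
\|\IK(\EDSRA - \EDPSRA)\|_{\rho}^2 = \frac{1}{m^2}\sum_{s,l=1}^{m}\bigl\langle \IK(\LESRA - \LEPSRA),\ \IK(g_{\lambda}^{\bz_l} - h_{\lambda}^{\bz_l})\bigr\rangle_{\rho},
\end{align*}
and then take $\mE_{\Outputs}$ (the expectation w.r.t.\ $\Outputs$ conditional on $\Inputs$). For $s\neq l$, conditional independence of $\by_s$ and $\by_l$ given $\Inputs$ together with \eqref{eq:sra_psra} yields
\begin{align*}
\mE_{\Outputs}\bigl\langle \IK(\LESRA - \LEPSRA),\ \IK(g_{\lambda}^{\bz_l} - h_{\lambda}^{\bz_l})\bigr\rangle_{\rho}
= \bigl\langle \IK\,\mE_{\by_s}[\LESRA - \LEPSRA],\ \IK\,\mE_{\by_l}[g_{\lambda}^{\bz_l} - h_{\lambda}^{\bz_l}]\bigr\rangle_{\rho} = 0,
\end{align*}
so only the diagonal terms survive. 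This gives
\begin{align*}
\mE_{\Outputs}\|\IK(\EDSRA - \EDPSRA)\|_{\rho}^2 = \frac{1}{m^2}\sum_{s=1}^{m}\mE_{\by_s}\|\IK(\LESRA - \LEPSRA)\|_{\rho}^2.
\end{align*}

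Finally I would take the expectation with respect to $\Inputs$. Since the blocks $\bz_1,\ldots,\bz_m$ are i.i.d.\ (the decomposition of $\Samples$ into equal subsets is uniform and the $z_i$'s are i.i.d.), each term $\mE\|\IK(\LESRA - \LEPSRA)\|_{\rho}^2$ coincides with $\mE\|\IK(\FLESRA - \FLEPSRA)\|_{\rho}^2$, and summing $m$ identical terms with the prefactor $1/m^2$ yields the desired $1/m$. There is no real obstacle here; the only thing worth being careful about is the ``$\mE_{\by_s}[\LESRA]=\LEPSRA$'' identity, which follows exactly as \eqref{eq:gmn_gmns} was derived for GM, since $\mE_{\by_s}[\SXS^*\by_s]=\LXS f_{\rho}$ and $\GLB(\TXS)$ depends only on $\bx_s$, so multiplying the data term by this deterministic (given $\Inputs$) operator preserves the conditional expectation.
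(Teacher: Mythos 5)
Your proof is correct and is essentially identical to the paper's argument: the paper's own proof simply says "same as Lemma \ref{lem:samVar1}, using \eqref{eq:sra_psra}," and that proof is precisely the double-sum expansion you wrote, with off-diagonal terms killed by conditional independence of $\by_s,\by_l$ given $\Inputs$ plus the zero-conditional-mean property from \eqref{eq:sra_psra}, followed by taking expectation over $\Inputs$ and using the identical distribution of the blocks. Your extra remark on why $\mE_{\by_s}[\LESRA]=\LEPSRA$ (the operator $\GLB(\TXS)$ is deterministic given $\bx_s$ and $\mE_{\by_s}[\SXS^*\by_s]=\LXS f_\rho$) is exactly how the paper justifies \eqref{eq:sra_psra} by analogy with \eqref{eq:gmn_gmns}.
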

\begin{proof}
	The proof is the same as that in Lemma \ref{lem:samVar1} by applying \eref{eq:sra_psra}.
\end{proof}

According to Lemma \ref{lem:samVar1SRA}, we know that the sample variance of the averaging over $m$ local estimators can be well controlled in terms of the  sample variance of a local estimator. In what follows, we will estimate the local sample variance, $ \mE\|\IK( \FLESRA - \FLEPSRA) \|_{\rho}^2$. Throughout the rest of this subsection, we shall drop the index $s=1$ and write $\bz_1$ as $\bz$, $\bx_1$ as $\bx$.

\begin{pro}\label{pro:localSamVarSRA}
	Under Assumption \ref{as:eigenvalues},
	let $\PRegPar =n^{\theta - 1}$ for some $\theta \in [0,1].$ Then 
	\bea
	\mE\|\IK( \ESRA - \EPSRA) \|_{\rho}^2 \leq
	C_8' {\sigma^2 \over n \PRegPar^{\gamma}}\left(1 \vee {\PRegPar \over \lambda} \vee[ \gamma(\theta^{-1} \wedge \log n])\right).  
	\eea
	Here, $C_8'$ is a positive constant depending only on $\kappa,c_{\gamma},\|\TK\|, E$ and will be given explicitly in the proof.
\end{pro}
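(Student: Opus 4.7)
The plan is to mirror the proof of Proposition \ref{pro:localSamVar} essentially verbatim, replacing the GM filter $G_t$ with the abstract filter $\GLB$ and using Lemma \ref{lem:operatorNormSRA} in place of Lemma \ref{lem:operatorNorm}. First, I would invoke the filter-form representation from the definition of the local SA estimator together with \eqref{eq:LEPSRA} to write
\[
\ESRA - \EPSRA = \GLB(\TX)\bigl(\SX^*\by - \LX \FR\bigr) = \GLB(\TX)\SX^*\bfep,
\]
where $\bfep=(\epsilon_i)_{i=1}^n$ with $\epsilon_i=y_i-\FR(x_i)$. This is the exact analogue of the opening step in Proposition \ref{pro:localSamVar}.

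Next, I would take the conditional expectation $\mE_{\by}$ given $\bx$, using the isometry \eqref{isometry} to convert to the $\HK$-norm, expanding as a double sum over sample indices, and observing that $\mE_{\by}[\epsilon_l\epsilon_k]=0$ for $l\neq k$ by the definition of $\FR$ while $\mE_{\by}[\epsilon_k^2]\leq \sigma^2$ by \eqref{noiseExp}. This yields
\[
\mE_{\by}\|\IK(\ESRA-\EPSRA)\|_\rho^2 \leq \frac{\sigma^2}{n}\tr\bigl(\TK\,\GLB(\TX)^2\,\TX\bigr).
\]
Then I would insert $\TKLP^{-1/2}\TKLP^{1/2}$ pairs and use the standard estimate $\tr(\TKLP^{-1/2}\TK\TKLP^{-1/2})=\mcN(\PRegPar)$ together with the identity $\|\TKLP^{1/2}\TXLP^{-1/2}\|\cdot\|\TXLP^{-1/2}\TKLP^{1/2}\|\leq \DZF$, giving
\[
\mE_{\by}\|\IK(\ESRA-\EPSRA)\|_\rho^2 \leq \frac{\sigma^2\mcN(\PRegPar)}{n}\,\DZF\,\|\GLB(\TX)\TX\|\,\|\GLB(\TX)\TXLP\|.
\]
By 1) of Lemma \ref{lem:operatorNormSRA} with $\alpha=1$ and $\alpha=0$ (and \eqref{eq:TXbound}), we have $\|\GLB(\TX)\TX\|\leq E$ and $\|\GLB(\TX)\TXLP\|\leq E(1+\PRegPar/\lambda)$, so the product is bounded by $E^2(1\vee \PRegPar/\lambda)$ up to a factor of $2$.

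Finally, taking the expectation with respect to $\bx$, I would control $\mE[\DZF]$ via Lemma \ref{lem:operDifRes} combined with Lemma \ref{lem:hipbExp} exactly as in the proof of Proposition \ref{pro:localSamVar}, obtaining a factor of the order $\kappa^2\bigl(1\vee[\gamma(\theta^{-1}\wedge\log n)]\bigr)\log\frac{4\kappa^2(c_\gamma+1)\mathrm{e}}{\|\TK\|}$. Using Assumption \ref{as:eigenvalues} to replace $\mcN(\PRegPar)$ by $c_\gamma \PRegPar^{-\gamma}$ then yields the claim with the explicit constant
\[
C_8' = 2E^2 c_\gamma\,48\kappa^2\,\log\frac{4\kappa^2(c_\gamma+1)\mathrm{e}}{\|\TK\|}.
\]
No real obstacle is expected: the only minor point where the argument departs from the GM case is the factor $E$ coming from the abstract filter-function bound \eqref{eq:GLproper1}, which enters quadratically through the two norms $\|\GLB(\TX)\TX\|$ and $\|\GLB(\TX)\TXLP\|$, but this is absorbed into the constant $C_8'$ and does not affect the dependence on $n,\lambda,\PRegPar,\gamma,\sigma$.
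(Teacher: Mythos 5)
Your proposal is correct and follows essentially the same route as the paper's proof: the filter-form representation $\ESRA-\EPSRA = \GLB(\TX)\SX^*\bfep$, the cross-term cancellation under $\mE_{\by}$, the insertion of $\TKLP^{\pm 1/2}$ to extract $\mcN(\PRegPar)$ and $\DZF$, the operator bounds from Lemma \ref{lem:operatorNormSRA}.(1), and the final integration of $\mE[\DZF]$ via Lemmas \ref{lem:operDifRes} and \ref{lem:hipbExp} followed by Assumption \ref{as:eigenvalues}. The only cosmetic difference is that you bound the inner factor as $\|\GLB(\TX)\TX\|\,\|\GLB(\TX)\TXLP\|$ while the paper expands it as $\|\GLB(\TX)\TX\|(\|\GLB(\TX)\TX\|+\PRegPar\|\GLB(\TX)\|)$, which agree by the triangle inequality; the slight mismatch in the numerical prefactor of $C_8'$ is just constant bookkeeping and immaterial.
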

\begin{proof}
	For notational simplicity, we
	let $\epsilon_i = y_i - \FR(x_i)$ for all $i\in[n]$ and $\bfep = (\epsilon_i)_{ 1\leq i\leq n}$. Then from the definitions of $\LEPSRA$ and $\LESRA$
	$$
	\ESRA - \EPSRA = \GLB (\TX) \SX^* \bfep.
	$$
	Using the above relationship and the isometric property \eref{isometry}, we have 
	\begin{align*}
	\mE_{\by}	\|\IK (g_{t+1} - h_{t+1})\|_{\rho}^2 
	& = \mE_{\by}	\|\IK \GLB(\TX) \SX^* \bfep \|_{\rho}^2 \\
	& = \mE_{\by} \|\TK^{1/2} \GLB(\TX) \SX^* \bfep \|_{\HK}^2  \\
	&= {1 \over n^2} \sum_{l,k=1}^n\mE_{\by} [\epsilon_l \epsilon_k] \tr\left(\GLB(\TX)  \TK \GLB(\TX) K_{x_l} \otimes K_{x_k}\right).
	\end{align*}
	From the definition of $\FR$ and the independence of $z_l$ and $z_k$ when $l\neq k,$ we know that $\mE_{\by} [\epsilon_l \epsilon_k] = 0$ whenever $l\neq k.$ Therefore,
	$$	\mE_{\by}	\|\IK (g_{t+1} - h_{t+1})\|_{\rho}^2 
	= {1 \over n^2} \sum_{k=1}^n\mE_{\by} [\epsilon_k^2] \tr\left( \GLB(\TX) \TK \GLB(\TX) K_{x_k} \otimes K_{x_k}\right). 
	$$
	Using Assumption \ref{as:basic},
	\begin{align*}
	\mE\|\IK( \ESRA - \EPSRA) \|_{\rho}^2 
	\leq&  {\sigma^2 \over n^2} \sum_{k=1}^n \tr\left(\GLB(\TX) \TK \GLB(\TX) K_{x_k} \otimes K_{x_k}\right) \\
	=& {\sigma^2 \over n} \tr\left(  \TK (\GLB(\TX))^2 \TX \right) \\
	\leq & {\sigma^2 \over n} \tr(\TKL^{-1/2} \TK \TKL^{1/2}) \|\TKL^{1/2} \GLB(\TX)^2 \TX \TKL^{1/2}\| \\
	\leq & {\sigma^2 \mcN(\PRegPar)\over n} \DZF   \|\TXL^{1/2} \GLB(\TX)^2 \TX \TXL^{1/2}\| \\
	\leq & {\sigma^2 \mcN(\PRegPar) \over n} \DZF \|\GLB(\TX)\TX\| (\|\GLB(\TX)\TX\| +\PRegPar \|\GLB(\TX)\|) \\
	\leq& E^2 {\sigma^2 \mcN(\PRegPar) \over n} \DZF (1 + \PRegPar/\lambda),
	\end{align*}
	where for the last inequality, we used 1) of Lemma \ref{lem:operatorNormSRA}.
	Taking the expectation with respect to $\bx$, this leads to 
	\bea
	\mE\|\IK( \ESRA - \EPSRA) \|_{\rho}^2 \leq
	E^2	{\sigma^2 \mcN(\PRegPar) \over n}(1 + \PRegPar/\lambda)  \mE[\DZF].  
	\eea
	Applying Lemmas \ref{lem:operDifRes} and \ref{lem:hipbExp},  we get
	\begin{align*}
	\mE\|\IK( \ESRA - \EPSRA) \|_{\rho}^2 &\leq
	6 E^2 {\sigma^2 \mcN(\PRegPar) \over n}(1 \vee (\PRegPar/\lambda)) \int_0^1 a_{n,\delta,\gamma}(2/3, 1-\theta) d\delta  \\
	\leq& C_7' {\sigma^2 \mcN(\PRegPar) \over n}(1 \vee (\PRegPar/\lambda) \vee[ \gamma(\theta^{-1} \wedge \log n])),
	\end{align*}
	where $C_7' = 48E^2\kappa^2 \log {4\kappa^2 (c_{\gamma}+1)\mathrm{e}^2 \over \|\TK\|}.$  Using Assumption \ref{as:eigenvalues}, we get the desired result with $$C_8' = c_{\gamma} C_7.$$
\end{proof}

Using the above proposition and Lemma \ref{lem:samVar1SRA}, we derive the following results for sample variance.
\begin{pro}\label{pro:samVarSRA}
	Under Assumption \ref{as:eigenvalues}, let $\PRegPar=n^{\theta - 1}$ for some $\theta \in [0,1].$ Then for any $t\in [T]$,
	\be\label{eq:samVarSRA}
	\mE\|\IK( \EDSRA - \EDPSRA) \|_{\rho}^2 \leq
	C_8' {\sigma^2 \over N \PRegPar^{\gamma}}\left(1\vee \left({\PRegPar \over \lambda}\right)  \vee [ \gamma(\theta^{-1} \wedge \log n)]\right),
	\ee
	where $C_8'$ is given by Proposition \ref{pro:localSamVarSRA}.
\end{pro}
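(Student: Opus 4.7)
The plan is to simply combine the two previously established results: Lemma \ref{lem:samVar1SRA} which reduces the variance of the averaged estimator to the variance of a single local estimator, and Proposition \ref{pro:localSamVarSRA} which bounds that local variance. There is essentially no new analysis required here; this proposition is an immediate corollary once we account for the factor $1/m$ coming from averaging.

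First, I would invoke Lemma \ref{lem:samVar1SRA} to write
\begin{equation*}
\mE\|\IK(\EDSRA - \EDPSRA)\|_\rho^2 = \frac{1}{m}\mE\|\IK(\FLESRA - \FLEPSRA)\|_\rho^2.
\end{equation*}
Next, I would apply Proposition \ref{pro:localSamVarSRA} (with $s=1$, writing $\bz$ for $\bz_1$, etc., as was done in the proof of that proposition) to bound the local sample variance by
\begin{equation*}
C_8' \frac{\sigma^2}{n \PRegPar^{\gamma}}\left(1 \vee \frac{\PRegPar}{\lambda} \vee [\gamma(\theta^{-1} \wedge \log n)]\right).
\end{equation*}

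Finally, combining the two inequalities and using the identity $N = mn$ from the distributed sampling setup yields
\begin{equation*}
\mE\|\IK(\EDSRA - \EDPSRA)\|_\rho^2 \leq \frac{1}{m}\cdot C_8' \frac{\sigma^2}{n \PRegPar^{\gamma}}\left(1 \vee \frac{\PRegPar}{\lambda} \vee [\gamma(\theta^{-1} \wedge \log n)]\right) = C_8'\frac{\sigma^2}{N\PRegPar^\gamma}\left(1 \vee \frac{\PRegPar}{\lambda} \vee [\gamma(\theta^{-1} \wedge \log n)]\right),
\end{equation*}
which is the claimed bound. Since both ingredients are already proved, there is no genuine obstacle; the only thing to verify is that the constant $C_8'$ used here is indeed the same one defined in Proposition \ref{pro:localSamVarSRA}, which is true by construction. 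The real work — the concentration estimates via Lemma \ref{lem:operDifRes} and the effective-dimension bound giving the $\PRegPar^{-\gamma}$ factor — was already carried out in the proof of Proposition \ref{pro:localSamVarSRA}, and the variance reduction by a factor $1/m$ comes from the conditional independence of the $m$ local samples exploited in Lemma \ref{lem:samVar1SRA}.
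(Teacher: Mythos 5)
Your proof is correct and matches the paper's argument exactly: the paper states that Proposition \ref{pro:samVarSRA} follows from combining Lemma \ref{lem:samVar1SRA} with Proposition \ref{pro:localSamVarSRA}, and you have simply spelled out that combination together with $N = mn$. Nothing further is needed.
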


\subsection{Deriving Total Error Bounds}
\medskip
{\par \noindent \bf Proof of Theorem \ref{thm:mainDFliter} \ }
	The proof can be finished by simply applying Propositions \ref{pro:samVarSRA} and \ref{pro:fulBiasSRA} into Proposition \ref{pro:errDecSRA}.
\hfill \BlackBox

\medskip
{\par \noindent \bf Proof of Corollary \ref{thm:Fliter} \ }
	The results are direct consequences of Theorem \ref{thm:mainDFliter}.
\hfill \BlackBox

%

\end{document}